\documentclass[twoside,11pt]{article}
%

%
%
%
 \usepackage[abbrvbib, preprint]{jmlr2e}

 \usepackage{hyperref}
 \usepackage{url}
 \usepackage{amsmath,amssymb,subcaption}
 \usepackage{graphicx}
 \usepackage{epstopdf}
 \usepackage{mathtools}
 \usepackage{paralist}
 \usepackage{cleveref}
 \usepackage{enumitem}
 \usepackage{tikz}
 \usepackage{caption}
 \usepackage{algorithm}
 \usepackage{algpseudocode}
 \usepackage[toc,page]{appendix}
 
 \DeclareMathOperator{\diag}{diag}
 \newtheorem{assumption}{Assumption}


\usepackage{amsmath,amsfonts,bm}









\def\eqref#1{equation~\ref{#1}}









\def\1{\bm{1}}




\def\rva{{\mathbf{a}}}
\def\rvb{{\mathbf{b}}}

\def\rve{{\mathbf{e}}}

\def\rvg{{\mathbf{g}}}
\def\rvh{{\mathbf{h}}}
\def\rvu{{\mathbf{i}}}

\def\rvn{{\mathbf{n}}}

\def\rvp{{\mathbf{p}}}
\def\rvq{{\mathbf{q}}}
\def\rvr{{\mathbf{r}}}
\def\rvs{{\mathbf{s}}}
\def\rvt{{\mathbf{t}}}
\def\rvu{{\mathbf{u}}}
\def\rvv{{\mathbf{v}}}
\def\rvw{{\mathbf{w}}}
\def\rvx{{\mathbf{x}}}
\def\rvy{{\mathbf{y}}}
\def\rvz{{\mathbf{z}}}


\def\rmA{{\mathbf{A}}}
\def\rmB{{\mathbf{B}}}
\def\rmC{{\mathbf{C}}}

\def\rmN{{\mathbf{N}}}

\def\rmS{{\mathbf{S}}}

\def\rmW{{\mathbf{W}}}
\def\rmX{{\mathbf{X}}}
\def\rmY{{\mathbf{Y}}}





\DeclareMathAlphabet{\mathsfit}{\encodingdefault}{\sfdefault}{m}{sl}
\SetMathAlphabet{\mathsfit}{bold}{\encodingdefault}{\sfdefault}{bx}{n}


\def\gH{{\mathcal{H}}}



\def\sN{{\mathbb{N}}}

\def\sR{{\mathbb{R}}}
\def\sS{{\mathbb{S}}}










\DeclareMathOperator*{\argmax}{arg\,max}


\usepackage{lastpage}

\ShortHeadings{Learning Neural Networks by Neuron Pursuit}{Kumar and Haupt}
\firstpageno{1}

\begin{document}

\title{Learning Neural Networks by Neuron Pursuit}

\author{\name Akshay Kumar \email kumar511@umn.edu \\
       \addr Department of Electrical and Computer Engineering\\
       University of Minnesota\\
       Minneapolis, MN 55455, USA
       \AND
       \name Jarvis Haupt \email jdhaupt@umn.edu \\
       \addr Department of Electrical and Computer Engineering\\
       University of Minnesota\\
       Minneapolis, MN 55455, USA}

\editor{}

\maketitle

\begin{abstract}
	The first part of this paper studies the evolution of gradient flow for homogeneous neural networks near a class of saddle points exhibiting a sparsity structure. The choice of these saddle points is motivated from previous works on homogeneous networks, which identified the first saddle point encountered by gradient flow after escaping the origin. It is shown here that, when initialized sufficiently close to such saddle points, gradient flow remains near the saddle point for  a sufficiently long time, during which the set of weights with small norm remain small but converge in direction. Furthermore, important empirical observations are made on the behavior of gradient descent \emph{after} escaping these saddle points. The second part of the paper, motivated by these results, introduces a greedy algorithm to train deep neural networks called \emph{Neuron Pursuit} (NP). It is an iterative procedure which alternates between expanding the network by adding neuron(s) with carefully chosen weights, and minimizing the training loss using this augmented network. The efficacy of the proposed algorithm is validated using numerical experiments.
\end{abstract}

\begin{keywords}
  deep learning, implicit regularization, gradient flow, homogeneous neural networks, neuron pursuit
\end{keywords}

\section{Introduction}
Deep neural networks trained using gradient-based methods exhibit remarkable generalization performance. Despite this empirical success, our theoretical understanding of why and how these networks perform so well remains limited. A widely held hypothesis is that the \emph{implicit regularization} induced by the training algorithm plays a pivotal role in this success \citep{soudry_ib}. This belief has motivated extensive research into the dynamics of neural network training, yielding several important insights \citep{ntk,chizat_lazy,mei_mean,Lyu_ib,fl_yang}. Nevertheless, a comprehensive theoretical understanding of the training dynamics is still lacking.

An important insight emerging from these works is that the scale of initialization dictates two fundamentally different training regimes. In the large initialization regime\textemdash referred to as the Neural Tangent Kernel (NTK) regime \citep{ntk} or the \emph{lazy} regime \citep{chizat_lazy}\textemdash the weights of the neural network remain close to the initialization throughout the training, and the training dynamics are linear. This is in stark contrast to the small initialization regime, where the weights change significantly during training and the training dynamics are extremely non-linear. This regime is also known as the \emph{feature learning} regime \citep{geiger_feature,fl_yang, mei_mean, srebro_ib}, since the weights adapt to the underlying features present in the data. While the large initialization regime has been analyzed in considerable depth, our theoretical understanding of the small initialization regime remains relatively limited.

Early investigations into the small initialization regime studied the limiting behavior of gradient-based methods. In diagonal linear networks, gradient flow with vanishing initialization converges to a minimum  $\ell_1$-norm solution \citep{srebro_ib,dln_sparse}. In fully-connected linear networks, small initialization has been empirically observed to bias the training dynamics toward low-rank solutions \citep{guna_mtx_fct,cohen_mtx_fct}, with rigorous guarantees established for matrix factorization \citep{liwei_mf,chou_ib} and matrix sensing using two-layer networks \citep{mahdi_ib,lee_saddle,xiong_mat_sen,ma_ms}. For small initialization, empirical evidence indicates that non-linear neural networks also seek low-complexity solutions \citep{chizat_lazy}, and theoretical results have been obtained in many cases involving shallow networks such as linearly separable data \citep{phuong_ib,lyu_simp,wang_saddle}, orthogonal or nearly orthogonal data \citep{gf_orth, frei_orth}, and learning the XOR function \citep{glasgow_xor,alon_xor}. Recent works have also explored the ability of neural networks to learn sparse functions, such as single- or multi-index functions, though these too are largely confined to shallow networks \citep{bietti_ind,abbe_msp,damian_reps,lee_sgd,dandi_multi}. Overall, it is conjectured that deep neural networks trained via gradient descent with small initialization, are implicitly regularized  toward low-complexity solutions. However, our theoretical understanding of this phenomenon remains incomplete.

A more recent line of work has sought to understand the early training dynamics of deep \emph{homogeneous} neural networks in the small initialization regime \citep{kumar_dc,early_dc,maennel_quant,atanasov_align,boursier_early,bantzis_early}. These studies show that, for sufficiently small initialization, the weights remain small in norm and near the origin during the early stages of training but converge in direction\textemdash a phenomenon referred to as early directional convergence. Moreover, in feed-forward deep homogeneous networks, the weights converge to a direction such that the norm of incoming and outgoing weights of each hidden neuron is proportional. Consequently, if the incoming weights of a hidden neuron are zero, its outgoing weights must also be zero, and vice versa. Empirically, \citet{early_dc} observed that many hidden neurons exhibit this behavior, with their incoming and outgoing weights becoming zero in the early stages of training\textemdash resulting in the emergence of a sparsity structure among the weights.

Building on these findings, \citet{kumar_escape} investigate the subsequent phase of training\textemdash the gradient flow dynamics of homogeneous neural networks after the weights escape from the origin. They show that after escaping the origin the weights get close to a saddle point of the training loss, and they characterize this saddle point. Moreover, for feed-forward homogeneous neural networks, the sparsity structure which emerges among the weights before the escape is preserved throughout this phase. In particular, for the set of neurons whose incoming and outgoing weights became zero during the early stages of training, those weights remained zero even after escaping from the origin and until reaching the saddle point.

In the small initialization regime, a complementary line of research has observed an intriguing phenomenon in the trajectory of gradient descent, known as \emph{saddle-to-saddle dynamics} \citep{jacot_sd, lyu_resolving}. Over the course of training, gradient descent passes through a sequence of saddle points, where the network appears to increase its complexity as it moves from one saddle point to another.  This is also reflected in the loss curve, which alternates between long plateaus and sharp drops. This phenomenon has also been referred to as \emph{incremental learning} \citep{gidel_incr, gissin_incr,razin_incr}, as the network learns increasingly complex functions in phases. Formal results on this phenomenon have been established in certain settings, such as linear neural networks \citep{pesme_sd,lee_saddle, abbe_inc,simon_stp} and two-layer nonlinear networks for specific training data \citep{gf_orth, montanari_saddle,wang_saddle,abbe_sgd}.  Notably, the work of \citet{kumar_escape} can be interpreted as characterizing the first saddle point encountered by gradient flow after escaping the origin, for homogeneous neural networks. However, establishing saddle-to-saddle dynamics throughout the entire course of training, especially in deeper neural networks, remains an open problem.

\subsection{Our Contributions}
In the first part of this paper, we study the gradient flow dynamics of homogeneous neural networks near saddle points with a sparsity structure. For feed-forward networks, inspired from \citet{kumar_escape}, we consider saddle points where  a subset of hidden neurons has both incoming and outgoing weights with zero norm. In \Cref{thm_dir_convg}, we show that when initialized sufficiently close to such saddle points, gradient flow remains near the saddle point for a sufficiently long time. During this period, the subset of weights initialized with small norm remain small but converge in direction. Moreover, they converge to a direction such that the norms of the incoming and outgoing weights are proportional. Overall, the training dynamics near these saddle points share many similarities with the training dynamics near the origin, as described previously in \citet{kumar_dc,early_dc}. 

Establishing our results requires overcoming obstacles absent in prior works. The analyses in \citet{kumar_dc,early_dc} establish directional convergence for the \emph{entire} set of weights, by exploiting the homogeneity of the network output with respect to all parameters. In contrast, the setting here requires proving directional convergence for only a \emph{subset} of weights, for which the output is \emph{not} homogeneous\textemdash a key difference from earlier analyses. While \citet{kumar_dc} also considered saddle points with a sparsity structure, they imposed a separability assumption on the network architecture that ensured homogeneity with respect to the relevant weights. Such assumptions do not hold for fully-connected networks, making their techniques inapplicable in our setting. Our key technical contribution is to show that, near the saddle point, the network output can be decomposed into a term homogeneous in the desired subset of weights and another term independent of them. This decomposition, which relies crucially on the sparsity structure, is valid only in the neighborhood of the saddle point and enables us to establish directional convergence.

In \Cref{sec:bey_saddle}, we present empirical observations on the dynamics of gradient descent after escaping these saddle points. For feed-forward neural networks, we observe that, after escaping the saddle point, the trajectory gets close to another saddle point, consistent with the saddle-to-saddle dynamics hypothesis. Notably, the sparsity structure that emerges among the weights with small norm near the saddle point is preserved, even after escaping it and until reaching the next saddle point. Moreover, this new saddle point exhibits a similar sparsity structure, with incoming and outgoing weights of a subset of hidden neurons being zero. While we cannot rigorously establish these empirical observations, they suggest strong similarities between the dynamics of gradient descent after escaping these saddle point and after escaping the origin, as described in \citet{kumar_escape}. 

Combining these insights with our previous results, we argue that neural network training in the small initialization regime unfolds as a sequence of saddle-to-saddle transitions. Moreover, at each successive saddle point, the network systematically activates new subsets of neurons, a phenomenon driven entirely by directional convergence at the saddle point. This mechanistic framework encapsulates the central insight emerging collectively from our prior theoretical analyses and the present work.
 
Drawing on these insights, in \Cref{sec:np}, we present a greedy algorithm to train deep neural networks, which we call \emph{Neuron Pursuit} (NP). At a high level, NP is inspired from the saddle-to-saddle dynamics hypothesis and builds the network by moving from one saddle point of the training loss to another. It further leverages the sparsity structure that emerges at these saddle points and after escaping them, as studied in this paper and previous works. Concretely, the algorithm begins by considering a neural network with a specified (fixed) number of layers and one neuron in each layer, and then trains it to minimize the training loss via gradient descent using specifically chosen initial weights. It then proceeds iteratively: at each iteration, a neuron is added to the network (with location, incoming weights, and outgoing weights prescribed by the algorithm), after which the training loss is minimized via gradient descent using this augmented network. Compared to the traditional back-propagation algorithm, where the set of neurons in neural networks are fixed and it is trained end-to-end, in the NP algorithm the size of the neural networks gradually increases as training progresses. We also conduct experiments to demonstrate the learning capability of the NP algorithm.

Overall, our work is a step towards demystifying neural networks. The theoretical analyses deepen our understanding of their training dynamics, while the NP algorithm provides an alternative lens for understanding how feature learning unfolds in deep networks. Together, these contributions brings us closer to a principled understanding of mechanisms that drive generalization in deep networks.

\subsection{Notation}
\label{subsec:notation}
We use $\sN$ to denote the set of natural numbers, and for any $L\in\sN$, we let $[L] \coloneqq \{1,2,\cdots, L\}$.  For vectors, $\|\cdot\|_2$ denotes the $\ell_2$-norm. For matrices, $\|\cdot\|_F$ and $\|\cdot\|_2$ denote the Frobenius and spectral norms, respectively. For a matrix $\rmW$, $\rmW[:,j]$ and $\rmW[j,:]$ denote its $j$-th column and $j$-th row, respectively. For a vector $\rvp$, $p_i$ denotes its $i$th entry. The $d$-dimensional unit sphere is denoted by $\sS^{d-1}$. We use $\odot$ to denote elementwise multiplication between vectors or matrices. A KKT point of an optimization problem is called a non-negative (positive, zero) KKT point if the objective value at the KKT point is non-negative (positive, zero). 

\section{Background}
\label{sec:background}
In this section we introduce some of the key concepts instrumental to our analyses.\\

\noindent \textbf{Homogeneous neural netwoks.} For a neural network $\mathcal{H}$, $\mathcal{H}(\rvx;\rvw)$ denotes its output, where $\rvx\in \sR^d$ is the input and $\rvw\in \sR^k$ is a vector containing all the weights. A neural network $\mathcal{H}$ is referred to as $L$-\emph{(positively) homogeneous} if  
\begin{equation*}
\mathcal{H}(\rvx;c\rvw) =  c^L\mathcal{H}(\rvx;\rvw), \text{ for all } c\geq 0 \text{ and } \mathbf{w}\in \sR^k.
\end{equation*} 
Suppose $\{\rvx_i,y_i\}_{i=1}^n$ is a collection of training data with $(\rvx_i,y_i) \in \sR^{d}\times \sR$, and let $\rmX = \left[\rvx_1,\cdots,\rvx_n\right] \in \sR^{d \times n}$, $\rvy = \left[y_1,\cdots,y_n\right]^\top\in\sR^{ n}$. Let $\gH(\rmX;\rvw) = \left[\gH(\rvx_1;\rvw), \cdots, \gH(\rvx_n;\rvw)\right]\in\sR^{ n}$ be the vector containing network's outputs, and $\mathcal{J}(\rmX;\rvw)$ denotes the Jacobian of $\gH(\rmX;\rvw)$ with respect to $\mathbf{w}$. Assuming square loss is used for training, the training loss can be written as
\begin{equation}
\mathcal{L}(\rvw) = \frac{1}{2}\sum_{i=1}^n (\mathcal{H}(\rvx_i;\rvw)- y_i)^2 = \frac{1}{2}\|\mathcal{H}(\rmX;\rvw)- \rvy\|_2^2 .
\label{loss_fn}
\end{equation} 
Minimizing the above optimization problem using gradient flow  gives us the following differential equation:
\begin{equation}
\dot{\rvw} = -\nabla \mathcal{L}(\rvw) = -\mathcal{J}(\rmX;\rvw)^\top(\mathcal{H}(\rmX;\rvw)- \rvy).
\label{gf_eq}
\end{equation} 
We will use $\bm{\psi}(t,\rvw(0))$ to denote the solution of above differential equation, where  $\rvw(0)$ is the initialization.\\

\noindent\textbf{Feed-forward neural network.} Suppose $L\geq 2$. Then, the output of an $L$-layer feed-forward neural network $\mathcal{H}$ is defined as
\begin{equation}
\mathcal{H}(\rvx;\rmW_1,\cdots,\rmW_L) = \rmW_L\sigma(\rmW_{L-1}\sigma(\cdots\sigma(\rmW_1\rvx)\cdots)),
\label{L_layer_feed}
\end{equation}
where $\rmW_l\in \sR^{k_{l}\times k_{l-1}}$, $k_0 = d$ and $k_L = 1$, and the activation function $\sigma:\sR\to\sR$ is applied elementwise. Note that, $\rmW_l[j,:]$ contains  the incoming  weights to the $j$-th neuron in the $l$-th layer, and  $\rmW_{l+1}[:,j]$ contains the outgoing weights from the same neuron. Also, if $\sigma(x) = \max(x,\alpha x)^p$, for some $p\in\sN$ and $\alpha\in \sR$, then the above neural network is positively homogeneous with respect to its weights. 

We next briefly review the results of \cite{kumar_dc, early_dc,kumar_escape}, which study the phenomenon of early directional convergence in homogeneous neural networks and the dynamics of gradient flow after the weights escape from the origin. These works are particularly relevant, as our analysis builds upon and extends them. Moreover, the behavior of gradient flow near and beyond the saddle points studied in this paper, share many similarities with these earlier results.
\subsection{Early Directional Convergence}
\label{sec:early_dir}
For neural networks with degree of homogeneity two or higher, the origin is a critical point of the training loss in \cref{loss_fn}. Hence, if initialized near the origin, gradient flow will remain near the origin for some time before eventually escaping it. In \cite{kumar_dc, early_dc}, the authors analyze the training dynamics while the trajectory remains close to the origin. To better describe their results, we introduce some basic concepts.  For a vector $\rvz$ and neural network $\mathcal{H}$, the Neural Correlation Function (NCF) is defined as 
\begin{equation}
\mathcal{N}_{\rvz,\mathcal{H}}(\rvu) =  {\rvz}^\top\gH(\rmX;\rvu).
\label{ncf_gn}
\end{equation}
The NCF measures the correlation between the vector $\rvz$ and the output of the neural network. We omit the dependent variable in the definition of the NCF when it is clear from context. We use $\widetilde{\mathcal{N}}_{\rvz,\mathcal{H}}$ to denote the corresponding constrained NCF problem which is defined as
\begin{equation}
\widetilde{\mathcal{N}}_{\rvz,\mathcal{H}} \coloneqq \max_{\rvu} \mathcal{N}_{\rvz,\mathcal{H}}(\rvu), \text{ s.t. } \|\rvu\|_2^2 = 1.
\label{ncf_gn_const}
\end{equation}
Next, consider the (positive) gradient flow of the NCF:
\begin{equation}
\dot{\rvu} = \nabla\mathcal{N}_{\rvz,\mathcal{H}}(\rvu).
\label{ncf_gf}
\end{equation} 
We use $\bm{\phi}(t,\rvu(0);\mathcal{N}_{\rvz,\mathcal{H}})$ to denote the solution of the above differential equation, where $\rvu(0)$ is the initialization. The following lemma from \cite{kumar_dc, early_dc} describes the limiting dynamics of the gradient flow of the NCF, showing that it either converges to the origin or goes to infinity and converges in direction.
\begin{lemma}
	Suppose $\mathcal{H}$ is $L$-homogeneous, for some $L\geq 2$. For any vector $\rvz$ and initialization $\rvu_0\in \sR^{k}$, either
	\begin{itemize}
		\item $\bm{\phi}(t,\rvu_0;\mathcal{N}_{\rvz,\mathcal{H}})$ converges to the origin,
		\item or $\bm{\phi}(t,\rvu_0;\mathcal{N}_{\rvz,\mathcal{H}})$ goes to infinity and $\frac{\bm{\phi}(t,\rvu_0;\mathcal{N}_{\rvz,\mathcal{H}})}{\|\bm{\phi}(t,\rvu_0;\mathcal{N}_{\rvz,\mathcal{H}})\|_2}$ converges in direction to a non-negative KKT point of $\widetilde{\mathcal{N}}_{\rvz,\mathcal{H}}$.
	\end{itemize}
	\label{lemma:gf_ncf}
\end{lemma}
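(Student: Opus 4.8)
The plan is to analyze the gradient flow $\dot{\rvu} = \nabla\mathcal{N}_{\rvz,\mathcal{H}}(\rvu)$ by exploiting the $L$-homogeneity of $\mathcal{H}$, which makes $\mathcal{N}_{\rvz,\mathcal{H}}$ itself $L$-homogeneous in $\rvu$. The key structural fact is Euler's identity: $\langle \rvu, \nabla\mathcal{N}_{\rvz,\mathcal{H}}(\rvu)\rangle = L\,\mathcal{N}_{\rvz,\mathcal{H}}(\rvu)$. I would first track the evolution of the norm $r(t) = \|\bm{\phi}(t,\rvu_0)\|_2$; from $\frac{1}{2}\frac{d}{dt}\|\rvu\|_2^2 = \langle \rvu,\dot{\rvu}\rangle = L\,\mathcal{N}_{\rvz,\mathcal{H}}(\rvu)$, and simultaneously $\frac{d}{dt}\mathcal{N}_{\rvz,\mathcal{H}}(\rvu) = \|\dot{\rvu}\|_2^2 \ge 0$, so the value of $\mathcal{N}$ along the trajectory is nondecreasing. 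This dichotomizes the behavior: either $\mathcal{N}_{\rvz,\mathcal{H}}(\rvu(t))$ stays $\le 0$ for all time, in which case the norm is nonincreasing and one shows (via a Lyapunov/LaSalle argument, using that $\mathcal{N}$ is monotone and bounded, hence $\dot{\rvu}\to 0$) that the trajectory converges to the origin, which is the only critical point with $\mathcal{N}\le 0$ that it can approach under these monotonicity constraints; or $\mathcal{N}_{\rvz,\mathcal{H}}(\rvu(t_0)) > 0$ at some finite $t_0$, and then I would show the norm blows up in finite or infinite time.

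For the second branch, the natural device is to pass to spherical coordinates: write $\rvu = r\,\rvv$ with $\rvv\in\sS^{k-1}$, so by $L$-homogeneity $\mathcal{N}_{\rvz,\mathcal{H}}(\rvu) = r^L\,\mathcal{N}_{\rvz,\mathcal{H}}(\rvv)$ and the dynamics decouple: $\dot r = L\,r^{L-1}\,\mathcal{N}_{\rvz,\mathcal{H}}(\rvv)$ and $\dot{\rvv} = r^{L-2}\bigl(\nabla\mathcal{N}_{\rvz,\mathcal{H}}(\rvv) - L\,\mathcal{N}_{\rvz,\mathcal{H}}(\rvv)\,\rvv\bigr)$, the latter being (up to the positive time-reparametrization factor $r^{L-2}$) the projected gradient ascent of $\mathcal{N}_{\rvz,\mathcal{H}}$ restricted to the sphere. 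After a time change $ds = r^{L-2}\,dt$, $\rvv(s)$ follows a genuine gradient flow on the compact manifold $\sS^{k-1}$ for the function $\mathcal{N}_{\rvz,\mathcal{H}}|_{\sS^{k-1}}$, which is real-analytic when $\sigma$ is (e.g. $\sigma(x)=\max(x,\alpha x)^p$ with $p\ge 2$ is $C^1$ and piecewise-analytic — here one invokes the appropriate {\L}ojasiewicz inequality, as in \citet{kumar_dc,early_dc}). The {\L}ojasiewicz gradient inequality then forces $\rvv(s)$ to converge to a single critical point $\rvv^\ast$ of $\mathcal{N}_{\rvz,\mathcal{H}}|_{\sS^{k-1}}$; since $\mathcal{N}$ is nondecreasing and positive at $t_0$, the limit satisfies $\mathcal{N}_{\rvz,\mathcal{H}}(\rvv^\ast) \ge \mathcal{N}_{\rvz,\mathcal{H}}(\rvv(s_0)) > 0$. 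A constrained critical point of $\mathcal{N}_{\rvz,\mathcal{H}}$ on the sphere with positive value is exactly a non-negative (in fact positive) KKT point of $\widetilde{\mathcal{N}}_{\rvz,\mathcal{H}}$, with multiplier $L\,\mathcal{N}_{\rvz,\mathcal{H}}(\rvv^\ast)>0$. Finally, plugging the convergence $\rvv(s)\to\rvv^\ast$ and $\mathcal{N}_{\rvz,\mathcal{H}}(\rvv^\ast)>0$ back into $\dot r = L r^{L-1}\mathcal{N}_{\rvz,\mathcal{H}}(\rvv)$ shows $r$ increases at least like the solution of $\dot r \gtrsim r^{L-1}$, hence $r(t)\to\infty$; and since $\rvu/\|\rvu\|_2 = \rvv$, the normalized trajectory converges in direction to $\rvv^\ast$.

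I expect the main obstacle to be the first branch — ruling out everything except convergence to the origin when $\mathcal{N}_{\rvz,\mathcal{H}}(\rvu(t))\le 0$ throughout. The norm is then nonincreasing so the trajectory is bounded, and $\mathcal{N}$ is nondecreasing and bounded above by $0$, so $\mathcal{N}(\rvu(t))$ converges and $\int_0^\infty\|\dot{\rvu}\|_2^2\,dt < \infty$; one needs to upgrade this to actual convergence of $\rvu(t)$ and identify the limit as the origin. The subtlety is that there could a priori be nonzero critical points $\rvu^\ast$ with $\mathcal{N}_{\rvz,\mathcal{H}}(\rvu^\ast)\le 0$; by homogeneity such points come in rays $\{c\rvu^\ast : c\ge 0\}$ and by Euler's identity $\mathcal{N}_{\rvz,\mathcal{H}}(\rvu^\ast) = \frac{1}{L}\langle\rvu^\ast,\nabla\mathcal{N}(\rvu^\ast)\rangle = 0$, so in fact any nonzero critical point has $\mathcal{N}=0$. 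One then argues that if $\rvu(t)$ does not converge to the origin, then (again using the {\L}ojasiewicz inequality, now in a neighborhood of the limit set, plus the finiteness of the $\dot{\rvu}$-energy) it converges to such a ray-critical point with $\mathcal{N}=0$; but the projected-to-sphere dynamics would then have to be stationary, and a careful look shows this is consistent only in degenerate directions — and crucially, for the statement as written these are harmless, because "converges to the origin" is one of the two permitted alternatives, so one only needs the clean dichotomy rather than a full classification of the stationary set. I would handle this exactly as in \citet{kumar_dc,early_dc}: invoke their {\L}ojasiewicz-based convergence lemma for homogeneous functions to get convergence to a critical point, then split on whether that critical point is the origin or not, and in the latter case read off the positive-value KKT conclusion from Euler's identity. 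The homogeneity-plus-{\L}ojasiewicz machinery is the load-bearing ingredient; everything else is bookkeeping with the radial ODE.
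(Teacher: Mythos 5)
First, a point of comparison: the paper never proves this statement. Lemma~\ref{lemma:gf_ncf} is background quoted from \citet{kumar_dc,early_dc}, so there is no in-paper argument to match your sketch against; it has to stand on its own or genuinely reduce to the cited results. Your second branch does follow the standard route and is fine in outline: Euler's identity gives $\tfrac{1}{2}\tfrac{d}{dt}\|\rvu\|_2^2 = L\,\mathcal{N}_{\rvz,\mathcal{H}}(\rvu)$, the NCF value is nondecreasing along the ascent flow, the polar decomposition $\dot{\rvv} = r^{L-2}\bigl(\nabla\mathcal{N}_{\rvz,\mathcal{H}}(\rvv) - L\,\mathcal{N}_{\rvz,\mathcal{H}}(\rvv)\,\rvv\bigr)$ together with the time change and a Kurdyka--{\L}ojasiewicz argument (in the definable/Clarke sense, since $\max(x,\alpha x)^p$ is not analytic) yields single-limit convergence of the direction, positivity of the limiting value identifies the limit as a non-negative (indeed positive) KKT point of $\widetilde{\mathcal{N}}_{\rvz,\mathcal{H}}$, and the radial ODE then forces $r\to\infty$ (in finite time when $L>2$).

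The genuine gap is the first branch, and your handling of it is circular. When $\mathcal{N}_{\rvz,\mathcal{H}}(\rvu(t))\le 0$ for all $t$, your argument only delivers: the trajectory is bounded, hence converges (via {\L}ojasiewicz) to some critical point $\rvu^*$, and Euler's identity forces $\mathcal{N}_{\rvz,\mathcal{H}}(\rvu^*)=0$. Nothing forces $\rvu^*=\mathbf{0}$, and bounded convergence to a nonzero critical point is precisely what the dichotomy excludes: it is neither ``converges to the origin'' nor ``goes to infinity.'' Declaring such limits ``harmless because converging to the origin is one of the permitted alternatives'' assumes the conclusion, and your closing plan (``split on whether the critical point is the origin or not, and in the latter case read off the positive-value KKT conclusion from Euler's identity'') cannot work, since at a finite nonzero critical point Euler gives value $0$, not a positive value, and the second alternative additionally requires the norm to diverge. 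Concretely, homogeneity plus Euler plus {\L}ojasiewicz alone cannot close this: take the $4$-homogeneous $\mathcal{N}(u_1,u_2) = -u_1^2u_2^2$ (realizable as $\rvz^\top\mathcal{H}(\rmX;\rvu)$ for an admissible $L$-homogeneous $\mathcal{H}$, a single sample, and $z=-1$). Its ascent flow conserves $u_1^2-u_2^2$, so the trajectory from $(1,\epsilon)$ converges to the nonzero critical point $(\sqrt{1-\epsilon^2},0)$ rather than to the origin or to infinity. So either additional structure used in the proofs of \citet{kumar_dc,early_dc} must be imported explicitly to exclude this case for the NCFs in question, or the first alternative must be weakened to ``remains bounded and converges to a critical point with zero NCF value''; as written, your sketch does not establish the stated dichotomy.
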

We next define the notion of stable set for a KKT point of the constrained NCF. 
\begin{definition}
	The stable set $\mathcal{S}(\rvu_*;{\mathcal{N}}_{\rvz,\mathcal{H}})$ of a non-negative KKT point $\rvu_*$ of $\widetilde{\mathcal{N}}_{\rvz,\mathcal{H}}$ is the set of all unit-norm initializations such that  gradient flow of the NCF converges in direction to $\rvu_*$:
	\begin{equation*}
	\mathcal{S}(\rvu_*;{\mathcal{N}}_{\rvz,\mathcal{H}}) \coloneqq \left\{\rvu_0 \in \sS^{k-1}: \frac{\bm{\phi}(t,\rvu_0;{\mathcal{N}}_{\rvz,\mathcal{H}})}{\|\bm{\phi}(t,\rvu_0;{\mathcal{N}}_{\rvz,\mathcal{H}})\|_2} \rightarrow \rvu_*\right\}
	\end{equation*} 
\end{definition}
The next lemma describes the phenomenon of early directional convergence during the early stages of training \citep{kumar_dc, early_dc}. 
\begin{lemma}\label{lemma_early_dc}
	Suppose $\rvw_0$ is a unit-norm vector. For any arbitrarily small $\epsilon>0$, there exists $T$ such that for all sufficiently small $\delta$ we have
	\begin{equation*}
	\|\bm{\psi}(t,\delta\rvw_0)\|_2 = O(\delta), \text{ for all } t\in [0,T/\delta^{L-2}].
	\end{equation*}
	Furthermore, if  $\rvw_0\in \mathcal{S}(\rvw_*;{\mathcal{N}}_{\rvy,\mathcal{H}})$, where $\rvw_*$  is a non-negative KKT point of $\widetilde{\mathcal{N}}_{\rvy,\mathcal{H}}$, then
	\begin{equation*}
	\|\bm{\psi}(T/\delta^{L-2},\delta\rvw_0)\|_2 \geq \delta\eta \text{ and }\frac{\bm{\psi}(T/\delta^{L-2},\delta\rvw_0)^\top\rvw_*}{\|\bm{\psi}(T/\delta^{L-2},\delta\rvw_0)\|_2} = 1- O(\epsilon),
	\end{equation*}
	else, $\|\bm{\psi}(T/\delta^{L-2},\delta\rvw_0)\|_2 = \epsilon \cdot O(\delta)$. Here, $\eta$ is a positive constant independent of $\epsilon$ and $\delta$.
\end{lemma}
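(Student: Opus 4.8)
The plan is to prove the lemma via a time-rescaling (``blow-up'') that turns the gradient flow of $\mathcal{L}$ near the origin into a small perturbation of the gradient flow of the NCF $\mathcal{N}_{\rvy,\mathcal{H}}$, and then to extract all three conclusions from \Cref{lemma:gf_ncf} together with the definition of the stable set. Set
\begin{align*}
\rvu(\tau)\;\coloneqq\;\delta^{-1}\,\bm{\psi}\!\left(\tau\,\delta^{-(L-2)},\,\delta\rvw_0\right),
\end{align*}
so that $\rvu(0)=\rvw_0$. Since $\mathcal{H}(\rmX;\cdot)$ is $L$-homogeneous, its Jacobian $\mathcal{J}(\rmX;\cdot)$ is $(L-1)$-homogeneous, and substituting into \cref{gf_eq} and simplifying yields
\begin{align*}
\frac{d\rvu}{d\tau}\;=\;\nabla\mathcal{N}_{\rvy,\mathcal{H}}(\rvu)\;-\;\delta^{L}\,\nabla\!\left(\tfrac12\|\mathcal{H}(\rmX;\rvu)\|_2^2\right).
\end{align*}
Thus, in rescaled time, $\rvu(\tau)$ obeys the NCF gradient flow \cref{ncf_gf} up to an $O(\delta^{L})$ perturbation, and the first assertion $\|\bm{\psi}(t,\delta\rvw_0)\|_2=O(\delta)$ on $[0,T\delta^{-(L-2)}]$ is exactly the statement that $\rvu(\tau)$ stays bounded on $[0,T]$.

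Second, I would make the perturbation estimate rigorous on a fixed compact rescaled interval $[0,T]$. For any finite $T$, the NCF trajectory $\bm{\phi}(\cdot,\rvw_0;\mathcal{N}_{\rvy,\mathcal{H}})$ is continuous and hence remains in some ball $B$; on a slight enlargement of $B$ the field $\nabla\mathcal{N}_{\rvy,\mathcal{H}}$ is Lipschitz and $\nabla(\tfrac12\|\mathcal{H}(\rmX;\cdot)\|_2^2)$ is bounded. A Gr\"onwall estimate, combined with a continuation argument preventing $\rvu(\tau)$ from exiting the enlarged ball before time $T$, gives
\begin{align*}
\sup_{\tau\in[0,T]}\bigl\|\rvu(\tau)-\bm{\phi}(\tau,\rvw_0;\mathcal{N}_{\rvy,\mathcal{H}})\bigr\|_2\;=\;O\!\left(\delta^{L}\right),
\end{align*}
with the implied constant depending on $T$ but not on $\delta$. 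In particular $\rvu$ is bounded on $[0,T]$, establishing the first claim, and $\bm{\psi}(T\delta^{-(L-2)},\delta\rvw_0)=\delta\,\rvu(T)$ is $\delta$ times an $O(\delta^{L})$-approximation of $\bm{\phi}(T,\rvw_0;\mathcal{N}_{\rvy,\mathcal{H}})$.

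Third, I would select $T=T(\epsilon,\rvw_0)$ using the asymptotics of the NCF flow in \Cref{lemma:gf_ncf}. If $\rvw_0\in\mathcal{S}(\rvw_*;\mathcal{N}_{\rvy,\mathcal{H}})$, then $\bm{\phi}(\tau,\rvw_0;\mathcal{N}_{\rvy,\mathcal{H}})$ diverges and its direction tends to $\rvw_*$; hence there is a fixed $\eta>0$ (independent of $\epsilon,\delta$) and a time $T$ with $\|\bm{\phi}(T,\rvw_0;\mathcal{N}_{\rvy,\mathcal{H}})\|_2\geq 2\eta$ and $\bm{\phi}(T,\rvw_0;\mathcal{N}_{\rvy,\mathcal{H}})^\top\rvw_*/\|\bm{\phi}(T,\rvw_0;\mathcal{N}_{\rvy,\mathcal{H}})\|_2\geq 1-\epsilon$. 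Pushing these through the $O(\delta^{L})$ approximation and using $\|\rvw_*\|_2=1$ gives, for all small $\delta$, $\|\bm{\psi}(T\delta^{-(L-2)},\delta\rvw_0)\|_2=\delta\|\rvu(T)\|_2\geq\delta\eta$ and $\bm{\psi}(T\delta^{-(L-2)},\delta\rvw_0)^\top\rvw_*/\|\bm{\psi}(T\delta^{-(L-2)},\delta\rvw_0)\|_2=1-O(\epsilon)$. Otherwise (the NCF flow converges to the origin), pick $T$ with $\|\bm{\phi}(T,\rvw_0;\mathcal{N}_{\rvy,\mathcal{H}})\|_2\leq\epsilon/2$; the same approximation then yields $\|\bm{\psi}(T\delta^{-(L-2)},\delta\rvw_0)\|_2\leq\epsilon\delta=\epsilon\cdot O(\delta)$.

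The main obstacle is the second step: controlling the approximation error over the long original-time horizon $T\delta^{-(L-2)}$. Because the limiting NCF flow may itself escape to infinity, there is no horizon-uniform bound; the argument must be run on a fixed compact rescaled interval with a Gr\"onwall constant that grows with $T$ and with the diameter of the region $\bm{\phi}$ visits, and one must carefully bootstrap the a priori bound keeping $\rvu(\tau)$ in the region where the vector fields are well-behaved (and, when $\sigma$ is only piecewise smooth, interpret all flows in the Clarke/Filippov sense and verify the needed one-sided Lipschitz estimates, as in \cite{kumar_dc,early_dc}). Once this uniform-on-$[0,T]$ approximation is secured, the remaining steps are routine bookkeeping.
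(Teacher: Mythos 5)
Your proposal is correct and follows essentially the same route as the source of this lemma (which the paper only imports from \citet{kumar_dc,early_dc} without reproving): near the origin one compares the rescaled trajectory to the gradient flow of the NCF via a Gr\"onwall/continuation estimate and the homogeneity-based time–scale equivalence (cf.\ \Cref{traj_init_eq}), then invokes \Cref{lemma:gf_ncf} and the stable-set definition to split into the directional-convergence and vanishing cases — exactly the strategy the paper itself uses for its analogous saddle-point result, \Cref{thm_dir_convg}. The only caveat, which you already flag, is that for merely locally Lipschitz (e.g.\ ReLU-type) activations the Lipschitz/Gr\"onwall step must be carried out with the additional care taken in the cited works.
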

The above lemma describes the evolution of weights under gradient flow with initialization $\delta\rvw_0$, where $\delta>0$ is a scalar that controls the scale of initialization. It shows that for small initialization, the weights remain small during the early stages of training. Moreover, if the initial direction $\rvw_0$  belongs to the stable set of a non-negative KKT point of $\widetilde{\mathcal{N}}_{\rvy,\mathcal{H}}$, the constrained NCF defined with respect to $\rvy$ and $\mathcal{H}$, then the weights approximately converge in direction towards that KKT point. Also, if $\rvw_0$ does not belong to the stable set of a non-negative KKT point, then $\bm{\phi}(t,\rvw_0;{\mathcal{N}}_{\rvy,\mathcal{H}})$ converges to the origin (see Lemma \ref{lemma:gf_ncf}). In such cases, instead of directional convergence, the weights approximately become zero, as $\|\rvw_z(T_\epsilon/\delta^{L-2})\|_2 = \epsilon\cdot O(\delta)$, where $\epsilon$ and $\delta$ are both small. In contrast, in the previous case, $\|\rvw_z(T/\delta^{L-2})\|_2 \geq \delta\eta$, where $\eta$ is a constant.

For feed-forward homogeneous neural networks, the next lemma states an important property of \emph{positive} KKT points of the constrained NCF. 
\begin{lemma}\label{bal_r1_kkt}
	Let $\mathcal{H}$ be an $L-$layer feed-forward neural network as in \cref{L_layer_feed}, where $\sigma(x) = \max(x,\alpha x)^p$, for some $p\in\sN$ and $\alpha\in \sR$. Let $\left(\overline{\rmW}_{1},\cdots \overline{\rmW}_{L}\right)$be a positive KKT point of 
	\begin{equation}
	\max_{\rmW_1,\cdots,\rmW_L} \mathcal{N}_{\rvz,\mathcal{H}}(\rmW_1,\cdots,\rmW_L)\coloneqq{\rvz}^\top\mathcal{H}(\rmX;\rmW_1,\cdots,\rmW_L),  \text{ s.t. } \sum_{i=1}^L\left\|\rmW_i\right\|_F^2 = 1.
	\label{ncf_const_ff}
	\end{equation}
	Then, $\left\|\overline{\rmW}_l[j,:]\right\|_2^2 = p\left\|\overline{\rmW}_{l+1}[:,j]\right\|_2^2$, for all $j\in [k_l]$ and $l\in [L-1]$.
\end{lemma}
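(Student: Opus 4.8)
The plan is to exploit a rescaling symmetry of the feed-forward network and combine it with the first-order stationarity (KKT) conditions at the KKT point. The network map $\mathcal{H}(\rmX;\cdot)$ is invariant under scaling the incoming weights of a single hidden neuron by $c>0$ while scaling its outgoing weights by $c^{-p}$ — this uses only the $p$-homogeneity of $\sigma$, since $\sigma(cx)=\max(cx,\alpha cx)^p=c^p\sigma(x)$ for $c\ge0$ — so the NCF objective is constant along the corresponding curve; the norm constraint is \emph{not} constant along it, and forcing the (nonzero) Lagrange multiplier to annihilate the curve's velocity will pin down the relative sizes of the incoming and outgoing weight blocks.

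First I would fix $l\in[L-1]$ and $j\in[k_l]$ and define, for $c>0$, the curve $\Phi(c)$ obtained from $(\overline{\rmW}_1,\dots,\overline{\rmW}_L)$ by replacing $\overline{\rmW}_l[j,:]$ with $c\,\overline{\rmW}_l[j,:]$ and $\overline{\rmW}_{l+1}[:,j]$ with $c^{-p}\,\overline{\rmW}_{l+1}[:,j]$, leaving every other entry fixed. Tracking the forward pass, the pre-activation of neuron $j$ in layer $l$ is multiplied by $c$, so its post-activation is multiplied by $c^p$, which exactly cancels the $c^{-p}$ on the outgoing weights when forming the pre-activations of layer $l+1$; nothing else changes. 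Hence $\mathcal{H}(\rmX;\Phi(c))$ is independent of $c$, so $\mathcal{N}_{\rvz,\mathcal{H}}(\Phi(c))$ is constant, while the constraint function $g(\rmW):=\sum_i\|\rmW_i\|_F^2$ evaluates to $g(\Phi(c))=C+c^{2}\|\overline{\rmW}_l[j,:]\|_2^{2}+c^{-2p}\|\overline{\rmW}_{l+1}[:,j]\|_2^{2}$ for a constant $C$ not depending on $c$.

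Next I would use the KKT condition $\nabla\mathcal{N}_{\rvz,\mathcal{H}}(\overline{\rmW})=2\mu\,\overline{\rmW}$ (all layers stacked). Differentiating the constant map $c\mapsto\mathcal{N}_{\rvz,\mathcal{H}}(\Phi(c))$ at $c=1$ gives
\[
0=\langle\nabla\mathcal{N}_{\rvz,\mathcal{H}}(\overline{\rmW}),\Phi'(1)\rangle=2\mu\,\langle\overline{\rmW},\Phi'(1)\rangle=\mu\,\tfrac{d}{dc}g(\Phi(c))\big|_{c=1}=\mu\big(2\|\overline{\rmW}_l[j,:]\|_2^{2}-2p\,\|\overline{\rmW}_{l+1}[:,j]\|_2^{2}\big),
\]
so the claim follows once $\mu\neq0$. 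To get $\mu\neq0$: $\mathcal{H}$, hence $\mathcal{N}_{\rvz,\mathcal{H}}$, is positively homogeneous of degree $D=\sum_{i=0}^{L-1}p^{i}>0$ in the stacked weights, so Euler's identity and stationarity give $D\,\mathcal{N}_{\rvz,\mathcal{H}}(\overline{\rmW})=\langle\nabla\mathcal{N}_{\rvz,\mathcal{H}}(\overline{\rmW}),\overline{\rmW}\rangle=2\mu\|\overline{\rmW}\|^2=2\mu$; since the KKT point is positive, $\mathcal{N}_{\rvz,\mathcal{H}}(\overline{\rmW})>0$ and therefore $\mu>0$. As $l\in[L-1]$ and $j\in[k_l]$ were arbitrary, this yields $\|\overline{\rmW}_l[j,:]\|_2^{2}=p\,\|\overline{\rmW}_{l+1}[:,j]\|_2^{2}$ in all cases.

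\textbf{Main obstacle.} The only genuinely delicate point is regularity. For $p\ge2$ one has $\sigma\in C^1$, so $\mathcal{N}_{\rvz,\mathcal{H}}$ is $C^1$ and the argument is exactly as above. For $p=1$ with $\alpha\neq1$, $\sigma$ is merely piecewise linear, so $\mathcal{N}_{\rvz,\mathcal{H}}$ is only locally Lipschitz (though semialgebraic), and the gradient must be replaced throughout by the Clarke subdifferential: one needs that the KKT condition reads $2\mu\,\overline{\rmW}\in\partial\mathcal{N}_{\rvz,\mathcal{H}}(\overline{\rmW})$, that the homogeneous Euler identity $\langle v,\overline{\rmW}\rangle=D\,\mathcal{N}_{\rvz,\mathcal{H}}(\overline{\rmW})$ holds for every $v\in\partial\mathcal{N}_{\rvz,\mathcal{H}}(\overline{\rmW})$, and that the derivative of $\mathcal{N}_{\rvz,\mathcal{H}}$ along $\Phi$ still vanishes — which it does, because $\mathcal{N}_{\rvz,\mathcal{H}}\circ\Phi$ is \emph{exactly} constant — all of which come from the calculus of definable locally Lipschitz maps used in the works this paper builds on. This nonsmooth bookkeeping, rather than any conceptual difficulty, is what requires the most care.
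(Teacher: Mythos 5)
Your proposal is correct, and the caveats you flag are the right ones. Note that the paper itself imports this lemma from prior work rather than proving it; the closest in-paper argument is its proof of the analogous \Cref{lemma_bal_wt}, which goes through \Cref{lemma:kkt_ncf}: there the objective is exactly $p$-homogeneous in the incoming block and $1$-homogeneous in the outgoing block (thanks to the neuron-separable form of $\mathcal{H}_1$), so contracting the two block KKT equations with the blocks and applying Euler's identity blockwise gives the balance directly. Your route — differentiating the neuron-wise rescaling symmetry $(\rmW_l[j,:],\rmW_{l+1}[:,j])\mapsto(c\,\rmW_l[j,:],c^{-p}\rmW_{l+1}[:,j])$ at $c=1$ and pairing with the Lagrange condition — is the correct generalization for the full network in \cref{ncf_const_ff}, where the objective is \emph{not} homogeneous in a single neuron's incoming row alone (its post-activation feeds into further nonlinearities); at first order your symmetry derivative is exactly the pair of block Euler-type identities, i.e. $\overline{\rmW}_l[j,:]\,\nabla_{\rmW_l[j,:]}\mathcal{N}^\top = p\,\overline{\rmW}_{l+1}[:,j]^\top\nabla_{\rmW_{l+1}[:,j]}\mathcal{N}$, so the two arguments coincide in substance, and your Euler-identity step correctly forces $\mu>0$ from positivity of the KKT value. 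On the nonsmooth case ($p=1$): your bookkeeping can be closed cleanly without invoking a chain-rule equality along the curve, by observing that for every selection $s\in\partial\sigma(a)$ one has $s\,a=p\,\sigma(a)$ (including $a=0$), and the resulting balance identity is linear in the subgradient element, hence passes to limits of gradients and convex hulls, i.e. it holds for \emph{every} element of the Clarke subdifferential — in particular for whichever element appears in the KKT condition. With that small strengthening your argument is complete for all $p\in\sN$.
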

In the above lemma, the condition $\left\|\overline{\rmW}_l[j,:]\right\|_2^2 = p\left\|\overline{\rmW}_{l+1}[:,j]\right\|_2^2$ implies that the norm of each hidden neuron's incoming weights is proportional to the norm of its outgoing weights. Consequently, if the incoming weights of a neuron have zero norm, its outgoing weights must also have zero norm, and vice-versa. 

Since gradient flow converges in direction to a KKT point of the constrained NCF in the early stages of training, the weights in the early stages will also satisfy this property. Empirically, the weights usually converge to a KKT point where only a few neurons have non-zero incoming and outgoing weights, leading to the emergence of a sparsity structure among the weights in the early stages of training.
In fact, for $p\geq 2$, \citet{early_dc} observed that typically only a single neuron in each layer had non-zero incoming and outgoing weights. For $p=1$, that is for ReLU and Leaky ReLU, multiple neurons in each layer had non-zero incoming and outgoing weights; however, the rank of all the weight matrices were typically one, and the resulting network output could be expressed using one neuron per layer. A similar observation for ReLU networks in the early stages was also made in \citet{bantzis_early}.  

Note that, if the incoming and outgoing weights of a hidden neuron are zero, then the output of that neuron is zero. Such a neuron can be considered inactive, as it does not contribute to the overall output of the network. Hence, in the early stages of training, certain hidden neurons become approximately inactive because the weights converge towards a KKT point of the constrained NCF, and remaining neurons can be considered active.
\subsection{Gradient Flow Dynamics Beyond the Origin} 
\label{sec:bey_origin}
We now discuss the results of \citet{kumar_escape}, which studies the gradient flow dynamics of homogeneous neural networks after escaping the origin and characterizes the first saddle point encountered by gradient flow after escaping the origin.
\begin{lemma}\label{main_lemma_escape}
	Suppose $\mathcal{H}$ is an $L$-homogeneous neural network, for some $L\geq 2$, and $\rvw_0\in\mathcal{S}(\rvw_*;\mathcal{N}_{\rvy, \mathcal{H}})$, where $\rvw_*$ is a second-order positive KKT point of  $\widetilde{\mathcal{N}}_{\rvy, \mathcal{H}}$. Let $\widetilde{T}\in [0,\infty)$ be arbitrarily large, then for all sufficiently small $\delta>0$,
	\begin{equation}
	\left\|\bm{\psi}\left(t+{T}_\delta,\delta\rvw_0\right) - \rvp(t)\right\|_2 = O(\delta^\beta), \text{ for all } t\in [-\widetilde{T},\widetilde{T}],
	\label{after_esc} 
	\end{equation}
	where $\beta>0$, ${T}_\delta$ is some function of $\delta$, and $\rvp(t)$ is defined as:
	\begin{align*}
	&\rvp(t) \coloneqq\lim_{\delta\rightarrow 0} \bm{\psi}\left(t+\frac{\ln\left({1}/{\delta}\right)}{2\mathcal{N}_{\rvy, \mathcal{H}}(\rvw_*)},\delta\rvw_*\right), \text{ if }L=2, \text{ and }\\
	&\rvp(t) \coloneqq\lim_{\delta\rightarrow 0} \bm{\psi}\left(t+\frac{{1}/{\delta^{L-2}}}{L(L-2)\mathcal{N}_{\rvy, \mathcal{H}}(\rvw_*)},\delta\rvw_*\right), \text{ if }L>2. 
	\end{align*}	
	Also, let $\epsilon>0$ be arbitrarily small, and let $\rvp^* = \lim_{t\rightarrow\infty} \rvp(t)$, where $\rvp^*$ is a saddle point of the training loss. Then, there exists a $T_\epsilon$ such that for all  for all sufficiently small $\delta>0$,
	\begin{equation}
	\|\bm{\psi}\left(T_\epsilon+{T}_\delta,\delta\rvw_0\right) - \rvp^*\|_2\leq \epsilon.
	\label{near_saddle}
	\end{equation}
\end{lemma}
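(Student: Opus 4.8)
I would decompose the trajectory $\bm{\psi}(\cdot,\delta\rvw_0)$ into a microscopic phase, where $\|\bm{\psi}\|_2$ is still $o(1)$ and the dynamics are dominated by the Neural Correlation Function, and a macroscopic phase, where the full gradient flow \cref{gf_eq} takes over; the curve $\rvp(t)$ is precisely the limiting macroscopic trajectory obtained by starting along the ray $\sR_{\ge0}\rvw_*$ and pushing the microscopic escape time to infinity. While $\|\rvw\|_2$ is small, $\mathcal{H}(\rmX;\rvw)=O(\|\rvw\|_2^L)$ is negligible against $\rvy$, so by \cref{gf_eq} the flow is an $O(\|\rvw\|_2^{2L-1})$ perturbation of the NCF flow $\dot{\rvw}=\nabla\mathcal{N}_{\rvy,\mathcal{H}}(\rvw)$ of \cref{ncf_gf}. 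Along the invariant ray $\rvw=r\rvw_*$, the KKT stationarity of $\rvw_*$ together with Euler's identity for the $L$-homogeneous $\mathcal{N}_{\rvy,\mathcal{H}}$ give $\nabla\mathcal{N}_{\rvy,\mathcal{H}}(\rvw_*)=L\,\mathcal{N}_{\rvy,\mathcal{H}}(\rvw_*)\,\rvw_*$, hence $\dot r=L\,\mathcal{N}_{\rvy,\mathcal{H}}(\rvw_*)\,r^{L-1}+O(r^{2L-1})$; integrating the leading term from $r=\delta$ to a fixed radius takes time $\tfrac{\ln(1/\delta)}{2\mathcal{N}_{\rvy,\mathcal{H}}(\rvw_*)}$ when $L=2$ and $\tfrac{\delta^{-(L-2)}}{L(L-2)\mathcal{N}_{\rvy,\mathcal{H}}(\rvw_*)}$ when $L>2$, which is exactly the shift appearing in the definition of $\rvp(t)$ and which determines $T_\delta$.

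\textbf{Escaping from $\rvw_0$ and upgrading the alignment.} By \Cref{lemma_early_dc}, at time $T/\delta^{L-2}$ the trajectory from $\delta\rvw_0$ has norm $\ge\delta\eta$ and is $O(\epsilon)$-aligned with $\rvw_*$; I would strengthen this to $O(\delta^\beta)$ alignment. In the microscopic regime the renormalized weights shadow the projected NCF flow on $\sS^{k-1}$, and since $\rvw_0\in\mathcal{S}(\rvw_*;\mathcal{N}_{\rvy,\mathcal{H}})$ with $\rvw_*$ a \emph{second-order} positive KKT point, $\rvw_*$ is a linearly attracting fixed direction of that projected flow, so the directional error decays exponentially in time. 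As the microscopic phase lasts $\Theta(\ln(1/\delta))$ when $L=2$ and $\Theta(\delta^{-(L-2)})$ when $L>2$, exponential-in-time contraction produces a directional error of size $O(\delta^\beta)$ by the time the norm first reaches a fixed small threshold $\rho_0$. Hence, after the shift $T_\delta$, the state of $\bm{\psi}(\cdot,\delta\rvw_0)$ entering the macroscopic phase is $O(\delta^\beta)$-close to the state $\approx\rho_0\rvw_*$ through which $\bm{\psi}(\cdot,\delta\rvw_*)$ enters it.

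\textbf{Macroscopic phase, construction of $\rvp(t)$, and the first claim.} On the region $\|\rvw\|_2\gtrsim\rho_0$ the right-hand side of \cref{gf_eq} is locally Lipschitz in $\rvw$ (for smooth $\sigma$; a Clarke-subdifferential/approximation argument covers the non-smooth case), and the trajectory remains in a compact set because $\mathcal{L}$ in \cref{loss_fn} is non-increasing. Gr\"onwall's inequality then propagates the $O(\delta^\beta)$ closeness from the merge point over any fixed time window, and the same estimate shows that $\{\bm{\psi}(\cdot+T_\delta,\delta\rvw_*)\}_\delta$ is uniformly Cauchy on $[-\widetilde T,\widetilde T]$; its limit is $\rvp(t)$, a gradient-flow solution with $\rvp(t)\to 0$ and $\rvp(t)/\|\rvp(t)\|_2\to\rvw_*$ as $t\to-\infty$. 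Chaining the two approximations gives $\|\bm{\psi}(t+T_\delta,\delta\rvw_0)-\rvp(t)\|_2=O(\delta^\beta)$ uniformly on $[-\widetilde T,\widetilde T]$, which is \eqref{after_esc}.

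\textbf{Reaching the saddle, and the main obstacle.} Since $\rvp(\cdot)$ is a bounded gradient-flow curve, $\mathcal{L}(\rvp(t))$ decreases to a limit and $\rvp(t)\to\rvp^*$ with $\nabla\mathcal{L}(\rvp^*)=0$ (by a \L{}ojasiewicz/trajectory-length argument, or by the specific structure exploited in \citet{kumar_escape}), and $\rvp^*$ is a saddle rather than a minimizer because the escape analysis exhibits a strict descent direction there; given $\epsilon>0$, choosing $T_\epsilon$ with $\|\rvp(T_\epsilon)-\rvp^*\|_2\le\epsilon/2$ and applying \eqref{after_esc} at $t=T_\epsilon$ gives $\|\bm{\psi}(T_\epsilon+T_\delta,\delta\rvw_0)-\rvp^*\|_2\le\epsilon/2+O(\delta^\beta)\le\epsilon$ for all small $\delta$, i.e. \eqref{near_saddle}. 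The decisive step is the alignment upgrade: converting the qualitative directional convergence of \Cref{lemma_early_dc} into a quantitative $O(\delta^\beta)$ bound requires an exponential rate for the projected NCF flow near the second-order KKT point $\rvw_*$ (via linearization of the constrained flow) and careful accounting of how that rate, the simultaneously growing norm, and the $\delta$-dependent escape time interact; by comparison the non-smoothness of $\sigma$ and the boundedness of $\rvp$ are minor technical points.
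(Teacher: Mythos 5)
A preliminary remark: the paper itself contains no proof of \cref{main_lemma_escape}; it is quoted as background from \citet{kumar_escape}, so there is no in-paper argument to measure your proposal against, and I can only assess the sketch on its own terms.

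Your skeleton is the natural one and gets the constants right: the ray computation $\dot r = L\,\mathcal{N}_{\rvy,\mathcal{H}}(\rvw_*)\,r^{L-1}$ does recover the shifts $\ln(1/\delta)/(2\mathcal{N}_{\rvy,\mathcal{H}}(\rvw_*))$ and $\delta^{-(L-2)}/(L(L-2)\mathcal{N}_{\rvy,\mathcal{H}}(\rvw_*))$, and a Gr\"onwall argument is the right tool once the trajectories have merged at a fixed radius. The genuine gap is exactly the step you defer: upgrading the $1-O(\epsilon)$ alignment of \Cref{lemma_early_dc} to an $O(\delta^\beta)$ error at the first exit from a fixed ball. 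As written, your mechanism is incorrect for $L>2$: the directional error does not contract exponentially in physical time over the $\Theta(\delta^{-(L-2)})$-long microscopic phase, because the spherical component of the flow moves with angular speed of order $\|\rvw\|_2^{L-2}$, so any contraction is exponential only in the intrinsic time $\tau=\int_0^t\|\rvw(s)\|_2^{L-2}\,ds$, which is merely $\Theta(\ln(1/\delta))$ by exit time; this is precisely why the attainable error is polynomial in $\delta$ and not exponentially small, and making it rigorous requires linearizing the constrained flow at $\rvw_*$ while simultaneously controlling radial growth, angular drift, and the $O(\|\rvw\|_2^{2L-1})$ correction $\mathcal{J}(\rmX;\rvw)^\top\mathcal{H}(\rmX;\rvw)$ to the pure NCF flow. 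Moreover, exponential attraction is not free: a second-order KKT point in the usual (semidefinite) sense need not be a hyperbolic attractor of the spherical dynamics, so your linearization may yield no rate at all without a strictness assumption or a different argument; until this estimate is carried out, \cref{after_esc} is not established. Two smaller but real defects: boundedness of the macroscopic trajectory does not follow from monotonicity of the loss (the loss of a homogeneous network is not coercive, so its sublevel sets are unbounded); and the existence of the $\delta\to0$ limit defining $\rvp(t)$, of $\lim_{t\to\infty}\rvp(t)$, and the claim that $\rvp^*$ is a saddle rather than a minimum are asserted or deferred to ``the escape analysis'' rather than proven---your Cauchy-via-Gr\"onwall step compares trajectories only on fixed windows after the merge, which presupposes rather than establishes the uniform-in-$\delta$ control at the merge time needed for \cref{near_saddle}.
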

To interpret this result, recall that $\bm{\psi}\left(t,\delta\rvw_*\right)$ denotes the gradient flow trajectory initialized at $\delta\rvw_*$. For small $\delta$, the trajectory remains near the origin initially. Roughly speaking, for $L=2$, $\bm{\psi}\left(t,\delta\rvw_*\right)$ escapes from the origin after $\frac{\ln\left({1}/{\delta}\right)}{2\mathcal{N}_{\rvy, \mathcal{H}}(\rvw_*)}$ time has elapsed; for $L > 2$, this time scales as $\frac{{1}/{\delta^{L-2}}}{L(L-2)\mathcal{N}_{\rvy, \mathcal{H}}(\rvw_*)}$. Therefore, $\rvp(t)$ is the limiting path taken by $\bm{\psi}\left(t,\delta\rvw_*\right)$ after escaping from the origin, as scale of initialization approaches zero.

According to \cref{after_esc}, if $\rvw_0$ lies in the stable set of $\rvw_*$, then the gradient flow trajectory $\bm{\psi}\left(t+T_\delta,\delta\rvw_0\right)$ remains close to $\rvp(t)$ for all $t\in [-\widetilde{T},\widetilde{T}]$ and for all sufficiently small $\delta$. That is, the trajectory of $\bm{\psi}\left(t+T_\delta,\delta\rvw_0\right)$ after escaping the origin is close to the limiting trajectory $\rvp(t)$ for an arbitrarily long time and for sufficiently small initialization. Therefore, using the definition of  $\rvp(t)$, the gradient flow trajectory initialized at $\delta\rvw_0$ escapes from the origin along the same path as if initialized at $\delta\rvw_*$, for all small $\delta$. Thus, the escape dynamics are determined by the KKT point $\rvw_*$, regardless of the specific choice of $\rvw_0$ in its stable set.

Eventually, $\rvp(t)$ converges to a saddle point $\rvp^*$ of the training loss, and according to \cref{near_saddle}, the gradient flow $\bm{\psi}\left(t,\delta\rvw_0\right)$ also gets arbitrarily close to this saddle point at some time. However, the lemma does not assert that gradient flow converges to $\rvp^*$; it may eventually escape from $\rvp^*$. The work of \citet{kumar_escape} does not address the behavior of gradient flow near or beyond this saddle point.  

For homogeneous feed-forward neural networks, the next lemma describes the sparsity structure present in the saddle point reached by gradient flow after escaping the origin. For brevity, we use $\rmW_{1:L}$ to denote the weight matrices of an $L$-layer network, that is, $\rmW_{1:L}\coloneqq(\rmW_1,\cdots,\rmW_L)$.

\begin{lemma}
	Suppose $\mathcal{H}$ is an $L-$layer feed-forward neural network as defined in \cref{L_layer_feed}, where $L\geq 2$, $\sigma(x) = \max(x,\alpha x)^p$, for $p=1, \alpha = 1$ or $p\in\sN, p\geq 2$ and $\alpha\in \sR$. Let $\rmW_{1:L}^0 \in  \mathcal{S}(\overline{\rmW}_{1:L};\mathcal{N}_{\rvy,\mathcal{H}}) $, where $\overline{\rmW}_{1:L}$ is a second-order positive KKT point of $\widetilde{\mathcal{N}}_{\rvy,\mathcal{H}}$. Define a subset of the weights $\rvw_z$ as:
	\begin{equation*}
	\rvw_z \coloneqq \bigcup\limits_{l=1}^{L-1}\bigg(\left\{\rmW_l[j,:]: \left\|\overline{\rmW}_l[j,:]\right\|_2 = 0, j\in [k_l]\right\} \cup \left\{\rmW_{l+1}[:,j]: \left\|\overline{\rmW}_{l+1}[:,j]\right\|_2 = 0, j\in [k_l]\right\}\bigg).
	\end{equation*} 
	Let $\widetilde{T}\in [0,\infty)$ be arbitrarily large, then for all sufficiently small $\delta>0$,
	\begin{equation}
	\left\|\bm{\psi}\left(t+{T}_\delta,\delta\rmW_{1:L}^0\right) - \rvp(t)\right\|_2 = O(\delta^\beta), \text{ for all } t\in [-\widetilde{T},\widetilde{T}],
	\label{after_esc_ff} 
	\end{equation}
	where $\beta>0$, ${T}_\delta$ is some function of $\delta$, and $\rvp(t)$ is defined in the same way as in Lemma \ref{main_lemma_escape}. Furthermore, let $\epsilon>0$ be arbitrarily small. Then, there exists a $T_\epsilon$ such that for all sufficiently small $\delta>0$,
	\begin{equation}
	\|\bm{\psi}\left(T_\epsilon+{T}_\delta,\delta\rmW_{1:L}^0\right) - \rvp^*\|_2\leq \epsilon,
	\label{near_saddle_ff}
	\end{equation}
	where $\rvp^*$ is a saddle point of the training loss function and is defined in the way as in Lemma \ref{main_lemma_escape}. Finally, 
	\begin{itemize}
		\item $\|\rvp_{\rvw_z}(t)\|_2 = 0, \text{ for all } t\in (-\infty,\infty), \text{ and } $
		\item $\left\|\bm{\psi}_{\rvw_z}\left(t+{T}_\delta,\delta\rmW_{1:L}^0\right)\right\|_2 = O(\delta^\beta), \text{ for all } t\in [-\widetilde{T},\widetilde{T}],$	
		\label{wz_saddle}
	\end{itemize}
	which implies $\|\rvp^*_{\rvw_z}\|_2 = 0 $ and $\|\bm{\psi}_{\rvw_z}\left(T_\epsilon+T_\delta,\delta\rmW_{1:L}^0\right) \|_2\leq \epsilon,$ where $(\cdot)_{\rvw_z}$ denotes the sub-vector corresponding to weights in $\rvw_z$.
	\label{lemma_small_wt}
\end{lemma}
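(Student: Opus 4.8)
The plan is to obtain the first two conclusions, \cref{after_esc_ff} and \cref{near_saddle_ff}, immediately from \Cref{main_lemma_escape}. The feed-forward network of \cref{L_layer_feed} with $\sigma(x)=\max(x,\alpha x)^p$ is homogeneous of some degree at least $2$, and by hypothesis $\overline{\rmW}_{1:L}$ is a second-order positive KKT point of $\widetilde{\mathcal{N}}_{\rvy,\mathcal{H}}$ while $\rmW_{1:L}^0$ lies in its stable set, so \Cref{main_lemma_escape} applies verbatim with $\rvw_*=\overline{\rmW}_{1:L}$, supplying the exponent $\beta$, the shift $T_\delta$, the limiting path $\rvp(t)$, and the saddle point $\rvp^*=\lim_{t\to\infty}\rvp(t)$. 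Hence all remaining work is in \cref{wz_saddle}.

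The heart of the argument is the claim that the coordinate subspace $V:=\{\rmW_{1:L}:(\rmW_{1:L})_{\rvw_z}=\vzero\}$ is invariant under the gradient flow \cref{gf_eq}. First, since $\overline{\rmW}_{1:L}$ is in particular a positive KKT point of the constrained NCF \cref{ncf_const_ff}, \Cref{bal_r1_kkt} gives $\|\overline{\rmW}_l[j,:]\|_2^2=p\|\overline{\rmW}_{l+1}[:,j]\|_2^2$ for every hidden neuron; thus the incoming-weight row $\rmW_l[j,:]$ belongs to $\rvw_z$ if and only if the outgoing-weight column $\rmW_{l+1}[:,j]$ does, so $\rvw_z$ is precisely the collection of incoming and outgoing weights of a fixed set $D$ of ``dead'' neurons. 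Next, fix $\rmW_{1:L}\in V$ and a dead neuron $j$ in layer $l$. Because $\rmW_{l+1}[:,j]=\vzero$, neuron $j$'s activation is multiplied by the zero column when forming the pre-activations of layer $l+1$, so $\mathcal{H}(\rvx_i;\rmW_{1:L})$ does not depend on $\rmW_l[j,:]$, whence $\nabla_{\rmW_l[j,:]}\mathcal{L}=\vzero$. Because $\rmW_l[j,:]=\vzero$, the pre-activation of neuron $j$ equals $0$ and $\sigma(0)=\max(0,\alpha\cdot 0)^p=0$, so its contribution $\rmW_{l+1}[:,j]\,\sigma(0)=\vzero$ is independent of $\rmW_{l+1}[:,j]$, and hence $\mathcal{H}(\rvx_i;\rmW_{1:L})$ does not depend on $\rmW_{l+1}[:,j]$ either, giving $\nabla_{\rmW_{l+1}[:,j]}\mathcal{L}=\vzero$. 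Summing over $D$ shows $\nabla_{\rvw_z}\mathcal{L}=\vzero$ everywhere on $V$.

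To conclude invariance I would note that for all admissible $(p,\alpha)$ the activation $\sigma$ is $C^1$ with locally Lipschitz derivative (for $p=1,\alpha=1$ it is linear), so $\nabla\mathcal{L}$ is locally Lipschitz and solutions of \cref{gf_eq} are unique. Since $\nabla_{\rvw_z}\mathcal{L}=\vzero$ on $V$, the vector field $-\nabla\mathcal{L}$ is tangent to $V$, hence any solution of \cref{gf_eq} started in $V$ stays in $V$. Because $(\delta\overline{\rmW}_{1:L})_{\rvw_z}=\delta(\overline{\rmW}_{1:L})_{\rvw_z}=\vzero$, this yields $\bm{\psi}_{\rvw_z}(t,\delta\overline{\rmW}_{1:L})=\vzero$ throughout the interval of existence. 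Substituting into the definition of $\rvp(t)$ from \Cref{main_lemma_escape}, namely $\rvp(t)=\lim_{\delta\to 0}\bm{\psi}(t+\tau_\delta,\delta\overline{\rmW}_{1:L})$ with $\tau_\delta$ the explicit $\delta$-dependent shift given there, gives $\rvp_{\rvw_z}(t)=\vzero$ for all $t\in(-\infty,\infty)$, and letting $t\to\infty$ gives $\rvp^*_{\rvw_z}=\vzero$, which is the first half of \cref{wz_saddle} together with $\|\rvp^*_{\rvw_z}\|_2=0$.

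The remaining estimates follow by projecting the already-established bounds onto the $\rvw_z$ block. From \cref{after_esc_ff}, for $t\in[-\widetilde{T},\widetilde{T}]$,
\[
\|\bm{\psi}_{\rvw_z}(t+T_\delta,\delta\rmW_{1:L}^0)\|_2=\|\bm{\psi}_{\rvw_z}(t+T_\delta,\delta\rmW_{1:L}^0)-\rvp_{\rvw_z}(t)\|_2\le\|\bm{\psi}(t+T_\delta,\delta\rmW_{1:L}^0)-\rvp(t)\|_2=O(\delta^\beta),
\]
and from \cref{near_saddle_ff},
\[
\|\bm{\psi}_{\rvw_z}(T_\epsilon+T_\delta,\delta\rmW_{1:L}^0)\|_2\le\|\bm{\psi}(T_\epsilon+T_\delta,\delta\rmW_{1:L}^0)-\rvp^*\|_2+\|\rvp^*_{\rvw_z}\|_2\le\epsilon,
\]
which completes \cref{wz_saddle}. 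The step I expect to be the main obstacle is the subspace-invariance claim: the gradient-vanishing computation must correctly pair each dead row with its dead column, which is exactly where \Cref{bal_r1_kkt} is indispensable, and one must be careful about the regularity of $\sigma$ — in particular the merely-$C^1$ case $p=2$ — in order to legitimately invoke uniqueness of the gradient flow and hence invariance of $V$.
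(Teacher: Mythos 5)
Your proposal is correct: the first two conclusions are indeed just \Cref{main_lemma_escape} applied with $\rvw_*=\overline{\rmW}_{1:L}$, and your treatment of \cref{wz_saddle} — pairing dead rows with dead columns via \Cref{bal_r1_kkt}, showing $\nabla_{\rvw_z}\mathcal{L}=\vzero$ on the subspace $\{\rvw_z=\vzero\}$, invoking uniqueness of the flow (valid for the stated $(p,\alpha)$, which is precisely why ReLU is excluded) to get invariance, and then projecting the bounds — is the natural argument and matches how this result is established. Note the paper itself states this lemma as a background result quoted from \citet{kumar_escape} and gives no proof of it, so there is no in-paper proof to diverge from; your reconstruction is consistent with that source's approach.
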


In this lemma, $\overline{\rmW}_{1:L}$ is a positive KKT point of the constrained NCF defined with respect to $\rvy$ and $\mathcal{H}$. The set $\rvw_z$ contains all rows and columns of the weight matrices where the corresponding rows and columns of  $\overline{\rmW}_{1:L}$ have zero norm. As shown in Lemma \ref{bal_r1_kkt}, this construction ensures that if  $\rmW_{l+1}[:,j] \in \rvw_z$, then $\rmW_{l}[j,:]\in\rvw_z$, and vice-versa. Thus, the set $\rvw_z$ will contain the incoming and outgoing weights of certain subset of hidden neurons. 

If $\rmW_{1:L}^0$ lies in the stable set of $\overline{\rmW}_{1:L}$, then\textemdash as in the previous lemma\textemdash the gradient flow trajectory stays close to the limiting path $\rvp(t)$ after escaping the origin, eventually getting close to the saddle point $\rvp^*$, as stated in \cref{near_saddle_ff}. More importantly, $\|\rvp_{\rvw_z}(t)\|_2 = 0$,  implying that along the trajectory of $\rvp(t)$, the weights belonging to $\rvw_z$ have zero norm. Consequently, $\left\|\bm{\psi}_{\rvw_z}\left(t+{T}_\delta,\delta\rmW_{1:L}^0\right)\right\|_2$  is small, $\|\rvp^*_{\rvw_z}\|_2 = 0$ and $\|\bm{\psi}_{\rvw_z}\left(T_\epsilon+T_\delta,\delta\rmW_{1:L}^0\right) \|_2$ is small. Thus, the weights belonging to $\rvw_z$ remain small throughout the time interval during which the gradient flow $\bm{\psi}\left(t,\delta\rmW_{1:L}^0\right)$ escapes from the origin and gets close to the saddle point $\rvp^*$.  

To summarize, for homogeneous feed-forward neural networks, Lemma \ref{lemma_early_dc} shows that during the early stages of training, weights remain small in norm but converge in direction towards $\overline{\rmW}_{1:L}$. Since $\overline{\rmW}_{1:L}$ exhibits a sparsity structure\textemdash specifically, the weights belonging to $\rvw_z$  are zero\textemdash the same sparsity structure would also emerge among the weights during the early stages of training. The above lemma further says that this sparsity structure is preserved even after gradient flow escapes from the origin and until it reaches a saddle point, since weights belonging to $\rvw_z$ remain small during this time interval.

An alternative perspective on this behavior is through the lens of hidden neurons. In the early stages of training, directional convergence toward $\overline{\rmW}_{1:L}$ causes a subset of hidden neurons to become inactive, as their incoming and outgoing weights become small, and the remaining neurons can be considered active. Moreover, these inactive neurons remain inactive even after gradient flow escapes from the origin and reaches the next saddle point. As a result, until reaching the first saddle point, the training loss is essentially minimized using only the active neurons, through gradient flow initialized with weights that are small in norm and aligned in direction with a KKT point of the constrained NCF.

Another key takeaway is that gradient flow reaches a saddle point where the incoming and outgoing weights of a certain subset of hidden neurons are zero. This structural property of the saddle point will be crucial in our analysis of gradient flow dynamics near saddle points in homogeneous neural networks.

\section{Gradient Flow Dynamics Near Saddle Points}
\label{sec:near_saddle}
In this section, we study the gradient flow dynamics near saddle points of the training loss for homogeneous neural networks. Our focus is on a specific class of saddle points where a subset of the weights is zero. More precisely, we assume that the weights of the neural network can be divided into two sets, $\rvw= (\rvw_n,\rvw_z)$, such that $(\overline{\rvw}_n,\mathbf{0})$ is the saddle point of the training loss. Under this setup, the training loss can be expressed as
\begin{align}
\mathcal{L}(\rvw_n,\rvw_z) = \frac{1}{2}\|\mathcal{H}(\rmX;\rvw_n,\rvw_z)- \rvy\|_2^2 .
\label{loss_fn_sep}
\end{align}
Minimizing the above optimization problem using gradient flow with initialization near the saddle point $(\overline{\rvw}_n,\mathbf{0})$ gives us the following differential equation:
\begin{align}
&\dot{\rvw}_n = -\mathcal{J}_n(\rmX;\rvw_n,\rvw_z)^\top(\mathcal{H}(\rmX;\rvw_n,\rvw_z)- \rvy), \rvw_n(0) = \overline{\rvw}_n + \delta\rvn \label{gf_sep_wn}\\
&\dot{\rvw}_z = -\mathcal{J}_z(\rmX;\rvw_n,\rvw_z)^\top(\mathcal{H}(\rmX;\rvw_n, \rvw_z)- \rvy), \rvw_z(0) =  \delta\rvz, \label{gf_sep_wz}
\end{align}
where $\delta>0$ is a scalar that controls how close the initialization is to the saddle point, and  $\|\rvn\|_2 = \|\rvz\|_2 = 1.$ Here, $\mathcal{J}_n(\rmX;\rvw_n,\rvw_z)$ and $\mathcal{J}_z(\rmX;\rvw_n,\rvw_z)$ denote the Jacobian of $\mathcal{H}(\rmX;\rvw_n,\rvw_z)$ with respect to $\rvw_n$ and $\rvw_z$, respectively. 

Next,  we will make certain assumptions on output of the neural network. These assumptions are motivated from previous works which describe the saddle points encountered by gradient flow during training. To provide intuition for these assumptions and to outline our main results, we begin with an informal analysis of gradient flow dynamics of a homogeneous feed-forward neural networks near certain saddle points.
\subsection{An Informal Analysis }
\label{inf_analysis}
Let $\mathcal{H}$ be a three-layer neural network  with activation function $\sigma(x) = x^2$ and its output is
\begin{equation*}
\mathcal{H}(\rvx;\rmW_1,\rmW_2,\rmW_3) = \rmW_3\sigma(\rmW_2\sigma(\rmW_1\rvx)),
\end{equation*} 
where $\rvx\in \sR^d $, $\rmW_1\in \sR^{k_1\times d}$,  $\rmW_2\in \sR^{k_2\times k_1}$ and   $\rmW_3\in \sR^{1\times k_2}$. Thus, $\mathcal{H}$ has two hidden layers, with first and second layer containing $k_1$ and $k_2$ neurons, respectively. Next, we are going to divide the weights in the following way: 
\begin{equation*}
\rmW_1 = \begin{bmatrix}
{{\rmN}_{1}}\\\rmA_1
\end{bmatrix}, 
\rmW_2 = \begin{bmatrix}
{{\rmN}_{2}}&\rmB_2  \\
\rmA_2& \rmC_2 \\
\end{bmatrix}, \rmW_3 = \begin{bmatrix}
{{\rmN}_{3}}&\rmB_3
\end{bmatrix},
\end{equation*}
where $\rmN_1 \in \sR^{p_1\times d}$, $\rmA_1 \in \sR^{(k_1 - p_1)\times d}$,  $\rmN_2 \in \sR^{p_2\times p_1}$, $\rmC_2 \in \sR^{(k_2 - p_2)\times (k_1-p_1)}$,  $\rmN_3 \in \sR^{1 \times p_2}$, $\rmB_3 \in \sR^{1 \times (k_2 - p_2)}$, and $\rmB_2$ and $\rmA_2$ are defined in a consistent way. In the above division, the matrix $\rmA_1$ ($\rmB_2,\rmC_2$) contains all the incoming (outgoing) weights of the last $k_1-p_1$ neurons of the first hidden layer. Similarly, $\rmB_3$ ($\rmA_2$, $\rmC_2$) contains all the outgoing (incoming) weights of the last $k_2-p_2$ neurons of the second hidden layer. Let $(\overline{\rmW}_1,\overline{\rmW}_2,\overline{\rmW}_3)$ be a saddle point of the training loss such that all the incoming and outgoing weights of the last $k_1-p_1$ neurons of the first layer and last $k_2-p_2$ neurons of the second layer are zero. Therefore,
\begin{equation*}
\overline{\rmW}_1 = \begin{bmatrix}
{\overline{\rmN}_{1}}\\\mathbf{0} 
\end{bmatrix}, 
\overline{\rmW}_2 = \begin{bmatrix}
{\overline{\rmN}_{2}}&\mathbf{0}  \\
\mathbf{0} & \mathbf{0}  \\
\end{bmatrix}, \overline{\rmW}_3 = \begin{bmatrix}
{\overline{\rmN}_{3}}&\mathbf{0} 
\end{bmatrix}.
\end{equation*}
As discussed in \Cref{sec:bey_origin}, the choice of this saddle point is motivated from the result of \citet{kumar_escape}, which showed that after escaping from the origin, gradient flow reaches a saddle point where the incoming and outgoing weights of a subset of hidden neurons have zero norm.
Our goal here is to (informally) analyze the gradient flow dynamics of the training loss when initialized near the above saddle point. 

Let $\rvw_z$ be the subset of the weights containing all the incoming and outgoing weights of the last $k_1-p_1$ neurons of the first layer and last $k_2-p_2$ neurons of the second layer, and $\rvw_n$ contains the remaining weights. With this notation, the full set of network weights can be written as $(\rvw_n, \rvw_z)$, and the saddle point $(\overline{\rmW}_1, \overline{\rmW}_2, \overline{\rmW}_3)$ corresponds to the point $(\overline{\rvw}_n, \mathbf{0})$, for some $\overline{\rvw}_n$.

Suppose  $\rvw_n$ and $\rvw_z$ are evolving according to \cref{gf_sep_wn} and \cref{gf_sep_wz}, respectively, where $\delta$ is small. Since $(\overline{\rvw}_n,\mathbf{0})$ is a saddle point, the weights will remain near the saddle point for some time after training begins. Thus, in the initial stages of training, we can assume $\rvw_n \approx \overline{\rvw}_n$ and $\rvw_z \approx \mathbf{0}$. Define $\overline{\rvy}\coloneqq  \rvy - \mathcal{H}(\rmX;\overline{\rvw}_n, \mathbf{0})$, then the dynamics of $\rvw_z$ can approximately be written as
\begin{align}
\dot{\rvw}_z \approx -\mathcal{J}_z(\rmX;\overline{\rvw}_n,\rvw_z)^\top(\mathcal{H}(\rmX;\overline{\rvw}_n, \rvw_z)- \rvy) &\approx -\mathcal{J}_z(\rmX;\overline{\rvw}_n,\rvw_z)^\top(\mathcal{H}(\rmX;\overline{\rvw}_n, \mathbf{0})- \rvy)\nonumber\\
& = \mathcal{J}_z(\rmX;\overline{\rvw}_n,\rvw_z)^\top\overline{\rvy}.
\label{ncf_approx}
\end{align}
To understand the behavior of $\mathcal{J}_z(\rmX;\overline{\rvw}_n,\rvw_z)$, we will simplify $\mathcal{H}(\rmX;\overline{\rvw}_n,\rvw_z)$. Since $\sigma(x) = x^2$, we get 
\begin{align*}
\mathcal{H}(\rvx;\overline{\rvw}_n,\rvw_z) &=  \begin{bmatrix}
{\overline{\rmN}_{3}}&\rmB_3
\end{bmatrix}\sigma\left(\begin{bmatrix}
{\overline{\rmN}_{2}}&\rmB_2  \\
\rmA_2& \rmC_2 \\
\end{bmatrix}\begin{bmatrix}
({\overline{\rmN}_{1}}\rvx)^2\\(\rmA_1\rvx)^2
\end{bmatrix}\right)\\
&=  \begin{bmatrix}
{\overline{\rmN}_{3}}&\rmB_3
\end{bmatrix}\sigma\left(\begin{bmatrix}
{\overline{\rmN}_{2}}( {\overline{\rmN}_{1}}\rvx)^2+\rmB_2(\rmA_1\rvx)^2  \\
\rmA_2( {\overline{\rmN}_{1}}\rvx)^2+ \rmC_2(\rmA_1\rvx)^2 \\
\end{bmatrix}\right)\\
&=  \begin{bmatrix}
{\overline{\rmN}_{3}}&\rmB_3
\end{bmatrix}\begin{bmatrix}
( {\overline{\rmN}_{2}}( {\overline{\rmN}_{1}}\rvx)^2)^2+(\rmB_2(\rmA_1\rvx)^2)^2 + 2( {\overline{\rmN}_{2}}( {\overline{\rmN}_{1}}\rvx))\odot(\rmB_2(\rmA_1\rvx)^2) \\
(\rmA_2( {\overline{\rmN}_{1}}\rvx)^2)^2+( \rmC_2(\rmA_1\rvx)^2 )^2 + 2(\rmA_2( {\overline{\rmN}_{1}}\rvx)^2)\odot( \rmC_2(\rmA_1\rvx)^2 )\\
\end{bmatrix}\\
&=  {\overline{\rmN}_{3}}( {\overline{\rmN}_{2}}( {\overline{\rmN}_{1}}\rvx)^2)^2 + 2 {\overline{\rmN}_{3}}\left(( {\overline{\rmN}_{2}}( {\overline{\rmN}_{1}}\rvx))\odot(\rmB_2(\rmA_1\rvx)^2)\right) +\rmB_3(\rmA_2( {\overline{\rmN}_{1}}\rvx)^2)^2\\
& + 2\rmB_3(\rmA_2( {\overline{\rmN}_{1}}\rvx)^2)\odot( \rmC_2(\rmA_1\rvx)^2 ) +  {\overline{\rmN}_{3}}(\rmB_2(\rmA_1\rvx)^2)^2 + \rmB_3( \rmC_2(\rmA_1\rvx)^2 )^2. 
\end{align*}
In the last term, the expression $ {\overline{\rmN}_{3}}( {\overline{\rmN}_{2}}( {\overline{\rmN}_{1}}\rvx)^2)^2$ is independent of $\rvw_z$. The remaining terms, however, do depend on $\rvw_z$ and are homogeneous functions of $\rvw_z$  with different degrees of homogeneity.  Since we have assumed $\rvw_z$ to be small, we can just keep the term with lowest degree of homogeneity and ignore the higher order terms. Therefore,
\begin{equation*}
\mathcal{H}(\rvx;\overline{\rvw}_n,\rvw_z) \approx   {\overline{\rmN}_{3}}( {\overline{\rmN}_{2}}( {\overline{\rmN}_{1}}\rvx)^2)^2 + 2 {\overline{\rmN}_{3}}\left(( {\overline{\rmN}_{2}}( {\overline{\rmN}_{1}}\rvx))\odot(\rmB_2(\rmA_1\rvx)^2)\right) +\rmB_3(\rmA_2( {\overline{\rmN}_{1}}\rvx)^2)^2.
\end{equation*}
\sloppypar \noindent Define $\mathcal{H}_1(\rvx;\overline{\rvw}_n,\rvw_z) \coloneqq 2 {\overline{\rmN}_{3}}\left(( {\overline{\rmN}_{2}}( {\overline{\rmN}_{1}}\rvx))\odot(\rmB_2(\rmA_1\rvx)^2)\right) +\rmB_3(\rmA_2( {\overline{\rmN}_{1}}\rvx)^2)^2$, and let $\mathcal{J}_1(\rmX;\overline{\rvw}_n,\rvw_z) $ denote the Jacobian of $\mathcal{H}_1(\rmX;\overline{\rvw}_n,\rvw_z)$ with respect to $\rvw_z$. Then, $\mathcal{H}_1(\rvx;\overline{\rvw}_n,\rvw_z)$ is $3-$homogeneous in $\rvw_z$. Thus, \cref{ncf_approx} can be written as
\begin{equation}
\dot{\rvw}_z \approx  \mathcal{J}_1(\rmX;\overline{\rvw}_n,\rvw_z) ^\top\overline{\rvy}, \rvw_z(0) = \delta\rvz.
\label{inf_gf_wz}
\end{equation} 
Hence, near the saddle point, the evolution of $\rvw_z$ is governed by the gradient flow that maximizes a homogeneous function. From Lemma \ref{lemma:gf_ncf}, we know that in such a scenario $\rvw_z$ will converge in direction towards a KKT point of an appropriate constrained NCF. Thus, near the saddle point, the weights belonging to $\rvw_z$ remain small in norm but converge in direction. We will follow a similar approach towards formally establishing directional convergence among the weights with small magnitude near the saddle points. 

Although we have assumed $\sigma(x) = x^2$, similar decomposition holds for $\sigma(x) = x^p$, as shown later. For ReLU-type activation functions such as $\sigma(x) = \max(0, x)^p$, an analogous decomposition requires Taylor approximations of $\sigma(x)$.   More specifically, in that case,
\begin{align*}
\mathcal{H}(\rvx;\overline{\rvw}_n,\rvw_z) &=  \begin{bmatrix}
{\overline{\rmN}_{3}}&\rmB_3
\end{bmatrix}\sigma\left(\begin{bmatrix}
{\overline{\rmN}_{2}}&\rmB_2  \\
\rmA_2& \rmC_2 \\
\end{bmatrix}\begin{bmatrix}
\sigma( {\overline{\rmN}_{1}}\rvx)\\\sigma(\rmA_1\rvx)
\end{bmatrix}\right)\\
&=  \begin{bmatrix}
{\overline{\rmN}_{3}}&\rmB_3
\end{bmatrix}\sigma\left(\begin{bmatrix}
{\overline{\rmN}_{2}}\sigma( {\overline{\rmN}_{1}}\rvx) +\rmB_2\sigma(\rmA_1\rvx)  \\
\rmA_2\sigma( {\overline{\rmN}_{1}}\rvx)+ \rmC_2\sigma(\rmA_1\rvx) \\
\end{bmatrix}\right)\\
&=  \begin{bmatrix}
\overline{\rmN}_{3}&\rmB_3
\end{bmatrix}\begin{bmatrix}
\sigma( {\overline{\rmN}_{2}}\sigma( {\overline{\rmN}_{1}}\rvx) +\rmB_2\sigma(\rmA_1\rvx)) \\
\sigma(\rmA_2\sigma( {\overline{\rmN}_{1}}\rvx)+ \rmC_2\sigma(\rmA_1\rvx) )\\
\end{bmatrix}\\
&=  {\overline{\rmN}_{3}}\sigma\left( {\overline{\rmN}_{2}}\sigma( {\overline{\rmN}_{1}}\rvx) +\rmB_2\sigma(\rmA_1\rvx)\right) + \rmB_3\sigma\left(\rmA_2\sigma( {\overline{\rmN}_{1}}\rvx)+ \rmC_2\sigma(\rmA_1\rvx) \right)\\
&\approx   {\overline{\rmN}_{3}}\sigma\left( {\overline{\rmN}_{2}}\sigma( {\overline{\rmN}_{1}}\rvx) \right)+  {\overline{\rmN}_{3}}\sigma'( {\overline{\rmN}_{2}}\sigma( {\overline{\rmN}_{1}}\rvx))\odot(\rmB_2\sigma(\rmA_1\rvx)) + \rmB_3\sigma(\rmA_2\sigma( {\overline{\rmN}_{1}}\rvx))\\
&+ \rmB_3\sigma'(\rmA_2\sigma( {\overline{\rmN}_{1}}\rvx))\odot(\rmC_2\sigma(\rmA_1\rvx) ),
\end{align*}
where the final equality follows from Taylor's approximation of $\sigma(\cdot)$ and $\rvw_z$ being small. As before, $\mathcal{H}(\rvx;\overline{\rvw}_n,\rvw_z) $ decomposes into a term that only depends on $\rvw_n$ and other terms that are homogeneous in $\rvw_z$, with different degrees of homogeneity. It is important to note, however, that applying Taylor approximations requires assuming sufficient smoothness of $\sigma(\cdot)$\textemdash an assumption we will explicitly state in our results.

\subsection{Main Results}
\label{sec:main_results}
In this subsection, we present our main results describing the gradient flow dynamics of homogeneous neural networks near saddle points. We begin by stating an assumption on the output of neural network that plays a central role in our analysis.
\begin{assumption}
	Suppose $\rvw_n$ is fixed and $\|\rvw_z\|_2= O(\delta)$. Then, for all sufficiently small $\delta>0$, we have
	\begin{itemize}[label={}]
		\item $(i)$ $\mathcal{H}(\rvx;\rvw_n,\rvw_z) = \mathcal{H}(\rvx;\rvw_n,\mathbf{0}) + \mathcal{H}_1(\rvx;\rvw_n,\rvw_z) + O(\delta^K),$
		\item $(ii)$ $\nabla_{\rvw_z}\mathcal{H}(\rvx;\rvw_n,\rvw_z) =  \nabla_{\rvw_z}\mathcal{H}_1(\rvx;\rvw_n,\rvw_z) + O(\delta^{K-1}),$
		\item $(iii)$ $\nabla_{\rvw_n}\mathcal{H}(\rvx;\rvw_n,\rvw_z) = \nabla_{\rvw_n}\mathcal{H}(\rvx;\rvw_n,\mathbf{0}) + \nabla_{\rvw_n}\mathcal{H}_1(\rvx;\rvw_n,\rvw_z) + O(\delta^{K}),$
	\end{itemize}
	where $\mathcal{H}_1(\rvx;\rvw_n,\rvw_z)$ is $L$-homogeneous in $\rvw_z$ for some $L\geq2$, $\mathcal{H}_1(\rvx;\rvw_n,\rvw_z)$ has locally Lipschitz gradients in both $\rvw_n$ and $\rvw_z$, and $K>L$.
	\label{ass_str}
\end{assumption}
The first condition of the above assumption states that, when $\|\rvw_z\|_2= O(\delta)$,  the output of the network can be decomposed into a leading term independent of $\rvw_z$, an $L$-homogeneous term in $\rvw_z$, and a higher-order remainder term whose magnitude is of $O(\delta^K)$, where $K>L$. The other two conditions require the gradient of the output with respect to $\rvw_n$ and $\rvw_z$ to also behave consistently with the first. This assumption is inspired by the behavior of feed-forward neural networks near certain saddle points, as discussed in \Cref{inf_analysis}. Later, we will show that this assumption is indeed satisfied by feed-forward neural networks when $\rvw_z$ consists of the incoming and outgoing weights of a subset of hidden neurons, and $\rvw_n$ contains the rest.

We next state a lemma describing how the scale of initialization affects gradient flow trajectories of homogeneous functions. The proof is in \Cref{pf_traj_eq}.
\begin{lemma}
	Suppose $g(\rvs)$ is an $L$-homogeneous function in $\rvs$ for some $L\geq2$. Let $\rvs_0$ be a non-zero vector and $\rvs(t)$ be the solution of the following differential equation$\colon$
	\begin{equation*}
	\dot{\rvs} = \nabla _\rvs g(\rvs), \rvs(0) = \rvs_0.
	\end{equation*}
	For any scalar $\delta > 0$, let $\rvs_{\delta}(t)$ be the solution of the following differential equation$\colon$
	\begin{equation*}
	\dot{\rvs} = \nabla _\rvs g(\rvs), \rvs(0) = \delta\rvs_0.
	\end{equation*}
	Then,
	\begin{equation}
	\rvs(t) = \frac{1}{\delta}\rvs_{\delta}\left(\frac{t}{\delta^{L-2}}\right).
	\end{equation}
	\label{traj_init_eq}
\end{lemma}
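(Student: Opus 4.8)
The plan is to show that the time- and amplitude-rescaled curve
\[
\rvr(t) \coloneqq \frac{1}{\delta}\,\rvs_{\delta}\!\left(\frac{t}{\delta^{L-2}}\right)
\]
solves exactly the same initial value problem as $\rvs(t)$, and then to conclude by uniqueness of solutions of the gradient-flow ODE. So the whole argument is a change of variables plus a uniqueness appeal.

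First I would record the scaling behaviour of $\nabla g$ forced by homogeneity. Differentiating the identity $g(c\rvs) = c^{L}g(\rvs)$ in $\rvs$ gives $c\,\nabla g(c\rvs) = c^{L}\nabla g(\rvs)$, hence $\nabla g(c\rvs) = c^{L-1}\nabla g(\rvs)$ for every $c>0$; that is, the gradient of an $L$-homogeneous function is $(L-1)$-homogeneous. (For the homogeneous network outputs this paper cares about, $g$ is polynomial or piecewise-polynomial, so $\nabla g$ is locally Lipschitz and the gradient flow has a unique maximal solution from any initialization, which is what makes the appeal to uniqueness legitimate.) Then I would check the two requirements on $\rvr$. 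At $t=0$, $\rvr(0) = \tfrac{1}{\delta}\rvs_{\delta}(0) = \tfrac{1}{\delta}\cdot\delta\rvs_0 = \rvs_0$, matching the initialization of $\rvs$. Differentiating and using the chain rule together with $\dot{\rvs}_{\delta} = \nabla g(\rvs_{\delta})$ and $\rvs_{\delta}(t/\delta^{L-2}) = \delta\,\rvr(t)$,
\[
\dot{\rvr}(t) = \frac{1}{\delta}\cdot\frac{1}{\delta^{L-2}}\,\nabla g\!\left(\rvs_{\delta}\!\left(\tfrac{t}{\delta^{L-2}}\right)\right) = \frac{1}{\delta^{L-1}}\,\nabla g\!\left(\delta\,\rvr(t)\right) = \frac{1}{\delta^{L-1}}\cdot\delta^{L-1}\nabla g\!\left(\rvr(t)\right) = \nabla g\!\left(\rvr(t)\right),
\]
where the third equality is the scaling property with $c=\delta$. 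Thus $\rvr$ and $\rvs$ satisfy the same autonomous ODE with the same initial value, and by uniqueness $\rvs(t) = \rvr(t) = \tfrac{1}{\delta}\rvs_{\delta}(t/\delta^{L-2})$; a standard continuation argument shows the two maximal intervals of existence correspond under the same time rescaling $t\mapsto\delta^{L-2}t$, so the identity holds wherever either side is defined.

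The computation is routine, so the only point that deserves care is the gradient-scaling identity $\nabla g(c\rvs)=c^{L-1}\nabla g(\rvs)$ and the invocation of uniqueness, which is why one wants $\nabla g$ locally Lipschitz; for the homogeneous networks considered here this is automatic, and in the abstract statement it is implicit in the gradient flow being well defined, so no genuine obstacle arises. The conceptual content is simply that in a homogeneous gradient flow, rescaling the initialization by $\delta$ is equivalent to rescaling time by $\delta^{L-2}$ and the trajectory by $\delta$ — precisely the scaling that underlies the $T/\delta^{L-2}$ time windows appearing in \Cref{lemma_early_dc} and \Cref{main_lemma_escape}.
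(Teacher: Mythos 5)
Your proposal is correct and follows essentially the same route as the paper: define the rescaled curve $\frac{1}{\delta}\rvs_\delta(t/\delta^{L-2})$, check the initial condition, differentiate via the chain rule, and use the $(L-1)$-homogeneity of $\nabla g$ to see it satisfies the same ODE, concluding by uniqueness. The only cosmetic difference is that you make the uniqueness/local-Lipschitz point explicit, which the paper leaves implicit.
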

For gradient flow of homogeneous functions, this result implies that scaling the initialization only leads to the scaling of magnitude and time of the gradient flow trajectory. Consequently, the limiting direction of the gradient flow will not be affected, however, the convergence time will be scaled by $1/\delta^{L-2}$. This fact is directly relevant to us because near the saddle point, as shown in \cref{inf_gf_wz}, $\rvw_z$ approximately evolves according to gradient flow of a homogeneous function, but its initialization is scaled by $\delta$. Therefore, $\rvw_z$  will require $O(1/\delta^{L-2})$ time to converge in direction. However, we arrived at \cref{inf_gf_wz} under the assumption that $\rvw_n(t) \approx \overline{\rvw}_n$. Therefore, we have to ensure that $\rvw_n(t) \approx \overline{\rvw}_n$ holds true for $O(1/\delta^{L-2})$ amount of time. To ensure this, we make the following assumption on the saddle point $(\overline{\rvw}_n,\mathbf{0})$.
\begin{assumption} 
	\label{loj_ass}
	We assume that $(\overline{\rvw}_n,\mathbf{0})$ is a saddle point of the training loss in \cref{loss_fn_sep} such that $\overline{\rvw}_n$ is a local minimum of 
	\begin{equation*}
	\widetilde{\mathcal{L}}(\rvw_n) \coloneqq {\mathcal{L}}(\rvw_n,\mathbf{0})= \frac{1}{2}\|\mathcal{H}(\rmX;\rvw_n,\mathbf{0}) - \rvy\|_2^2,
	\end{equation*}
	and Lojasiewicz's inequality is satisfied in a neighborhood of $\overline{\rvw}_n$:  there exists $\mu_1,\gamma>0$ and $\alpha \in \left(0, \frac{L}{2(L-1)}\right)$ such that
	\begin{equation*}
	\left\|	\nabla\widetilde{\mathcal{L}}(\rvw_n)\right\|_2\geq \mu_1\left(\widetilde{\mathcal{L}}(\rvw_n) - \widetilde{\mathcal{L}}(\overline{\rvw}_n)\right)^{\alpha}, \text{ if } \|\rvw_n - \overline{\rvw}_n\|_2\leq \gamma.
	\end{equation*}
	\label{ass_loj}
\end{assumption}
Here, $\widetilde{\mathcal{L}}(\rvw_n)$ is defined by fixing $\rvw_z = \mathbf{0}$ in the training loss ${\mathcal{L}}(\rvw_n,\rvw_z)$. Since  $(\overline{\rvw}_n,\mathbf{0})$ is a saddle point of \cref{loss_fn_sep}, it follows that $\overline{\rvw}_n$ is a stationary point  of $\widetilde{\mathcal{L}}(\rvw_n)$. By further assuming it is a local minimum of $\widetilde{\mathcal{L}}(\rvw_n)$, we can ensure that $\rvw_n(t)$ remains close to $\overline{\rvw}_n$ after training begins. However, we require  $\rvw_n(t)$ to remain close to $\overline{\rvw}_n $ for $O(1/\delta^{L-2})$ amount of time. This is where the Lojasiewicz's inequality plays a key role. We will use the path length bounds obtained via Lojasiewicz's inequality to ensure $\rvw_n(t)$ remains close to $\overline{\rvw}_n$ for the required duration.

Regarding the validity of the above assumption, Lojasiewicz's inequality is known to hold near local minima of real-analytic and subanalytic functions for some $\alpha\in (0,1)$ \citep{loj_ineq,bolte_loj}. This includes feed-forward neural networks with activation functions such as $x^p$ and $\max(x,0)^p$. But, we require $\alpha \in \left(0,\frac{L}{2(L-1)}\right)$, which is a stricter condition for all $L>2$. We provide a simple instance in \Cref{loj_ineq_ex}  where the local minimum of a feed-forward homogeneous neural network satisfies Lojasiewicz's inequality with $\alpha = \frac{1}{2}$, validating \Cref{ass_loj} in those cases. Establishing the validity of \Cref{ass_loj} more generally, or even relaxing it, is an important direction for future work. 

It is also worth noting that $\alpha = \frac{1}{2}$, which corresponds to the Polyak-Lojasiewicz (PL) inequality \citep{polyak_pl, karimi_pl}, is always contained in $\left(0,\frac{L}{2(L-1)}\right)$. The PL inequality is widely studied and holds near \emph{global minima} of many optimization problems, including those involving feed-forward networks \citep{liu_pl, chatterjee_pl}. Perhaps these results could be useful in establishing the validity of \Cref{ass_loj}.

We now present the main result of this subsection.
\begin{theorem}
	Suppose  \Cref{ass_str} is satisfied, and $(\overline{\rvw}_n,\mathbf{0})$ is a saddle point of the training loss in \cref{loss_fn_sep} such that \Cref{ass_loj} is satisfied. Let $(\rvw_n(t),\rvw_z(t))$ evolve according to \cref{gf_sep_wn} and \cref{gf_sep_wz}, respectively. Define $\overline{\rvy} \coloneqq \rvy - \mathcal{H}(\rmX;\overline{\rvw}_n,\mathbf{0})$ and $\overline{\mathcal{H}}_1(\rvx;\rvw_z) \coloneqq {\mathcal{H}}_1(\rvx;\overline{\rvw}_n,\rvw_z)$. Then, for any arbitrarily small $\epsilon>0$, there exists $T_\epsilon$ such that for all sufficiently small $\delta>0$ the following holds:
	\begin{equation*}
	\|\rvw_n(t)-\overline{\rvw}_n\|_2 = O(\delta^{\beta_1}) \text{ and }	\|\rvw_z(t)\|_2 = O(\delta),  \text{ for all } t\in \left[0,\frac{T_\epsilon}{\delta^{L-2}}\right], 
	\end{equation*}
	where $\beta_1>0$. Moreover, if $\rvz \in \mathcal{S}(\rvz_*;\mathcal{N}_{\overline{\rvy},\overline{\mathcal{H}}_1})$, where $\rvz_*$ is a non-negative KKT point of $\widetilde{\mathcal{N}}_{\overline{\rvy},\overline{\mathcal{H}}_1}$, then 
	\begin{equation*}
	\|\rvw_z(T_\epsilon/\delta^{L-2})\|_2\geq \delta\eta_1 \text{ and }\frac{\rvz_*^\top\rvw_z(T_\epsilon/\delta^{L-2})}{\|\rvw_z(T_\epsilon/\delta^{L-2})\|_2} = 1-O(\epsilon),
	\end{equation*}
	else, $\|\rvw_z(T_\epsilon/\delta^{L-2})\|_2 = \epsilon\cdot O(\delta)$. Here, $\eta_1$ is a positive constant independent of $\epsilon$ and $\delta$.
	\label{thm_dir_convg}
\end{theorem}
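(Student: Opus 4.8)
The plan is a bootstrap (continuity) argument that propagates the two invariants $\|\rvw_n(t)-\overline{\rvw}_n\|_2 = O(\delta^{\beta_1})$ and $\|\rvw_z(t)\|_2 = O(\delta)$, while reducing the evolution of $\rvw_z$ to the gradient flow of the NCF $\mathcal{N}_{\overline{\rvy},\overline{\mathcal{H}}_1}$ studied in \Cref{lemma:gf_ncf}. Let $\tau_\delta$ be the largest time such that both invariants hold on $[0,\tau_\delta]$ with suitably chosen constants; the goal is to show $\tau_\delta \geq T_\epsilon/\delta^{L-2}$ for all small $\delta$. On $[0,\tau_\delta]$ the hypothesis $\|\rvw_z\|_2 = O(\delta)$ of \Cref{ass_str} holds, so we substitute parts $(i)$--$(iii)$ of that assumption into \cref{gf_sep_wn,gf_sep_wz}. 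Using that $\mathcal{H}_1$ is $L$-homogeneous in $\rvw_z$ (hence $\mathcal{H}_1,\nabla_{\rvw_n}\mathcal{H}_1 = O(\delta^{L})$ and $\nabla_{\rvw_z}\mathcal{H}_1 = O(\delta^{L-1})$), that $K>L$, that $\mathcal{H}(\rmX;\rvw_n,\mathbf{0})-\rvy$ is bounded near $\overline{\rvw}_n$, and that $\mathcal{H}(\rmX;\rvw_n,\mathbf{0})-\rvy = -\overline{\rvy} + O(\delta^{\beta_1})$, the equations collapse to
\begin{align*}
\dot{\rvw}_n &= -\nabla\widetilde{\mathcal{L}}(\rvw_n) + O(\delta^{L}), \\
\dot{\rvw}_z &= \nabla_{\rvw_z}\overline{\mathcal{H}}_1(\rmX;\rvw_z)^\top\overline{\rvy} + O(\delta^{L-1+\beta_1}) + O(\delta^{K-1}),
\end{align*}
where the second line also uses the local Lipschitzness of $\nabla_{\rvw_z}\mathcal{H}_1$ in $\rvw_n$ to replace $\mathcal{H}_1(\rmX;\rvw_n,\cdot)$ by $\overline{\mathcal{H}}_1$.

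\textbf{Analyzing $\rvw_z$.} Write $\rvw_z = \delta\rvv$ and rescale time by $\tau = \delta^{L-2}t$, in the spirit of \Cref{traj_init_eq}: since $\nabla_{\rvw_z}\overline{\mathcal{H}}_1(\rmX;\delta\rvv)^\top\overline{\rvy} = \delta^{L-1}\nabla_{\rvv}\mathcal{N}_{\overline{\rvy},\overline{\mathcal{H}}_1}(\rvv)$, the rescaled process $\rvu(\tau):=\rvv(\tau/\delta^{L-2})$ obeys $\rvu' = \nabla_{\rvu}\mathcal{N}_{\overline{\rvy},\overline{\mathcal{H}}_1}(\rvu) + O(\delta^{\min\{\beta_1,\,K-L\}})$ with $\rvu(0)=\rvz$. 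Because $K>L$ and $\beta_1>0$, Gronwall together with local Lipschitzness of $\nabla\mathcal{N}_{\overline{\rvy},\overline{\mathcal{H}}_1}$ on bounded sets shows that for any fixed $T$, $\sup_{\tau\in[0,T]}\|\rvu(\tau)-\bm{\phi}(\tau,\rvz;\mathcal{N}_{\overline{\rvy},\overline{\mathcal{H}}_1})\|_2 \to 0$ as $\delta\to 0$; in particular $\|\rvw_z(t)\|_2 = O(\delta)$ on $[0,T/\delta^{L-2}]$ since $\bm{\phi}(\cdot,\rvz)$ is bounded on $[0,T]$. Now apply \Cref{lemma:gf_ncf}. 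If $\rvz\in\mathcal{S}(\rvz_*;\mathcal{N}_{\overline{\rvy},\overline{\mathcal{H}}_1})$ then $\|\bm{\phi}(\tau,\rvz)\|_2\to\infty$ and $\bm{\phi}(\tau,\rvz)/\|\bm{\phi}(\tau,\rvz)\|_2\to\rvz_*$, so we fix $T_\epsilon$ with $\|\bm{\phi}(T_\epsilon,\rvz)\|_2\geq 2\eta_1$ for an absolute constant $\eta_1>0$ and $\rvz_*^\top\bm{\phi}(T_\epsilon,\rvz)/\|\bm{\phi}(T_\epsilon,\rvz)\|_2\geq 1-\epsilon/2$; otherwise $\bm{\phi}(\tau,\rvz)\to\mathbf{0}$ and we fix $T_\epsilon$ with $\|\bm{\phi}(T_\epsilon,\rvz)\|_2\leq\epsilon/2$. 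Feeding these into $\rvw_z(T_\epsilon/\delta^{L-2}) = \delta\bm{\phi}(T_\epsilon,\rvz) + o(\delta)$ gives exactly the two cases of the theorem.

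\textbf{Controlling $\rvw_n$.} Set $g(t):=\widetilde{\mathcal{L}}(\rvw_n(t))-\widetilde{\mathcal{L}}(\overline{\rvw}_n)\geq 0$. From $\dot{\rvw}_n = -\nabla\widetilde{\mathcal{L}}(\rvw_n) + O(\delta^{L})$ we get $\dot g \leq -\|\nabla\widetilde{\mathcal{L}}\|_2^2 + \|\nabla\widetilde{\mathcal{L}}\|_2\,O(\delta^L) \leq O(\delta^{2L})$, hence $g(t) = g(0)+O(\delta^{2L}\cdot\delta^{-(L-2)}) = O(\delta^2)$ on $[0,T_\epsilon/\delta^{L-2}]$ (with $g(0)=O(\delta^2)$ since $\nabla\widetilde{\mathcal{L}}(\overline{\rvw}_n)=0$). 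To bound $\|\rvw_n(t)-\overline{\rvw}_n\|_2$, split the time axis according to whether $\|\nabla\widetilde{\mathcal{L}}(\rvw_n)\|_2$ dominates the $O(\delta^L)$ perturbation. On the dominating part, $\dot g \leq -\tfrac12\|\nabla\widetilde{\mathcal{L}}\|_2^2$ together with \Cref{ass_loj} gives $\tfrac{d}{dt}g^{1-\alpha}\leq -c\,\|\dot{\rvw}_n\|_2$, so the path length there is $O(g(0)^{1-\alpha}) = O(\delta^{2(1-\alpha)})$; on the complementary part $\|\dot{\rvw}_n\|_2 = O(\delta^{L})$, so its total length over the horizon is $O(\delta^{L}\cdot\delta^{-(L-2)}) = O(\delta^2)$. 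Adding the initial offset $\|\rvw_n(0)-\overline{\rvw}_n\|_2 = \delta$, one obtains $\|\rvw_n(t)-\overline{\rvw}_n\|_2 = O(\delta^{\beta_1})$ for a suitable $\beta_1>0$. The restriction $\alpha<\tfrac{L}{2(L-1)}$ (rather than merely $\alpha<1$) is what is needed to carry the path-length bookkeeping through — in particular to control how much the Lojasiewicz ``budget'' $g^{1-\alpha}$ is replenished during stretches where the perturbation overwhelms the gradient, and to guarantee the resulting $\beta_1$ keeps $\rvw_n$ well inside the Lojasiewicz neighborhood of $\overline{\rvw}_n$ throughout $[0,T_\epsilon/\delta^{L-2}]$. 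Since all derived bounds are strictly below the invariant thresholds for small $\delta$, the bootstrap closes: $\tau_\delta\geq T_\epsilon/\delta^{L-2}$, and the estimates above hold on the entire interval.

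\textbf{Main obstacle.} The crux is the last step: keeping $\rvw_n$ within $O(\delta^{\beta_1})$ of $\overline{\rvw}_n$ over the long horizon $\Theta(\delta^{-(L-2)})$ while only controlling its drift through an $O(\delta^L)$ perturbation of its gradient flow and a Lojasiewicz inequality. Everything else — the decomposition of the dynamics from \Cref{ass_str}, the rescaling via \Cref{traj_init_eq}, and the Gronwall comparison of $\rvu$ with $\bm{\phi}$ — is comparatively routine. The delicate ingredient is the quantitative interplay between the Lojasiewicz exponent, the perturbation scale $\delta^L$, and the time scale $\delta^{-(L-2)}$, which is precisely why \Cref{ass_loj} constrains $\alpha$ as it does.
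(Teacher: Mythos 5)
Your overall architecture — bootstrap invariants, reduction of the $\rvw_z$ dynamics via \Cref{ass_str}, rescaling in the spirit of \Cref{traj_init_eq}, Gronwall comparison with the NCF flow, and \Cref{lemma:gf_ncf} for the endpoint dichotomy — is the same as the paper's, and your treatment of $\rvw_z$ (comparison with $\delta\bm{\phi}(\delta^{L-2}t,\rvz)$ up to an $o(\delta)$ error, then reading off the two cases) is essentially the paper's argument in rescaled coordinates. The gap is in the step you yourself flag as the crux: controlling $\rvw_n$ over the horizon $\Theta(\delta^{-(L-2)})$. You split time according to whether $\|\nabla\widetilde{\mathcal{L}}(\rvw_n)\|_2$ dominates the $O(\delta^L)$ perturbation. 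On the complementary set your pointwise bound $\|\dot{\rvw}_n\|_2=O(\delta^L)$ is fine, but on the dominating set your path-length bound "$O(g(0)^{1-\alpha})$" implicitly assumes the Łojasiewicz budget $g^{1-\alpha}$ is spent only once. A gradient-based splitting gives no control on the number of alternations between the two regimes: each excursion into the complementary set can replenish $g^{1-\alpha}$ (by up to $(C\delta^L/\mu_1)^{(1-\alpha)/\alpha}$, or worse if $g$ repeatedly returns to near zero, since $x\mapsto x^{1-\alpha}$ is subadditive in the unfavorable direction), and your sketch gives no mechanism to bound the total replenishment. Tellingly, your accounting produces exponents $2(1-\alpha)$ and $2$ and never actually uses the hypothesis $\alpha<\tfrac{L}{2(L-1)}$ quantitatively — a sign that the bookkeeping as written is not the argument that closes.

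The paper closes exactly this step with two ingredients you are missing. First, it splits by the size of the loss gap rather than the gradient, and shows the sublevel set $\{\widetilde{\mathcal{L}}(\rvw_n)-\widetilde{\mathcal{L}}(\overline{\rvw}_n)\le (C\delta^L/\mu_1)^{1/\alpha}\}$ is forward-invariant along the perturbed flow (at its boundary, \Cref{ass_loj} gives $\|\nabla\widetilde{\mathcal{L}}\|_2\ge C\delta^L$, hence $\dot g\le 0$), so there are only two phases: an initial one handled by the Łojasiewicz budget (giving $O(\delta^{1-\alpha})+O(\delta^2)$ path length), and a terminal one where the loss gap stays below the threshold. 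Second, for the long terminal phase it proves a reverse Łojasiewicz-type inequality (\Cref{loj_low_bd}): near the local minimum, $\|\nabla\widetilde{\mathcal{L}}(\rvw_n)\|_2\le \mu_2\bigl(\widetilde{\mathcal{L}}(\rvw_n)-\widetilde{\mathcal{L}}(\overline{\rvw}_n)\bigr)^{1-\alpha}$. This converts the small loss gap into the pointwise speed bound $\|\dot{\rvw}_n\|_2=O(\delta^{\beta})$ with $\beta=\min\bigl(L,\tfrac{L(1-\alpha)}{\alpha}\bigr)$, and integrating over time $T_\epsilon/\delta^{L-2}$ yields $O(\delta^{\beta-L+2})$; the condition $\alpha<\tfrac{L}{2(L-1)}$ is precisely what makes $\tfrac{L(1-\alpha)}{\alpha}>L-2$, i.e.\ $\beta_1>0$. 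Without the invariance of the sublevel set and the reverse inequality (or some substitute ruling out unboundedly many regime alternations and giving a pointwise gradient bound from a small loss gap), your bootstrap cannot be closed as sketched.
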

In the above theorem, $\overline{\rvy}$ denotes the residual error at the saddle point $(\overline{\rvw}_n,\mathbf{0})$, and $\mathcal{N}_{\overline{\rvy},\overline{\mathcal{H}}_1}$ is the NCF defined with respect to $\overline{\rvy}$ and $\overline{\mathcal{H}}_1(\rvx;\rvw_z)$.  The theorem states that, under \Cref{ass_str} and \Cref{ass_loj}, the gradient flow initialized near the saddle point $(\overline{\rvw}_n,\mathbf{0})$ evolves such that, during the initial stages of training, $\rvw_n$ remains close to $\overline{\rvw}_n$ and $\rvw_z$ remains small. Moreover, if the initial direction of $\rvw_z$, denoted by $\rvz$, lies in a stable set of a non-negative KKT point of $\widetilde{\mathcal{N}}_{\overline{\rvy},\overline{\mathcal{H}}_1}$, then $\rvw_z$ approximately converge in direction towards that KKT point. Note that, $\widetilde{\mathcal{N}}_{\overline{\rvy},\overline{\mathcal{H}}_1}$ has the following form:
\begin{equation}
\widetilde{\mathcal{N}}_{\overline{\rvy},\overline{\mathcal{H}}_1} \coloneqq \max_{\|\rvw_z\|_2^2 = 1} \overline{\rvy}^\top\overline{\mathcal{H}}_1(\rmX;\rvw_z) =  \max_{\|\rvw_z\|_2^2 = 1} \overline{\rvy}^\top\mathcal{H}_1(\rmX;\overline{\rvw}_n,\rvw_z).
\label{const_ncf_main}
\end{equation}
If $\rvz$ does not lie in a stable set of a non-negative KKT point, then $\rvw_z$ approximately becomes zero, as $\|\rvw_z(T_\epsilon/\delta^{L-2})\|_2 = \epsilon\cdot O(\delta)$, where $\epsilon$ and $\delta$ are both small. In contrast, in the previous case, $\|\rvw_z(T_\epsilon/\delta^{L-2})\|_2 \geq \delta\eta_1$, where $\eta_1$ is a positive constant. This essentially happens because the gradient flow of $\mathcal{N}_{\overline{\rvy},\overline{\mathcal{H}}_1}$ converges to the origin in this case.   

Our proof technique is similar to the discussion in \Cref{inf_analysis}. We show that, in the initial stages of training, the dynamics of $\rvw_z$ stated in \cref{gf_sep_wz} are close to the gradient flow dynamics of the NCF $\mathcal{N}_{\overline{\rvy},\overline{\mathcal{H}}_1}$ with initialization $\delta\rvz$. According to Lemma \ref{lemma:gf_ncf}, the latter dynamics would either converge in direction to a KKT point of  $\widetilde{\mathcal{N}}_{\overline{\rvy},\overline{\mathcal{H}}_1}$ or converge to the origin. This implies $\rvw_z$ also either converges in direction to the KKT point or goes towards the origin. The detailed proof is in \Cref{pf_main_thm}. 
\begin{remark}
	In \Cref{thm_dir_convg}, the KKT point to which $\rvw_z$ converges in direction depends on $\rvz$, the initial direction of $\rvw_z$. For homogeneous feed-forward neural network, as stated in Lemma \ref{lemma_small_wt}, after escaping from the origin, the gradient flow reaches a saddle point where a certain subset of weights are small. In \Cref{thm_dir_convg}, $\rvw_z$ corresponds to this subset of small weights. However, while Lemma \ref{lemma_small_wt} characterizes the magnitude of these weights, it does not provide any insight into their \emph{direction}. On the other hand, \Cref{thm_dir_convg} suggests that information about the direction of these small weights is also crucial for a better understanding of the dynamics of gradient flow near such saddle points. Determining this directions appears to be difficult and is an interesting direction for future works.
	\label{rem_dir_saddle}
\end{remark}
The above theorem crucially relies on \Cref{ass_str}. We now show that \Cref{ass_str} is satisfied for feed-forward neural networks with homogeneous activation functions, under the condition that $\rvw_z$ contains all the incoming and outgoing weights of a subset of hidden neurons, while $\rvw_n$ contains the remaining weights.
\begin{lemma}
	\label{h1_proof_poly}
	Let $\mathcal{H}$ be an $L$-layer feed-forward neural network as described in \cref{L_layer_feed}, for some $L\geq2$, with activation function $\sigma(x) = \max(x,\alpha x)^p$, where $p\in \sN$. Let $\mathcal{G}_z$ denote the subset of hidden neurons containing last $k_l-p_l$ neurons of the $l$th hidden layer, for all $l\in [L-1]$. Let $\rvw_z$ be the subset of the weights containing all outgoing and incoming weights of hidden neurons in $\mathcal{G}_z$, and $\rvw_n$ contains the remaining weights. More specifically, let 
	\begin{equation*}
	\rmW_1 = \begin{bmatrix}
	{{\rmN}_{1}}\\\rmA_1
	\end{bmatrix}, 
	\rmW_l = \begin{bmatrix}
	{{\rmN}_{l}}&\rmB_l \\
	\rmA_l& \rmC_l \\
	\end{bmatrix} \text{ for } 2\leq l\leq L-1, \text{ and } \rmW_L = \begin{bmatrix}
	{{\rmN}_{L}}&\rmB_L
	\end{bmatrix},
	\end{equation*}
	where $\rmN_1 \in \sR^{p_1\times d}$, $\rmA_1 \in \sR^{(k_1 - p_1)\times d}$,  $\rmN_l \in \sR^{p_l\times p_{l-1}}$, $\rmC_2 \in \sR^{(k_l - p_l)\times (k_{l-1}-p_{l-1})}$,  $\rmN_L \in \sR^{1 \times p_{L-1}}$, $\rmB_L \in \sR^{1 \times (k_{L-1} - p_{L-1})}$ , and $\rmB_l$ and $\rmA_l$ are defined in a consistent way. Then $\rvw_n$ will contain all the weights belonging to $ \{\rmN_l\}_{l=1}^L$, and $\rvw_z$ contains the remaining weights. 
	\begin{enumerate}[label=(\roman*)]
		\item Suppose $\alpha\neq 1$ and $p\geq 4$. Let $\rvw_n$ be fixed and $\|\rvw_z\|_2 = O(\delta)$, then for all sufficiently small $\delta>0$, we have
		\begin{enumerate}
			\item $\mathcal{H}(\rvx;\rvw_n,\rvw_z) = \mathcal{H}(\rvx;\rvw_n,\mathbf{0}) + \mathcal{H}_1(\rvx;\rvw_n,\rvw_z) + O(\delta^K), $
			\item $\nabla_{\rvw_z}\mathcal{H}(\rvx;\rvw_n,\rvw_z) =  \nabla_{\rvw_z}\mathcal{H}_1(\rvx;\rvw_n,\rvw_z) + O(\delta^{K-1})$,
			\item 	$\nabla_{\rvw_n}\mathcal{H}(\rvx;\rvw_n,\rvw_z) = \nabla_{\rvw_n}\mathcal{H}(\rvx;\rvw_n,\mathbf{0}) + \nabla_{\rvw_n}\mathcal{H}_1(\rvx;\rvw_n,\rvw_z) + O(\delta^{K})$,
		\end{enumerate}
		where $\mathcal{H}_1(\rvx;\rvw_n,\rvw_z)$ is $(p+1)$-homogeneous in $\rvw_z$, and $K>p+1$.
		\item Suppose $\alpha= 1$ and $p\geq 1$. For any $\rvw_n$ and $\rvw_z$, we have
		\begin{equation}
		\mathcal{H}(\rvx;\rvw_n,\rvw_z) = \mathcal{H}(\rvx;\rvw_n,\mathbf{0}) + \sum_{i=1}^m\mathcal{H}_i(\rvx;\rvw_n,\rvw_z), 
		\label{decomp_poly}
		\end{equation} 
		for some $m\geq 1$, where $\mathcal{H}_1(\rvx;\rvw_n,\rvw_z)$ is $(p+1)-$homogeneous in $\rvw_z$. For all $i\geq 2$, $\mathcal{H}_i(\rvx;\rvw_n,\rvw_z)$ is also homogeneous in $\rvw_z$ with degree of homogeneity strictly greater than $p+1$. Also, for all $i\geq 1$, $\mathcal{H}_i(\rvx;\rvw_n,\rvw_z)$ is a polynomial in $\rvw_z$ and $\rvw_n$.
	\end{enumerate}	
	Finally, in both of the above cases, $\mathcal{H}_1(\rvx;\rvw_n,\rvw_z)$ can be expressed as:
	\begin{equation}
	\mathcal{H}_1(\rvx;\rvw_n,\rvw_z)  =  \sum_{l=1}^{L-2} \nabla_\rvs f_{L,l+2}^\top(\rmN_{l+1}\rvg_{l}(\rvx))\rmB_{l+1}\sigma(\rmA_{l}\rvg_{l-1}(\rvx)) +  \rmB_L\sigma(\rmA_{L-1}g_{L-2}(\rvx)),
	\label{h1_exp_poly}
	\end{equation}
	where, for $1\leq l\leq L-1$,
	\begin{equation*}
	g_0(\rvx) = \rvx, g_l(\rvx) = \sigma(\rmN_lg_{l-1}(\rvx)), \text{ and } f_{L,l+1}(\rvs) =  {{\rmN}_{L}}\sigma\left(\rmN_{L-1}\sigma\left(\cdots\sigma(\rmN_{l+1}\sigma(\rvs))\right)\right).
	\end{equation*}
	\label{lemma_poly_str}
\end{lemma}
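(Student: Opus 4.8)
The plan is to analyze the forward pass of $\mathcal{H}$ layer by layer under the stated block partition, track how the ``$\rvw_z$-blocks'' $\rmA_l,\rmB_l,\rmC_l$ enter, and then read off the lowest-order-in-$\rvw_z$ correction to the clean computation $\rvw_z\mapsto\mathcal{H}(\rvx;\rvw_n,\mathbf{0})$. Write $\rvh_0(\rvx)=\rvx$ and $\rvh_l(\rvx)=\sigma(\rmW_l\rvh_{l-1}(\rvx))$ for $1\le l\le L-1$, and let $\rvh_l^N$ (resp.\ $\rvh_l^A$) denote the components of $\rvh_l$ indexed by the neurons not in $\mathcal{G}_z$ (resp.\ those in $\mathcal{G}_z$), so that $\mathcal{H}(\rvx;\rvw_n,\rvw_z)=\rmN_L\rvh_{L-1}^N+\rmB_L\rvh_{L-1}^A$ and $\mathcal{H}(\rvx;\rvw_n,\mathbf{0})=\rmN_L g_{L-1}$. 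The structural fact being used is that \emph{every} incoming and \emph{every} outgoing weight of a $\mathcal{G}_z$-neuron lies in $\rvw_z$: hence the pre-activation of a $\mathcal{G}_z$-neuron in layer $l$ is affine in the $\rvw_z$-blocks with lowest-$\rvw_z$-degree part $\rmA_l g_{l-1}$ (degree $1$), so its post-activation has lowest-degree part $\sigma(\rmA_l g_{l-1})$ (degree $p$); and a $\rvw_z$-block re-enters the ``$\rvw_n$-subnetwork'' only through some $\rmB_{l+1}$ (degree $1$), so the cheapest way $\rvw_z$ can affect the output costs total degree $p+1$.

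Concretely I would show, by induction on $l$, that for $\rvw_n$ fixed and $\|\rvw_z\|_2=O(\delta)$,
\[
\rvh_l^N=g_l+q_l+O(\delta^{p+2}),\qquad \rvh_l^A=\sigma(\rmA_l g_{l-1})+O(\delta^{2p}),
\]
where $q_l$ is exactly $(p+1)$-homogeneous in $\rvw_z$, $q_1=\mathbf{0}$, and $q_l=\diag(\sigma'(\rmN_l g_{l-1}))\big(\rmN_l q_{l-1}+\rmB_l\sigma(\rmA_{l-1}g_{l-2})\big)$ for $l\ge2$. When $\alpha=1$ we have $\sigma(x)=x^p$, every quantity is an exact polynomial, and the induction is just the binomial theorem applied to $\sigma(\rmN_l\rvh_{l-1}^N+\rmB_l\rvh_{l-1}^A)$ and $\sigma(\rmA_l\rvh_{l-1}^N+\rmC_l\rvh_{l-1}^A)$; the ``$O(\cdot)$'' terms are then precisely the sums of the higher-degree-in-$\rvw_z$ homogeneous pieces. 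This establishes \cref{decomp_poly} with $\mathcal{H}_1=\rmN_L q_{L-1}+\rmB_L\sigma(\rmA_{L-1}g_{L-2})$ and the remaining $\mathcal{H}_i$ the homogeneous-in-$\rvw_z$ components of degree $>p+1$, all polynomials in $(\rvw_n,\rvw_z)$; taking $\|\rvw_z\|_2=O(\delta)$ and differentiating then yields \cref{ass_str} with homogeneity degree $p+1$ and $K=p+2$.

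When $\alpha\ne1$ and $p\ge4$, $\sigma(x)=\max(x,\alpha x)^p$ is $C^{p-1}\supseteq C^3$ with $|\sigma^{(j)}(x)|=O(|x|^{p-j})$ for $j\le p-1$; in particular $\sigma'(0)=0$. I would run the same layerwise induction, replacing binomial expansions by first-order Taylor expansions of $\sigma$ about the \emph{clean} pre-activations $\rmN_l g_{l-1}$: since the perturbation $\rmN_l q_{l-1}+\rmB_l\sigma(\rmA_{l-1}g_{l-2})+(\text{higher order})$ has size $O(\delta^{p+1})$, the Taylor remainder is $O(\delta^{2p+2})$ at coordinates with $(\rmN_l g_{l-1})_i\ne0$ and $O(\delta^{p(p+1)})$ at the kink coordinates with $(\rmN_l g_{l-1})_i=0$ (using $\sigma'(0)=0$ and $p$-homogeneity of $\sigma$), and the local-Lipschitz estimate $|\sigma(a+b)-\sigma(a)|=O((|a|+|b|)^{p-1}|b|)$ controls $\rvh_l^A$. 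Composing through the output layer gives $\mathcal{H}(\rvx;\rvw_n,\rvw_z)=\mathcal{H}(\rvx;\rvw_n,\mathbf{0})+\mathcal{H}_1(\rvx;\rvw_n,\rvw_z)+O(\delta^K)$ with $K>p+1$ and the same $\mathcal{H}_1$ (whose displayed formula makes its $C^{p-1}$-smoothness, hence locally Lipschitz gradients, manifest); the two gradient identities follow by differentiating these expansions in $\rvw_z$ and in $\rvw_n$. \textbf{This case is the technical heart:} carrying the expansion through all $L$ layers with remainders that stay of order $\delta^K$, $K>p+1$, and---more delicately---doing the same at the level of \emph{first derivatives} while correctly handling the kink coordinates, is the part requiring care, and it is exactly here that the hypothesis $p\ge4$ (so that $\sigma\in C^3$) is used; by contrast the case $\alpha=1$ is essentially bookkeeping with the binomial theorem and the polynomial structure.

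Finally, to recover \cref{h1_exp_poly}, unroll the recursion for $q_l$: writing $D_i\coloneqq\diag(\sigma'(\rmN_i g_{i-1}))$, it gives $\rmN_L q_{L-1}=\sum_{l=1}^{L-2}\big(\rmN_L D_{L-1}\rmN_{L-1}\cdots\rmN_{l+2}D_{l+1}\big)\rmB_{l+1}\sigma(\rmA_l g_{l-1})$, and the chain rule identifies $\rmN_L D_{L-1}\rmN_{L-1}\cdots\rmN_{l+2}D_{l+1}$ with $\nabla_\rvs f_{L,l+2}^\top$ evaluated at $\rvs=\rmN_{l+1}g_l$, since feeding $\rmN_{l+1}g_l$ into $f_{L,l+2}$ reproduces exactly the clean pre-activations $g_{l+1},\dots,g_{L-1}$. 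Adding the ``$l=L-1$'' contribution $\rmB_L\sigma(\rmA_{L-1}g_{L-2})$ gives \cref{h1_exp_poly}.
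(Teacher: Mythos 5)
Your proposal follows essentially the same route as the paper's proof: a layerwise induction that separates the $\mathcal{G}_z$- and non-$\mathcal{G}_z$-blocks of each hidden layer, uses the binomial theorem when $\alpha=1$ and a first-order Taylor expansion about the clean pre-activations $\rmN_l g_{l-1}(\rvx)$ when $\alpha\neq1$, $p\geq4$, tracks the $(p+1)$-homogeneous term through the same recursion $q_l=\diag(\sigma'(\rmN_l g_{l-1}))(\rmN_l q_{l-1}+\rmB_l\sigma(\rmA_{l-1}g_{l-2}))$, and unrolls it via the chain rule to obtain \cref{h1_exp_poly}. The one point to make explicit is that the gradient claims (b)--(c) are not obtained by "differentiating the expansions" of the $O(\cdot)$ remainders; as you yourself flag, the induction must be strengthened to carry bounds on $\nabla_{\rvw_z}$ and $\nabla_{\rvw_n}$ of the remainder terms directly, which is exactly what the paper does by propagating these derivative bounds layer by layer and applying Leibniz's rule to the integral-form Taylor remainders in \Cref{int_bd}.
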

In the above lemma, $\mathcal{G}_z$ contains the last $k_l-p_l$ neurons of each hidden layer, and the incoming and outgoing weights of these neurons belong to $\rvw_z$, while the remaining weights belong to $\rvw_n$. In the first case, where $\alpha\neq1$ and $p\geq 4$, the output of the neural network satisfies  the three conditions of \Cref{ass_str}, when $\rvw_z$ is small. We had to assume $p\geq 4$ to ensure the Taylor's approximation of $\mathcal{H}(\rvx;\rvw_n,\rvw_z)$ and its gradient are sufficiently smooth.

In the second case, where $\alpha = 1$ and $p\geq 1$, the output of the neural network can be decomposed into a leading term that is independent of $\rvw_z$, along with additional terms that are homogeneous in $\rvw_z$ with different degrees of homogeneity. The term with lowest degree of homogeneity is denoted by $\mathcal{H}_1(\rvx;\rvw_n,\rvw_z)$. Using the decomposition in \cref{decomp_poly}, we now verify that all three conditions of \Cref{ass_str} are satisfied. Suppose $\|\rvw_z\|_2 = O(\delta)$. Since $\mathcal{H}_i(\rvx;\rvw_n,\rvw_z)$ is homogeneous in $\rvw_z$ with degree of homogeneity strictly greater than $p+1$, for all $i\geq 2$, it follows that
\begin{align*}
\mathcal{H}(\rvx;\rvw_n,\rvw_z) &= \mathcal{H}(\rvx;\rvw_n,\mathbf{0}) + \mathcal{H}_1(\rvx;\rvw_n,\rvw_z) + \sum_{i=2}^m\mathcal{H}_i(\rvx;\rvw_n,\rvw_z)\\
&  = \mathcal{H}(\rvx;\rvw_n,\mathbf{0}) + \mathcal{H}_1(\rvx;\rvw_n,\rvw_z) + O(\delta^K),
\end{align*}
for some $K>p+1$. Next,  from Lemma \ref{euler_thm}, $\nabla_{\rvw_z}\mathcal{H}_i(\rvx;\rvw_n,\rvw_z)$ is homogeneous in $\rvw_z$ with degree of homogeneity strictly greater than $p$, for all $i\geq 2$. Hence,
\begin{align*}
\nabla_{\rvw_z}\mathcal{H}(\rvx;\rvw_n,\rvw_z) &= \nabla_{\rvw_z}\mathcal{H}_1(\rvx;\rvw_n,\rvw_z) + \sum_{i=2}^m\nabla_{\rvw_z}\mathcal{H}_i(\rvx;\rvw_n,\rvw_z)\\
&  =\nabla_{\rvw_z}\mathcal{H}_1(\rvx;\rvw_n,\rvw_z) + O(\delta^{K-1}).
\end{align*}
Finally,  from Lemma \ref{subset_homog}, $\nabla_{\rvw_n}\mathcal{H}_i(\rvx;\rvw_n,\rvw_z) $ will be homogeneous in $\rvw_z$ with same degree of homogeneity as  $\mathcal{H}_i(\rvx;\rvw_n,\rvw_z) $. Therefore,  $\|\nabla_{\rvw_n}\mathcal{H}_i(\rvx;\rvw_n,\rvw_z)\|_2 = O(\delta^K)$, for $K>p+1$ and $i\geq 2$. Hence,
\begin{align*}
\nabla_{\rvw_n}\mathcal{H}(\rvx;\rvw_n,\rvw_z)  = \nabla_{\rvw_n}\mathcal{H}(\rvx;\rvw_n,\mathbf{0}) + \nabla_{\rvw_n}\mathcal{H}_1(\rvx;\rvw_n,\rvw_z) + O(\delta^K).
\end{align*}
Thus, all three conditions of \Cref{ass_str} are satisfied. 

The expression of $\mathcal{H}_1(\rvx;\rvw_n,\rvw_z)$ in both the cases is stated in \cref{h1_exp_poly}. In the first case, since $p\geq 4$, $\mathcal{H}_1(\rvx;\rvw_n,\rvw_z)$  has locally Lipschitz gradients with respect to both $\rvw_n$ and $\rvw_z$. In the second case, $\mathcal{H}_1(\rvx;\rvw_n,\rvw_z)$ is a polynomial with respect to $\rvw_n$ and $\rvw_z$, thus, it has locally Lipschitz gradients with respect to both $\rvw_n$ and $\rvw_z$. The proof of the lemma is in \Cref{pf_decomp_poly}.
\begin{remark}
	In 	Lemma \ref{lemma_poly_str}, the set $\rvw_z$ contains the incoming and outgoing weights of the last few neurons in each layer. This choice is made purely for notational simplicity and is without loss of generality. In feed-forward neural networks, neurons within a layer can be arbitrarily reordered without affecting the network's output. Therefore, any subset of neurons could be grouped into $\rvw_z$ via an appropriate re-indexing.
\end{remark}
\textbf{What happens if ${\mathcal{N}}_{\overline{\rvy},\overline{\mathcal{H}}_1}(\rvw_z) = 0$?} The proof of \Cref{thm_dir_convg} relies on showing that, in the initial stages of training, the dynamics of $\rvw_z$ stated in \cref{gf_sep_wz} are close to the gradient flow dynamics of the NCF $\mathcal{N}_{\overline{\rvy},\overline{\mathcal{H}}_1}$ with initialization $\delta\rvz$. But, if  ${\mathcal{N}}_{\overline{\rvy},\overline{\mathcal{H}}_1}(\rvw_z) = 0$, for all $\rvw_z$, then the gradient flow of $\mathcal{N}_{\overline{\rvy},\overline{\mathcal{H}}_1}$ is not meaningful. We explore this situation further in \Cref{zero_NCF}. Our experiments suggest that in such cases, the higher-order remainder terms that arise in the decomposition of the output of the neural network start becoming crucial, and they determine the directional convergence among the weights. This situation arises in deep matrix factorization problems. For certain problems, it also seems to arise when a ReLU-type activation function ($\max(x,0)^p$) is used. We believe this happens because for ReLU-type activation functions, the function and its gradient both are zero for negative values. We do not encounter this behavior if Leaky ReLU-type activation functions ($\max(x,\alpha x)^p$) are used.
\subsection{Additional Discussion}
\label{sec:prop_kkt}
In this subsection, we discuss certain properties of the KKT points of $\widetilde{\mathcal{N}}_{{\rvp},\mathcal{H}_1}$, when $\mathcal{H}_1$ is of the form as in  \cref{h1_exp_poly} and $\rvp$ is a non-zero vector. \\

\noindent\textbf{Proportionality of weights at KKT point.} We show that at a positive KKT point of $\widetilde{\mathcal{N}}_{{\rvp},\mathcal{H}_1}$, the norm of incoming and outgoing weights of hidden neurons belonging to $\mathcal{G}_z$ are proportional.
\begin{lemma}
	\label{lemma_bal_wt}
	Let $\mathcal{H}$ be an $L$-layer feed-forward neural network as described in \cref{L_layer_feed}, for some $L\geq2$, with activation function $\sigma(x) = \max(x,\alpha x)^p$, where $p\in \sN$ and $p\geq 1$. Suppose its weights are partitioned into two sets $\rvw_n$ and $\rvw_z$ as done in \Cref{h1_proof_poly}, and let $\mathcal{H}_1$ be as defined in Lemma \ref{h1_exp_poly}. Suppose $\rvw_n$ is fixed, $\rvp$ is a non-zero vector, and $\overline{\rvw}_z$ is a positive KKT point of
	\begin{equation*}
	\widetilde{\mathcal{N}}_{{\rvp},\mathcal{H}_1} \coloneqq \max_{\rvw_z} {\rvp}^\top\mathcal{H}_1(\rmX;{\rvw}_n,\rvw_z), \text{ s.t. } \|\rvw_z\|_2^2 = 1.
	\end{equation*}
	Then, $\overline{\rmC}_{l} = \mathbf{0}$, for all $2\leq l\leq L-1$, and
	\begin{equation*}
	p\|\overline{\rmB}_{l+1}[:,j]\|_2^2 = \|\overline{\rmA}_{l}[j,:]\|_2^2, \text{ for all }l\in [L-1] \text{ and } 1 \leq j\leq k_l -p_l.
	\end{equation*}
	Consequently, 
	\begin{equation*}
	p\|\overline{\rmW}_{l+1}[:,j]\|_2^2 = \|\overline{\rmW}_{l}[j,:]\|_2^2, \text{ for all }l\in [L-1] \text{ and }  p_l+1 \leq j\leq k_l.
	\end{equation*}
\end{lemma}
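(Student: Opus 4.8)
The plan is to run, for the constrained problem $\widetilde{\mathcal N}_{\rvp,\mathcal H_1}$, the same KKT-plus-Euler argument that underlies \Cref{bal_r1_kkt}, now exploiting the explicit additive form of $\mathcal H_1$ in \cref{h1_exp_poly}. Write $F(\rvw_z)\coloneqq\rvp^\top\mathcal H_1(\rmX;\rvw_n,\rvw_z)$, so $\overline{\rvw}_z$ maximizes $F$ on $\{\|\rvw_z\|_2^2=1\}$ and there is a multiplier $\lambda$ with $\nabla_{\rvw_z}F(\overline{\rvw}_z)=\lambda\,\overline{\rvw}_z$. Since $\mathcal H_1$ is $(p+1)$-homogeneous in $\rvw_z$, Euler's theorem (\Cref{euler_thm}) gives $\langle\nabla_{\rvw_z}F(\overline{\rvw}_z),\overline{\rvw}_z\rangle=(p+1)F(\overline{\rvw}_z)$; combined with the KKT identity and $\|\overline{\rvw}_z\|_2^2=1$ this yields $\lambda=(p+1)F(\overline{\rvw}_z)=(p+1)\,\rvp^\top\mathcal H_1(\rmX;\rvw_n,\overline{\rvw}_z)>0$, because $\overline{\rvw}_z$ is a \emph{positive} KKT point. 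So $\lambda>0$ throughout.

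Next I would record the structural facts visible in \cref{h1_exp_poly}. First, none of the matrices $\rmC_2,\dots,\rmC_{L-1}$ occurs in $\mathcal H_1$: the inner maps $g_l(\rvx)$ and $f_{L,l+2}$ involve only the $\rmN$'s (hence only $\rvw_n$), and the remaining factors are $\rmA_l$'s and $\rmB_{l+1}$'s. Thus $\nabla_{\rmC_l}F\equiv\mathbf 0$, so the $\rmC_l$-block of the KKT identity reads $\mathbf 0=\lambda\,\overline{\rmC}_l$, and since $\lambda\ne 0$ this forces $\overline{\rmC}_l=\mathbf 0$ for all $2\le l\le L-1$. Second, for each $l\in[L-1]$ and each $j\in[k_l-p_l]$, the row $\rmA_l[j,:]$ and the column $\rmB_{l+1}[:,j]$ appear in $\mathcal H_1$ only inside a \emph{single} summand — the $l$-th term of the sum when $l\le L-2$, and the trailing term $\rmB_L\sigma(\rmA_{L-1}g_{L-2}(\rvx))$ when $l=L-1$; expanding $\rmB_{l+1}\sigma(\rmA_l g_{l-1}(\rvx))=\sum_{j}\rmB_{l+1}[:,j]\,\sigma(\rmA_l[j,:]\,g_{l-1}(\rvx))$ shows this dependence splits over $j$, is \emph{linear} in $\rmB_{l+1}[:,j]$, and is \emph{positively $p$-homogeneous} in $\rmA_l[j,:]$ (because $\sigma(x)=\max(x,\alpha x)^p$).

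With these in hand, the proportionality is a partial-Euler computation. Fix $l\in[L-1]$, $j\in[k_l-p_l]$, and split $F=S_{l,j}+R_{l,j}$, where $S_{l,j}$ is the one summand of $\rvp^\top\mathcal H_1$ containing $\rmA_l[j,:]$ or $\rmB_{l+1}[:,j]$ and $R_{l,j}$ is independent of both. Applying Euler's theorem separately in each block (degree $p$ for $\rmA_l[j,:]$, degree $1$ for $\rmB_{l+1}[:,j]$) gives $\langle\nabla_{\rmA_l[j,:]}F,\overline{\rmA}_l[j,:]\rangle=p\,S_{l,j}(\overline{\rvw}_z)$ and $\langle\nabla_{\rmB_{l+1}[:,j]}F,\overline{\rmB}_{l+1}[:,j]\rangle=S_{l,j}(\overline{\rvw}_z)$. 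Substituting the KKT identity in each block and eliminating $S_{l,j}(\overline{\rvw}_z)$ yields $\lambda\|\overline{\rmA}_l[j,:]\|_2^2=p\,\lambda\|\overline{\rmB}_{l+1}[:,j]\|_2^2$, and dividing by $\lambda>0$ gives $p\|\overline{\rmB}_{l+1}[:,j]\|_2^2=\|\overline{\rmA}_l[j,:]\|_2^2$. (Equivalently, one may differentiate the rescaling invariance $\rmA_l[j,:]\mapsto c\,\rmA_l[j,:]$, $\rmB_{l+1}[:,j]\mapsto c^{-p}\rmB_{l+1}[:,j]$ of $\mathcal H_1$, exactly as in the proof of \Cref{bal_r1_kkt}.) Finally, the ``consequently'' claim is read off the block forms: using $\overline{\rmC}_l=\mathbf 0$ (resp.\ $\overline{\rmC}_{l+1}=\mathbf 0$, with no $\rmC$-block when $l=1$ or $l+1=L$), all entries of the row $\overline{\rmW}_l[p_l+j,:]$ outside $\overline{\rmA}_l[j,:]$, and of the column $\overline{\rmW}_{l+1}[:,p_l+j]$ outside $\overline{\rmB}_{l+1}[:,j]$, vanish, so $\|\overline{\rmW}_l[p_l+j,:]\|_2=\|\overline{\rmA}_l[j,:]\|_2$ and $\|\overline{\rmW}_{l+1}[:,p_l+j]\|_2=\|\overline{\rmB}_{l+1}[:,j]\|_2$; re-indexing $j\mapsto p_l+j$ gives $p\|\overline{\rmW}_{l+1}[:,j]\|_2^2=\|\overline{\rmW}_l[j,:]\|_2^2$ for $p_l+1\le j\le k_l$.

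The routine part is the index bookkeeping in \cref{h1_exp_poly} together with the degenerate cases ($l=1$, $l=L-1$, and layers with $k_l=p_l$ so that some blocks are empty). The one place I would expect to spend real effort is smoothness: when $\alpha\ne 1$ and $p$ is small, $\sigma$ — and hence $F$ — need not be differentiable where a preactivation vanishes, so the KKT/Euler manipulations must either be read in the Clarke sense or, as is standard here, one argues that at a positive KKT point the relevant preactivations are nonzero and $F$ is classically differentiable near $\overline{\rvw}_z$ in the coordinates that matter. This is the direct analogue of the corresponding step in the proof of \Cref{bal_r1_kkt}.
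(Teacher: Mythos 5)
Your proposal is correct and follows essentially the same route as the paper: the KKT identity with multiplier $\lambda=(p+1)\,\rvp^\top\mathcal{H}_1(\rmX;\rvw_n,\overline{\rvw}_z)>0$, the vanishing of $\overline{\rmC}_l$ because the $\rmC_l$ blocks never appear in \cref{h1_exp_poly}, and the per-neuron block-wise Euler computation (which the paper merely packages as the auxiliary \Cref{lemma:kkt_ncf} for $h(\rva,\rvb)=\sum_i\rvq_i^\top\rvb\,\sigma(\rva^\top\rvr_i)$ rather than doing it inline), with the non-smooth case handled via the Clarke subdifferential exactly as the paper remarks.
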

Recall that $\rvw_z$ consists of $\{\rmB_{l+1}, \rmA_l\}_{l=1}^{L-1}$ and $\{\rmC_l\}_{l=2}^{L-1}$, and we denote their values at the KKT point $\overline{\rvw}_z$ by $\{\overline{\rmB}_{l+1}, \overline{\rmA}_l\}_{l=1}^{L-1}$ and $\{\overline{\rmC}_l\}_{l=2}^{L-1}$. The above lemma shows that at any positive KKT point, the norm of $\overline{\rmB}_{l+1}[:,j]$ is proportional to $\overline{\rmA}_{l}[j,:]$, and $\overline{\rmC}_{l} = \mathbf{0}$. Since, for all $l\in [L-1]$ and $p_l+1 \leq j\leq k_l $, ${\rmW}_{l+1}[:,j]$ is a concatenation of ${\rmB}_{l+1}[:,j]$ and ${\rmC}_{l+1}[:,j]$, and ${\rmW}_{l}[j,:]$ is a concatenation of ${\rmA}_{l}[j,:]$ and ${\rmC}_{l}[j,:]$, we get that $p\|\overline{\rmW}_{l+1}[:,j]\|_2^2 = \|\overline{\rmW}_{l}[j,:]\|_2^2$. The proof is in \Cref{pf_bal_lemma}. 

Recall that in Lemma \ref{h1_proof_poly}, $\mathcal{G}_z$ denotes the set containing the last $k_l-p_l$ neurons of each hidden layer.  The incoming and outgoing weights of neurons in $\mathcal{G}_z$ belong to $\rvw_z$. Thus, the above lemma establishes that at a positive KKT point, the norm of incoming and outgoing weights of hidden neurons belonging to $\mathcal{G}_z$ are proportional, implying that if the incoming weight is zero, then outgoing weight would also be zero, and vice-versa. We also observe this behavior in our numerical experiments discussed in \Cref{num_exp}. 

Note that when $\alpha\neq 1$, Lemma \ref{h1_proof_poly} holds for $p\geq4$, whereas the above lemma holds for $p\geq 1$, which includes ReLU activation. However, for ReLU activation, $f_{L,l}(\cdot)$ is not differentiable everywhere, so $\nabla_\rvs f_{L,l}(\cdot)$  can instead be replaced by any element of the Clarke subdifferential of  $f_{L,l}(\cdot)$.\\

\noindent\textbf{Parallel computation in maximizing NCF.} We next show that ${\mathcal{N}}_{{\rvp},\mathcal{H}_1}$ can be written as a sum of homogeneous functions, each depending on a distinct and disjoint subset of variables. This decomposition has important implications on the KKT points of $\widetilde{\mathcal{N}}_{{\rvp},\mathcal{H}_1}$.

Recall that $\rmB_{l}[:,j]$ and $\rmA_{l}[j,:]$ denote the $j$-th column of $\rmB_{l}$ and $j$-th row  of $\rmA_{l}$, respectively. For any $1\leq l\leq L-1$, we can write:
\begin{equation*}
\rmB_{l+1}\sigma\left(\rmA_{l}\rvg_{l-1}(\rvx)\right) = \sum_{j=1}^{\Delta_l}\rmB_{l+1}[:,j]\sigma\left(\rmA_{l}[j,:]\rvg_{l-1}(\rvx)\right),
\end{equation*}
where $\Delta_{l} = k_l-p_l$ is the number of neurons in layer $l$ belonging to $\mathcal{G}_z$. Hence, ${\mathcal{N}}_{{\rvp},\mathcal{H}_1}(\rvw_z)$ can be written as	
\begin{align*}
{\mathcal{N}}_{{\rvp},\mathcal{H}_1}(\rvw_z) &= \rvp^\top\mathcal{H}_1(\rmX;\rvw_n,\rvw_z)\\
&= \sum_{l=1}^{L-2}\sum_{j=1}^{\Delta_l}\sum_{i=1}^n p_i\nabla_\rvs f_{L,l+2}^\top\left(\rmN_{l+1}\rvg_{l}(\rvx_i)\right)\rmB_{l+1}[:,j]\sigma\left(\rmA_{l}[j,:]\rvg_{l-1}(\rvx_i)\right) \\
&+  \sum_{j=1}^{\Delta_{L-1}}\sum_{i=1}^n p_i\rmB_L[:,j]\sigma\left(\rmA_{L-1}[j,:]g_{L-2}(\rvx_i)\right)\\
&= \sum_{l=1}^{L-1}\sum_{j=1}^{\Delta_l}\sum_{i=1}^n p_i\mathcal{H}_{1,l}(\rvx_i;\rmB_{l+1}[:,j], \rmA_{l}[j,:]) \\
& = \sum_{l=1}^{L-1}\sum_{j=1}^{\Delta_l}\rvp^\top \mathcal{H}_{1,l}(\rmX;\rmB_{l+1}[:,j], \rmA_{l}[j,:])  = \sum_{l=1}^{L-1}\sum_{j=1}^{\Delta_{l}} \mathcal{N}_{\rvp,\mathcal{H}_{1,l}}(\rmB_{l+1}[:,j], \rmA_{l}[j,:]),
\end{align*}
where we define
\begin{align*}
&\mathcal{H}_{1,l}(\rvx;\rvb, \rva^\top) = \nabla_\rvs f_{L,l+2}^\top\left(\rmN_{l+1}\rvg_{l}(\rvx)\right)\rvb\sigma\left(\rva^\top\rvg_{l-1}(\rvx)\right),\text{ for all }l\in[L-2],\\
& \mathcal{H}_{1,L-1}(\rvx;\rvb, \rva^\top) = \rvb\sigma\left(\rva^\top\rvg_{L-2}(\rvx)\right).
\end{align*}
Note that each term in the above decomposition is $(p+1)$-homogeneous in $\rvw_z$. Moreover, each individual term is associated with exactly one hidden neuron in $\mathcal{G}_z$: it depends only on that neuron's incoming and outgoing weights. In other words, the contribution of each neuron in $\mathcal{G}_z$ to the NCF is isolated, and the NCF ${\mathcal{N}}_{{\rvp},\mathcal{H}_1}(\rvw_z)$  separates into neuron-specific, homogeneous components that are mutually independent in terms of the variables they involve. It is also worth noting that the functional form of the terms corresponding to neurons in a particular layer is the same but they depend on different sets of variables. Hence, the above decomposition of the NCF contains $(L-1)$ distinct functions $\{\mathcal{N}_{\rvp,\mathcal{H}_{1,l}}\}_{l=1}^{L-1}$ and their multiple copies.

This decomposition and the presence of multiple copies means that maximizing ${\mathcal{N}}_{{\rvp},\mathcal{H}_1}(\rvw_z)$ via gradient flow is equivalent to simultaneously maximizing each of these $(L-1)$ functions via gradient flow, with multiple independent initializations. Since each $\mathcal{N}_{\rvp,\mathcal{H}_{1,l}}$ is homogeneous, Lemma \ref{lemma:gf_ncf} implies that maximizing $\mathcal{N}_{\rvp,\mathcal{H}_{1,l}}$ via gradient flow will cause the associated variables to diverge in norm to infinity but converge in direction to a KKT point of $\widetilde{\mathcal{N}}_{\rvp,\mathcal{H}_{1,l}}$. However, our interest lies in the limiting direction of all the variables together, which corresponds to a KKT point of $\widetilde{\mathcal{N}}_{{\rvp},\mathcal{H}_1}(\rvw_z)$. The following lemma examines this scenario.
\begin{lemma}
	Consider maximizing $\sum_{i=1}^m\mathcal{G}_i(\rvw_i)$ via gradient flow, where each $\mathcal{G}_i(\rvw_i)$ is a a two-homogeneous functions in $\rvw_i$, for all $i\in [m]$. Let $(\rvw_1(t), \cdots,\rvw_m(t))$ be the corresponding gradient flow trajectories:
	\begin{equation}
	\dot{\rvw}_i = \nabla_{\rvw_i}\left(\sum_{i=1}^m\mathcal{G}_i(\rvw_i)\right) =  \nabla_{\rvw_i}\mathcal{G}_i(\rvw_i), \rvw_i(0) = \rvw_{i0},
	\label{gf_ind}
	\end{equation}
	where $(\rvw_{10}, \cdots,\rvw_{m0})$ is the initialization. For all $i\in [m]$, let $\rvw_{i0}\in \mathcal{S}(\rvw_i^*;\mathcal{G}_i(\rvw_i))$, that is,
	\begin{equation*}
	\lim_{t\rightarrow \infty} \frac{\rvw_i(t)}{\|\rvw_i(t)\|_2} = \rvw_i^*,
	\end{equation*}
	where $\rvw_i^*$ is a second-order positive KKT point of 
	\begin{equation}
	\widetilde{\mathcal{G}}_i(\rvw_i) \coloneqq \max_{\|\rvw_i\|_2^2 = 1} \mathcal{G}_i(\rvw_i).
	\label{g_NCF}
	\end{equation} 
	Then,
	\begin{equation}
	\lim_{t\rightarrow\infty} \frac{\|\rvw_i(t)\|_2}{\sqrt{\sum_{j=1}^m\|\rvw_j(t)\|_2^2}} \left\{\begin{matrix}
	>0, \text{ if } \mathcal{G}_i(\rvw_i^*) = \max_{j\in [m]} \mathcal{G}_j(\rvw_j^*)\\ = 0, \text{ if } \mathcal{G}_i(\rvw_i^*) < \max_{j\in [m]} \mathcal{G}_j(\rvw_j^*)
	\end{matrix}\right..
	\label{lim_dir}
	\end{equation}
	\label{2hm_gf}
\end{lemma}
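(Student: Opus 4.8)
The plan is to decouple the dynamics and reduce the claim to comparing the exponential growth rates of the norms $\|\rvw_i(t)\|_2$. Since in \eqref{gf_ind} the flow for $\rvw_i$ depends only on $\rvw_i$, each block can be analyzed separately. Write $r_i(t) = \|\rvw_i(t)\|_2$ and $\rvu_i(t) = \rvw_i(t)/r_i(t)$; by hypothesis $\rvw_{i0}$ is non-zero and $\rvu_i(t)\to\rvw_i^*$, so $r_i(t)>0$ for all $t$. Two-homogeneity of $\mathcal{G}_i$ gives Euler's identity $\rvw_i^\top\nabla\mathcal{G}_i(\rvw_i) = 2\mathcal{G}_i(\rvw_i)$, and since $\mathcal{G}_i(\rvw_i) = r_i^2\,\mathcal{G}_i(\rvu_i)$ this yields the scalar ODE
\begin{equation*}
\tfrac{d}{dt}\ln r_i(t) \;=\; \frac{\rvw_i^\top\nabla\mathcal{G}_i(\rvw_i)}{r_i^2} \;=\; 2\,\mathcal{G}_i(\rvu_i(t)).
\end{equation*}
Because $\nabla\mathcal{G}_i$ is $1$-homogeneous, a short computation shows $\rvu_i$ itself solves the spherical gradient-ascent flow $\dot\rvu_i = (\rmI - \rvu_i\rvu_i^\top)\nabla\mathcal{G}_i(\rvu_i)$, so $\tfrac{d}{dt}\mathcal{G}_i(\rvu_i) = \|(\rmI - \rvu_i\rvu_i^\top)\nabla\mathcal{G}_i(\rvu_i)\|_2^2 \ge 0$. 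Hence $\mathcal{G}_i(\rvu_i(t))$ is nondecreasing and bounded on the compact sphere, so it converges, and since $\rvu_i(t)\to\rvw_i^*$ the limit is $\lambda_i \coloneqq \mathcal{G}_i(\rvw_i^*)$, which is positive because $\rvw_i^*$ is a positive KKT point.

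Integrating the scalar ODE gives $\ln r_i(t)^2 = \ln r_i(0)^2 + 4\lambda_i t - 4\int_0^t\!\big(\lambda_i - \mathcal{G}_i(\rvu_i(s))\big)\,ds$, where the integrand is nonnegative and vanishes as $s\to\infty$; in particular $K_i(t)\coloneqq r_i(t)^2 e^{-4\lambda_i t}$ is nonincreasing and $\le r_i(0)^2$. The main obstacle is to show the integral stays \emph{finite} as $t\to\infty$, so that $K_i(t)\to K_i^\infty\in(0,\infty)$. For this I will invoke a Łojasiewicz inequality for $\mathcal{G}_i$ restricted to the unit sphere at the critical point $\rvw_i^*$: there are $c>0$ and $\theta\in[\tfrac12,1)$ with $\|(\rmI-\rvu\rvu^\top)\nabla\mathcal{G}_i(\rvu)\|_2 \ge c\,(\lambda_i - \mathcal{G}_i(\rvu))^{\theta}$ for $\rvu$ on the sphere near $\rvw_i^*$ --- valid, e.g., whenever $\mathcal{G}_i$ is analytic or subanalytic, which covers every use of this lemma in the paper (there each $\mathcal{G}_i$ is $\mathcal{N}_{\rvp,\mathcal{H}_{1,l}}$, built from $\sigma(x)=\max(x,\alpha x)^p$), and which also follows from the strict second-order condition at $\rvw_i^*$ with $\theta=\tfrac12$. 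Setting $h_i(t)\coloneqq\lambda_i - \mathcal{G}_i(\rvu_i(t))$ gives $\dot h_i = -\|(\rmI-\rvu_i\rvu_i^\top)\nabla\mathcal{G}_i(\rvu_i)\|_2^2 \le -c^2 h_i^{2\theta}$, and a standard ODE comparison yields $h_i(t) = O(e^{-c^2 t})$ if $\theta=\tfrac12$ and $h_i(t) = O(t^{-1/(2\theta-1)})$ with exponent strictly larger than $1$ if $\theta\in(\tfrac12,1)$ --- integrable over $[0,\infty)$ in both cases, as needed.

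Granting this, the conclusion follows by elementary asymptotics. Put $\lambda_{\max}\coloneqq\max_{j\in[m]}\lambda_j$ and $S\coloneqq\{j\in[m]:\lambda_j=\lambda_{\max}\}$, and multiply numerator and denominator by $e^{-4\lambda_{\max}t}$:
\begin{equation*}
\frac{\|\rvw_i(t)\|_2^2}{\sum_{j=1}^m\|\rvw_j(t)\|_2^2} \;=\; \frac{K_i(t)\,e^{-4(\lambda_{\max}-\lambda_i)t}}{\sum_{j=1}^m K_j(t)\,e^{-4(\lambda_{\max}-\lambda_j)t}}.
\end{equation*}
For $j\notin S$ the quantity $K_j(t)$ is bounded and $e^{-4(\lambda_{\max}-\lambda_j)t}\to0$, so that summand vanishes; for $j\in S$ it tends to $K_j^\infty>0$; hence the denominator tends to $\sum_{j\in S}K_j^\infty\in(0,\infty)$. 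The numerator tends to $K_i^\infty>0$ if $i\in S$ and to $0$ if $i\notin S$. Therefore the ratio converges to $K_i^\infty/\sum_{j\in S}K_j^\infty>0$ exactly when $\mathcal{G}_i(\rvw_i^*)=\max_j\mathcal{G}_j(\rvw_j^*)$ and to $0$ when $\mathcal{G}_i(\rvw_i^*)<\max_j\mathcal{G}_j(\rvw_j^*)$, which is \eqref{lim_dir}. I note that the ``$=0$'' case requires no integrability estimate: from $\mathcal{G}_i(\rvu_i)\to\lambda_i<\lambda_{\max}$ one directly gets $\ln\big(\|\rvw_i(t)\|_2^2/\|\rvw_{j_0}(t)\|_2^2\big)\to-\infty$ for any $j_0\in S$, so only the ``$>0$'' case relies on the Łojasiewicz argument.
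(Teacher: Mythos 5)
Your argument is essentially correct, and it differs from the paper's proof precisely at the one step that carries the technical weight. The paper also reduces the claim to comparing the growth rates $e^{2\mathcal{G}_i(\rvw_i^*)t}$, but it obtains the required two-sided control of the prefactor by citing Lemma~24 of \citet{kumar_escape}: once $\rvw_i(t)/\|\rvw_i(t)\|_2$ enters a neighbourhood of the second-order KKT direction $\rvw_i^*$, that lemma (combined with the rescaling identity of \Cref{traj_init_eq}, which for $L=2$ involves no time reparametrization, so the flow can simply be restarted from the normalized state at a large time $T^*$) gives $\|\rvw_i(t)\|_2=\alpha_i(t)e^{2\mathcal{G}_i(\rvw_i^*)t}$ with $\alpha_i(t)$ sandwiched between positive constants, after which the ratio in \cref{lim_dir} is bounded directly. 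You instead derive the radial--angular decomposition $\tfrac{d}{dt}\ln r_i=2\mathcal{G}_i(\rvu_i)$, show $\mathcal{G}_i(\rvu_i(t))$ increases to $\mathcal{G}_i(\rvw_i^*)$ along the spherical flow, and prove that $r_i(t)^2e^{-4\mathcal{G}_i(\rvw_i^*)t}$ converges to a \emph{positive} limit via integrability of the gap, obtained from a Łojasiewicz inequality on the sphere. Your route is self-contained (no appeal to the prior work), and in the dominant case it gives slightly more than the paper does: existence of the limit and its value $K_i^\infty/\sum_{j\in S}K_j^\infty$ in your notation, where the paper only exhibits a positive lower bound. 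Your remark that the ``$=0$'' branch needs no integrability estimate is also correct.

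The caveat is that the Łojasiewicz step is exactly where you trade the lemma's stated hypothesis for a different one. The lemma assumes only two-homogeneity of each $\mathcal{G}_i$ and that $\rvw_i^*$ is a second-order positive KKT point; it does not assume analyticity or subanalyticity, and ``second-order KKT point'' in this line of work is the hypothesis under which the cited Lemma~24 is proved—it need not be the strict (negative-definite on the tangent space) condition that would justify your exponent $\theta=\tfrac12$. For a general smooth two-homogeneous $\mathcal{G}_i$ whose restriction to the sphere has a degenerate, non-definable critical structure at $\rvw_i^*$, the gap $\mathcal{G}_i(\rvw_i^*)-\mathcal{G}_i(\rvu_i(t))$ can fail to be integrable, $K_i(t)\to0$, and the ``$>0$'' conclusion then requires exactly the quantitative control the paper imports. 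So, as written, your proof covers the paper's intended applications (each $\mathcal{G}_i=\mathcal{N}_{\rvp,\mathcal{H}_{1,l}}$ is semialgebraic, so Łojasiewicz holds there, with the usual caveats about nonsmoothness when $p=1$) and covers the strictly nondegenerate case, but to match the lemma's stated generality you should either add the Łojasiewicz/definability requirement explicitly as a hypothesis or obtain the positive lower bound on $K_i(t)$ as the paper does, from the cited result for second-order KKT points.
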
 
The above lemma analyzes the behavior of gradient flow when maximizing a sum of two-homogeneous functions, each depending on a different variable $\rvw_i$. Note that, we have not assumed $\mathcal{G}_i$'s to be distinct. This setup leads to decoupled gradient flows for each $\rvw_i$, as described in \cref{gf_ind}. Here, the norm of $\rvw_i(t)$ diverges and its direction converges to a KKT point of $\widetilde{\mathcal{G}}_i(\rvw_i)$, for all $i\in [m]$. However, when we examine the direction of all the variables combined, \cref{lim_dir} reveals that in the limit, only those set of variables corresponding to the most dominant KKT points remain nonzero\textemdash that is, those KKT points $\rvw_i^*$ for which $\mathcal{G}_i(\rvw_i)$ achieves the largest value among all $\{\mathcal{G}_j(\rvw_j)\}_{j=1}^m$.

The next lemma considers the case where the degree of homogeneity is greater than two.
\begin{lemma}
	Consider maximizing $\sum_{i=1}^m\mathcal{G}_i(\rvw_i)$ via gradient flow, where each $\mathcal{G}_i(\rvw_i)$ is a $L$-homogeneous functions in $\rvw_i$ with $L>2$. Let $(\rvw_1(t),\cdots,\rvw_m(t))$ be the corresponding gradient flow trajectories:
	\begin{equation}
	\dot{\rvw}_i = \nabla_{\rvw_i}\left(\sum_{i=1}^m\mathcal{G}_i(\rvw_i)\right) =  \nabla_{\rvw_i}\mathcal{G}_i(\rvw_i), \rvw_i(0) = \rvw_{i0},
	\label{gf_ind_4hm}
	\end{equation}
	where $(\rvw_{10},\cdots,\rvw_{m0})$ is the initialization such that $\mathcal{G}_i(\rvw_{i0})>0$. For all $i\in [m]$, let $\rvw_{i0}\in \mathcal{S}(\rvw_i^*;\mathcal{G}_i(\rvw_i))$, that is, there exists a $T_i$ such that
	\begin{equation*}
	\lim_{t\rightarrow T_i} {\|\rvw_i(t)\|_2} = \infty \text{ and }\lim_{t\rightarrow T_i} \frac{\rvw_i(t)}{\|\rvw_i(t)\|_2} = \rvw_i^*,
	\end{equation*}
	where $\rvw_i^*$ is a positive KKT point of 
	\begin{equation}
	\widetilde{\mathcal{G}}_i(\rvw_i) \coloneqq \max_{\|\rvw_i\|_2^2 = 1} \mathcal{G}_i(\rvw_i).
	\label{g_NCF_4hm}
	\end{equation} 
	Then, 
	\begin{equation}
	T_i\in \left[\frac{1}{L(L-2)\mathcal{G}_i(\rvw_i^*)},\frac{1}{L(L-2)\mathcal{G}_i(\rvw_{i0})}\right], \text{ for all }i\in [m].
	\label{tm_bd}
	\end{equation}
	Define $T^* \coloneqq \min_{i\in [m]}T_i$, then
	\begin{equation}
	\lim_{t\rightarrow T^*}\sum_{i=1}^m\|\rvw_i(t)\|_2^2 = \infty \text{ and }\lim_{t\rightarrow T^*} \frac{\|\rvw_i(t)\|_2}{\sqrt{\sum_{j=1}^m\|\rvw_j(t)\|_2^2}} \left\{\begin{matrix}
	>0, \text{ if } T_i= T^*\\ = 0, \text{ if } T_i >T^*
	\end{matrix}\right..
	\label{lim_dir_4hm}
	\end{equation}
	\label{4hm_gf}
\end{lemma}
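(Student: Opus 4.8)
Because the objective $\sum_{i=1}^m \mathcal{G}_i(\rvw_i)$ is separable, the system in \cref{gf_ind_4hm} splits into $m$ decoupled flows $\dot{\rvw}_i = \nabla_{\rvw_i}\mathcal{G}_i(\rvw_i)$, so the plan is to analyze a single coordinate $\rvw_i(t)$ in isolation and then patch the coordinates together. Fix $i$ and write $\rho_i(t) \coloneqq \mathcal{G}_i(\rvw_i(t))$ and $n_i(t) \coloneqq \|\rvw_i(t)\|_2^2$. Euler's identity for $L$-homogeneous functions, $\rvw_i^{\top}\nabla_{\rvw_i}\mathcal{G}_i(\rvw_i) = L\,\mathcal{G}_i(\rvw_i)$, gives along the flow $\dot{\rho}_i = \|\nabla_{\rvw_i}\mathcal{G}_i(\rvw_i)\|_2^2 \ge 0$ and $\dot{n}_i = 2L\rho_i$. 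Since $\mathcal{G}_i(\rvw_{i0}) > 0$, the first relation shows $\rho_i$ is nondecreasing and stays strictly positive; by Euler's identity the flow therefore never reaches a critical point, so it is well defined up to the finite blow-up time $T_i$ supplied by the hypothesis $\rvw_{i0}\in\mathcal{S}(\rvw_i^*;\mathcal{G}_i(\rvw_i))$.

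The key auxiliary quantity is the normalized value $\gamma_i(t) \coloneqq \rho_i(t)/n_i(t)^{L/2} = \mathcal{G}_i\big(\rvw_i(t)/\|\rvw_i(t)\|_2\big)$. Cauchy--Schwarz together with Euler's identity yields $\dot{\rho}_i \ge (\rvw_i^{\top}\nabla_{\rvw_i}\mathcal{G}_i)^2/\|\rvw_i\|_2^2 = L^2\rho_i^2/n_i$, hence $\tfrac{d}{dt}\log\rho_i \ge \tfrac{L}{2}\tfrac{d}{dt}\log n_i$, so $\gamma_i$ is nondecreasing. By the hypothesis $\rvw_i(t)/\|\rvw_i(t)\|_2 \to \rvw_i^*$ as $t\uparrow T_i$ and continuity of $\mathcal{G}_i$ on the sphere, $\gamma_i(t) \uparrow \mathcal{G}_i(\rvw_i^*)$; thus $\mathcal{G}_i(\rvw_{i0}) = \gamma_i(0) \le \gamma_i(t) \le \mathcal{G}_i(\rvw_i^*)$ on $[0,T_i)$ (using the normalization $\|\rvw_{i0}\|_2 = 1$). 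Substituting these two-sided bounds into $\dot{n}_i = 2L\gamma_i\,n_i^{L/2}$ and comparing with the autonomous ODEs $\dot v = 2L\,\mathcal{G}_i(\rvw_{i0})\,v^{L/2}$ and $\dot v = 2L\,\mathcal{G}_i(\rvw_i^*)\,v^{L/2}$, both started at $v(0)=1$ — each of which blows up exactly at time $1/(L(L-2)\,c)$ with $c$ the corresponding coefficient — the comparison principle forces $T_i$ into the interval asserted in \cref{tm_bd}. This parallels the $L=2$ treatment in \Cref{2hm_gf}, with \Cref{lemma:gf_ncf} supplying the directional-convergence input.

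For the combined claim, set $T^* = \min_i T_i$. Since $\sum_j n_j(t) \ge n_{j^*}(t) \to \infty$ as $t\uparrow T^* = T_{j^*}$ for any minimizer $j^*$, the total norm diverges at $T^*$. To read off the limiting direction, I would upgrade the crude bounds on $n_i$ to a sharp blow-up profile: integrating $\dot n_i = 2L\gamma_i\,n_i^{L/2}$ from $t$ up to $T_i$, using $\gamma_i(t)\to\mathcal{G}_i(\rvw_i^*)$ and letting the approximation error vanish, gives $n_i(t) \sim \kappa_i\,(T_i-t)^{-2/(L-2)}$ as $t\uparrow T_i$, with $\kappa_i = (L(L-2)\mathcal{G}_i(\rvw_i^*))^{-2/(L-2)} > 0$. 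Hence, on $[0,T^*]$, every $n_i$ with $T_i > T^*$ stays bounded while the denominator diverges, so $\|\rvw_i(t)\|_2/\sqrt{\sum_j n_j(t)} \to 0$; and for $i$ with $T_i = T^*$ both $n_i(t)$ and $\sum_j n_j(t) = \sum_{j:\,T_j=T^*} n_j(t) + O(1)$ behave like a constant times $(T^*-t)^{-2/(L-2)}$, so the ratio converges to $\kappa_i\big/\sum_{j:\,T_j=T^*}\kappa_j > 0$. Taking square roots gives \cref{lim_dir_4hm}.

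I expect the main obstacle to be the $T_i = T^*$ half of \cref{lim_dir_4hm}: obtaining a strictly positive limit (not merely a nonzero liminf) requires the \emph{sharp} asymptotic $n_i(t)\sim\kappa_i(T_i-t)^{-2/(L-2)}$ with a common exponent and explicit constant, which genuinely uses the convergence $\gamma_i(t)\to\mathcal{G}_i(\rvw_i^*)$ rather than only the inequality $\gamma_i\le\mathcal{G}_i(\rvw_i^*)$. A secondary technical point is making the comparison step rigorous, since $n_i$ obeys a differential inequality with the time-varying coefficient $\gamma_i(t)$, whose monotonicity and limit are themselves consequences of the flow's behaviour.
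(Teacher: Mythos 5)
Your proposal is correct. For the blow-up-time bounds \cref{tm_bd} you follow essentially the paper's route: you show the normalized value $\gamma_i(t)=\mathcal{G}_i\big(\rvw_i(t)/\|\rvw_i(t)\|_2\big)$ is nondecreasing (you get this via Cauchy--Schwarz plus Euler's identity, while the paper differentiates $\mathcal{G}_i(\rvw_i)/\|\rvw_i\|_2^L$ directly and uses a projection identity), sandwich it between $\mathcal{G}_i(\rvw_{i0})$ and $\mathcal{G}_i(\rvw_i^*)$, and integrate/compare $\dot n_i = 2L\gamma_i n_i^{L/2}$; both arguments use the normalization $\|\rvw_{i0}\|_2=1$, which the lemma statement leaves implicit but the paper's proof also invokes. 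Where you genuinely diverge is \cref{lim_dir_4hm}: the paper imports from the proof of Lemma 2 of \citet{early_dc} the two-sided bound $\|\rvw_i(t)\|_2=\alpha_i(t)(T_i-t)^{-1/(L-2)}$ with $\alpha_i(t)\in[\kappa_1,\kappa_2]$ and then only bounds the ratio below by $\kappa_1/(\sqrt{m}\kappa_2)$ for indices with $T_i=T^*$, whereas you derive the sharp blow-up profile $n_i(t)\sim\kappa_i(T_i-t)^{-2/(L-2)}$ with $\kappa_i=(L(L-2)\mathcal{G}_i(\rvw_i^*))^{-2/(L-2)}$. The step you flag as the main obstacle is in fact immediate from your own setup and needs no delicate comparison with a time-varying coefficient: since $n_i(t)^{-(L-2)/2}\to 0$ as $t\uparrow T_i$ and $\gamma_i$ is bounded, integrating $\frac{d}{dt}\,n_i^{-(L-2)/2}=-L(L-2)\gamma_i$ from $t$ to $T_i$ gives the exact identity $n_i(t)^{-(L-2)/2}=L(L-2)\int_t^{T_i}\gamma_i(s)\,ds$, and $\gamma_i(s)\to\mathcal{G}_i(\rvw_i^*)$ then yields the asymptotic. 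Your route is self-contained and slightly stronger: it establishes that the limit in \cref{lim_dir_4hm} exists and equals $\sqrt{\kappa_i/\sum_{j:T_j=T^*}\kappa_j}$ explicitly, while the paper's citation-based argument only bounds the ratio away from zero.
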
 
When the degree of homogeneity is greater than two, the gradient flow solution blows up in finite time. Hence, at the limiting direction of all the variables together, the set of variables that blow up first will remain non-zero, while others will become zero. From \cref{tm_bd}, we observe that the time required for blow up depends on the value of $\mathcal{G}_i(\rvw_i^*)$ and $\mathcal{G}_i(\rvw_{i0})$. Specifically, suppose $\rvw_1^*$ is the most dominant KKT point, and $\rvw_{10}$ is sufficiently close to $\rvw_1^*$. Then, $T_1$ will be the smallest and $\rvw_1(t)$ will blow up first. Therefore, if at least one of the initial weights is near the most dominant KKT point, then at the limiting direction of all the variables together, the set of variables corresponding to the most dominant KKT points will remain nonzero while the rest will become zero. The proofs of Lemmata \ref{2hm_gf} and \ref{4hm_gf} are in \Cref{max_sum_homogeneous}.

Now, consider maximizing ${\mathcal{N}}_{{\rvp},\mathcal{H}_1}(\rvw_z)$ via gradient flow. If it is two-homogeneous, assume the conditions in Lemma \ref{2hm_gf} are satisfied. If  it is $L$-homogeneous with $L > 2$, assume the conditions in Lemma \ref{4hm_gf} hold and that at least one of the initial weights is sufficiently near the most dominant KKT point. Then, at the limiting direction of all the variables together, only those set of variables corresponding to the most dominant KKT points will remain nonzero while the rest will become zero. This behavior highlights a key benefit of over-parameterization. Recall that the decomposition of ${\mathcal{N}}_{{\rvp},\mathcal{H}_1}(\rvw_z)$ only contains $(L-1)$ distinct functions, each having multiple copies. If we over-parameterize by adding more neurons in $\mathcal{G}_z$, we do not introduce new functions in ${\mathcal{N}}_{{\rvp},\mathcal{H}_1}(\rvw_z)$, but rather more copies of those $(L-1)$ functions. This will imply that each of those $(L-1)$ functions are being maximized with more number of independent initializations, which in turn will boost the odds of selecting a better dominant KKT point.

The conclusion that only the set of variables corresponding to the most dominant KKT points will remain nonzero is not always true and depends on conditions such as those stated in Lemma \ref{2hm_gf} and Lemma \ref{4hm_gf}. We make some remarks about those conditions. In Lemma \ref{2hm_gf}, we assume that each KKT point $\rvw_i^*$ is a second-order KKT point. This assumption is used to get a bound on $\|\rvw_i(t)\|$. This condition is perhaps not too strict, since for many problems gradient descent avoids first-order stationary points \citep{lee1_esc_saddle,lee2_esc_saddle}. Next, along with conditions in Lemma \ref{4hm_gf}, we require at least one of the initial weights to be sufficiently near the most dominant KKT point. Suppose the objective is a sum of few distinct homogeneous functions and their multiple copies, similar to ${\mathcal{N}}_{{\rvp},\mathcal{H}_1}(\rvw_z)$. Increasing the number of copies will imply that  the same set of functions are being maximized with more number of independent initializations. This will increase the chances of at least one of the initial weights being sufficiently near the most dominant KKT point. Thus, over-parameterization can help in satisfying this additional condition. In summary, while the selection of variables based on dominant KKT points is not an absolute certainty, it appears to be quite likely as the conditions required for it does not seem to be overly restrictive in over-parameterized settings. 
\subsection{Numerical Experiments}\label{num_exp}
We next conduct numerical experiments to validate \Cref{thm_dir_convg}, with results presented in \Cref{fig:3l_sq_near_saddle} and \Cref{fig:3l_relu_near_saddle}. In both cases, we train a three layer neural network with ten neurons in each layer\textendash one with square activation function $(\sigma(x) = x^2)$ and one with ReLU activation function  $(\max(x,0))$. The weights are initialized near a saddle point where the incoming and outgoing weights of the last nine neurons of each layer is zero; these weights form $\rvw_z$ and the remaining form $\rvw_n$, and the last nine neurons of each layer form $\mathcal{G}_z$.\\
\\
\noindent\textbf{Square activation.} \Cref{fig:loss_evol_sq} depicts the evolution of the training loss and the distance of the weights from the saddle point. As expected, the loss does not change much, and the weights remain close to the saddle point, which can also be verified by comparing \Cref{fig:all_weights_init_sq} and \Cref{fig:all_weights_it_sq}. Thus, the weights in $\rvw_z$ remain small in norm. Next, a unit norm vector is a KKT point of the constrained NCF if the vector and the gradient of the NCF at that vector are parallel. From \Cref{fig:ncf_evol_sq}, we observe that $\rvw_z$ approximately converges in direction to a KKT point of the constrained NCF. \Cref{fig:wz_init_sq} and \Cref{fig:wz_it_sq} provide a closer look at the weights belonging to $\rvw_z$ of every layer, at initialization and at iteration 180000, respectively. At initialization, they are small and random, and at iteration 180000 their norm increases but remains small overall, however, they are structured and exhibit sparsity. Except for the incoming and outgoing weight of the second neuron in the first layer, all the other weights belonging to $\rvw_z $ have relatively small magnitude. Also, in accordance with Lemma \ref{lemma_bal_wt}, if the incoming weights of a hidden neuron belonging to $\mathcal{G}_z$ have small magnitude, then the outgoing weights are also small, and vice-versa.\\
\\
\textbf{ReLU activation.} Although our theoretical results do not hold for ReLU activation,  in \Cref{fig:3l_relu_near_saddle} we explore ReLU activation. \Cref{fig:loss_evol_relu} shows that the loss does not change much and the weights remain close to the saddle point, implying  $\rvw_z$ remains small. \Cref{fig:ncf_evol_relu} shows that $\rvw_z$ converges in direction to a KKT point of the constrained NCF. \Cref{fig:wz_init_relu} and \Cref{fig:wz_it_relu} depict the weights in $\rvw_z$ for all layers, at initialization and at iteration 6000, respectively. At initialization, they are small and random. At iteration 6000, their norm increases but remains small overall. However, as before, they are structured and exhibit sparsity, where the incoming and outgoing weights of several neurons in the first layer have relatively higher magnitude than other weights in $\rvw_z$. Yet, they are consistent with Lemma \ref{lemma_bal_wt}\textemdash  if the incoming weights of a hidden neuron in $\mathcal{G}_z$ are small, then the outgoing weights are small too, and vice-versa. For instance, $5$th and $8$th row of $\rmW_1$ is small and $5$th and $8$th column of $\rmW_2$ is small as well. 

Although multiple neurons belonging to $\mathcal{G}_z$ in the first layer activate due to this directional convergence, their corresponding weights exhibit a distinct rank-one structure. In \Cref{fig:wz_it_relu}, the rows of $\rmW_1$ with high norms appear as scalar multiples of one another; in fact, they are positive scalar multiples. Specifically, let $\rmW_{1z}$ denote the matrix containing all rows of $\rmW_1$ except the first. At iteration 6000, the ratio $\|\rmW_{1z}\|_F/\|\rmW_{1z}\|_2 = 1.000425$, quantitatively confirming that $\rmW_{1z}$ is approximately rank-one. Consequently, even though multiple neurons in $\mathcal{G}_z$ activate, they can be represented using a single neuron.\\

\noindent Overall, the empirical behavior is consistent with \Cref{thm_dir_convg}: in the initial stages, the weights belonging to $\rvw_z$ remain small in norm but converge in direction to a KKT of the constrained NCF\textemdash even in the ReLU case, where the theorem does not strictly apply. Moreover, in accordance with Lemma \ref{lemma_bal_wt}, the norm of incoming and outgoing weights of hidden neurons belonging to $\mathcal{G}_z$ is proportional. 
\begin{figure}[htbp]
	\centering
	\begin{minipage}[t]{0.45\textwidth}
		\centering
		\begin{subfigure}[t]{\linewidth}
			\centering
			\includegraphics[width=\linewidth]{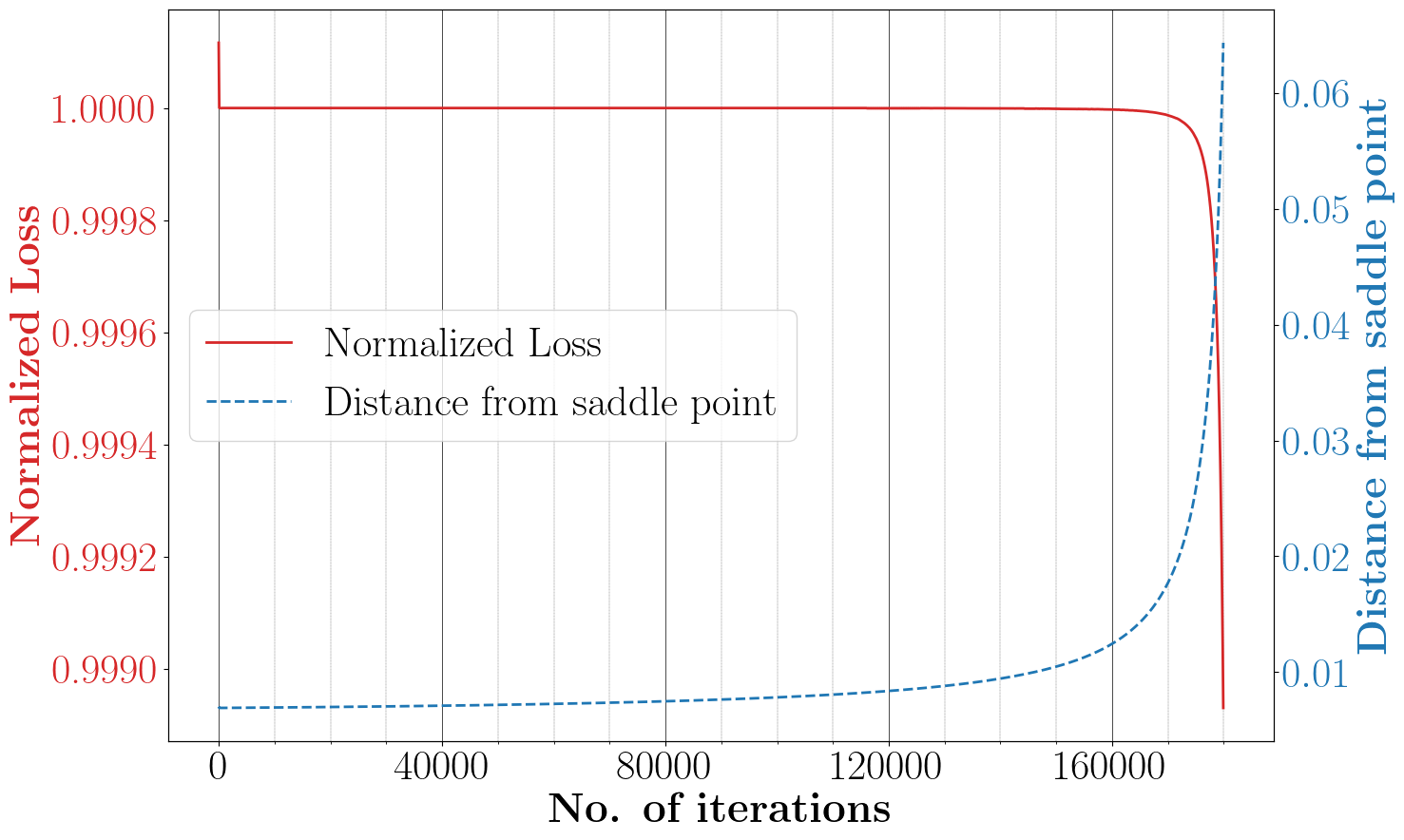}
			\caption{Evolution of training loss and the distance of weights from saddle point}
			\label{fig:loss_evol_sq}
		\end{subfigure}
		\vspace{0.3cm}
		
		\begin{subfigure}[t]{\linewidth}
			\centering
			\includegraphics[width=\linewidth]{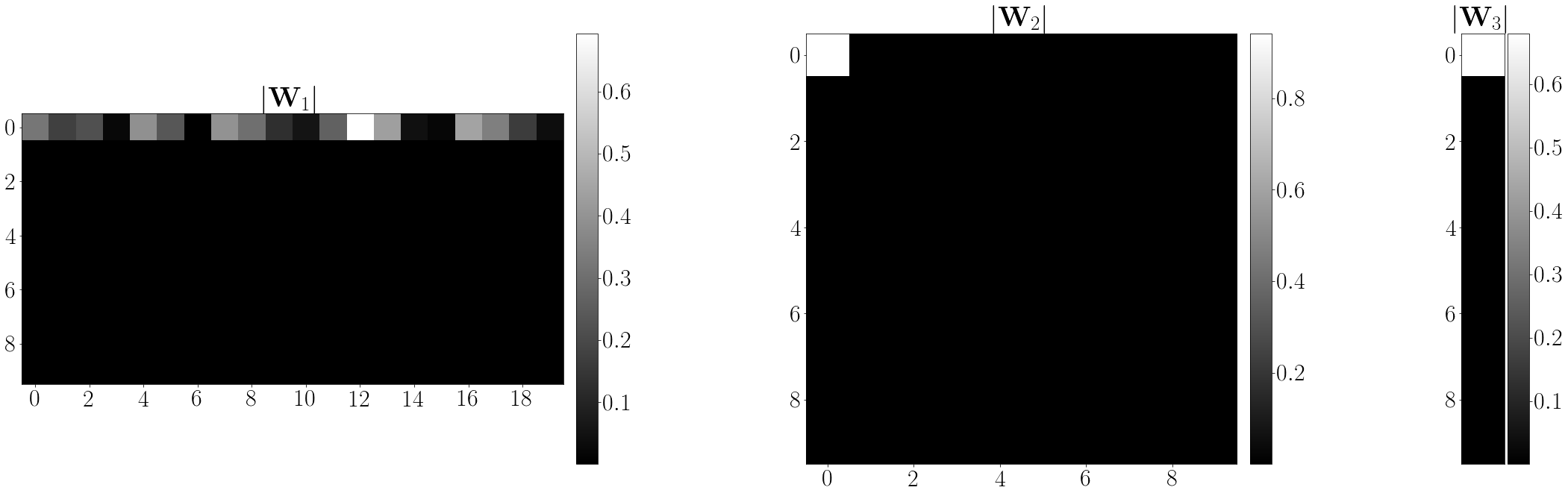}
			\caption{All weights at initialization}
			\label{fig:all_weights_init_sq}
		\end{subfigure}
		\vspace{0.3cm}
		
		\begin{subfigure}[t]{\linewidth}
			\centering
			\includegraphics[width=\linewidth]{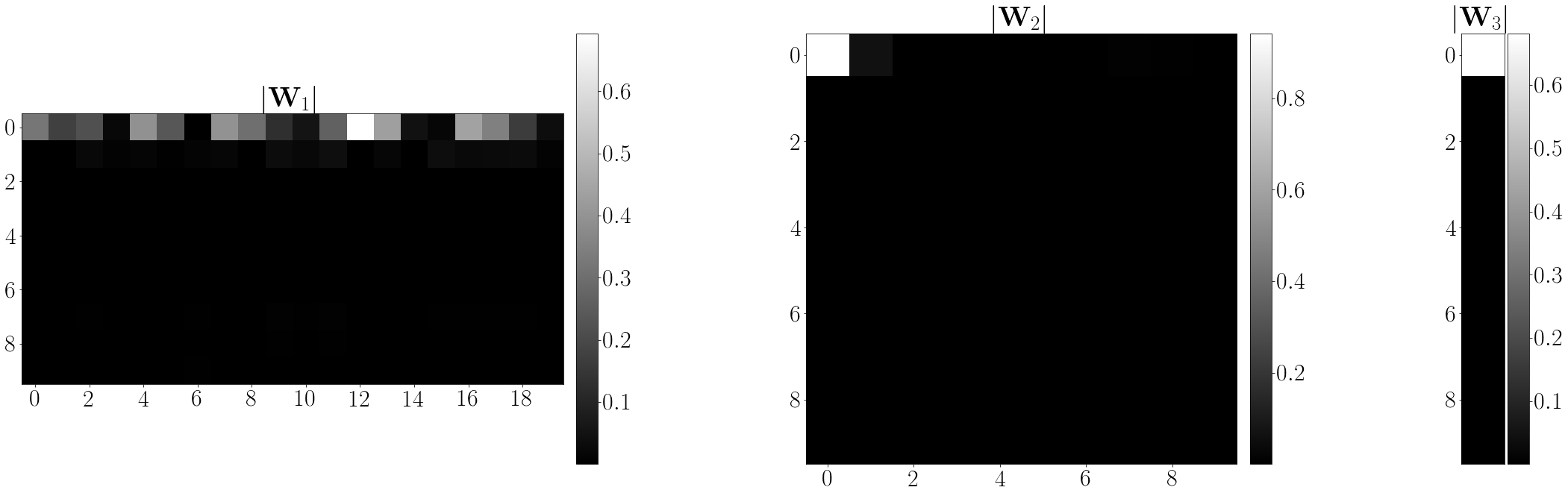}
			\caption{All weights at iteration 180000}
			\label{fig:all_weights_it_sq}
		\end{subfigure}
	\end{minipage}
	\hfill
	\begin{minipage}[t]{0.45\textwidth}
		\centering
		\begin{subfigure}[t]{\linewidth}
			\centering
			\includegraphics[width=\linewidth]{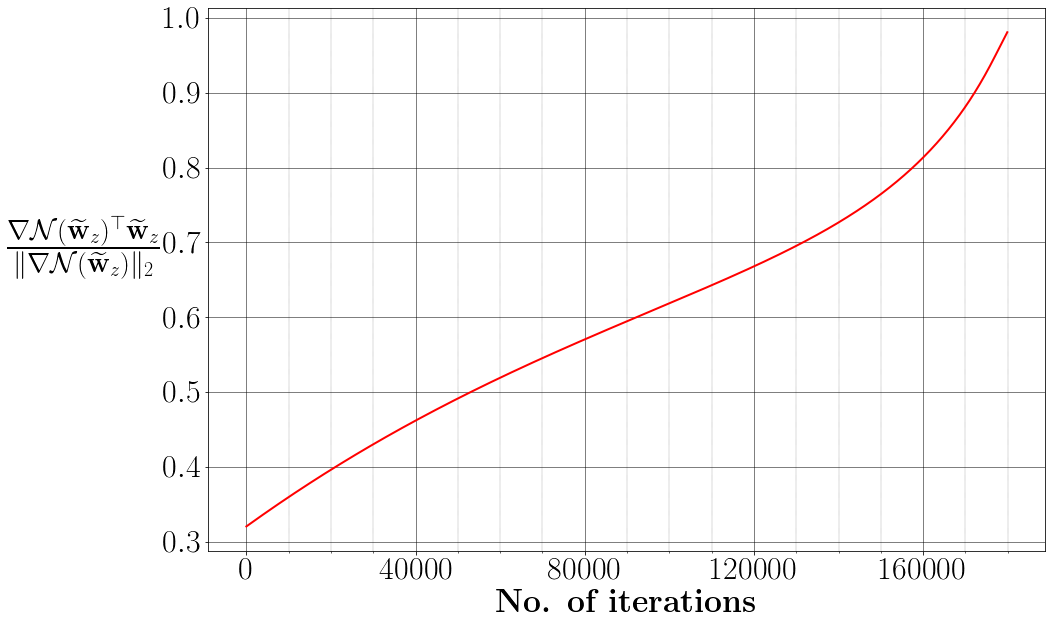}
			\caption{Evolution of inner product between gradient of the NCF and the weights}
			\label{fig:ncf_evol_sq}
		\end{subfigure}
		\vspace{0.3cm}
		
		\begin{subfigure}[t]{\linewidth}
			\centering
			\includegraphics[width=\linewidth]{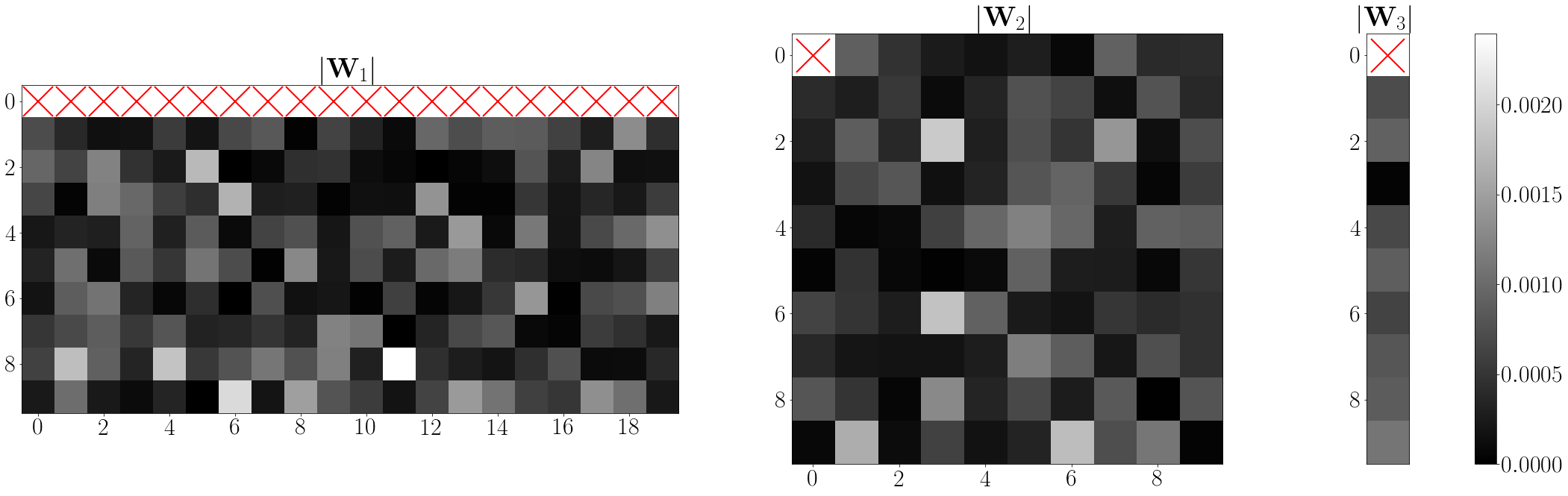}
			\caption{Weights in $\rvw_z$ at initialization}
			\label{fig:wz_init_sq}
		\end{subfigure}
		\vspace{0.3cm}
		
		\begin{subfigure}[t]{\linewidth}
			\centering
			\includegraphics[width=\linewidth]{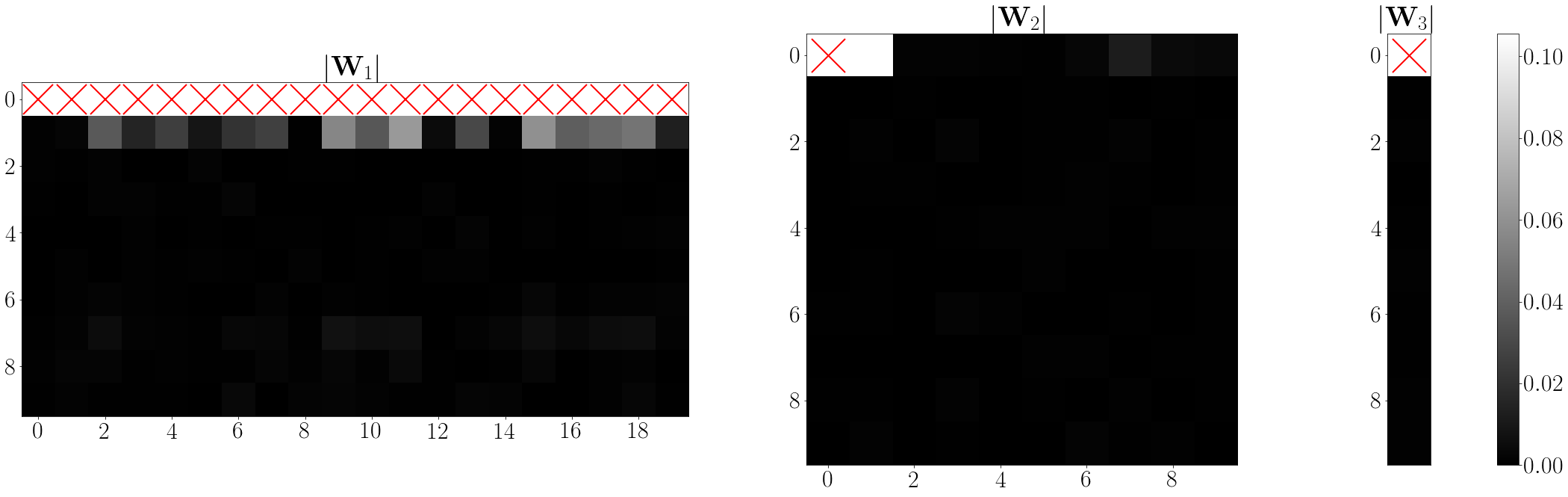}
			\caption{Weights in $\rvw_z$ at iteration 180000}
			\label{fig:wz_it_sq}
		\end{subfigure}
	\end{minipage}
	\caption{(\textbf{Gradient descent dynamics near saddle point}) We train a three-layer neural network via gradient descent whose output is $\rmW_3\sigma(\rmW_2\sigma(\rmW_1\rvx)) ,$ where $\sigma(x) = x^2$ (\textbf{square activation}), and $\rmW_3\in \mathbb{R}^{1\times 20},\rmW_2  \in \mathbb{R}^{10 \times 10}$, $\rmW_1\in\mathbb{R}^{10 \times 20}$. The weights are initialized near a saddle point where the incoming and outgoing weights of the last nine neurons of each layer is zero. These neurons form $\mathcal{G}_z$ and their incoming and outgoing weights would form $\rvw_z$; that is, $\rvw_z$ contains the last nine rows of $\rmW_1$, the last nine rows and columns of  $\rmW_2$ and the last nine entries of $\rmW_3$. The remaining weights form $\rvw_n$. Panel (a) shows the evolution of the training loss (normalized by the loss at saddle point) and the distance of the weights from the saddle point (normalized by the norm of the weights at saddle point).  Panels (b) and (c) depict the weights at initialization and at iteration 180000, respectively. The training loss does not change much and the weights remain near the saddle point. Also, weights in $\rvw_z$ remain small in magnitude. Panel (d) shows the evolution of ${\nabla\mathcal{N}}_{{\overline{\rvy}},\overline{\mathcal{H}}_1}(\widetilde{\rvw}_z)^\top\widetilde{\rvw}_z/\|{\nabla\mathcal{N}}_{{\overline{\rvy}},\overline{\mathcal{H}}_1}(\widetilde{\rvw}_z)\|_2 $, where  $\widetilde{\rvw}_z \coloneqq \rvw_z/\|\rvw_z\|_2$, which measures how close $\widetilde{\rvw}_z$ is to being a KKT point of the constrained NCF. It confirms that $\rvw_z$ have approximately converged in direction to a KKT point. Panels (e) and (f) depicts the weights in $\rvw_z$ of every layer, at initialization and at iteration 180000, respectively. Specifically, the weights belonging to $\rvw_n$ are crossed out and we only plot the weights in $\rvw_z$. At initialization, weights in $\rvw_z$ are small and random. At iteration 180000, they are sparse and structured, while the norm still stays small. In fact, only incoming and outgoing weights of the second neuron in the first layer have high magnitude, the remaining weights in $\rvw_z$ are relatively much smaller. This behavior is consistent with the result of  Lemma \ref{lemma_bal_wt}.  }
	\label{fig:3l_sq_near_saddle}
\end{figure}

\begin{figure}[htbp]
	\centering
	\begin{minipage}[t]{0.45\textwidth}
		\centering
		\begin{subfigure}[t]{\linewidth}
			\centering
			\includegraphics[width=\linewidth]{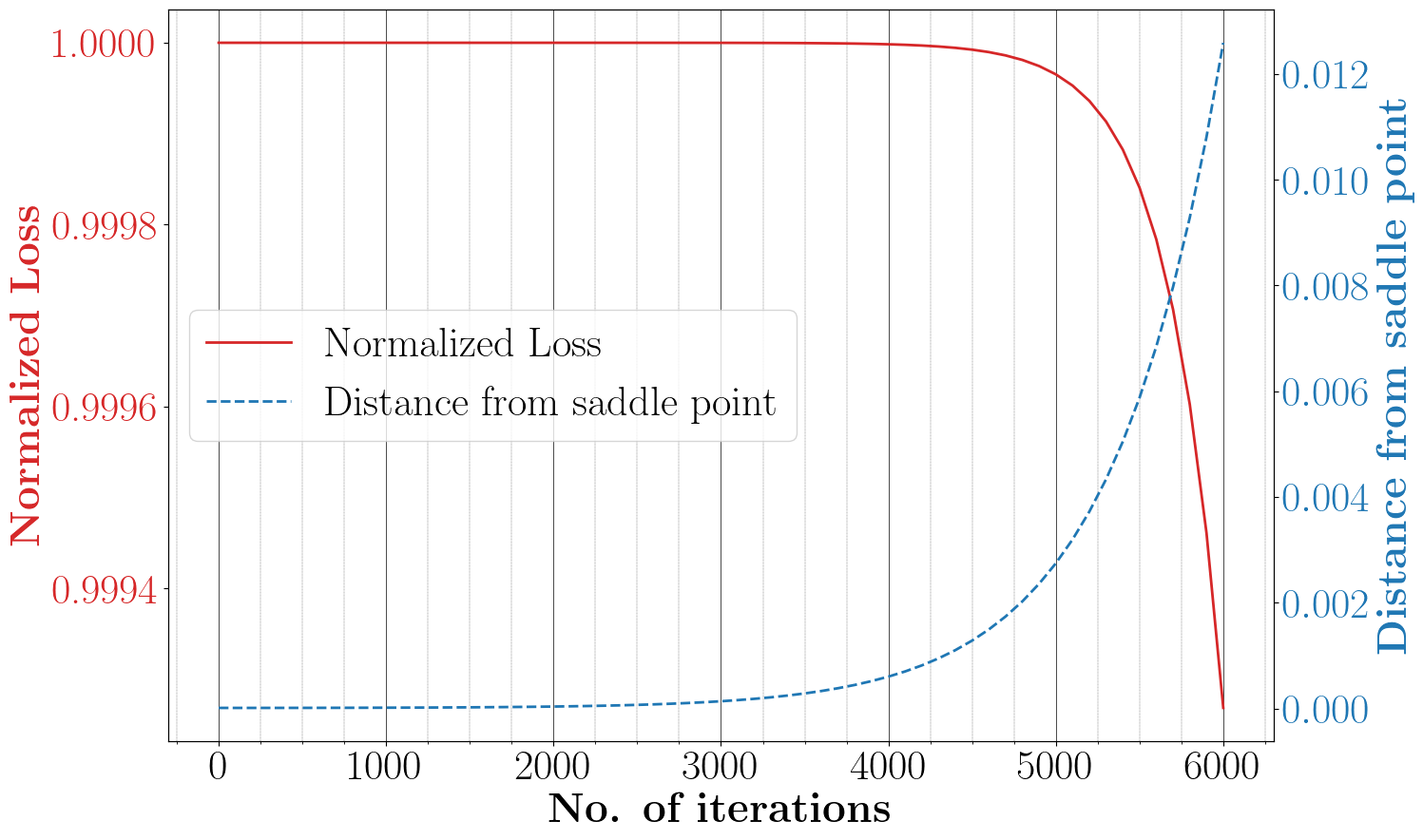}
			\caption{Evolution of training loss and distance of weights from saddle point }
			\label{fig:loss_evol_relu}
		\end{subfigure}
		\vspace{0.3cm}
		
		\begin{subfigure}[t]{\linewidth}
			\centering
			\includegraphics[width=\linewidth]{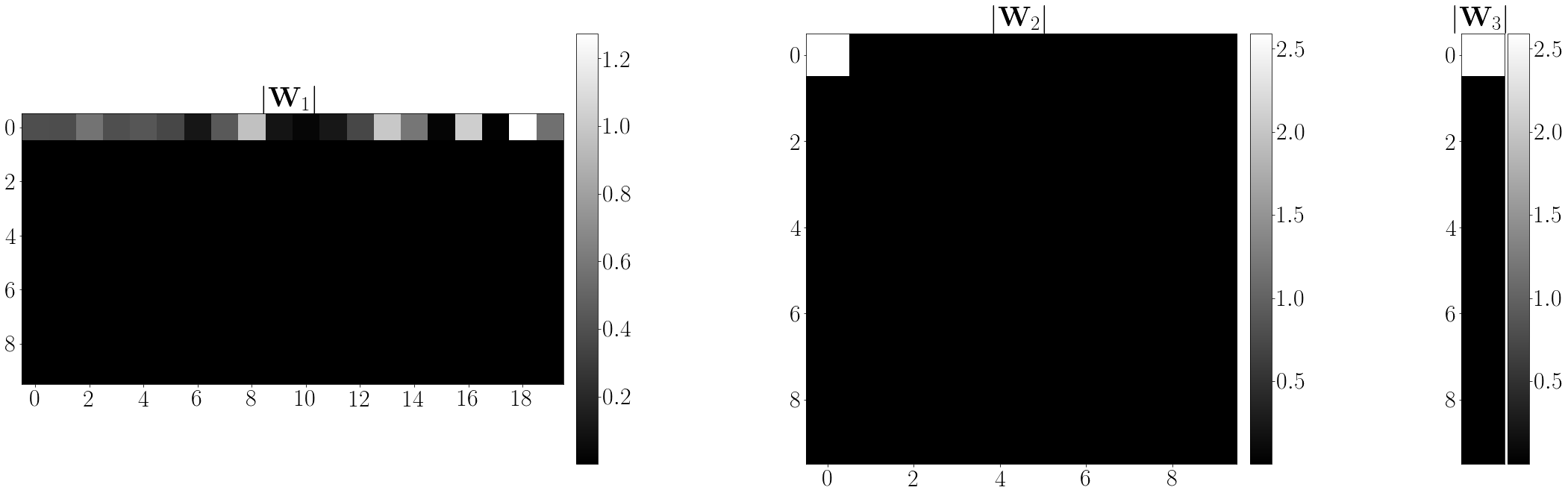}
			\caption{All weights at initialization}
			\label{fig:all_weights_init_relu}
		\end{subfigure}
		\vspace{0.3cm}
		
		\begin{subfigure}[t]{\linewidth}
			\centering
			\includegraphics[width=\linewidth]{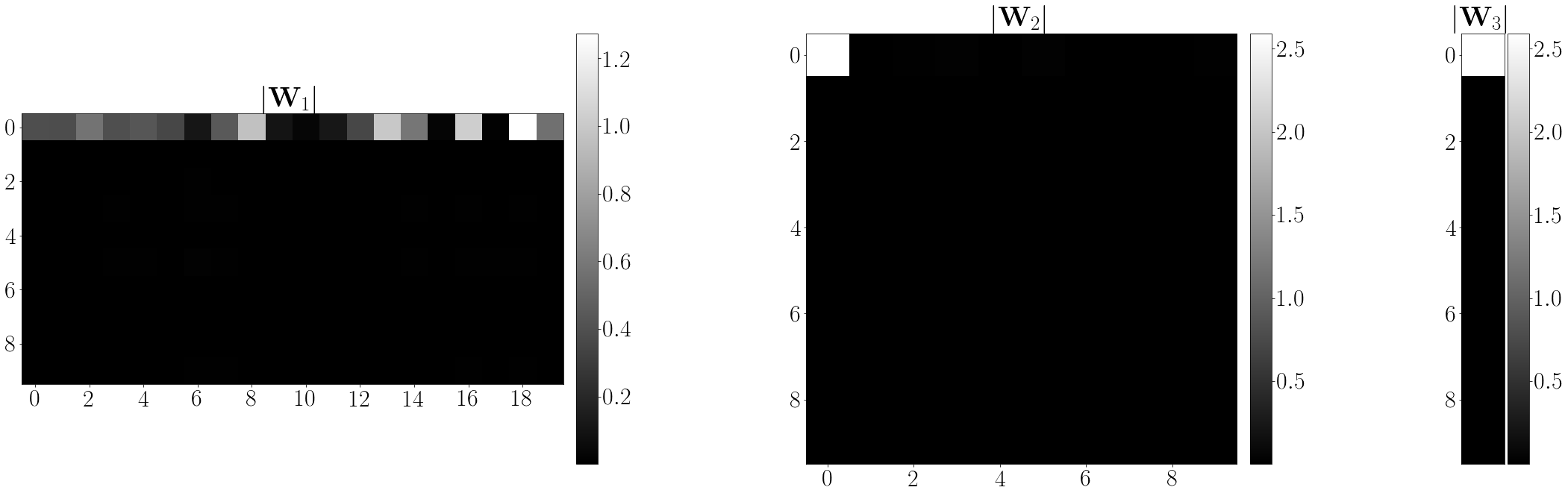}
			\caption{All weights at iteration 6000}
			\label{fig:all_weights_it_relu}
		\end{subfigure}
	\end{minipage}
	\hfill
	\begin{minipage}[t]{0.45\textwidth}
		\centering
		\begin{subfigure}[t]{\linewidth}
			\centering
			\includegraphics[width=\linewidth]{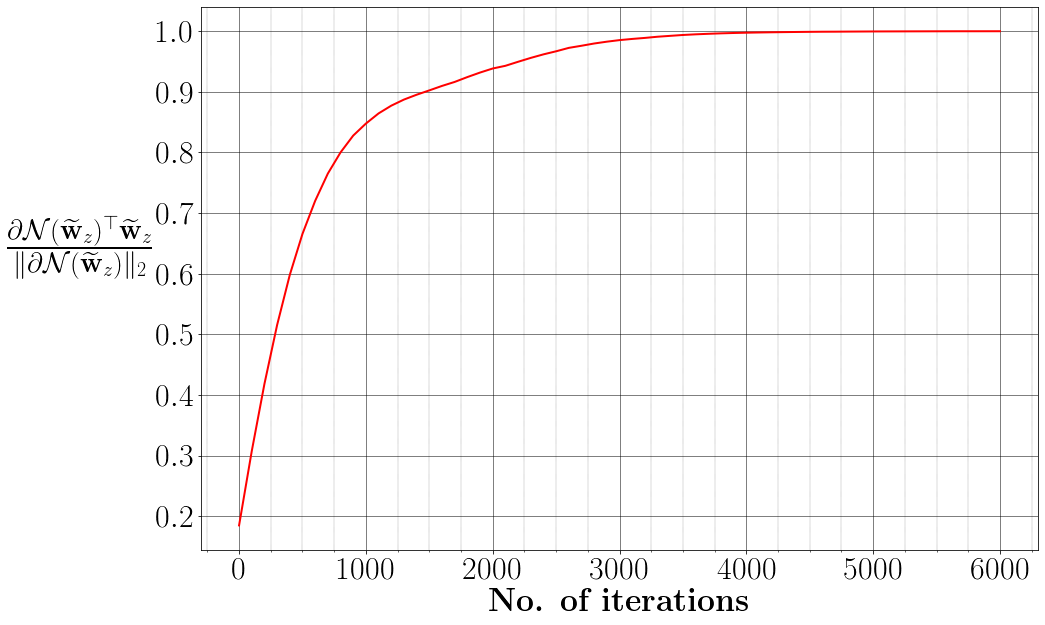}
			\caption{Evolution of inner product between gradient of the NCF and the weights}
			\label{fig:ncf_evol_relu}
		\end{subfigure}
		\vspace{0.3cm}
		
		\begin{subfigure}[t]{\linewidth}
			\centering
			\includegraphics[width=\linewidth]{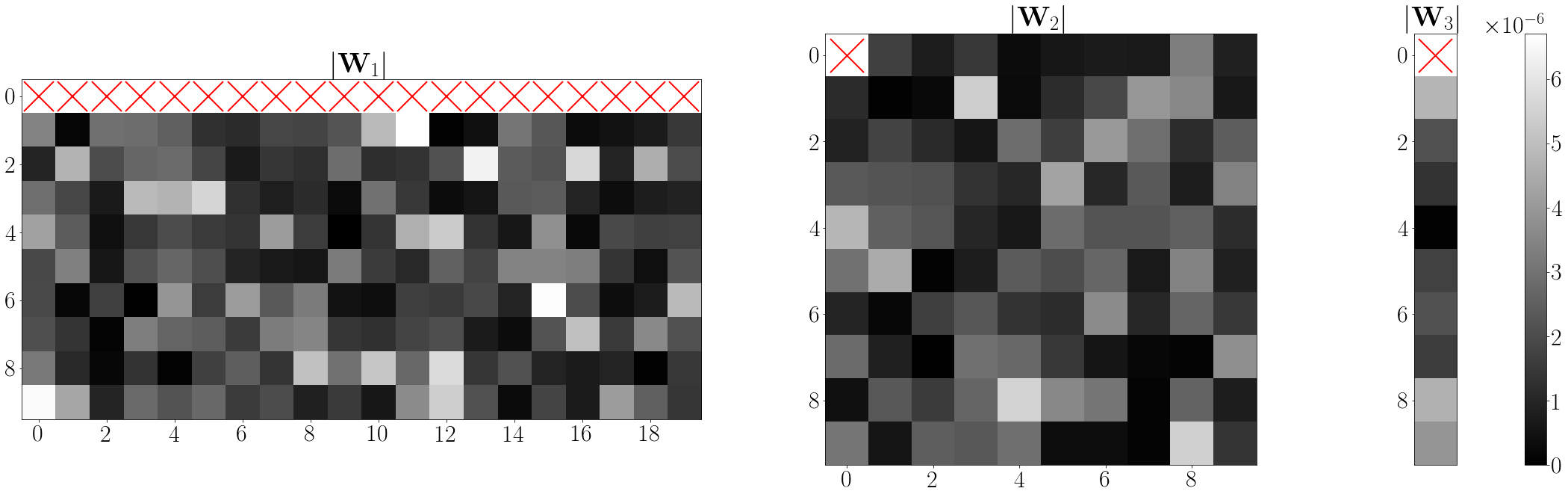}
			\caption{Weights in $\rvw_z$ at initialization}
			\label{fig:wz_init_relu}
		\end{subfigure}
		\vspace{0.3cm}
		
		\begin{subfigure}[t]{\linewidth}
			\centering
			\includegraphics[width=\linewidth]{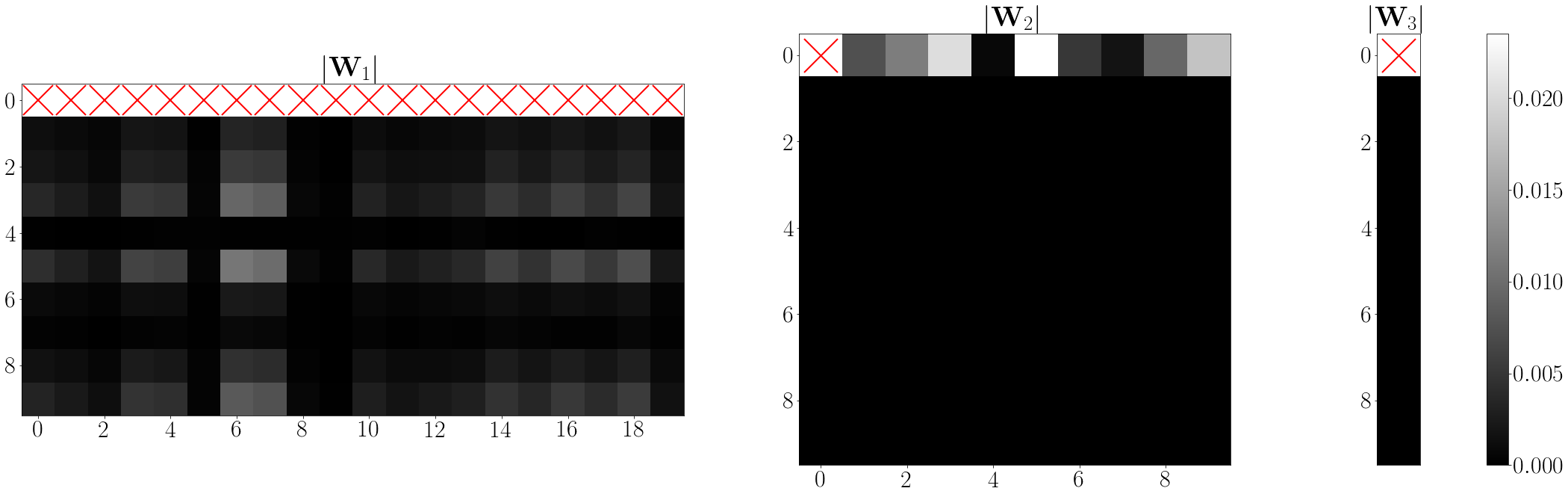}
			\caption{Weights in $\rvw_z$ at iteration 6000}
			\label{fig:wz_it_relu}
		\end{subfigure}
	\end{minipage}
	\caption{(\textbf{Gradient descent dynamics near saddle point}) We train a three-layer neural network via gradient descent whose output is $\rmW_3\sigma(\rmW_2\sigma(\rmW_1\rvx)) ,$ where $\sigma(x) = \max(x,0)$ (\textbf{ReLU activation}), and $\rmW_3\in \mathbb{R}^{1\times 20},\rmW_2  \in \mathbb{R}^{10 \times 10}$, $\rmW_1\in\mathbb{R}^{10 \times 20}$. The weights are initialized near a saddle point where the incoming and outgoing weights of the last nine neurons of each layer is zero; these neurons form $\mathcal{G}_z$ and their incoming and outgoing weights would form $\rvw_z$, and the remaining weights form $\rvw_n$. Panel (a) depicts the evolution of the training loss (normalized by the loss at saddle point) and the distance of the weights from the saddle point (normalized by the norm of the weights at saddle point). Panels (b) and (c) show the weights at initialization and at iteration 6000, respectively. We observe that the training loss does not change much, the weights remain near the saddle point and $\rvw_z$ remains small. Panel (d) shows the evolution of ${\partial\mathcal{N}}_{{\overline{\rvy}},\overline{\mathcal{H}}_1}(\widetilde{\rvw}_z)^\top\widetilde{\rvw}_z/\|{\partial\mathcal{N}}_{{\overline{\rvy}},\overline{\mathcal{H}}_1}(\widetilde{\rvw}_z)\|_2 $, where  $\widetilde{\rvw}_z \coloneqq \rvw_z/\|\rvw_z\|_2$, which confirms that $\rvw_z$ has converged in direction to a KKT point of the constrained NCF. Panels (e) and (f) depict the weights in $\rvw_z$ of every layer, at initialization and at iteration 6000, respectively, where in the latter the weights belonging to $\rvw_n$ are crossed out. At initialization, weights in $\rvw_z$ are small and random. At iteration 6000, they become structured and exhibit sparsity, while their norm increases but remains small. Compared to the experiment in \Cref{fig:3l_sq_near_saddle}, the incoming and outgoing weights of multiple neurons in the first layer have high magnitude. However, they are still consistent with Lemma \ref{lemma_bal_wt}, that is,  if the incoming weights of a hidden neuron belonging to $\mathcal{G}_z$ are small, then the outgoing weights is small too, and vice-versa.}
	\label{fig:3l_relu_near_saddle}
\end{figure}

Finally, we want to emphasize on the striking similarity in the gradient flow dynamics near the origin and near the saddle point. Near the origin, as discussed in \Cref{sec:early_dir}, the weights of a feed-forward neural network remains small in norm but converge in direction, where the norm of incoming and outgoing weights of each hidden neuron is proportional. This in turn leads to emergence of a sparsity structure among the weights, as the incoming and outgoing weights of many neurons are small, rendering those neurons approximately inactive. An identical phenomenon occurs near the saddle point: the weights that have small magnitude, denoted by $\rvw_z$, remain small in norm but converge in direction, where the norm of incoming and outgoing weights of hidden neurons in $\mathcal{G}_z$ is proportional. Here as well, a sparsity structure emerges among the weights which activates some neurons in $\mathcal{G}_z$, while the remaining neurons stay approximately inactive, as illustrated in \Cref{fig:3l_sq_near_saddle} and \Cref{fig:3l_relu_near_saddle}. Also, for networks with square activations, this directional convergence seems to activate a single neuron. For ReLU networks, while multiple neurons may activate, their weights exhibit a rank-one structure, making them functionally equivalent to a single activated neuron. A similar distinction between square and ReLU networks was also observed during directional convergence near the origin \citep{early_dc}, as mentioned in \Cref{sec:early_dir}.
\section{Gradient Flow Dynamics Beyond Saddle Points: Empirical Observations}
\label{sec:bey_saddle}
The preceding section described the evolution of weights while gradient flow remained near saddle points. In this section, we present important empirical observations about  the dynamics \emph{after} escaping these saddle points. Because exact gradient flow is a continuous-time mathematical object, it cannot be directly simulated. Therefore, we investigate these dynamics empirically by analyzing the trajectory of gradient descent, which serves as its discrete-time approximation. To do this, we extend the training horizon of the experiments illustrated in \Cref{fig:3l_sq_near_saddle} and \Cref{fig:3l_relu_near_saddle}.

\Cref{fig:3_layer_sq_sd2} depicts the result of running the experiments described in \Cref{fig:3l_sq_near_saddle} for more iterations. From \Cref{sd2_sq_loss_evol} we observe that after escaping from the saddle point, the loss rapidly decreases and then eventually stagnates, indicating that the weights have reached another saddle point. In  \Cref{sd2_sq_wt}, we plot the weights at this new saddle point. Notably, apart from the incoming and outgoing weights of the second neuron in the first layer, all the remaining weights in $\rvw_z$ are small. Now, recall from \Cref{fig:3l_sq_near_saddle} that near the saddle point, a sparsity structure emerged among the weights belonging to $\rvw_z$, where only the incoming and outgoing weights of the second neuron in the first layer was non-zero and the remaining weights remained small. This experiment suggests that the sparsity structure, which emerges among $\rvw_z$ near the saddle point, is preserved even after gradient descent escapes from the saddle point and reaches a new saddle point. 

\Cref{fig:3_layer_relu_sd2} depicts the result of running the experiments described in \Cref{fig:3l_relu_near_saddle} for more iterations. Overall, we observe a similar behavior to the previous experiment. \Cref{sd2_relu_loss_evol} shows that after escaping from the saddle point, the loss rapidly decreases and then eventually stagnates, indicating that weights have reached a new saddle point. Comparing \Cref{fig:wz_it_relu} and \Cref{sd2_sq_wt} confirms that the sparsity structure which emerged among $\rvw_z$ near the initial saddle point is preserved even after gradient descent escapes from that saddle point and reaches a new saddle point. For instance, $5$th and $8$th rows of $\rmW_1$, and the corresponding $5$th and $8$th columns of $\rmW_2$  were small before gradient descent escapes from the initial saddle point, and they remain small after gradient descent reaches the new saddle point. Similarly, all rows of $\rmW_2$ except for the first, and all entries of $\rmW_3$ except for the first, were small before gradient descent escapes from the initial saddle point, and they remain small after gradient descent reaches the new saddle point. Also, from \Cref{sd2_sq_wt}, it appears that the rank-one structure seems to preserved during this phase. At iteration $i_2$, the ratio $\|\rmW_{1z}\|_F/\|\rmW_{1z}\|_2 = 1.000435$, quantitatively confirming that $\rmW_{1z}$ is approximately rank-one, where recall $\rmW_{1z}$ denotes the matrix containing all rows of $\rmW_1$ except the first. 

Overall, these observations suggest a strong similarity between the gradient flow dynamics after escaping the origin and after escaping from saddle points. As discussed in \Cref{sec:bey_origin}, gradient flow escapes the origin such that the sparsity structure, which emerged due to directional convergence in the early stages of training, is preserved until reaching the next saddle point. In other words, the hidden neurons deactivated during the early stages of training, remain inactive even after gradient flow escapes the origin and until it reaches the next saddle point. The above experiments suggest that gradient flow escapes from saddle points in a similar manner. The sparsity structure, which emerged among $\rvw_z$ due to directional convergence near the saddle point, is preserved after gradient flow escapes from the saddle point and until it reaches the next saddle point. In other words, the hidden neurons in $\mathcal{G}_z$ which were inactive after directional convergence, they remain inactive even after gradient flow escapes the saddle point and until it reaches the next saddle point. Consequently, as the weights escape the saddle point and move towards the next saddle, the training loss is minimized using only the currently active neurons.

While we are unable to rigorously prove these empirical observations on post-escape training dynamics, assuming their general validity allows us to present a comprehensive, mechanistic picture of training feed-forward neural networks in the small initialization regime. We propose that the optimization trajectory moves from one saddle point to another during training; at each saddle point, a new set of neurons is activated via directional convergence, thereby augmenting the previously active neurons and increasing the representational capacity of the network. We articulate this more precisely in the following bullet points.
\begin{itemize}
	\item Since initialization is small, all hidden neurons can be considered approximately inactive at the initialization. Now, as discussed in \Cref{sec:early_dir}, in the early stages of training weights converge in direction while their norm remains small. Due to this directional convergence, incoming and outgoing weights of certain neurons have relatively smaller norms, rendering these neurons approximately inactive, while the complementary set of neurons are considered active.
	\item Subsequently, as discussed in \Cref{sec:bey_origin}, the gradient flow escapes from the origin and reaches the neighborhood of a saddle point (say, $\mathcal{S}_0$) of the training loss. Crucially, the set of active neurons is preserved during this phase: neurons that became inactive during early directional convergence remain inactive until the weights reach $\mathcal{S}_0$.
	\item The saddle point  $\mathcal{S}_0$ inherently exhibits sparsity, as the norm of incoming and outgoing weights of inactive neurons will be small. By invoking \Cref{thm_dir_convg} and Lemma \ref{lemma_bal_wt}, we can establish that near this saddle point, the set of weights associated with inactive neurons undergoes directional convergence. This  directional convergence activates a subset of the previously inactive neurons, augmenting the active set of neurons, while the remaining neurons stay inactive.
	\item Thereafter, guided by the empirical observations discussed in this section, weights escape from $\mathcal{S}_0$  and reach the neighborhood of another saddle point (say, $\mathcal{S}_1$). Once again, the set of active neurons is preserved during this phase\textemdash the neurons that remained inactive after the directional convergence near $\mathcal{S}_0$ remained inactive until weights reach $\mathcal{S}_1$.
	\item This new saddle point $\mathcal{S}_1$ also exhibits the identical sparsity structure since the norm of incoming and outgoing weights of inactive neurons will be small. We can recursively apply \Cref{thm_dir_convg} and Lemma \ref{lemma_bal_wt} to argue that another subset of inactive neurons is activated near $\mathcal{S}_1$. The weights then escape from $\mathcal{S}_1$ and and reach the neighborhood of another saddle point, again preserving the inactive set during this phases. This process of saddle-to-saddle traversal and incremental activation continues until the weights ultimately converge.
\end{itemize}
Therefore, the entire optimization trajectory can be viewed as a series of saddle-to-saddle transitions. At each saddle point, a new subset of neurons become activated, systematically augmenting the previously active neurons. Moreover, the identity of these newly activated neurons, and their optimal initial weights, is determined by the KKT point of the constrained NCF at the corresponding saddle point. This mechanistic view of the training dynamics is also consistent with the saddle-to-saddle dynamics hypothesis \citep{jacot_sd, lyu_resolving}, which argues that gradient descent descent passes through a sequence of saddle points and the effective complexity of the network gradually increases during training.

The observation that neurons seem to activate in this sequential, incremental fashion as training progresses naturally motivates the design of an algorithm that mimic this behavior by incrementally adding neurons to the network. The goal of the next section is to leverage all these insights to develop a greedy algorithm for training deep neural networks, wherein neurons are gradually added to the network and then the training loss is minimized using this augmented network.
\begin{figure}[htbp]
	\centering
	\begin{subfigure}{0.3\textwidth}
		\centering
		\includegraphics[width=\linewidth]{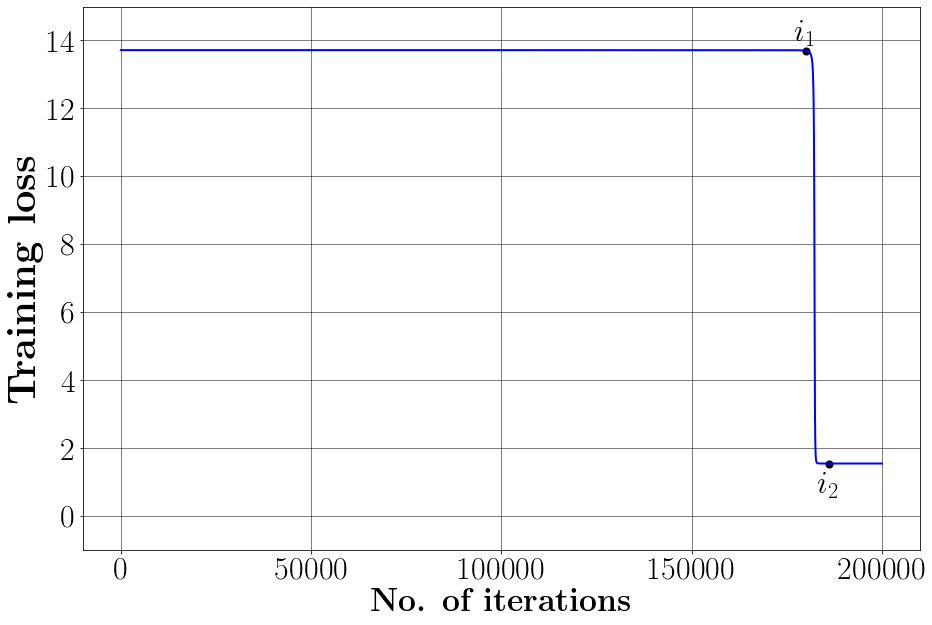}
		\caption{Evolution of training loss}
		\label{sd2_sq_loss_evol}
	\end{subfigure}
	\hfill
	\begin{subfigure}{0.6\textwidth}
		\centering
		\includegraphics[width=\linewidth]{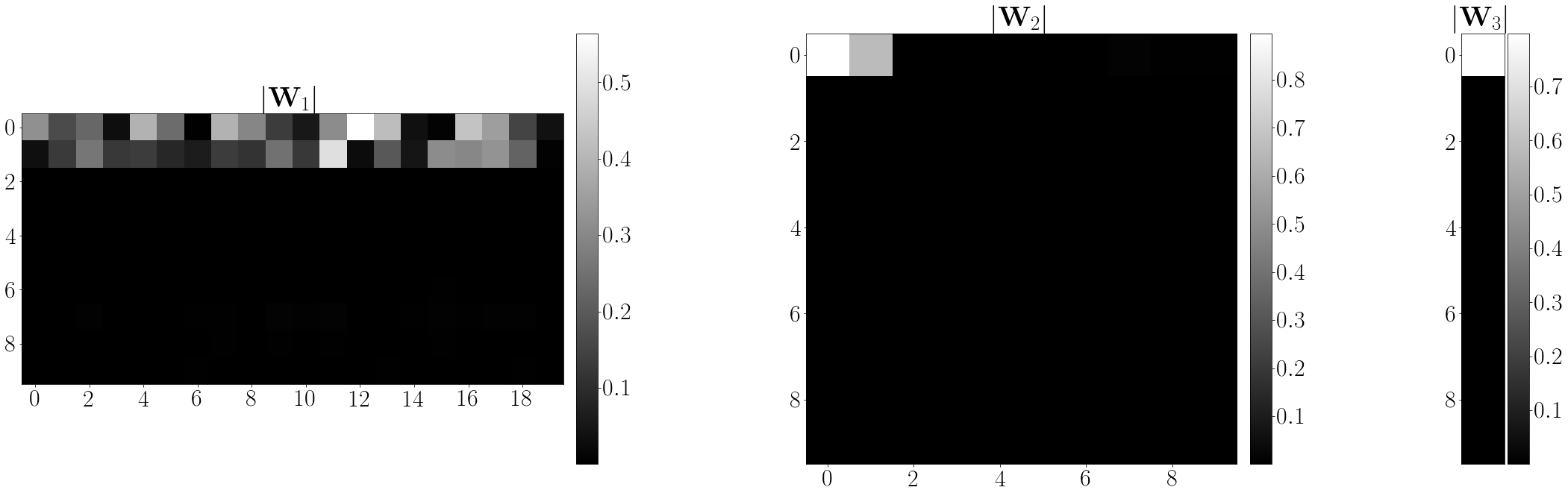}
		\caption{Weights at iteration $i_2$}
		\label{sd2_sq_wt}
	\end{subfigure}
	\caption{Extended optimization trajectory for the experiment illustrated in \Cref{fig:3l_sq_near_saddle}. Panel (a) depicts the evolution of training loss. After escaping from the saddle point, the loss rapidly decreases before plateauing at a new saddle point. Panel (b) depicts the absolute value of weights at iteration $i_2$ (marked in Panel (a)), shortly after reaching the new saddle point. Also $i_1 = 180000$, just before gradient descent escapes from the saddle point. Comparing Panel (b) with \Cref{fig:wz_it_sq} suggests that the sparsity structure which emerged among  $\rvw_z$ (as defined in \Cref{fig:3l_sq_near_saddle}) before escaping the saddle point is preserved until reaching the new saddle point.}
	\label{fig:3_layer_sq_sd2}
\end{figure}

\begin{figure}[htbp]
	\centering
	\begin{subfigure}{0.3\textwidth}
		\centering
		\includegraphics[width=\linewidth]{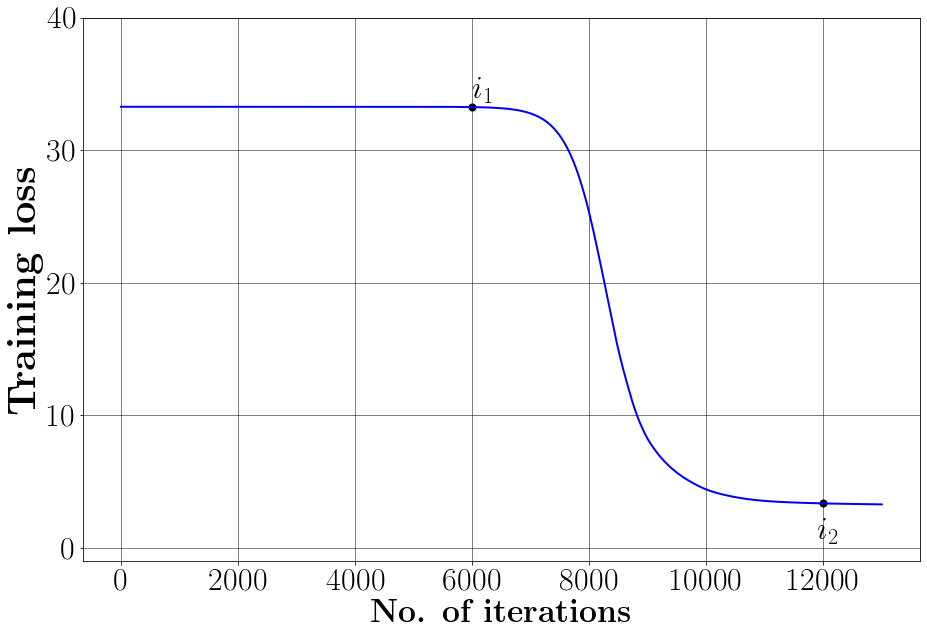}
		\caption{Evolution of training loss}
		\label{sd2_relu_loss_evol}
	\end{subfigure}
	\hfill
	\begin{subfigure}{0.6\textwidth}
		\centering
		\includegraphics[width=\linewidth]{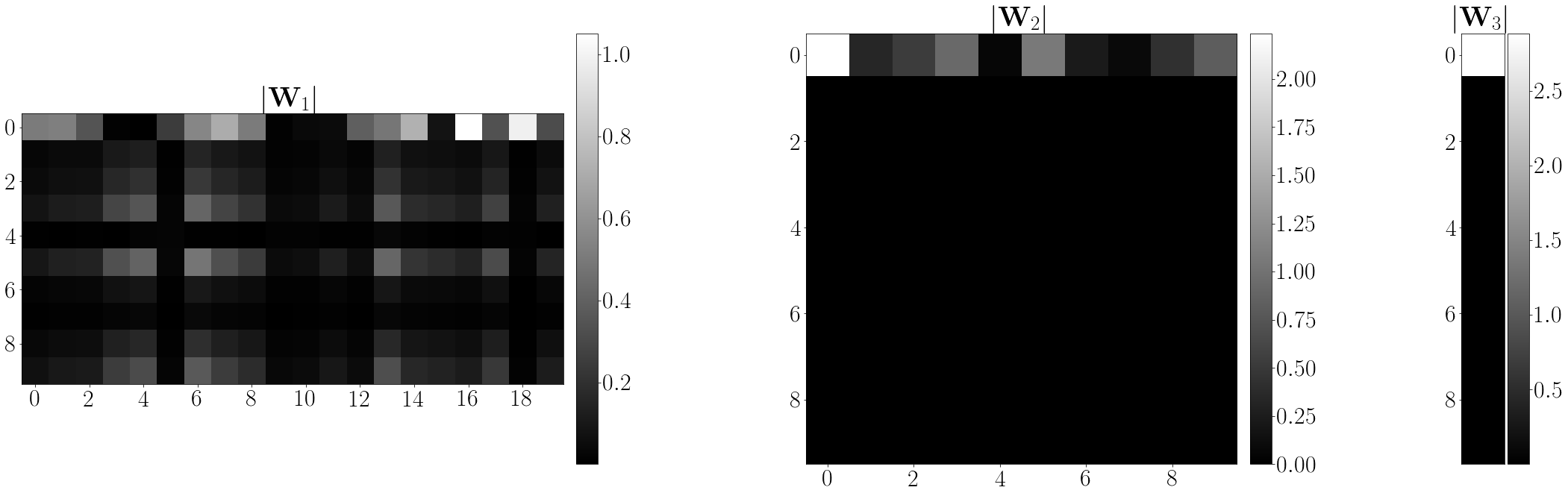}
		\caption{Weights at iteration $i_2$}
		\label{sd2_relu_wt}
	\end{subfigure}
	\caption{Extended optimization trajectory for the experiment illustrated in \Cref{fig:3l_relu_near_saddle}. Panel (a) depicts the evolution of training loss. After escaping from the saddle point, the loss rapidly decreases before plateauing at a new saddle point. Panel (b) depicts the absolute value of weights at iteration $i_2$ (marked in Panel (a)), shortly after reaching the new saddle point. Also $i_1 = 6000$, just before gradient descent escapes from the saddle point. Comparing Panel (b) with \Cref{fig:wz_it_relu} demonstrates that the sparsity structure which emerged among $\rvw_z$ (as defined in \Cref{fig:3l_relu_near_saddle}) before escaping the saddle point is preserved until reaching the new saddle point.}
	\label{fig:3_layer_relu_sd2}
\end{figure}
\section{Neuron Pursuit}\label{sec:np}
In this section we present our algorithm, \emph{Neuron Pursuit} (NP), to train homogeneous feed-forward neural networks. At a high level, the NP algorithm is inspired by saddle-to-saddle dynamics, and it also leverages insights gained into the emergence of sparsity structure near the saddle points and after escaping them, as discussed above and in previous works. More concretely, NP iteratively augments the network by adding neuron(s) selected via the maximization of an appropriate constrained NCF, and then minimizes the training loss via gradient descent using this augmented network. The algorithm is described in \Cref{np_algo}, and we next discuss the steps involved in the algorithm and their motivation. 
\begin{algorithm}[htbp]
	\caption{Neuron Pursuit}
	\begin{algorithmic}[1]
		\Require Training data $\mathcal{D} = \{(\rmX, \rvy)\} \in \sR^{d\times N}\times \sR^N$, activation function $\sigma(\cdot)$, small scalar $\delta$, number of iterations $E$, depth $L$
		\State Define $\mathcal{H}(\rvx;\rmW_1,\cdots,\rmW_L) = \rmW_L\sigma(\cdots\sigma(\rmW_1\rvx))$, where $\rmW_1\in \sR^{1 \times d} \text{ and } \rmW_l \in \sR $, for all $2\leq l\leq L$, and $\mathcal{L}(\rmW_1,\cdots,\rmW_L) = \|\rvy - \mathcal{H}(\rmX;\rmW_1,\cdots,\rmW_L)\|_2^2$.
		\State Initialize $\mathcal{N}_{max }\gets -\infty$
		\For{ $h = 1$ to $H$} \Comment{Maximizing the NCF.}
		\State $(\rmW_1^*,\cdots,\rmW_L^*)\gets \argmax \rvy^\top\mathcal{H}(\rmX;\rmW_1,\cdots,\rmW_L), \text{ s.t. } \sum_{l=1}^L\|\rmW_l\|_F^2=1.$
		\State $\mathcal{N}^* \gets \rvy^\top\mathcal{H}(\rmX;\rmW_1^*,\cdots,\rmW_L^*).$
		\If {$\mathcal{N}^* > \mathcal{N}_{\max}$} \Comment{Store the KKT point with highest value of the NCF.}
		\State $(\widehat{\rmW}_1,\cdots,\widehat{\rmW}_L) \gets (\rmW_1^*,\cdots,\rmW_L^*)$. 
		\State $\mathcal{N}_{\max} \gets \mathcal{N}^*$
		\EndIf
		\EndFor
		\State  $({\rmW}_1(0),\cdots,{\rmW}_L(0)) \gets (\delta\widehat{\rmW}_1,\cdots,\delta\widehat{\rmW}_L)$ \hfill
		\parbox[t]{0.5\linewidth}{\raggedright $\triangleright$ Small initial weights aligned along the most dominant KKT point.}
		\For{ $k = 1$ to $\infty$} \Comment{Minimizing training loss using gradient descent.}
		\State ${\rmW}_l(k+1) \gets {\rmW}_l(k)  - \eta \nabla_{\rmW_l}\mathcal{L}(\rmW_1(k),\cdots,\rmW_L(k)),$ for all $1\leq l\leq L$.
		\EndFor
		\State $(\overline{\rmW}_1,\cdots,\overline{\rmW}_L) \gets ({\rmW}_1(\infty),\cdots,{\rmW}_L(\infty))$.
		\For{iteration $= 1$ to $E$}
		\State $\overline{\rvy} \gets \rvy - \mathcal{H}(\rmX;\overline{\rmW}_1,\cdots,\overline{\rmW}_L)$ \Comment{Residual error}
		\State $\mathcal{N}_{max }= -\infty$
		\For{ $h = 1$ to $H$} \Comment{Maximizing the NCF.}
		\For { $l= 1$ to $L-1$} 
		\State $(\rva_l^*,\rvb_{l+1}^*) \gets \argmax \overline{\rvy}^\top\mathcal{H}_{1,l}(\rmX;\rvb_{l+1},\rva_l), \text{ s.t. } \|\rva_l\|_2^2+\|\rvb_{l+1}\|_2^2=1.$
		\State $\mathcal{N}^* \gets \overline{\rvy}^\top\mathcal{H}_{1,l}(\rmX;\rvb_{l+1}^*,\rva_l^*).$
		\If {$\mathcal{N}^* > \mathcal{N}_{\max}$} \Comment{Store the KKT point with highest value of the NCF.}
		\State $(\widehat{\rva},\widehat{\rvb})\gets(\rva_l^*,\rvb_{l+1}^*).$ 
		\State ${l}_* \gets l$, $\mathcal{N}_{\max} \gets \mathcal{N}^*.$
		\EndIf
		\EndFor
		\EndFor
		\State $\widehat{\rmW}_{{l}_*} \gets \begin{bmatrix}
		\overline{\rmW}_{{l}_*} \\\delta \widehat{\rva}^\top 
		\end{bmatrix}$ \hfill
		\parbox[t]{0.5\linewidth}{\raggedright $\triangleright$ The added weights are small and aligned along the most dominant KKT point.}
		\State $\widehat{\rmW}_{{l}_*+1} \gets \begin{bmatrix}
		\overline{\rmW}_{{l}_*+1} & \delta \widehat{\rvb}
		\end{bmatrix}$
		\State $\widehat{\rmW}_{l} \gets \overline{\rmW}_{{l}}$, for all $l\notin\{ l_*,l_*+1\}$
		\State  $({\rmW}_1(0),\cdots,{\rmW}_L(0)) \gets (\widehat{\rmW}_1, \cdots,\widehat{\rmW}_L)$
		\For{ $k = 1$ to $\infty$} \Comment{Minimizing training loss using gradient descent.}
		\State ${\rmW}_l(k+1) \gets {\rmW}_l(k)  - \eta \nabla_{\rmW_l}\mathcal{L}(\rmW_1(k),\cdots,\rmW_L(k)),$ for all $1\leq  l\leq L$.
		\EndFor
		\State $(\overline{\rmW}_1,\cdots,\overline{\rmW}_L) \gets ({\rmW}_1(\infty),\cdots,{\rmW}_L(\infty))$.
		\EndFor
	\end{algorithmic}
	\label{np_algo}
\end{algorithm}

The input to the algorithm is the training data, the activation function $\sigma(x)$, which is assumed to be of the form $\max(x,\alpha x)^p$, the depth of the network $L$ and the number of iterations $E$. It also requires a scalar $\delta$, which is assumed to be small.

The algorithm begins by initializing the neural network $\mathcal{H}$ with depth $L$ and one neuron in every layer. Then, inspired from the work of \citet{kumar_dc,kumar_escape,early_dc}, also discussed in \Cref{sec:background}, we minimize the training loss with initial weights small in magnitude and aligned along a KKT point of the constrained NCF. More specifically, in lines $3-10$, the constrained NCF defined with respect to $\mathcal{H}$ and $\rvy$ is maximized. Although not specified, we use projected gradient ascent with $H$ different random initializations. The most dominant KKT point, the one which leads to the largest value of  the NCF, is stored. In lines $12-14$, the training loss is minimized using gradient descent, where the initial weights have small norm and aligned along the most dominant KKT point. The limiting point of this minimization procedure is stored in $(\overline{\rmW}_1, \cdots,\overline{\rmW}_L) $, which will be a stationary point of the training loss.

Next, $(\overline{\rmW}_1,\cdots,\overline{\rmW}_L) $ can be viewed as a saddle point of the training loss, where only one neuron is active in every layer. In this view, inspired by our discussion in \Cref{sec:near_saddle} and \Cref{sec:bey_saddle}, we ``escape'' this saddle point by ``activating'' certain neurons. More specifically, in lines $19-28$, the constrained NCF with respect to the residual error $\overline{\rvy}$ and $\mathcal{H}_{1,l}$ is maximized (using projected gradient ascent with $H$ different random initializations), where 
\begin{align*}
&\mathcal{H}_{1,l}(\rvx;\rvb_{l+1}, \rva_l) = \nabla_\rvs f_{L,l+2}^\top\left(\overline{\rmW}_{l+1}\rvg_{l}(\rvx)\right)\rvb_{l+1}\sigma\left(\rva_l^\top\rvg_{l-1}(\rvx)\right), \text{ for all }1\leq l\leq L-2,\\
&\mathcal{H}_{1,L-1}(\rvx;\rvb_L, \rva_{L-1}) = \rvb_L\sigma\left(\rva_{L-1}^\top\rvg_{L-2}(\rvx)\right),
\end{align*}
and, for $1\leq l\leq L-1$,
\begin{equation*}
g_0(\rvx) = \rvx, g_l(\rvx) = \sigma(\overline{\rmW}_lg_{l-1}(\rvx)), \text{ and } f_{L,l+1}(\rvs) =  {\overline{\rmW}_{L}}\sigma\left(\overline{\rmW}_{L-1}\sigma\left(\cdots\sigma(\overline{\rmW}_{l+1}\sigma(\rvs))\right)\right).
\end{equation*}
The above choice of $\mathcal{H}_{1,l}$ is same as stated in \Cref{sec:prop_kkt}, where only the last neuron in each layer is assumed to be in $\mathcal{G}_z$,  and $\rva_l$'s and $\rvb_{l+1}$'s can be viewed as the incoming and outgoing weights of those neurons, respectively. Also, maximizing with $H$ different random initializations is inspired from our discussion at the end of  \Cref{sec:prop_kkt}: maximizing the constrained NCF is equivalent to maximizing its homogeneous components in parallel with multiple independent initializations, and the weights corresponding to the most dominant KKT point remains non-zero, while others become zero.

In lines $29-30$, the weights corresponding to the most dominant KKT point are added to the network, where the magnitude of the added weights are small. Note that this procedure is equivalent to adding a neuron with incoming and outgoing weights aligned along the KKT point. Then, in lines $33-36$, the training loss is minimized using gradient descent and the limiting point is stored in $(\overline{\rmW}_1, \overline{\rmW}_2,\cdots,\overline{\rmW}_L) $, which will be a stationary point of the training loss. Now, $(\overline{\rmW}_1, \overline{\rmW}_2,\cdots,\overline{\rmW}_L) $ can again be viewed as a saddle point of the training loss such that certain neurons are active in every layer. Hence, we again ``escape'' this saddle point by ``activating'' certain neurons in a similar way. This process is continued for $E$ number of iterations.

\begin{remark}	
	The NP algorithm and the analysis in \Cref{sec:near_saddle} are presented under the assumption that the output of the neural network is a scalar. However, both naturally extend to the vector-valued output setting. For example, in the NP algorithm, the label vector $\rvy$ can be replaced with a label matrix $\rmY$, where each column corresponds to a multi-dimensional label. Similarly, the NCF can be defined as the matrix inner product between the network's output and the label matrix. Analogous adjustments can be applied throughout the algorithm and the analysis to accommodate vector-valued outputs.
\end{remark}
\textbf{Example.} For a better understanding of the NP algorithm, we now illustrate its training mechanism with a concrete example. Specifically, we aim to learn the target function $f(\rvx) = \sigma(\sigma(2x_1+x_3) - \sigma(x_3-x_2))$, defined over the binary hypercube $\{\pm 1\}^{20}$, where $\sigma(x) = \max(x,0)$ is the ReLU activation and $x_i$ denotes the $i$th coordinate of $\rvx \in \sR^{20}$. To this end, we use a three-layer neural network with ReLU activation function, and train it on a dataset of 500 samples drawn uniformly at random from the hypercube. The NP algorithm is run for two iterations, successfully recovering the target function $f(\rvx)$. We depict the entire training process in \Cref{fig:np_ill}. 
\begin{figure}[htbp]
	\centering
	\begin{subfigure}[b]{\textwidth}
		\centering
		\begin{subfigure}[b]{0.3\textwidth}
			\centering
			\includegraphics[width=\linewidth]{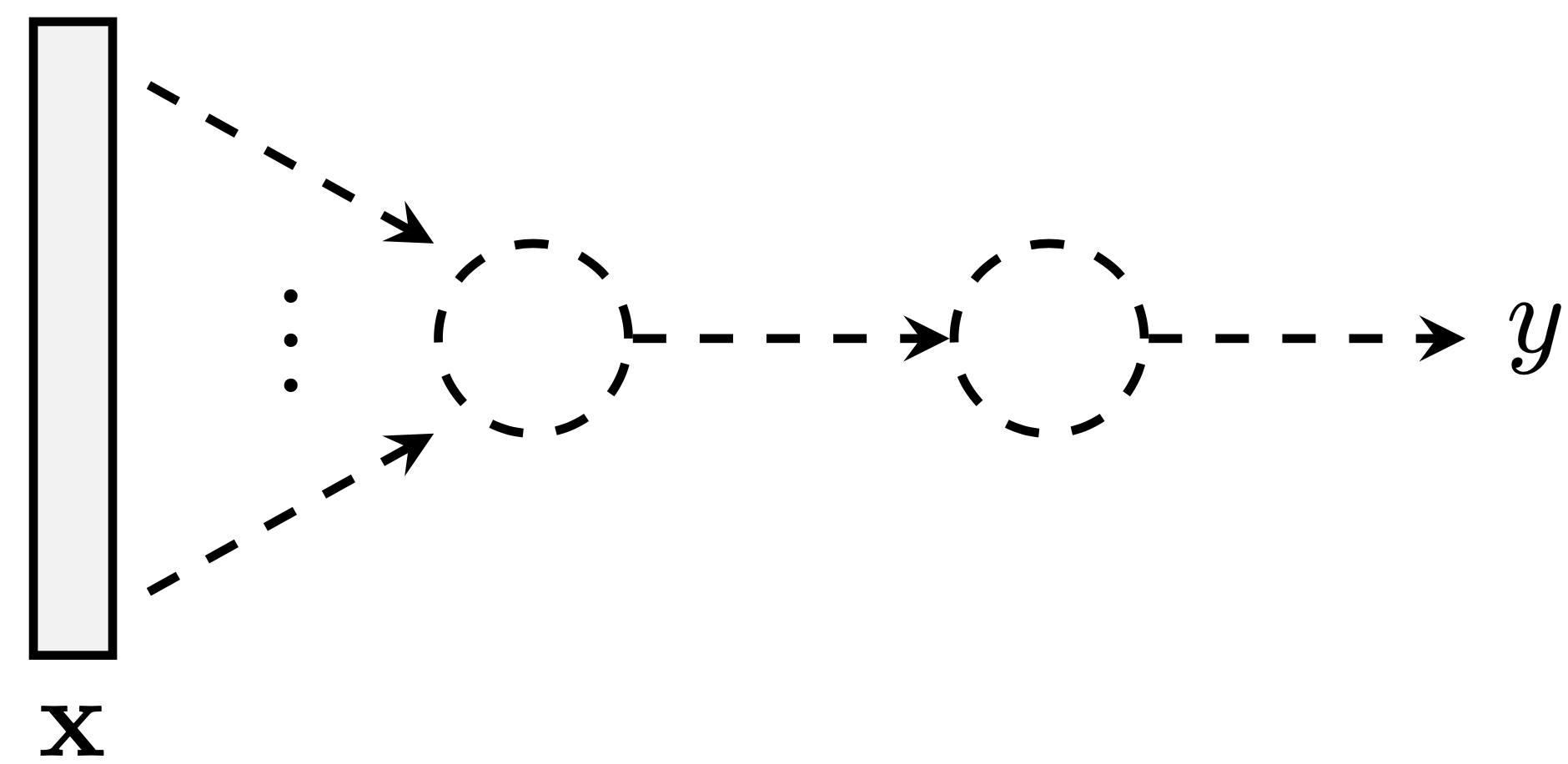}
			\vspace{0.5em}
			\caption*{(i) Network's architecture \vspace{1em}}
		\end{subfigure}
		\hspace{4em}
		\begin{subfigure}[b]{0.3\textwidth}
			\centering
			\includegraphics[width=\linewidth]{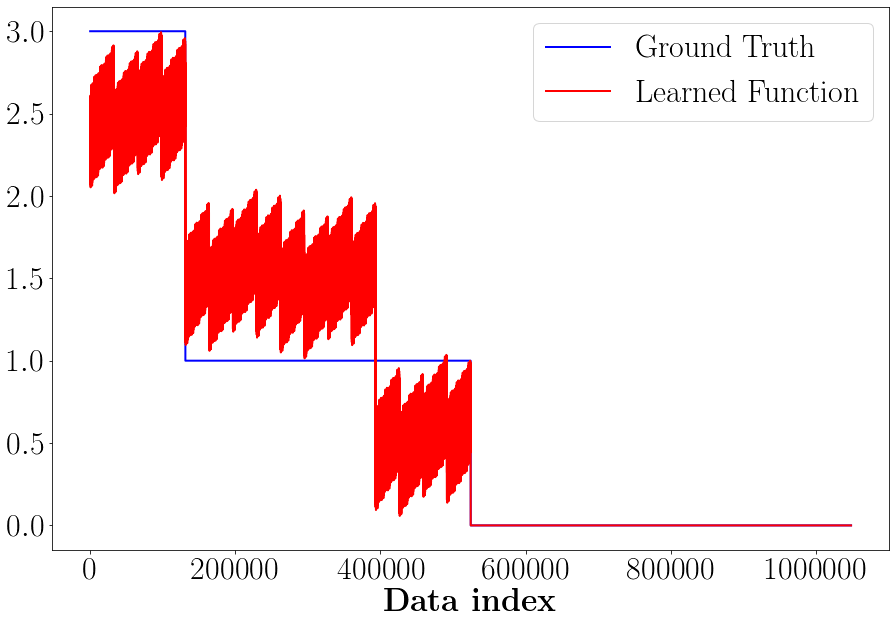}
			\caption*{ {(iv)} Learned function compared to ground truth}
		\end{subfigure}
		\par\medskip
		
		\begin{subfigure}[b]{0.3\textwidth}
			\centering
			\includegraphics[width=\linewidth]{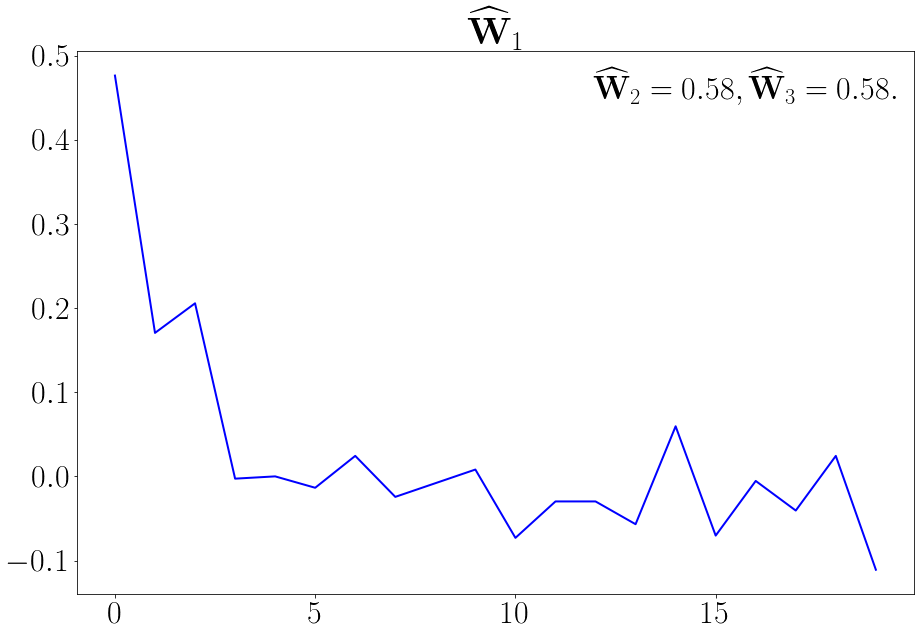}
			\caption*{(ii) KKT point of the NCF}
		\end{subfigure}
		\hspace{4em}
		\begin{subfigure}[b]{0.3\textwidth}
			\centering
			\includegraphics[width=\linewidth]{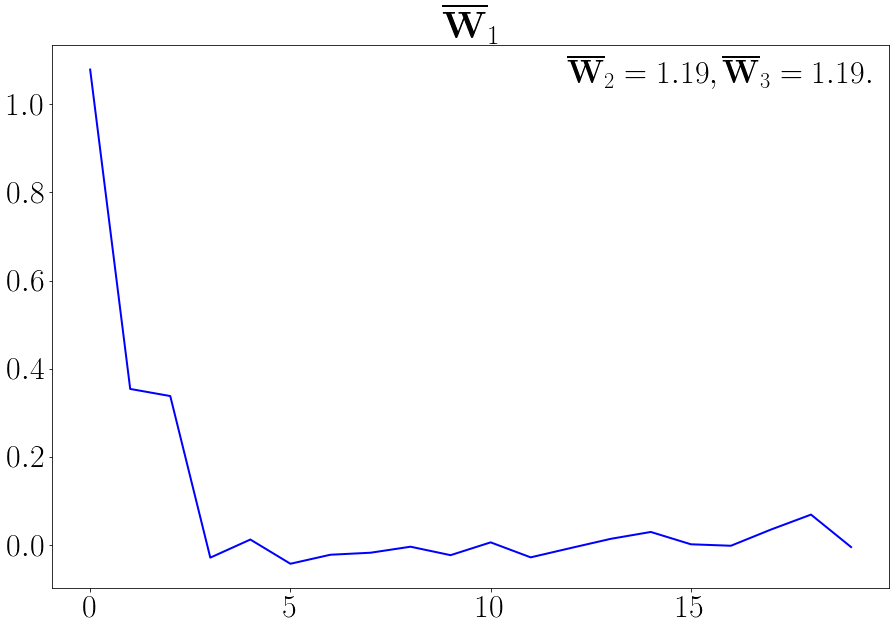}
			\caption*{(iii) Weights of the network}
		\end{subfigure}
		\caption{\textbf{Iteration 1}}
		\label{it1_wt}
	\end{subfigure}
	
	\vspace{1em}
	
	\begin{subfigure}[b]{\textwidth}
		\centering
		\begin{subfigure}[b]{0.3\textwidth}
			\centering
			\includegraphics[width=\linewidth]{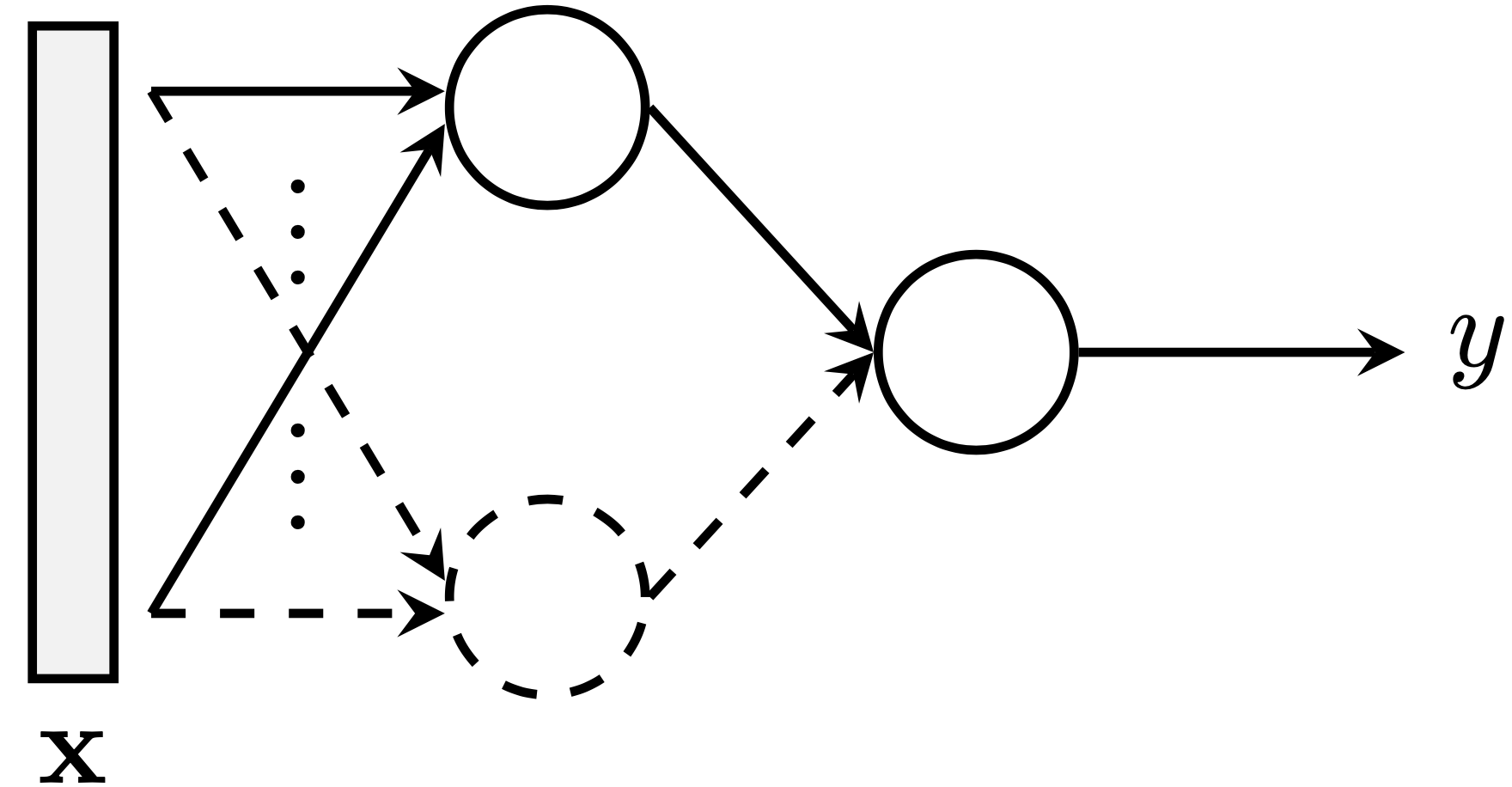}
			\vspace{0.5em}
			\caption*{(i) Network's architecture\vspace{1em}}
		\end{subfigure}
		\hspace{4em}
		\begin{subfigure}[b]{0.3\textwidth}
			\centering
			\includegraphics[width=\linewidth]{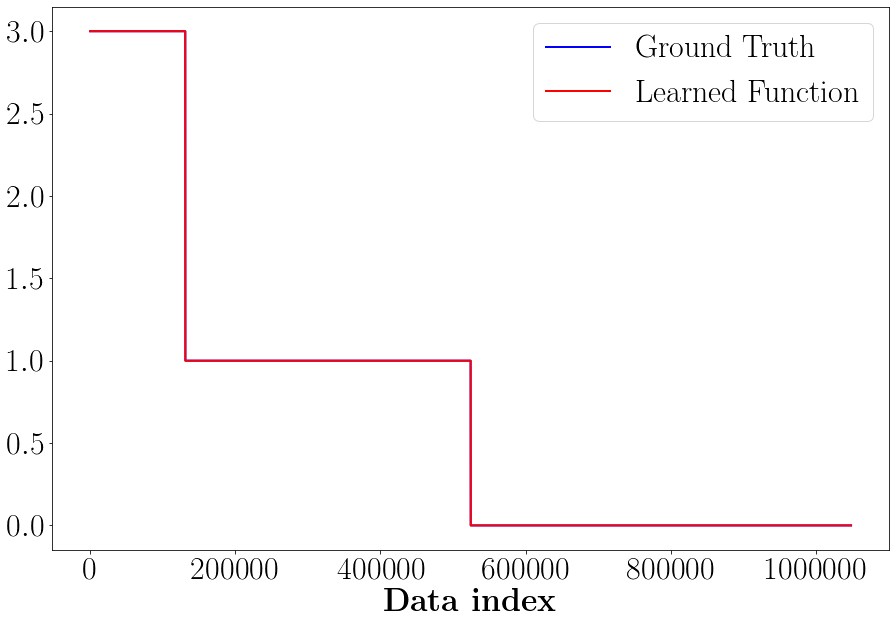}
			\caption*{(iv) Learned function compared to ground truth}
		\end{subfigure}
		\par\medskip
		\begin{subfigure}[b]{0.3\textwidth}
			\centering
			\includegraphics[width=\linewidth]{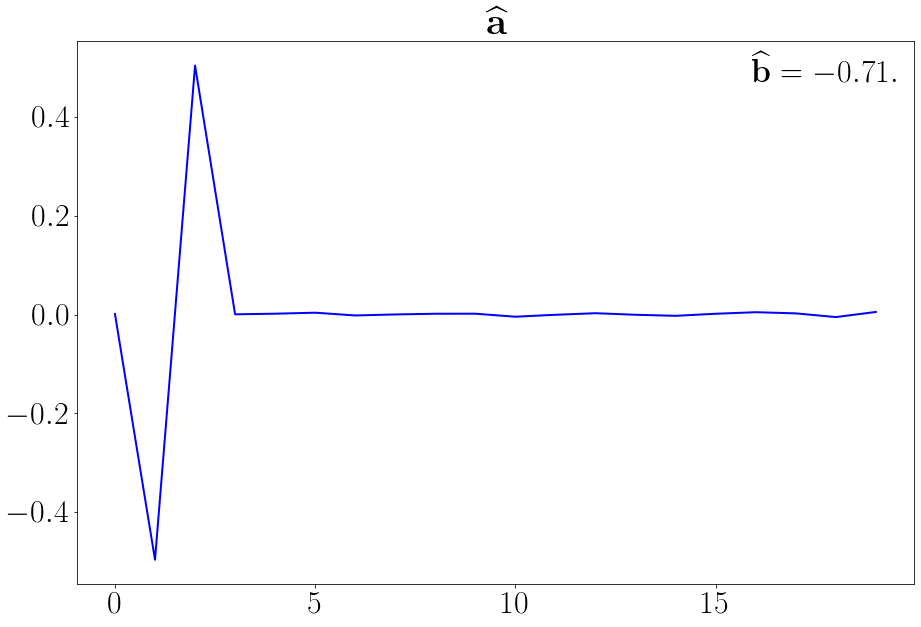}
			\caption*{(ii) KKT point of the NCF}
		\end{subfigure}
		\hspace{4em}
		\begin{subfigure}[b]{0.3\textwidth}
			\centering
			\includegraphics[width=\linewidth]{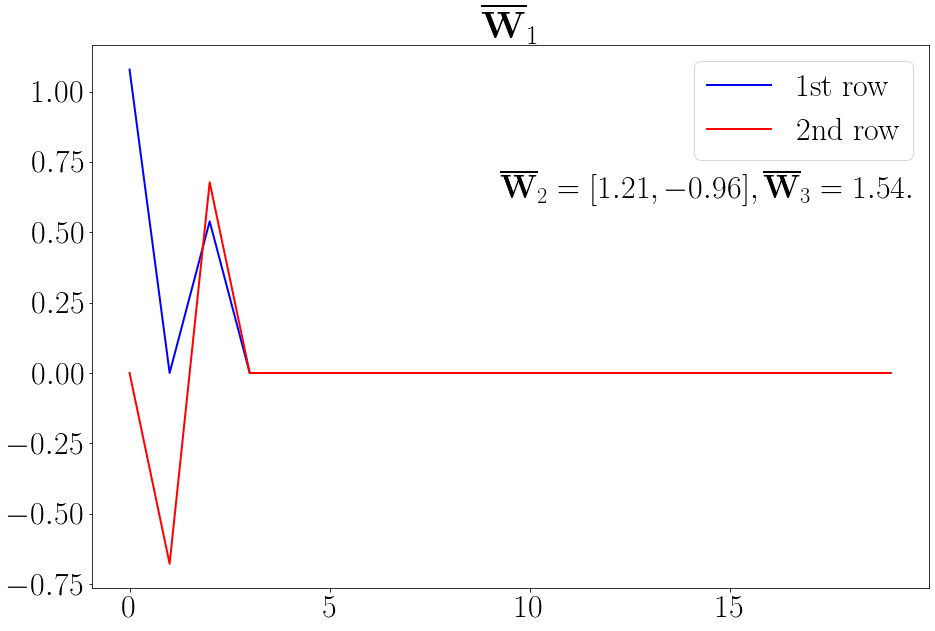}
			\caption*{(iii) Weights of the network}
		\end{subfigure}
		\caption{\textbf{Iteration 2}}
		\label{it2_wt}
	\end{subfigure}
	
	\caption{
		We train a three-layer neural network with ReLU activation using the NP algorithm over the binary hypercube $\{\pm 1\}^{20}$. For each iteration, we depict: $(i)$ the network architecture during that iteration after adding appropriate neuron(s), $(ii)$ the incoming and outgoing weights of the newly added neuron obtained by maximizing the NCF, $(iii)$ the full network weights after minimizing the training loss, and $(iv)$ the learned function at the end of that iteration compared to the ground truth $f(\rvx)$. Dashed circles and arrows indicate neurons and weights introduced in the current iteration, while solid elements represent those from previous iterations.}
	\label{fig:np_ill}
\end{figure}

At \textbf{Iteration 1}, we begin with a single neuron in every layer. We then maximize the constrained NCF, as described in line 4, using $5$ different initializations. The most dominant KKT point, denoted by $(\widehat{\rmW}_1,\widehat{\rmW}_3,\widehat{\rmW}_3)$, is depicted in \Cref{it1_wt}(ii). In $\widehat{\rmW}_1$, the first three entries have larger magnitude than the others, which are small but not negligible. We then minimize the training loss using gradient descent, where the initial weights of the network are small in magnitude and aligned along the most dominant KKT point. The resulting weights are shown in \Cref{it1_wt}(iii). Notably, the first three entries remain dominant, while the remaining weights shrink further. The function learned at the end of this iteration is plotted in \Cref{it1_wt}(iv), alongside the ground truth. To do that, we plot the value of the functions at the each vertex of the binary hypercube, ordered lexicographically from $(1,1,\cdots,1,1), (1,1,\cdots,1,-1)$ to $(-1,-1,\cdots -1,-1)$. So, inflection points in the ground truth occur only if $x_1$, $x_2$, or $x_3$ changes. The oscillations in the learned function are due to the small but non-zero weights on the coordinates other than the first three.

At \textbf{Iteration 2}, we begin by maximizing the constrained NCF with respect to the residual error $\overline{\rvy}$ and $\mathcal{H}_{1,l}$, as described in lines $19-28$, using $5$ different initializations to determine where to add a neuron. The most dominant KKT point $(\widehat{\rva},\widehat{\rvb})$ is depicted in \Cref{it2_wt}(ii), and it corresponds to adding a neuron in the first layer. Accordingly, a neuron is added in the first layer, and its incoming and outgoing weights are small in magnitude but aligned along $\widehat{\rva}$ and $\widehat{\rvb}$, respectively, while the remaining weights are same as they were at the end of iteration 1. We next minimize the training loss using gradient descent, and the resulting weights are depicted in \Cref{it2_wt}(iii). In the first layer, the weights corresponding to coordinates outside ${x_1, x_2, x_3}$ shrink to zero, indicating that the network has correctly identified the active components in $f(\rvx)$. Finally, in \Cref{it2_wt}(iv), we observe that the learned function perfectly matches the ground truth, demonstrating that the NP algorithm has successfully recovered $f(\rvx)$.

\subsection{The Descent Property}
\label{sec:descent_prop}
The NP algorithm attempts to iteratively minimize the training loss. A natural question, then, is whether it can actually reach a local or even a global minimum. Before addressing this, a more fundamental question pertains to whether the training loss decreases after each iteration. This is also known as the descent property of an algorithm, and is often the first step towards establishing convergence guarantees. In what follows, we discuss scenarios under which the descent property will be satisfied.

Each iteration of the NP algorithm consists of two stages: neurons are added to the network with small weights aligned along the most dominant KKT point, and then the training loss is minimized using this augmented network via gradient descent. We first study the impact of adding neurons on the training loss, beginning with the first iteration of the NP algorithm.
\begin{lemma}
	At the end of line 10 of \Cref{np_algo}, suppose $\mathcal{N}_{max} = \rvy^\top\mathcal{H}(\rmX;\widehat{\rmW}_1,\cdots,\widehat{\rmW}_L)>0$; that is, the most dominant KKT point obtained in the first iteration is positive. Then, for all sufficiently small $\delta>0$,
	\begin{align*}
	\mathcal{L}(\mathbf{0},\cdots,\mathbf{0}) > \mathcal{L}(\delta\widehat{\rmW}_1,\cdots,\delta\widehat{\rmW}_L).
	\end{align*}
\end{lemma}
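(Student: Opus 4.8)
The plan is to reduce the inequality to an elementary one-dimensional estimate by exploiting the positive homogeneity of $\mathcal{H}$. First I would observe that evaluating the network at all-zero weights gives $\mathcal{H}(\rmX;\mathbf{0},\cdots,\mathbf{0}) = \mathbf{0}$ (immediate from the product form in line 1, or from homogeneity of degree at least two), so that $\mathcal{L}(\mathbf{0},\cdots,\mathbf{0}) = \|\rvy\|_2^2$.

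Next, write $q$ for the degree of positive homogeneity of $\mathcal{H}$ with respect to its weights (for $\sigma(x)=\max(x,\alpha x)^p$ this is some integer $q\geq 2$), and set $\widehat{\vh} \coloneqq \mathcal{H}(\rmX;\widehat{\rmW}_1,\cdots,\widehat{\rmW}_L)$. Since $(\widehat{\rmW}_1,\cdots,\widehat{\rmW}_L)$ is a fixed tuple and $\delta>0$, homogeneity gives $\mathcal{H}(\rmX;\delta\widehat{\rmW}_1,\cdots,\delta\widehat{\rmW}_L) = \delta^q\widehat{\vh}$. Expanding the squared loss,
\begin{align*}
\mathcal{L}(\delta\widehat{\rmW}_1,\cdots,\delta\widehat{\rmW}_L) = \|\rvy - \delta^q\widehat{\vh}\|_2^2 = \|\rvy\|_2^2 - 2\delta^q\,\rvy^\top\widehat{\vh} + \delta^{2q}\|\widehat{\vh}\|_2^2 ,
\end{align*}
and by hypothesis $\rvy^\top\widehat{\vh} = \mathcal{N}_{max} > 0$. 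Subtracting from $\mathcal{L}(\mathbf{0},\cdots,\mathbf{0}) = \|\rvy\|_2^2$ yields
\begin{align*}
\mathcal{L}(\mathbf{0},\cdots,\mathbf{0}) - \mathcal{L}(\delta\widehat{\rmW}_1,\cdots,\delta\widehat{\rmW}_L) = \delta^q\bigl(2\mathcal{N}_{max} - \delta^q\|\widehat{\vh}\|_2^2\bigr).
\end{align*}
Finally I would conclude: the right-hand side is strictly positive as soon as $\delta^q\|\widehat{\vh}\|_2^2 < 2\mathcal{N}_{max}$, which holds for all sufficiently small $\delta>0$ since $\mathcal{N}_{max}>0$ (note $\widehat{\vh}\neq\mathbf{0}$, as otherwise $\mathcal{N}_{max}=0$; and if one did have $\widehat{\vh}=\mathbf{0}$ the difference would equal $2\delta^q\mathcal{N}_{max}>0$ anyway).

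There is essentially no obstacle here: the only points requiring a moment's care are that $\mathcal{H}$ vanishes at the origin of weight space (using $L\geq 2$), that $\mathcal{N}_{max}$ is precisely the inner product $\rvy^\top\widehat{\vh}$, and that the homogeneity rescaling $\mathcal{H}(\rmX;\delta\widehat{\rmW})=\delta^q\mathcal{H}(\rmX;\widehat{\rmW})$ is exact for the stored tuple\textemdash all immediate from the definitions. The computation even makes the threshold explicit: any $\delta < \bigl(2\mathcal{N}_{max}/\|\widehat{\vh}\|_2^2\bigr)^{1/q}$ suffices.
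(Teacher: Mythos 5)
Your proposal is correct and follows essentially the same route as the paper: expand the squared loss at $(\delta\widehat{\rmW}_1,\cdots,\delta\widehat{\rmW}_L)$ using positive homogeneity, identify the cross term with $\mathcal{N}_{max}>0$, and note the quadratic term is $O(\delta^{2q})$ and hence dominated for small $\delta$. The only (immaterial) differences are that you track the exact homogeneity degree $q$ rather than writing $\delta^{L}$ as the paper does, and you make the admissible threshold on $\delta$ and the trivial case $\widehat{\vh}=\mathbf{0}$ explicit.
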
 
\begin{proof}
	Let $\zeta\coloneqq\rvy^\top\mathcal{H}(\rmX;\widehat{\rmW}_1,\cdots,\widehat{\rmW}_L)>0$, then, for all sufficiently small $\delta>0$, we have
	\begin{align*}
	\mathcal{L}(\delta\widehat{\rmW}_1,\cdots,\delta\widehat{\rmW}_L) &= \|\rvy - \mathcal{H}(\rmX;\delta\widehat{\rmW}_1,\cdots,\delta\widehat{\rmW}_L)\|_2^2\\
	&= \|\rvy - \delta^L\mathcal{H}(\rmX;\widehat{\rmW}_1,\cdots,\widehat{\rmW}_L)\|_2^2\\
	&=\|\rvy\|_2^2 - 2\delta^L\zeta + \delta^{2L}\|\mathcal{H}(\rmX;\widehat{\rmW}_1,\cdots,\widehat{\rmW}_L)\|_2^2\\
	&\leq \|\rvy\|_2^2 - \delta^L\zeta < \|\rvy\|_2^2 = \mathcal{L}(\mathbf{0},\cdots,\mathbf{0}),
	\end{align*}
	the second equality follows from homogeneity of the neural network, while the first inequality holds if $\delta^L<\zeta/\|\mathcal{H}(\rmX;\widehat{\rmW}_1,\cdots,\widehat{\rmW}_L)\|_2^2$.
\end{proof}
The above lemma shows that if the most dominant KKT point is positive, then initializing the network weights along the most dominant KKT point with sufficiently small norm leads to smaller training loss than at the origin. Importantly, no smoothness assumptions on the activation function are required -- it applies to $\sigma(x) = \max(x,\alpha x)^p$, where $p\in \sN$ and $p\geq 1$.

We now turn to addition of neurons in later iterations.
\begin{lemma}
	At the end of line 28 of \Cref{np_algo}, suppose $\mathcal{N}_{max} = \overline{\rvy}^\top\mathcal{H}_{1l_*}(\rmX;\widehat{\rva},\widehat{\rvb})>0$; that is, the most dominant KKT point is positive. Suppose $\sigma(x) = \max(x,\alpha x)^p$, where if $\alpha= 1$, then $p\geq 1$, otherwise, $p\geq 4$. Then, for all sufficiently small $\delta>0$,
	\begin{align*}
	\mathcal{L}(\overline{\rmW}_1,\cdots,\overline{\rmW}_L) > \mathcal{L}(\widehat{\rmW}_1,\cdots,\widehat{\rmW}_L),
	\end{align*}
	where $(\widehat{\rmW}_1,\cdots,\widehat{\rmW}_L)$ is obtained by adding $(\widehat{\rva},\widehat{\rvb})$ to $(\overline{\rmW}_1,\cdots,\overline{\rmW}_L)$ as in line 29-31.
\end{lemma}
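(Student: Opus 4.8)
The plan is to reduce the claim to the structural decomposition of \Cref{lemma_poly_str}, applied to the single neuron inserted in \Cref{np_algo} (lines 29--31). Cast the augmented network in the $(\rvw_n,\rvw_z)$ notation of \Cref{h1_proof_poly}: let $\mathcal{G}_z$ consist of the single freshly added neuron (which sits in layer $l_*$), let $\rvw_z = \delta(\widehat{\rva},\widehat{\rvb})$ be its incoming and outgoing weights, and let $\rvw_n$ collect all entries of $(\overline{\rmW}_1,\cdots,\overline{\rmW}_L)$. Since $(\widehat{\rva},\widehat{\rvb})$ is a unit-norm point of the constrained NCF maximized in line 22, $\|\rvw_z\|_2 = \delta$; and since $\sigma(0)=0$, a neuron with zero incoming and outgoing weights contributes nothing, so $\mathcal{H}(\rmX;\rvw_n,\mathbf{0}) = \mathcal{H}(\rmX;\overline{\rmW}_1,\cdots,\overline{\rmW}_L)$. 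With $\mathcal{G}_z$ a single neuron in a single layer, all the $\rmC_l$-blocks in the partition of \Cref{h1_proof_poly} are vacuous, the partial-network maps induced by $(\overline{\rmW}_1,\cdots,\overline{\rmW}_L)$ are precisely those appearing in the definition of $\mathcal{H}_{1,l}$ in \Cref{np_algo}, and the expression \cref{h1_exp_poly} for $\mathcal{H}_1$ collapses to its single $l = l_*$ summand, namely $\mathcal{H}_{1,l_*}$ evaluated at the added weights. By the $(p+1)$-homogeneity of $\mathcal{H}_{1,l_*}$ in $\rvw_z$ (recall $\sigma$ is $p$-homogeneous), this summand equals $\delta^{p+1}\rvg$ where $\rvg \coloneqq \mathcal{H}_{1,l_*}(\rmX;\widehat{\rvb},\widehat{\rva})$ is a fixed vector independent of $\delta$. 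The hypothesis $p\ge 4$ when $\alpha\ne 1$ (and $p\ge 1$ when $\alpha = 1$) is exactly what licenses part (i) (respectively part (ii)) of \Cref{lemma_poly_str}.

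Writing $\overline{\rvy} \coloneqq \rvy - \mathcal{H}(\rmX;\overline{\rmW}_1,\cdots,\overline{\rmW}_L)$, in either case \Cref{lemma_poly_str} (in the $\alpha=1$ case after folding the terms $\mathcal{H}_i$, $i\ge 2$, of \cref{decomp_poly} into the remainder, since they have degree of homogeneity $>p+1$ in $\rvw_z$) gives, for all sufficiently small $\delta>0$,
\begin{align*}
\mathcal{H}(\rmX;\widehat{\rmW}_1,\cdots,\widehat{\rmW}_L) = \mathcal{H}(\rmX;\overline{\rmW}_1,\cdots,\overline{\rmW}_L) + \delta^{p+1}\rvg + O(\delta^{K}), \qquad K > p+1,
\end{align*}
where the $O(\delta^{K})$ term is a vector of $\ell_2$-norm $O(\delta^K)$ (finitely many training points). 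Substituting into $\mathcal{L}(\cdot) = \|\rvy - \mathcal{H}(\rmX;\cdot)\|_2^2$ and expanding the square,
\begin{align*}
\mathcal{L}(\widehat{\rmW}_1,\cdots,\widehat{\rmW}_L)
&= \bigl\| \overline{\rvy} - \delta^{p+1}\rvg - O(\delta^{K}) \bigr\|_2^2 \\
&= \|\overline{\rvy}\|_2^2 - 2\delta^{p+1}\,\overline{\rvy}^\top\rvg + O\!\bigl( \delta^{\min(2(p+1),\,K)} \bigr) \\
&= \mathcal{L}(\overline{\rmW}_1,\cdots,\overline{\rmW}_L) - 2\delta^{p+1}\mathcal{N}_{max} + o(\delta^{p+1}),
\end{align*}
using $\|\overline{\rvy}\|_2^2 = \mathcal{L}(\overline{\rmW}_1,\cdots,\overline{\rmW}_L)$, $\overline{\rvy}^\top\rvg = \mathcal{N}_{max}$ (the value recorded at line 28), and $\min(2(p+1),K) > p+1$. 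Since $\mathcal{N}_{max}>0$ is a constant independent of $\delta$, the correction $-2\delta^{p+1}\mathcal{N}_{max}$ dominates the remainder for all sufficiently small $\delta$, giving $\mathcal{L}(\widehat{\rmW}_1,\cdots,\widehat{\rmW}_L) < \mathcal{L}(\overline{\rmW}_1,\cdots,\overline{\rmW}_L)$. Notably, no assumption on $(\overline{\rmW}_1,\cdots,\overline{\rmW}_L)$ (stationarity, local optimality, etc.) is used.

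I expect the one genuinely delicate step to be the identification carried out in the first paragraph: verifying that the partition of the \emph{augmented} network's weights into $(\rvw_n,\rvw_z)$ really is an instance of the partition in \Cref{h1_proof_poly} with $\mathcal{G}_z$ a single neuron; that the leading term $\mathcal{H}(\rmX;\rvw_n,\mathbf{0})$ equals the pre-insertion output; and that once the $\rmC_l$-blocks vanish, \cref{h1_exp_poly} leaves $\mathcal{H}_1$ equal to the single function $\mathcal{H}_{1,l_*}$ with no surviving cross terms, so the degree of homogeneity is exactly $p+1$ and the lowest-order correction is genuinely $\Theta(\delta^{p+1})$. Everything after that is the elementary expansion of a square, with the only care being that each remaining error term carries a power of $\delta$ strictly greater than $p+1$; because $\mathcal{N}_{max}$, $\overline{\rvy}$, and $\rvg$ are all fixed independently of $\delta$, the term $\delta^{p+1}\mathcal{N}_{max}$ is the dominant correction, which is what forces the strict decrease of the loss.
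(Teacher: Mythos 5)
Your proposal is correct and follows essentially the same route as the paper's proof: invoke \Cref{h1_proof_poly} for the augmented network, use the $(p+1)$-homogeneity of $\mathcal{H}_{1,l_*}$ to extract the $\delta^{p+1}\overline{\rvy}^\top\mathcal{H}_{1,l_*}(\rmX;\widehat{\rvb},\widehat{\rva})$ correction, and expand the squared loss so that this term dominates the $O(\delta^K)$ and $\delta^{2(p+1)}$ remainders for small $\delta$. Your explicit verification that the added neuron fits the $(\rvw_n,\rvw_z)$ partition of \Cref{lemma_poly_str} (with the $\rmC_l$-blocks vacuous and $\mathcal{H}_1$ collapsing to the single $l_*$ summand) is a detail the paper leaves implicit but is consistent with it.
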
 
\begin{proof}
	Since
	\begin{align*}
	\widehat{\rmW}_{{l}_*} = \begin{bmatrix}
	\overline{\rmW}_{{l}_*} \\\delta \widehat{\rva}^\top 
	\end{bmatrix},  \widehat{\rmW}_{{l}_*+1} = \begin{bmatrix}
	\overline{\rmW}_{{l}_*+1} & \delta \widehat{\rvb}
	\end{bmatrix},  \widehat{\rmW}_{l} = \overline{\rmW}_{{l}} \text{ for all } l\notin \{ l_*, l_*+1 \},
	\end{align*}
	using Lemma \ref{h1_proof_poly}, for all sufficiently small $\delta>0$, we get
	\begin{align*}
	\mathcal{H}(\rvx;\widehat{\rmW}_1,\cdots,\widehat{\rmW}_L) &= \mathcal{H}(\rvx;\overline{\rmW}_1,\cdots,\overline{\rmW}_L) + \mathcal{H}_{1,l_*}(\rvx;\delta\widehat{\rvb},\delta\widehat{\rva}) + O(\delta^K)\\
	&=\mathcal{H}(\rvx;\overline{\rmW}_1,\cdots,\overline{\rmW}_L) + \delta^{p+1}\mathcal{H}_{1,l_*}(\rvx;\widehat{\rvb},\widehat{\rva}) + O(\delta^K),
	\end{align*}
	where $K>p+1$. The second equality follows from $(p+1)$-homogeneity of $\mathcal{H}_{1,l_*}$. Define $\zeta\coloneqq\overline{\rvy}^\top\mathcal{H}_{1l_*}(\rmX;\widehat{\rvb},\widehat{\rva})>0$. Using the above equality, we get 
	\begin{align*}
	\mathcal{L}(\widehat{\rmW}_1,\cdots,\widehat{\rmW}_L) &= \|\rvy - \mathcal{H}(\rmX;\widehat{\rmW}_1,\cdots,\widehat{\rmW}_L)\|_2^2\\
	&= \|\rvy - \mathcal{H}(\rmX;\overline{\rmW}_1,\cdots,\overline{\rmW}_L) - \delta^{p+1}\mathcal{H}_{1,l_*}(\rvx;\widehat{\rvb},\widehat{\rva}) - O(\delta^K)\|_2^2\\
	&= \|\overline{\rvy} - \delta^{p+1}\mathcal{H}_{1,l_*}(\rvx;\widehat{\rvb},\widehat{\rva}) - O(\delta^K)\|_2^2\\
	&=\|\overline{\rvy}\|_2^2 - 2\delta^{p+1}\zeta + \delta^{2(p+1)}\|\mathcal{H}_{1l_*}(\rvx;\widehat{\rvb},\widehat{\rva})\|_2^2 + O(\delta^K).
	\end{align*}
	Since $K>p+1$ and $\zeta>0$, for all sufficiently small $\delta>0$, we get
	\begin{equation*}
	\mathcal{L}(\widehat{\rmW}_1,\cdots,\widehat{\rmW}_L) \leq \|\overline{\rvy}\|_2^2 - \delta^{p+1}\zeta < \|\overline{\rvy}\|_2^2  = \mathcal{L}(\overline{\rmW}_1,\cdots,\overline{\rmW}_L),
	\end{equation*}
	which completes the proof
\end{proof}
Here as well, if the most dominant KKT point is positive, then adding neurons with sufficiently small incoming and outgoing weights and aligned with the most dominant KKT point leads to a strict decrease in training loss.  The extra condition $p\geq 4$ when $\alpha\neq 1$ is required to use Lemma \ref{h1_proof_poly}.

The above discussion implies that if the most dominant KKT point is positive, then the training loss decreases in the first stage. Next, consider the second stage of each iteration, where the training loss is minimized via gradient descent. It is well known that for sufficiently small step sizes, gradient descent reduces the loss when the objective has a Lipschitz-continuous gradient. However, loss functions for neural networks do not have globally Lipshitz gradient, and analyzing gradient descent in this setting remains an active area of research. Nevertheless, if the gradient descent updates do in fact reduce the training loss, then combining this with the above discussion of  the first stage above shows that the NP algorithm satisfies the descent property: the training loss decreases after each iteration.

\subsection{Numerical Experiments}
\label{sec:exp_NP}
We conduct experiments to evaluate the Neuron Pursuit (NP) algorithm and compare its performance with neural networks trained by gradient descent with small initialization.
\subsubsection{Non-linear Sparse Functions}
\label{sp_func_exp}
We attempt to learn non-linear sparse functions, functions which can be represented using a finite number of neurons, over different input distributions. We train deep neural networks using the NP algorithm and gradient descent (GD) with small initialization. For GD, we use 50 neurons per layer and train up to a maximum of $3\times 10^6$ iterations. For the NP algorithm, number of iterations is capped at 31. The performance is evaluated in terms of relative training and test error:
\begin{center}
	\begin{tabular}{ll}
		$\bullet$ Training error $\coloneqq \frac{\|f(\rmX_{\text{train}})-\hat f(\rmX_{\text{train}})\|_2}{\|f(\rmX_{\text{train}})\|_2}$,
		& $\bullet$ Test error $\coloneqq \frac{\|f(\rmX_{\text{test}})-\hat f(\rmX_{\text{test}})\|_2}{\|f(\rmX_{\text{test}})\|_2}$,
	\end{tabular}
\end{center}
where $\rmX_{\text{train}}$ and $\rmX_{\text{test}}$ denotes the training and test data, respectively, $f(\cdot)$ denotes the ground truth function, and $\hat{f}(\cdot)$ denotes the learned function. These normalized errors ensure fair comparison across varying training sample sizes. All algorithms are run until the training error is less than $0.001$ or the maximum number of iterations is exceeded. The number of training samples varies, while the test set size is fixed at $10^5$.  The specific hyperparameters used during training for each algorithm and input distribution is stated in \Cref{sp_exp} of the Appendix. The results reported below are averaged over 20 independent runs.\\
\noindent\textbf{Hypersphere.}  We train three-layer neural networks with activation function $\sigma(x) = \max(0, x)$ to learn two target functions: 
\begin{align*}
&(i) \ f_1(\rvx) =\sigma(\sigma(2x_1+x_2) - \sigma(x_3-x_4)), \text{ and }\\
&(ii)\ f_2(\rvx) = \sigma(\sigma(2x_1+x_2) - \sigma(x_3-x_4)) + 5\sum_{i=1}^2i\sigma(\sigma(2x_{4i+1}+x_{4i+2}) - \sigma(x_{4i+3}-x_{4i+4})),
\end{align*}
where $x_i$ denotes the $i$th coordinate of $\rvx \in \sR^{50}$. The training and test samples are drawn uniformly at random from $\sqrt{d}\sS^{d-1}$, where $d=50$, that is the hypersphere of radius $\sqrt{50}$. Here, $f_1(\rvx)$ is a three-layer neural network with two neurons in the first layer and one neuron in the second layer, and $f_2(\rvx)$ contains six neurons in the first layer and three neurons in the second layer. The above choice of functions is inspired from the notion of \emph{hierarchical} functions studied in \citet{poggio_hier,dandi_hier}.

\Cref{fig:sp_sphere} depicts the test and training errors (top row), and the number of iterations required by the NP algorithm for convergence (bottom row). For $f_1(\rvx)$, the NP algorithm achieves near-zero training error across all sample sizes. As the number of training samples increases, the test error decreases, and for sufficiently large samples sizes, it becomes nearly zero\textemdash indicating that the NP algorithm successfully learns the target function $f_1(\rvx)$. The number of iterations required by the NP algorithm for convergence displays a non-monotonic behavior: it increases as sample size grows, reaches a peak, and then decreases and remains stable. This suggests that in the intermediate sample regime, the NP algorithm finds it comparatively harder to fit the training data than in low-sample or high-sample regimes.

The overall behavior for $f_2(\rvx)$ is qualitatively similar: the NP algorithm achieves near-zero test error when the number of training samples is large, and the number of iterations behaves non-monotonically. However, learning $f_2(\rvx)$ requires more training samples, which is expected, as $f_2(\rvx)$ has more neurons and thus more complex. When compared to GD with small initialization, the NP algorithm performs better on $f_1(\rvx)$, as it requires fewer samples to learn. For $f_2(\rvx)$, GD performs better than NP. The training error for GD is small for all number of training samples, and it is not plotted to avoid clutter.\\
\begin{figure}
	\centering
	
	\begin{subfigure}[b]{0.4\textwidth}
		\centering
		\includegraphics[width=\linewidth]{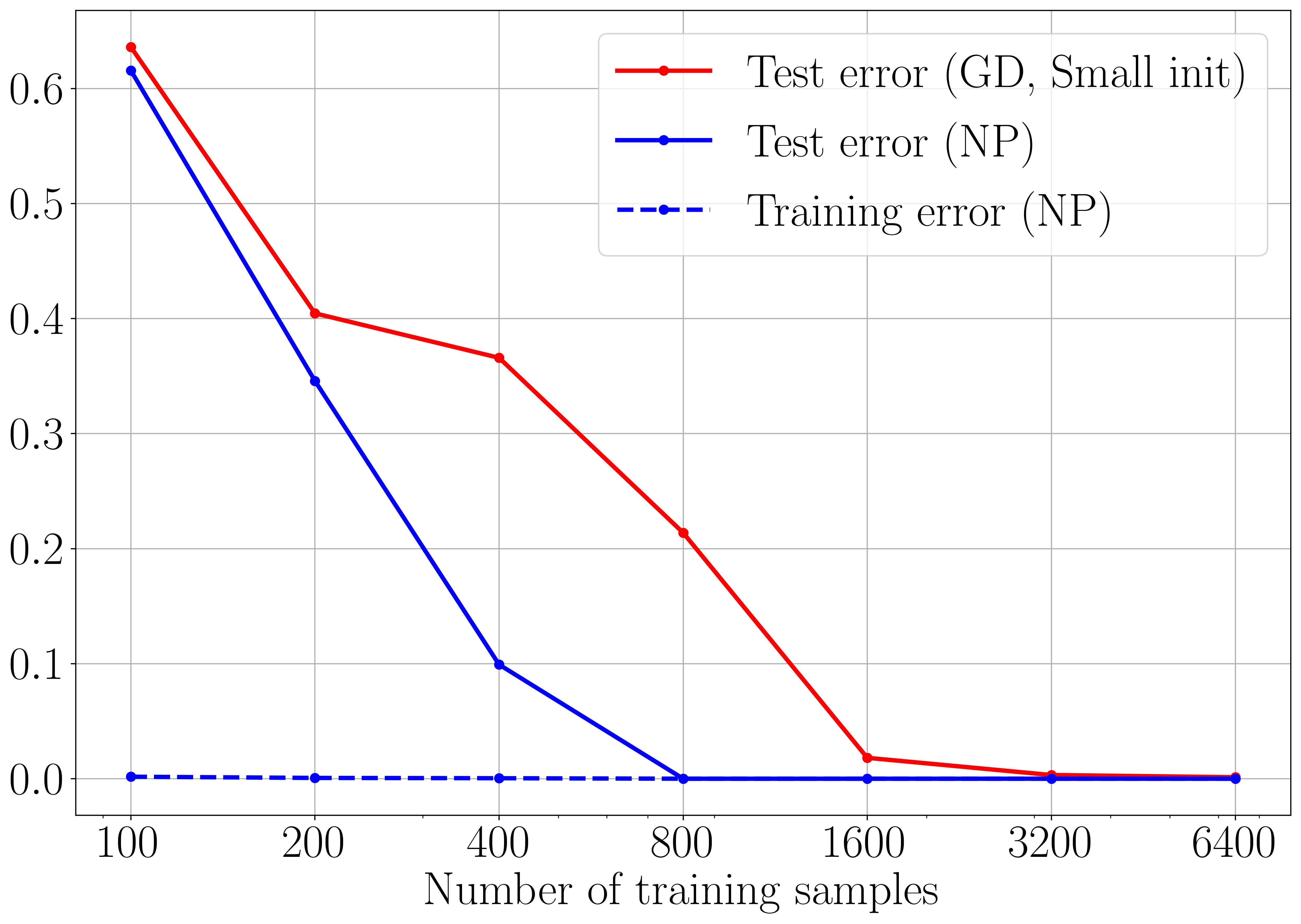}
	\end{subfigure}
	\hfill
	\begin{subfigure}[b]{0.4\textwidth}
		\centering
		\includegraphics[width=\linewidth]{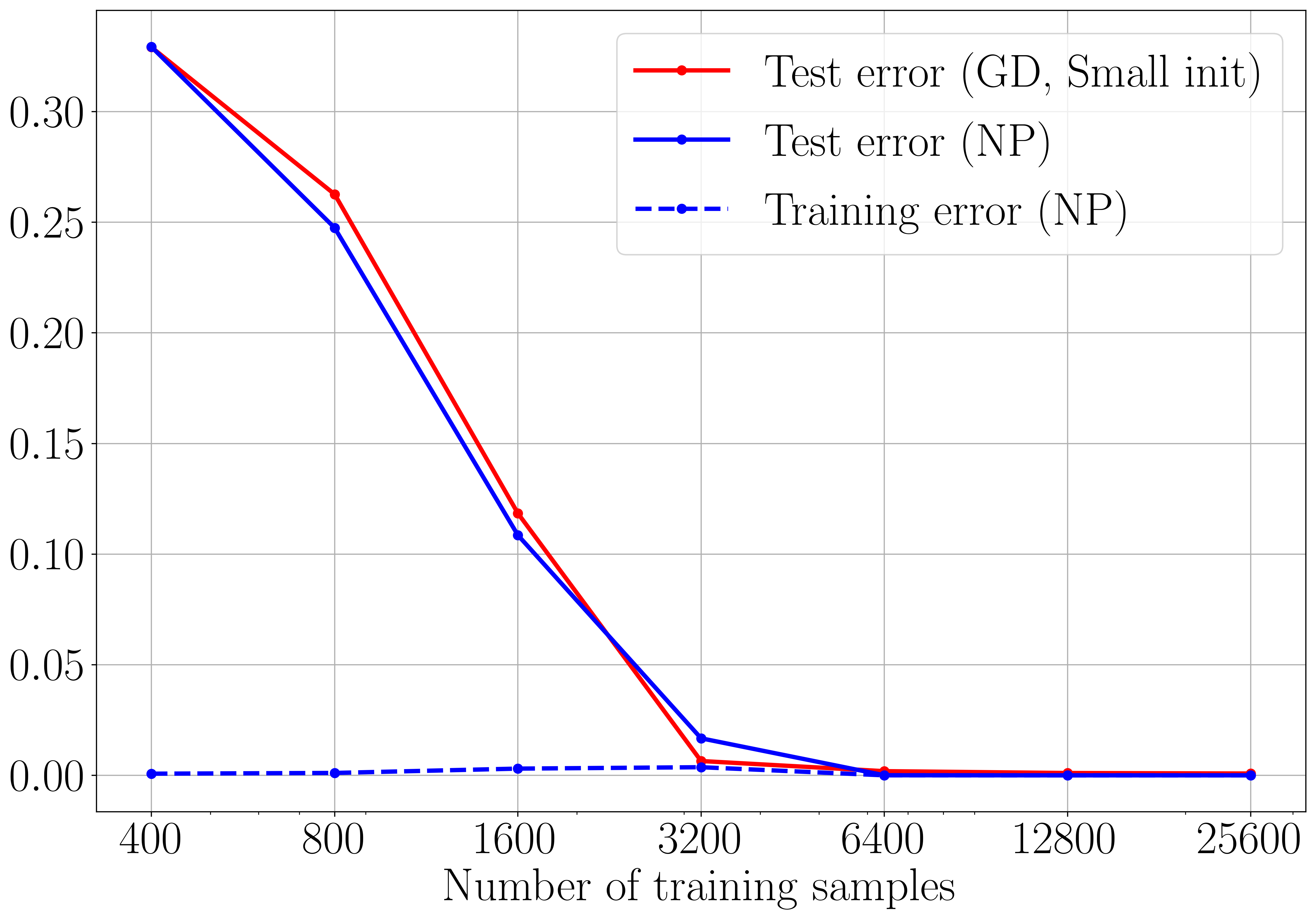}
	\end{subfigure}
	
	\vspace{1em}
	
	\begin{subfigure}[b]{0.4\textwidth}
		\centering
		\includegraphics[width=\linewidth]{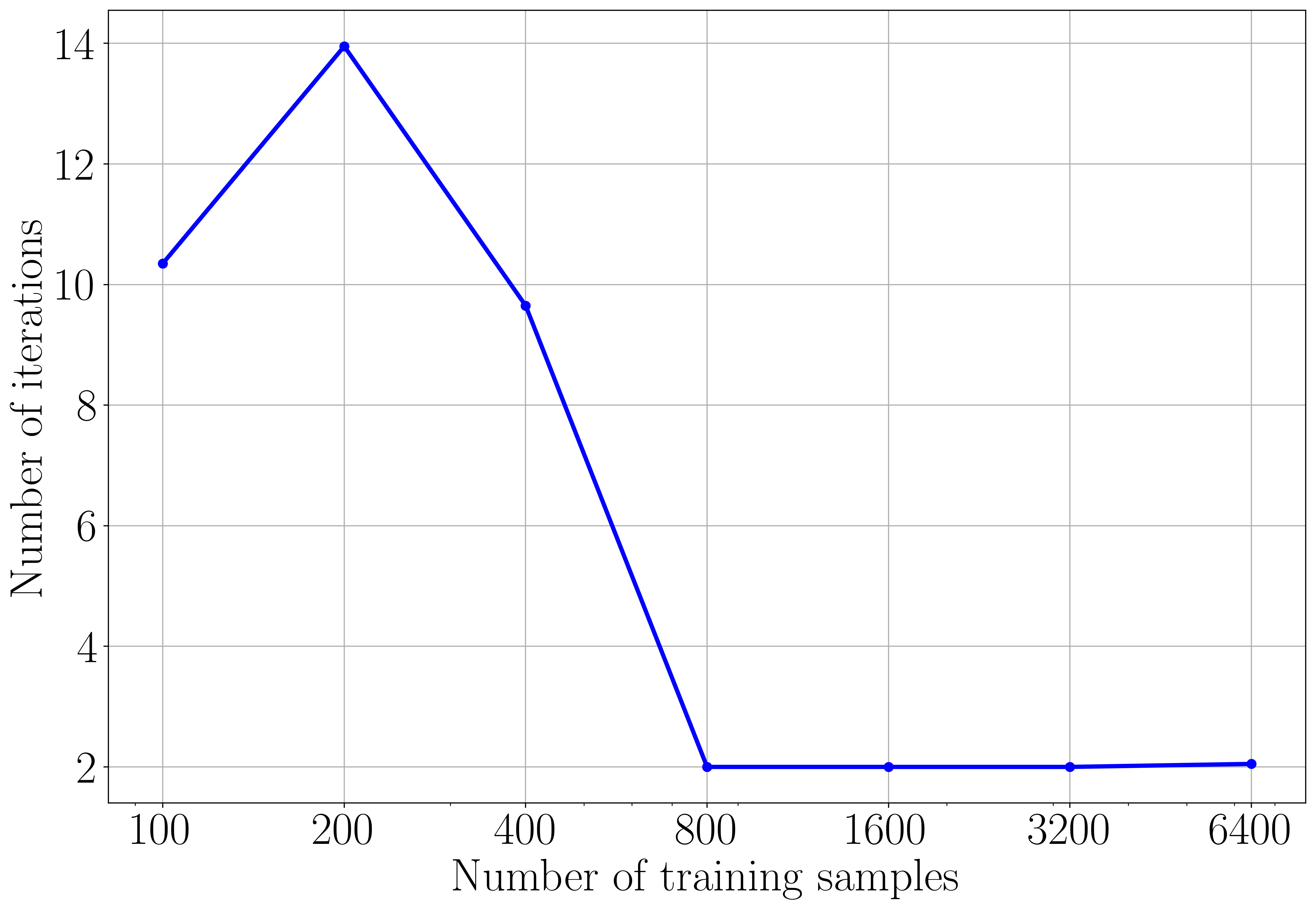}
		\caption{$f_1(\rvx)$}
	\end{subfigure}
	\hfill
	\begin{subfigure}[b]{0.4\textwidth}
		\centering
		\includegraphics[width=\linewidth]{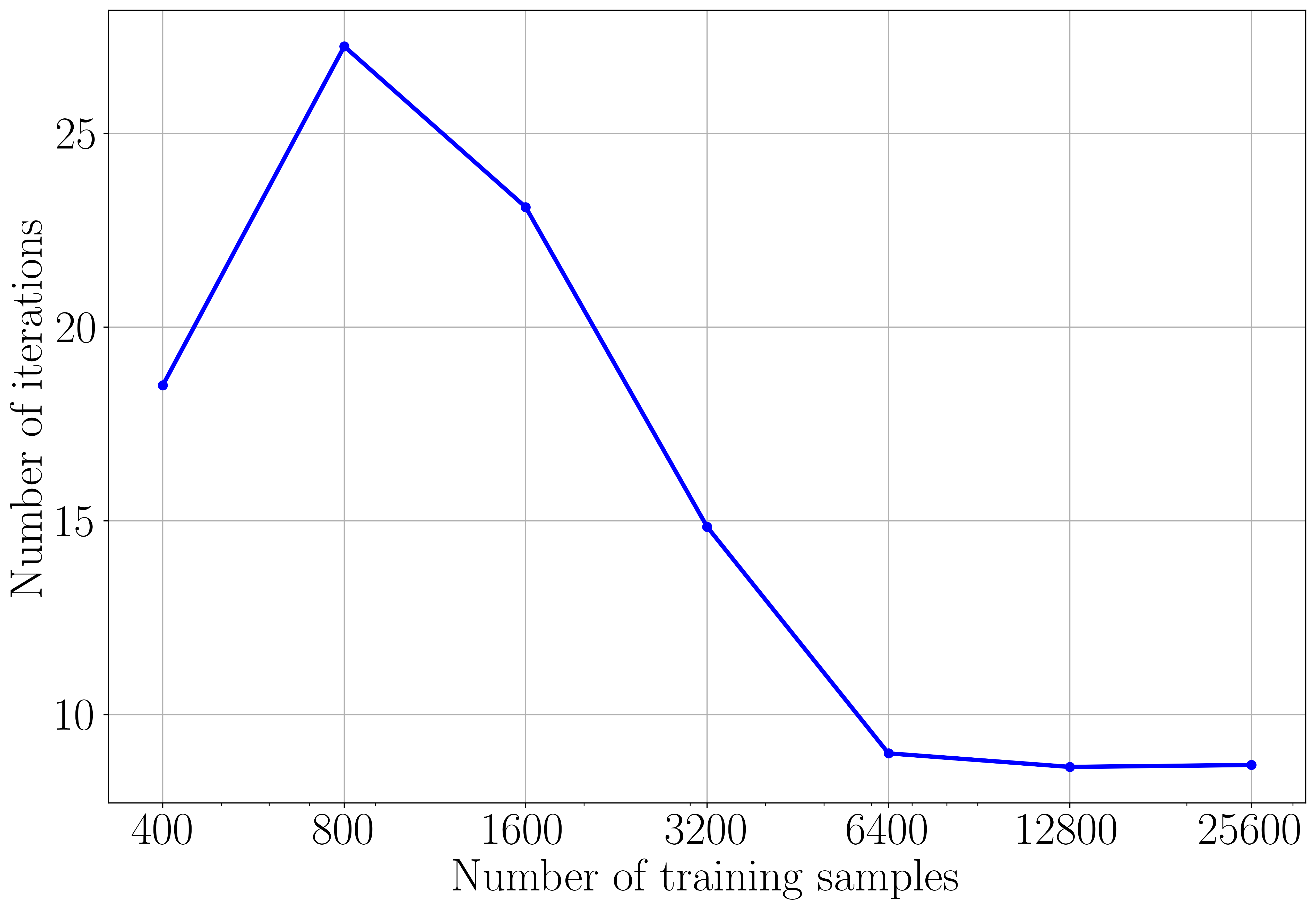}
		\caption{$f_2(\rvx)$}
	\end{subfigure}
	
	\caption{
		(Hypersphere) We train three-layer neural networks with activation function $\sigma(x) = \max(x, 0)$ to learn $f_1(\rvx)$ (left) and $f_2(\rvx)$ (right). The network is trained using the NP algorithm for a maximum of 31 iterations, and using gradient descent (GD) with 50 neurons per layer for up to $3 \times 10^6$ iterations. For GD, the initialization is  small (the weights are sampled uniformly at random from hypersphere of radius $0.01$). The top row shows the training and test errors of various algorithms as the number of training samples increases. The bottom row shows the number of iterations required by the NP algorithm to successfully fit the training data. As expected, performance improves for both methods with more training samples. For $f_1(\rvx)$, the NP algorithm achieves small test error with fewer samples. For $f_2(\rvx)$, GD performs better than the NP algorithm.
	}
	\label{fig:sp_sphere}
\end{figure}
\textbf{Hypercube.} We train three-layer neural networks with activation $\sigma(x) = \max(0.5x, x)$\footnote{For ReLU activation, the constrained NCF becomes zero at a certain iteration, when learning $g_1(\rvx)$. As discussed at the end of \Cref{sec:main_results}, for such cases our theoretical results and the NP algorithm are not applicable. We do not face this issue with Leaky ReLU activation.}  to learn two target functions: 
\begin{equation*}
(i) \ g_1(\rvx) = x_1x_2-x_1x_2x_3x_4, \text{ and } (ii)\ g_2(\rvx) = x_1x_2x_3x_4,
\end{equation*}
where the training and test samples are drawn uniformly at random from the binary hypercube $\{\pm1\}^{50}$. The  learning dynamics of neural networks for such functions has been extensively studied in recent works \citep{abbe_sgd,abbe_msp,suzuki_par}.
\begin{figure}[t]
	\centering
	
	\begin{subfigure}[b]{0.4\textwidth}
		\centering
		\includegraphics[width=\linewidth]{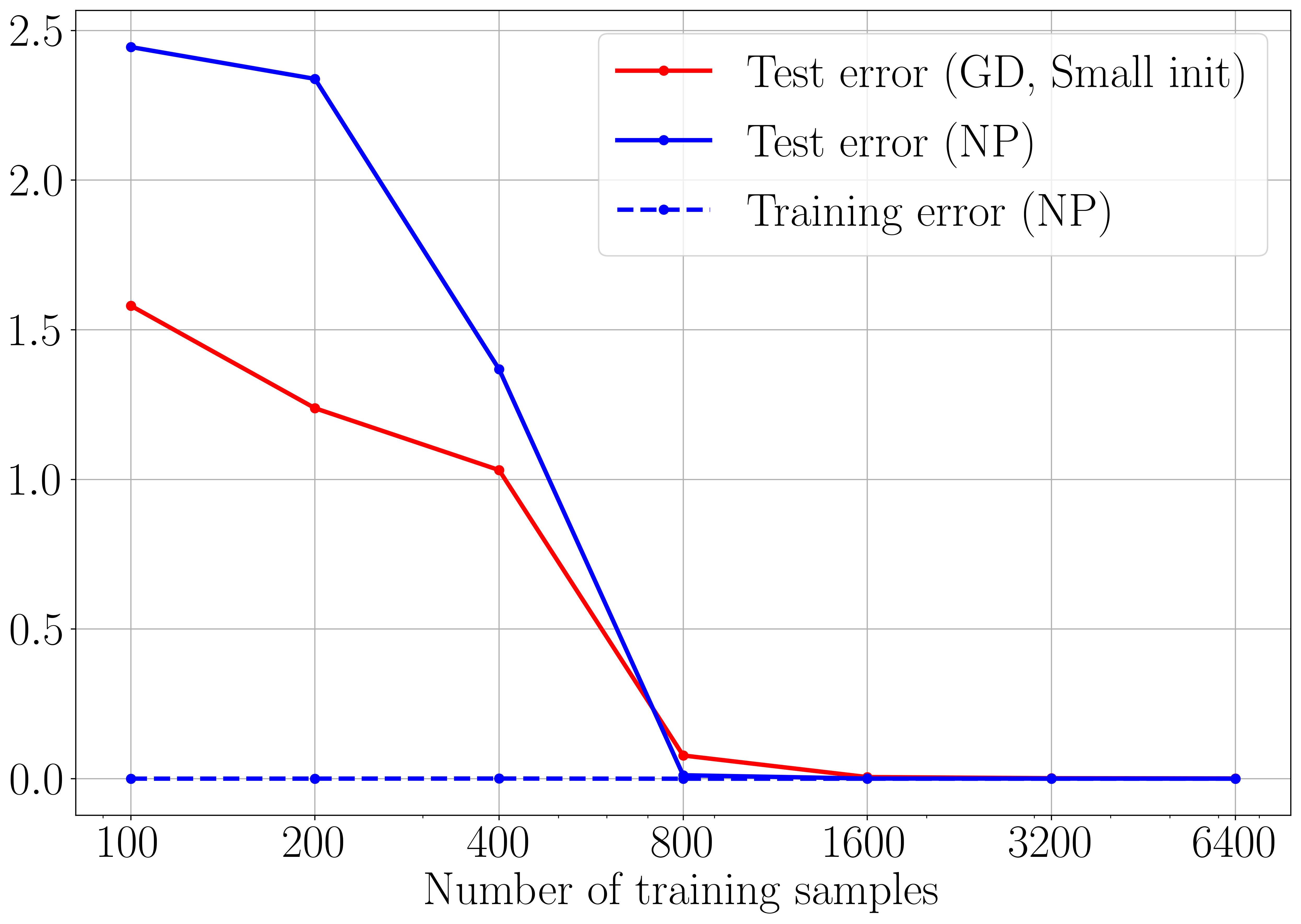}
	\end{subfigure}
	\hfill
	\begin{subfigure}[b]{0.4\textwidth}
		\centering
		\includegraphics[width=\linewidth]{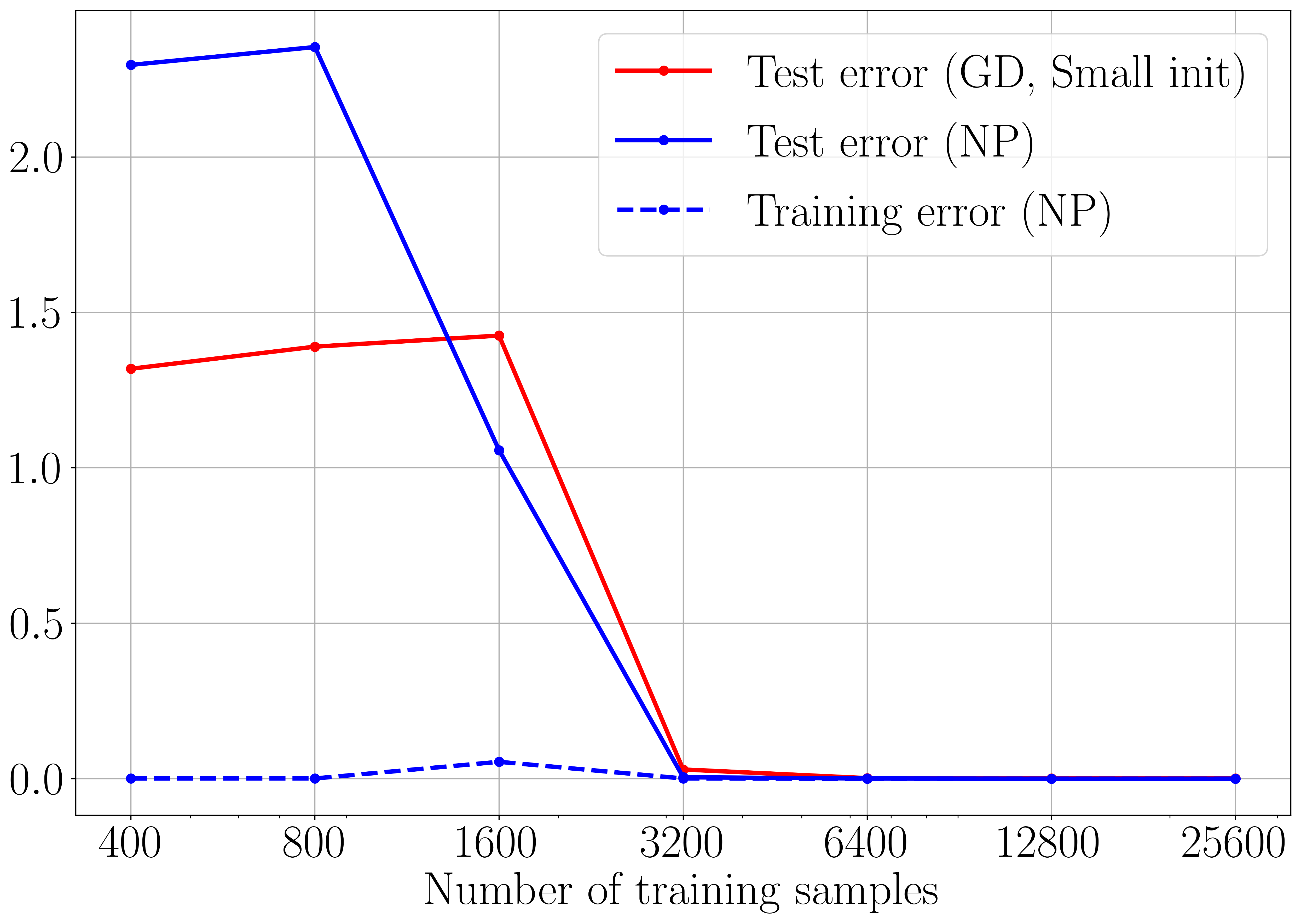}
	\end{subfigure}
	
	\vspace{1em}
	
	\begin{subfigure}[b]{0.4\textwidth}
		\centering
		\includegraphics[width=\linewidth]{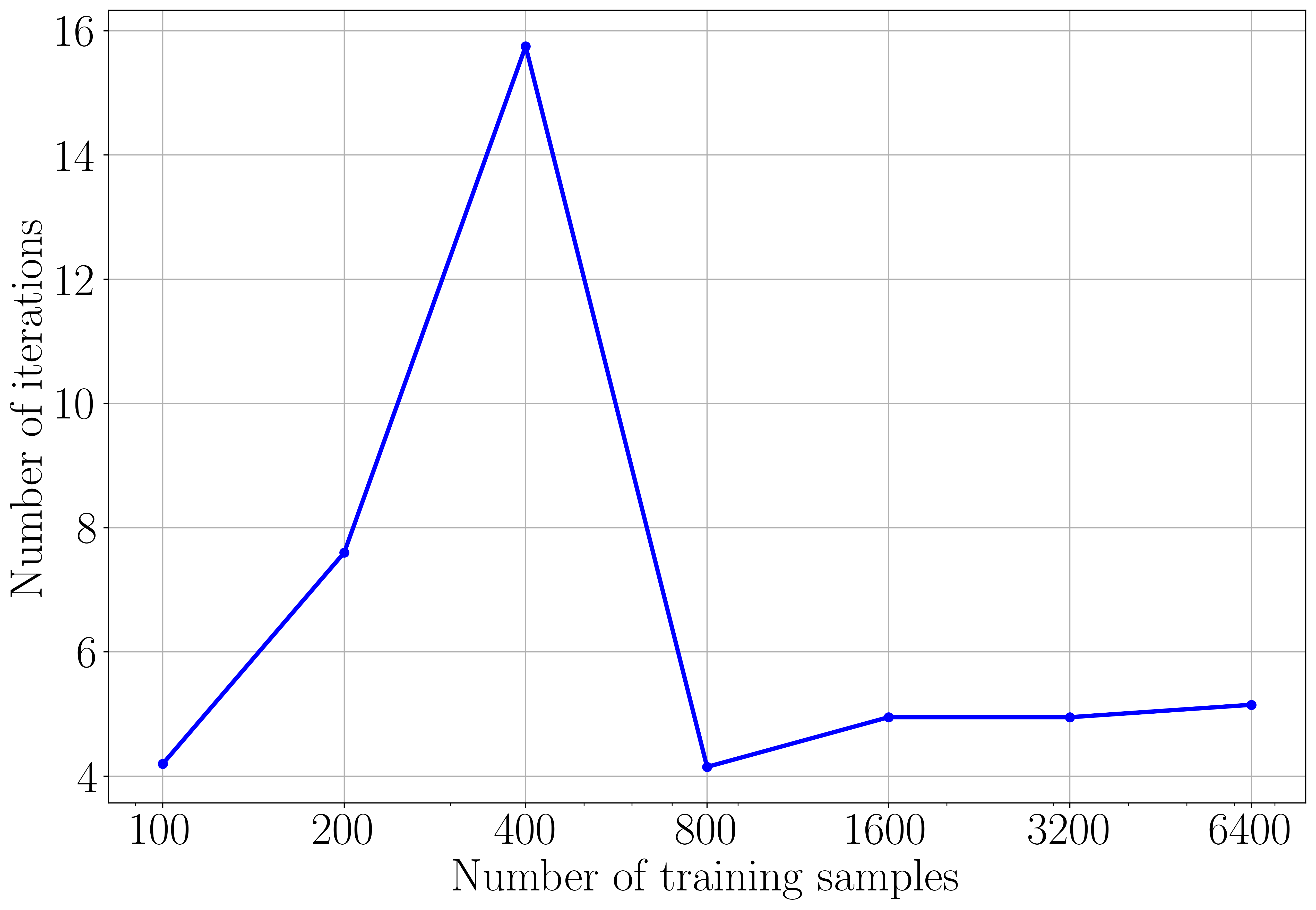}
		\caption{$g_1(\rvx)$}
	\end{subfigure}
	\hfill
	\begin{subfigure}[b]{0.4\textwidth}
		\centering
		\includegraphics[width=\linewidth]{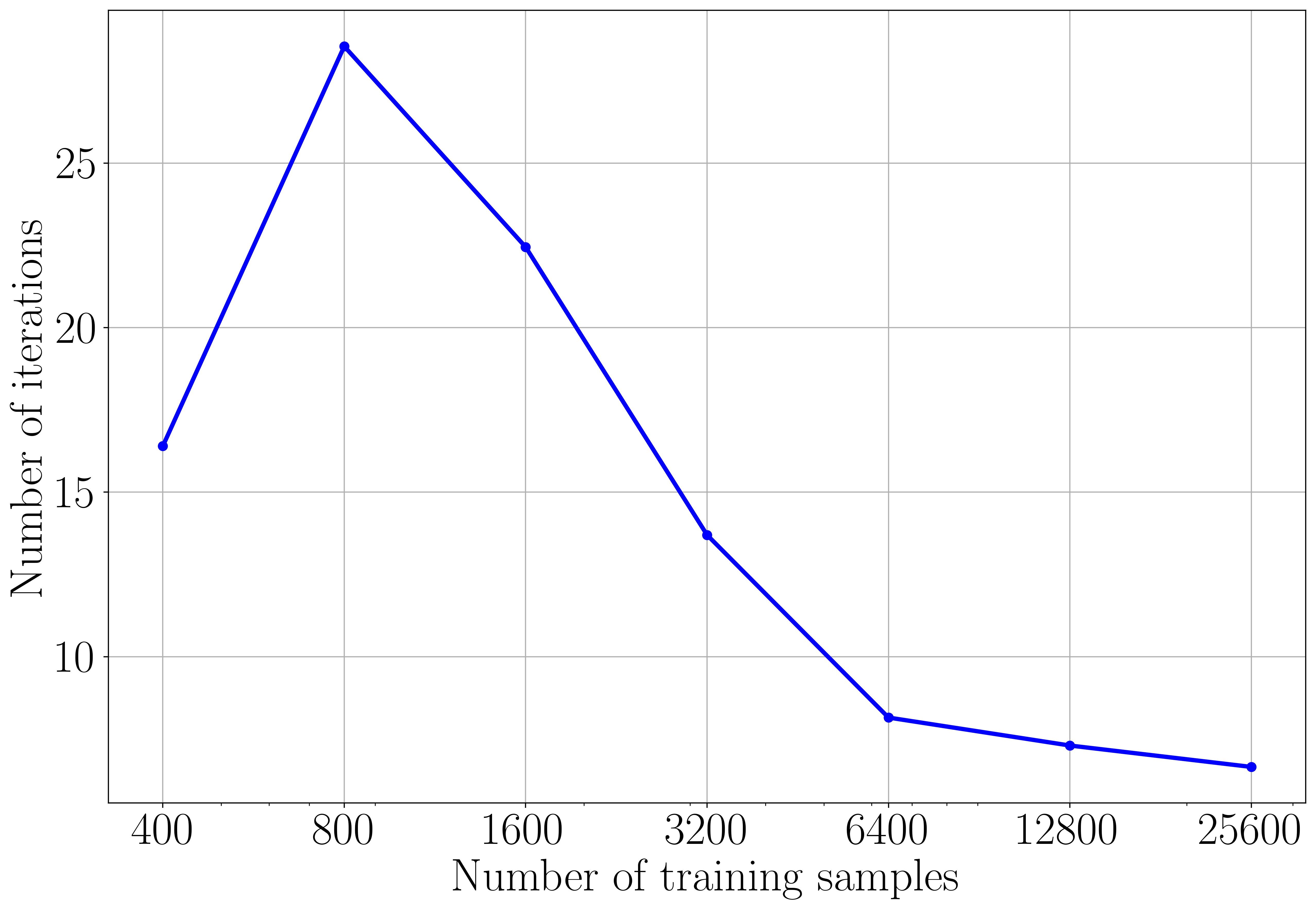}
		\caption{$g_2(\rvx)$}
	\end{subfigure}
	
	\caption{
		(Hypercube) We train three-layer neural networks with activation function $\sigma(x) = \max(x, 0.5x)$ to learn $g_1(\rvx)$ (left) and $g_2(\rvx)$ (right). The network is trained using the NP algorithm for a maximum of 31 iterations, and using gradient descent (GD) with 50 neurons per layer for up to $3 \times 10^6$ iterations. For GD, the initialization is  small (the weights are sampled uniformly at random from hypersphere of radius $0.01$). The top row shows the training and test errors of various algorithms as the number of training samples increases. The bottom row shows the number of iterations required by the NP algorithm to successfully fit the training data.  For $g_2(\rvx)$, both algorithms achieve small test error with a similar number of samples. For $g_1(\rvx)$, the NP algorithm performs slightly better than GD.
	}
	\label{fig:sp_cube}
\end{figure}

\Cref{fig:sp_cube} depicts the test and training errors (top row), and the number of iterations required by the NP algorithm to fit the training data (bottom row). Similar to the above hypersphere experiments, the NP algorithm achieves near-zero test error for sufficiently large number of samples, and number of iterations required by NP exhibits the same non-monotonic behavior. Notably, for $g_2(\rvx)$ with 1600 training samples, the NP algorithm fails to achieve small training error in many instances within 31 iterations, which is a sufficient number of iterations when learning with more or fewer samples. This again demonstrates that fitting the training data using the NP algorithm is comparatively harder in the intermediate sample regime than low-sample or high-sample regimes.

Compared to GD, the NP algorithm performs similarly on $g_2(\rvx)$ but has slightly better  performance on $g_1(\rvx)$. It is also worth noting that learning $g_1(\rvx)$ requires fewer samples than $g_2(\rvx)$, for both the algorithms, where $g_1(\rvx)$ is a sum of second- and fourth-order polynomial and $g_2(\rvx)$ is a fourth-order polynomial. For neural networks trained via gradient descent, this behavior has been attributed to the higher \emph{leap exponent} of $g_2(\rvx)$ \citep{abbe_msp}. It seems neural networks trained via the NP algorithm also exhibit similar behavior.  \\
\textbf{Gaussian.} We train four-layer neural networks with activation function $\sigma(x) = \max(0.5x, x)$ to learn two target functions: 
\begin{align*}
&(i) \ h_1(\rvx) =\max(2x_1,x_2),\text{ and } \\
&(ii) \ h_2(\rvx) = \max(x_1,2x_2) + \max(x_3,2x_4) - \max(x_1+x_3,-x_2,-x_4),
\end{align*}
where the training and test samples are sampled from $\mathcal{N}(\mathbf{0},\mathbf{I}_{50})$. Since maximum of linear functions is piecewise linear, these functions can be represented using finitely many neurons. 

\Cref{fig:sp_gauss} depicts the test and training errors (top row), and the number of iterations required by the NP algorithm to fit the training data (bottom row). The NP algorithm achieves small test error for large numbers of samples, and the number of iterations required evolves non-monotonically. For $h_1(\rvx)$, GD performs slightly worse than the NP algorithm, whereas GD performs slightly better for $h_2(\rvx)$. Also, for $h_2(\rvx)$ with 1600 training samples, the NP algorithm fails to achieve small training error in many instances within 31 iterations, which is a sufficient number of iterations when learning with more or fewer samples.
\subsubsection{Algorithmic Tasks}
We next test efficacy of the proposed method on some algorithmic tasks. Our goal here is to demonstrate that the NP algorithm is able to successfully perform this task.\\
\textbf{Modular Addition: } We consider the task of learning modular addition using a two-layer neural network with square activation function, $\sigma(x) = x^2$. The goal is to map inputs $a, b \in \{0, 1, \dots, p-1\}$ to the output $(a + b) \bmod p$, where we set $p = 59$. Following prior works \citep{gromov_mod,nanda_mod,zhong_mod}, we formulate this as a $p$-class classification problem, representing both inputs and outputs as one-hot vectors. For a given input pair $(a, b)$, the network input is constructed as $[\mathbf{1}_a, \mathbf{1}_b, 1] \in \mathbb{R}^{2p+1}$, where $\mathbf{1}_z$ denotes the one-hot encoding of $z \in \{0, 1, \dots, p-1\}$. The final coordinate is fixed to $1$, serving as an explicit bias term for the first layer. The target label is represented as $\mathbf{1}_c$, where $c=(a+b)\text{ mod }p$. 

We train the model using the squared loss. The training set consists of $1600$ input pairs, sampled uniformly at random from the $59\cdot 59$ possible pairs, with the remainder forming the test set. During prediction, we select the class corresponding to the maximum output coordinate. The performance is measured using classification error on training and test set, which is defined as fraction of samples for which the network’s prediction does not match the target label.
\begin{figure}[t]
	\centering
	
	\begin{subfigure}[b]{0.4\textwidth}
		\centering
		\includegraphics[width=\linewidth]{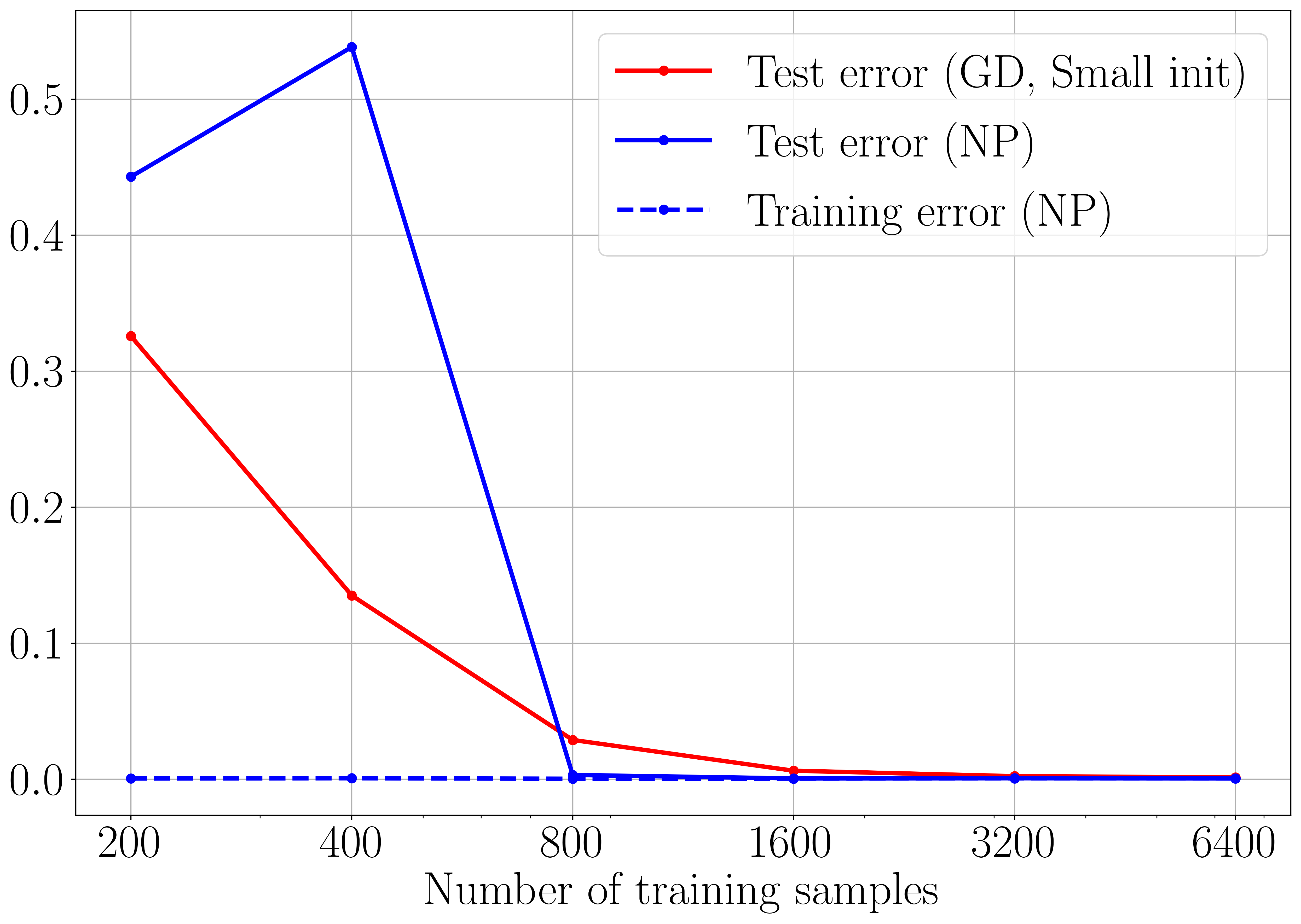}
	\end{subfigure}
	\hfill
	\begin{subfigure}[b]{0.4\textwidth}
		\centering
		\includegraphics[width=\linewidth]{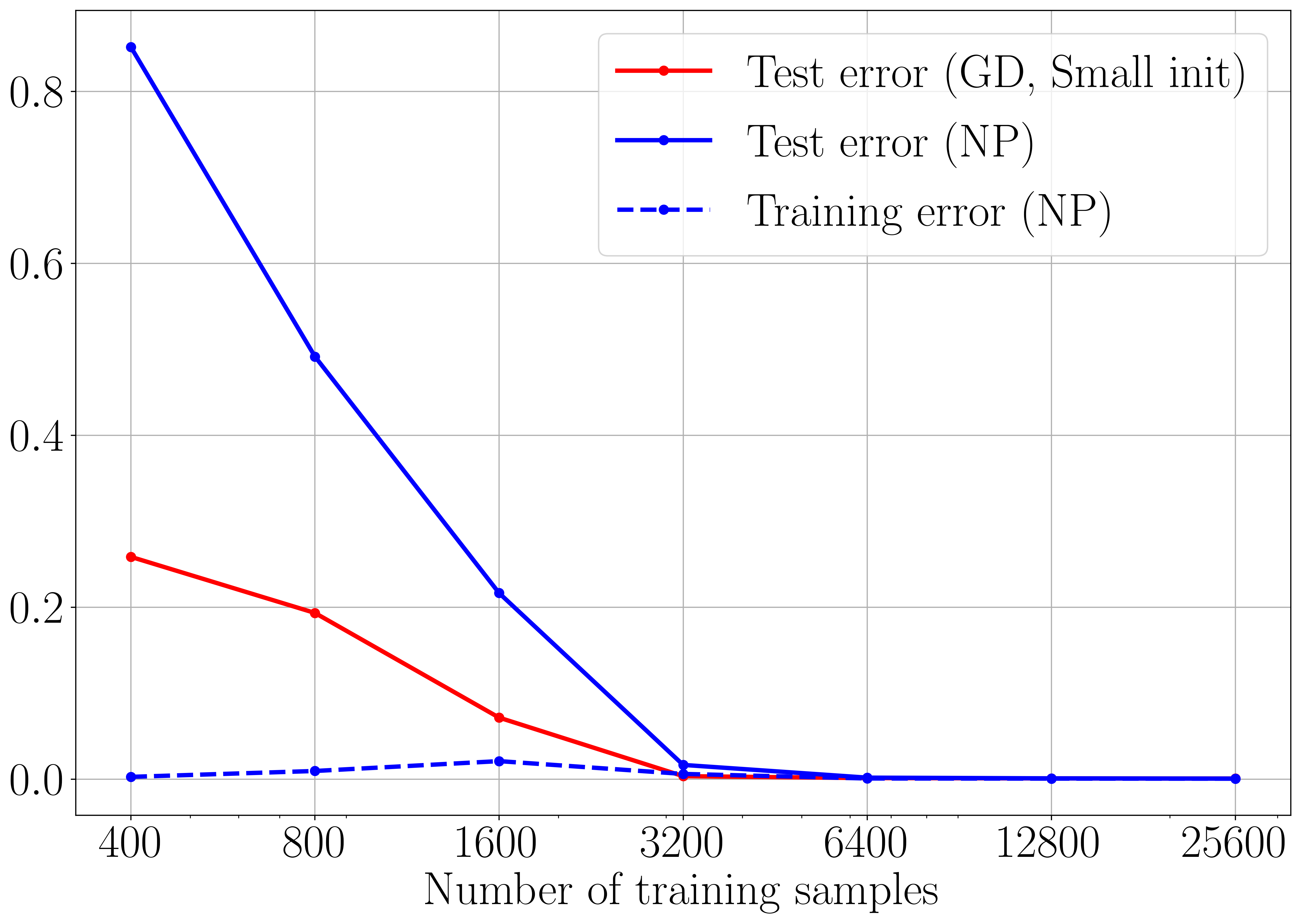}
	\end{subfigure}
	
	\vspace{1em}
	
	\begin{subfigure}[b]{0.4\textwidth}
		\centering
		\includegraphics[width=\linewidth]{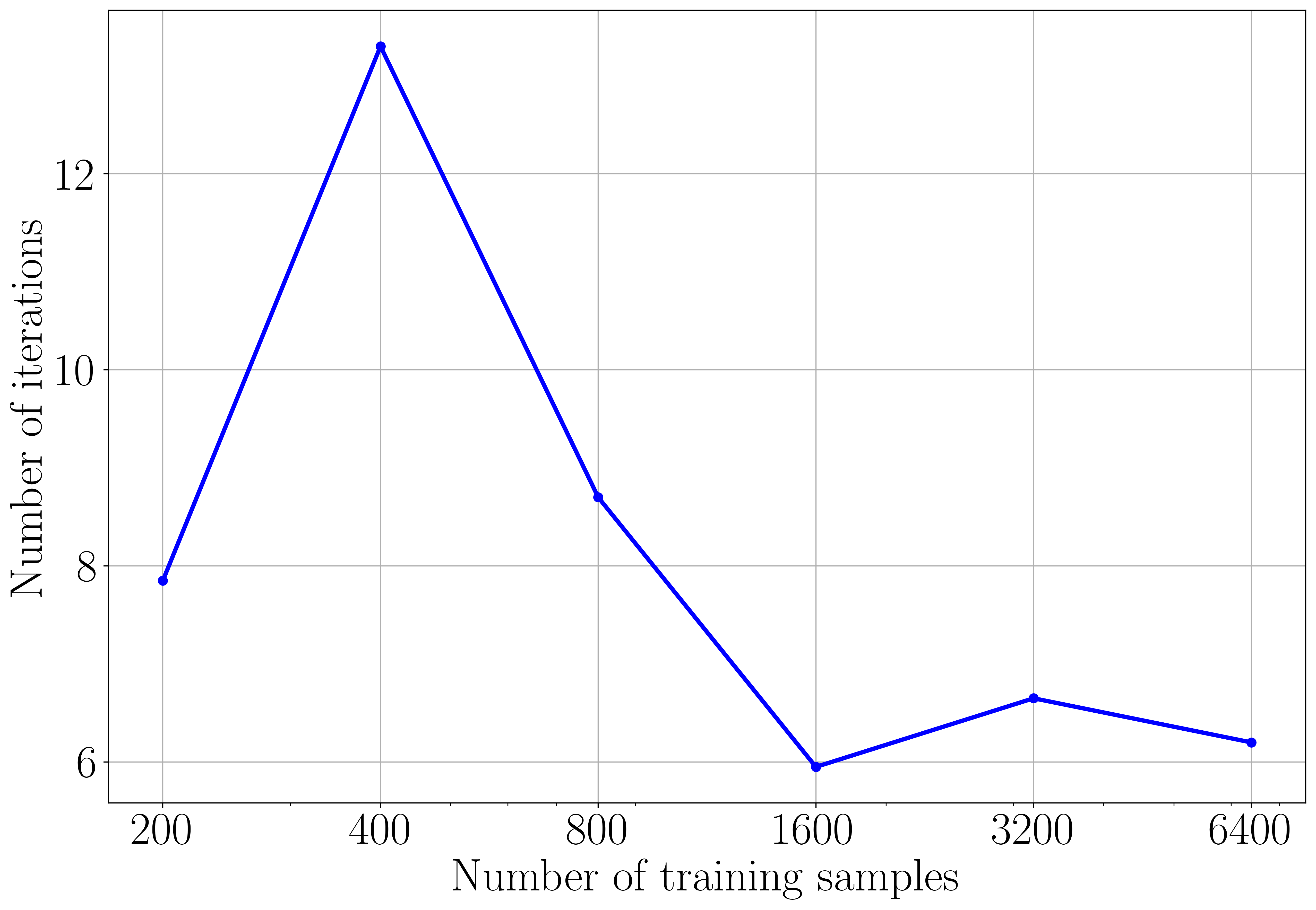}
		\caption{$h_1(\rvx)$}
	\end{subfigure}
	\hfill
	\begin{subfigure}[b]{0.4\textwidth}
		\centering
		\includegraphics[width=\linewidth]{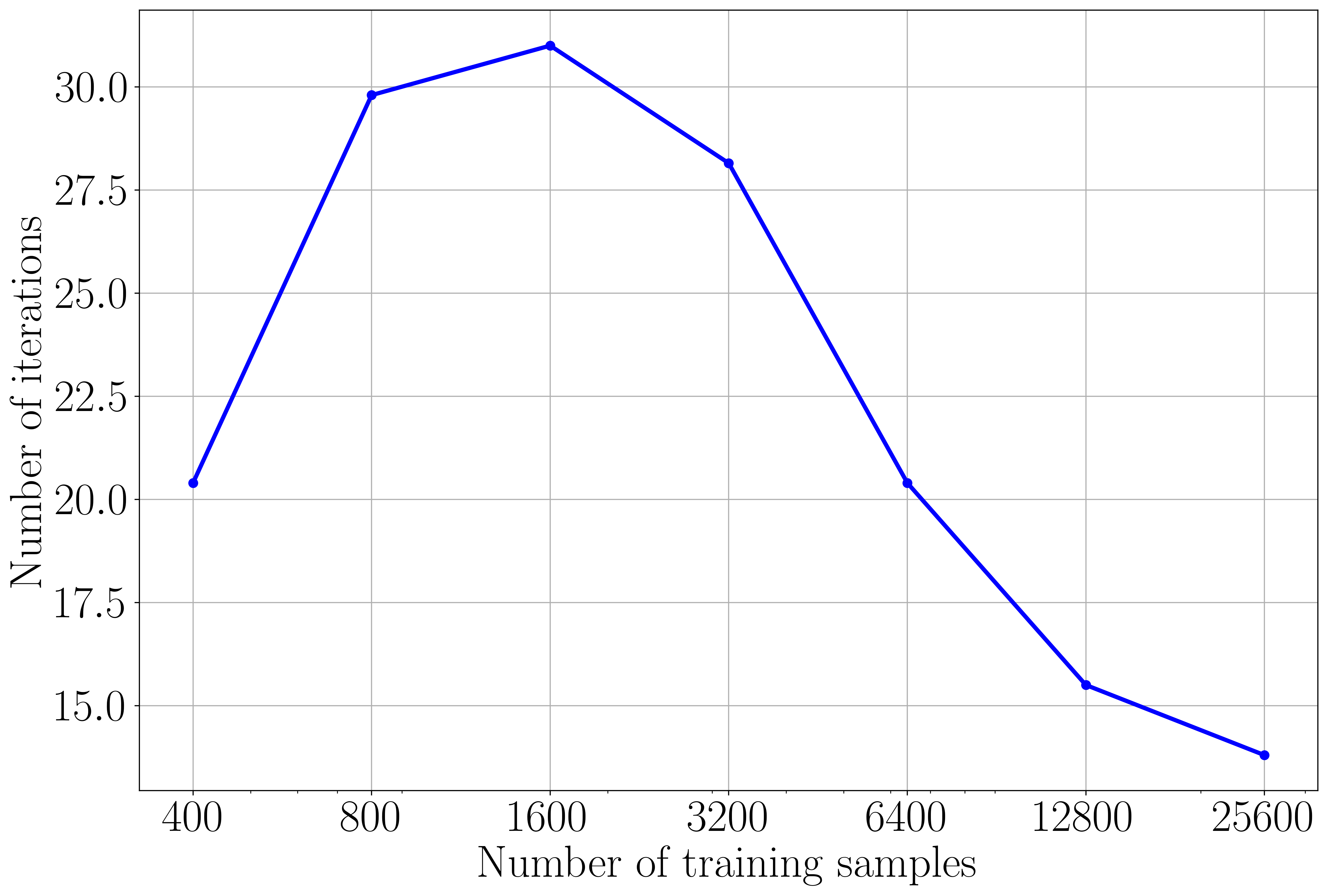}
		\caption{$h_2(\rvx)$}
	\end{subfigure}
	
	\caption{
		(Gaussian) We train four-layer neural networks with activation function $\sigma(x) = \max(x, 0.5x)$ to learn $h_1(\rvx)$ (left) and $h_2(\rvx)$ (right). The network is trained using the NP algorithm for a maximum of 31 iterations, and using gradient descent (GD) with 50 neurons per layer for up to $3 \times 10^6$ iterations. For GD, the initialization is  small (the weights are sampled uniformly at random from hypersphere of radius $0.1$). The top row shows the training and test errors of various algorithms as the number of training samples increases. The bottom row shows the number of iterations required by the NP algorithm to successfully fit the training data.  For $h_1(\rvx)$, performance of GD is slightly worse. For $h_2(\rvx)$, both algorithms require same number of  samples to achieve small test error.
	}
	\label{fig:sp_gauss}
\end{figure}

Training is performed using the NP algorithm until the training classification error reaches zero. The experiment is repeated over three independent runs; here, we present the results from one representative run, with the remaining results given in \Cref{mod_add_exp} of the Appendix along with other specifications of the algorithm. \Cref{mod_add_loss_evol} depicts the evolution of training and test classification errors with respect to the iterations. The algorithm converges in $35$ iterations, with the test error being small, indicating that the network successfully learns the modular addition function.
\begin{figure}[htbp]
	\centering
	\begin{subfigure}{0.4\textwidth}
		\centering
		\includegraphics[width=\linewidth]{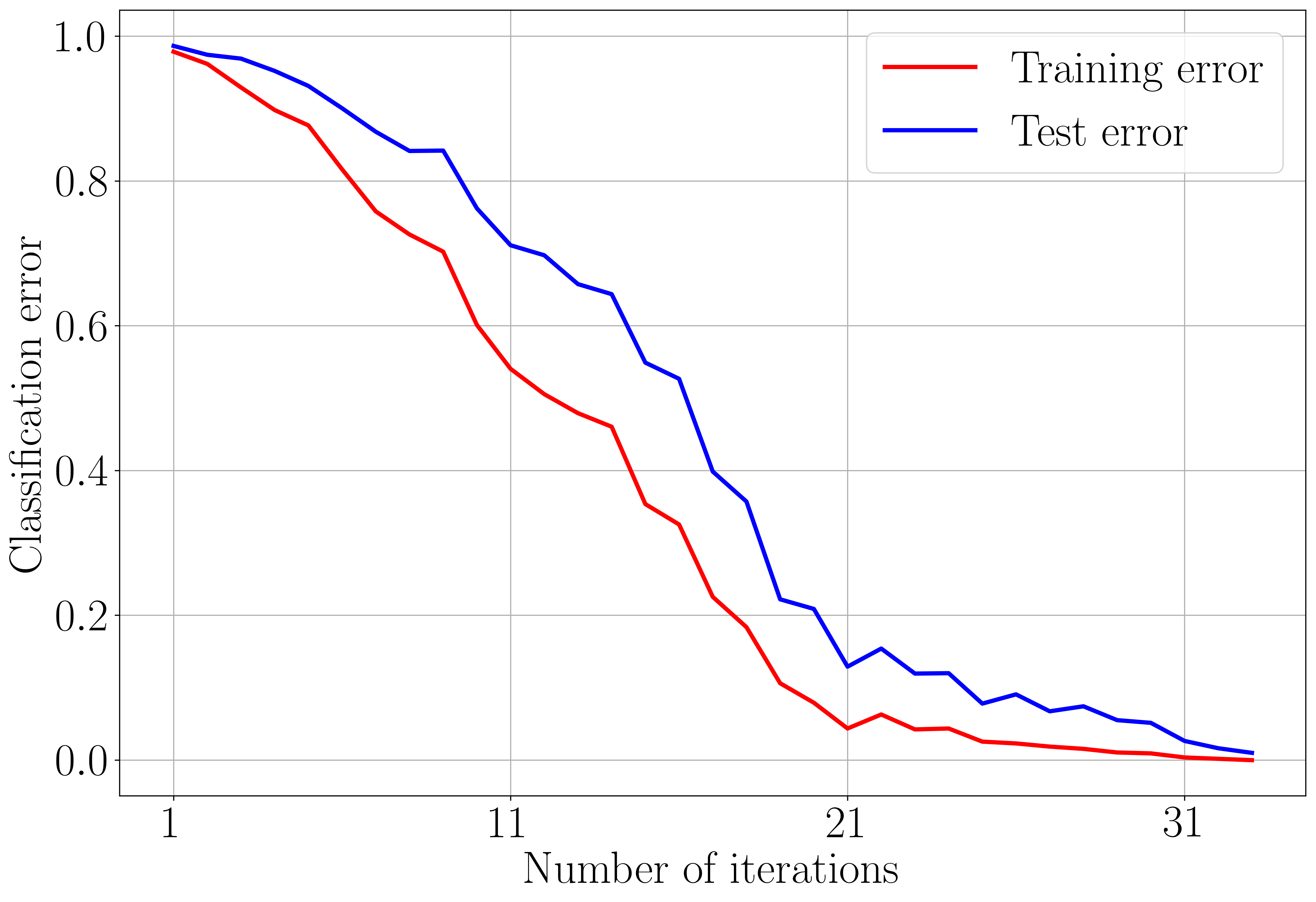}
		\caption{Evolution of training and test error}
		\label{mod_add_loss_evol}
	\end{subfigure}
	\hfill
	\begin{subfigure}{0.55\textwidth}
		\centering
		\includegraphics[width=\linewidth]{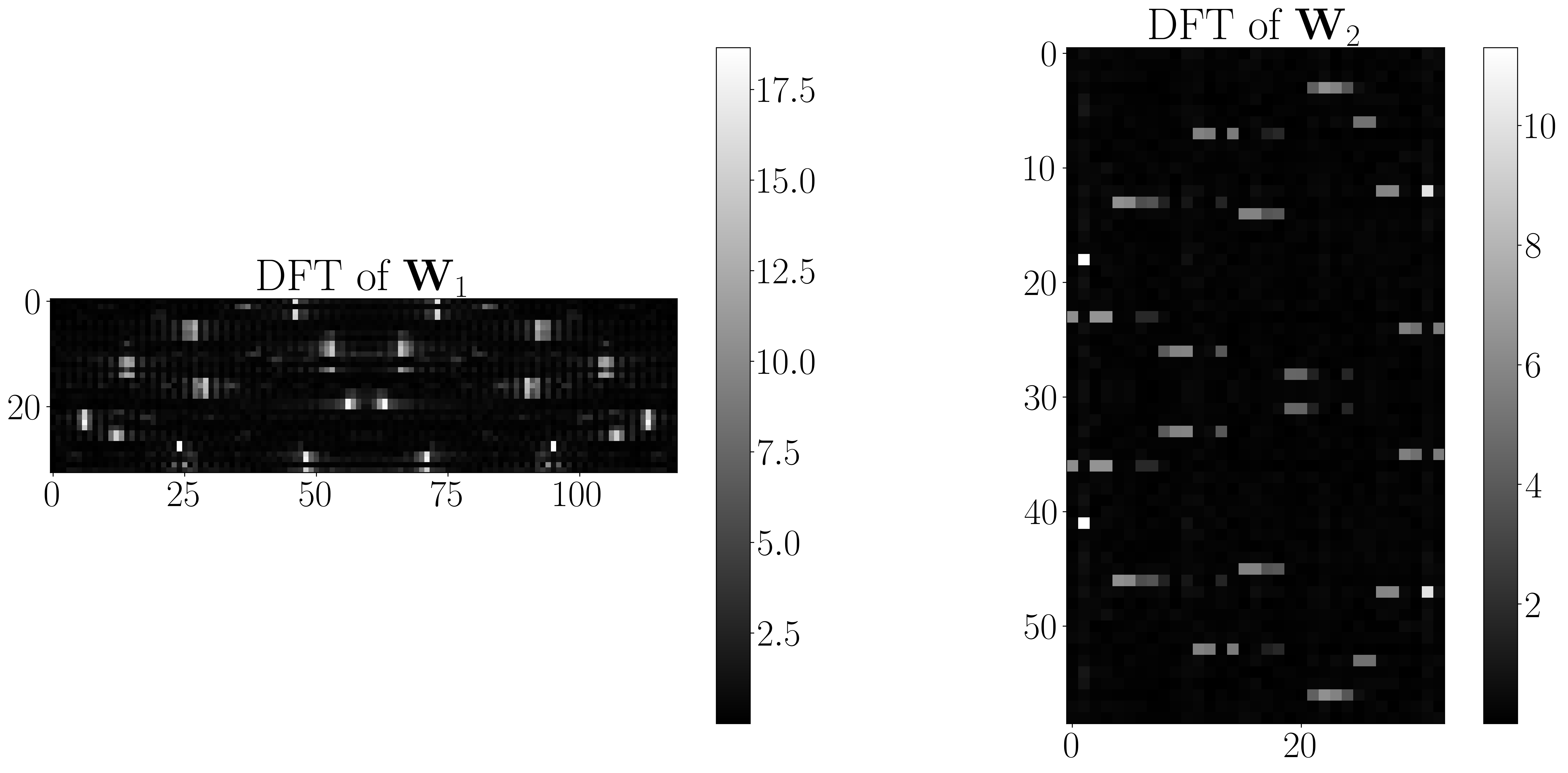}
		\caption{Absolute value of 2D DFT of the learned weights}
		\label{mod_add_wt}
	\end{subfigure}
	\caption{(Modular Addition) We train a two-layer neural network with square activation function via the NP algorithm to learn modular addition. Panel (a) depicts the evolution of training and test classification errors with respect to the iterations. It shows that the learned network is able to achieve small training  and test error. Panel (b) depicts the absolute value of 2D DFT of the learned weights. The DFT of each row of the first layer and each column of the second layer is concentrated around a certain frequency, indicating a clear sinusoidal structure among the learned weights.}
	\label{fig:2_layer_mod_add}
\end{figure}

For this task, prior works have shown that the learned weights of trained networks exhibit a sinusoidal structure \citep{gromov_mod,nanda_mod,zhong_mod}. In particular, \citet{gromov_mod} constructed a two-layer network with square activation in which the rows of the first-layer weights and the columns of the second-layer weights are sinusoidal. \Cref{mod_add_wt} shows the absolute value of the 2D discrete Fourier transform (DFT) of the weights learned by the NP algorithm. We observe that the DFT of each row of the first layer and each column of the second layer concentrates around a certain frequency. Thus, similar to networks trained with gradient descent, the weights learned by the NP algorithm also have a sinusoidal structure.\\
\textbf{Pointer Value Retrieval: } We next consider the task of Pointer Value Retrieval (PVR), introduced by \citet{zhang_pvr}. In this task, part of the input acts as a \emph{pointer} that selects a specific location in the input, whose value and its neighbors values determine the output.  Formally, for $\rvx\in \sR^n$ and a pointer $p\in \{1,2,\cdots n\}$, the output is
\begin{equation*}
f(p,\rvx) = \phi(x_p,x_{p+1},\cdots,x_{p+k}),
\end{equation*}	
where $x_i$ denotes the $i$th coordinate of $\rvx$ and $\phi(\cdot)$ is a scalar-valued function. The complexity of the task increases with number of neighbors and on the choice of $\phi(\cdot)$. In this setup, the network must learn to use the pointer to selectively attend to the relevant coordinates. 

In our experiments, we consider the case where $\rvx \in \{\pm1\}^{16}$, the pointer $p \in \{1,2,\ldots,15\}$, and the output is
\begin{equation*}
f(p,\rvx) = x_{p}x_{p+1}.
\end{equation*}
We train a three-layer ReLU neural network using the NP algorithm. The input to the network is $[\rvp,1,\rvx]\in \sR^{21}$, where $\rvp\in \sR^{4}$ encodes the pointer in symmetric binary form\textemdash for instance, $p=1$ is encoded as $\rvp=(-1,-1,-1,-1)$, $p=2$ as $\rvp=(-1,-1,-1,1)$, and so on. The fifth coordinate of the input is fixed at $1$, serving as a bias term for the first layer.

Training and test sets each consist of $100{,}000$ samples, where every entry of $\rvp$ and $\rvx$ is independently set to $1$ or $-1$ with equal probability. The performance is measured using training and test error, as defined in \Cref{sp_func_exp}. We use square loss and train until the training error drops below $0.01$. The experiment is repeated three times; we report a representative run here, with additional results in \Cref{pvr_exp} of the Appendix. As shown in \Cref{pvr_loss_evol}, the algorithm converges within $43$ iterations, achieving low test error and demonstrating that the network successfully learns the task.

The final network contains $31$ neurons in the first layer and $13$ in the second. \Cref{pvr_wt} depicts the absolute value of the learned weights. While a complete interpretation of the weights is beyond scope of this work, two notable observations can be made. First, the weights of the first layer associated with $\rvx$ (the last 16 columns) are sparse, with dominant entries localized within each row. This is consistent with the task’s dependence on specific coordinates and their neighbors. Second, the later rows and columns of the weights, which are associated with neurons added at the later stage, are relatively small. This is because the training error is already small by the later stages, and additional neurons and iterations were only needed to reduce the training error below the desired level.
\begin{figure}[htbp]
	\centering
	\begin{subfigure}[b]{0.33\textwidth}
		\centering
		\includegraphics[width=\linewidth]{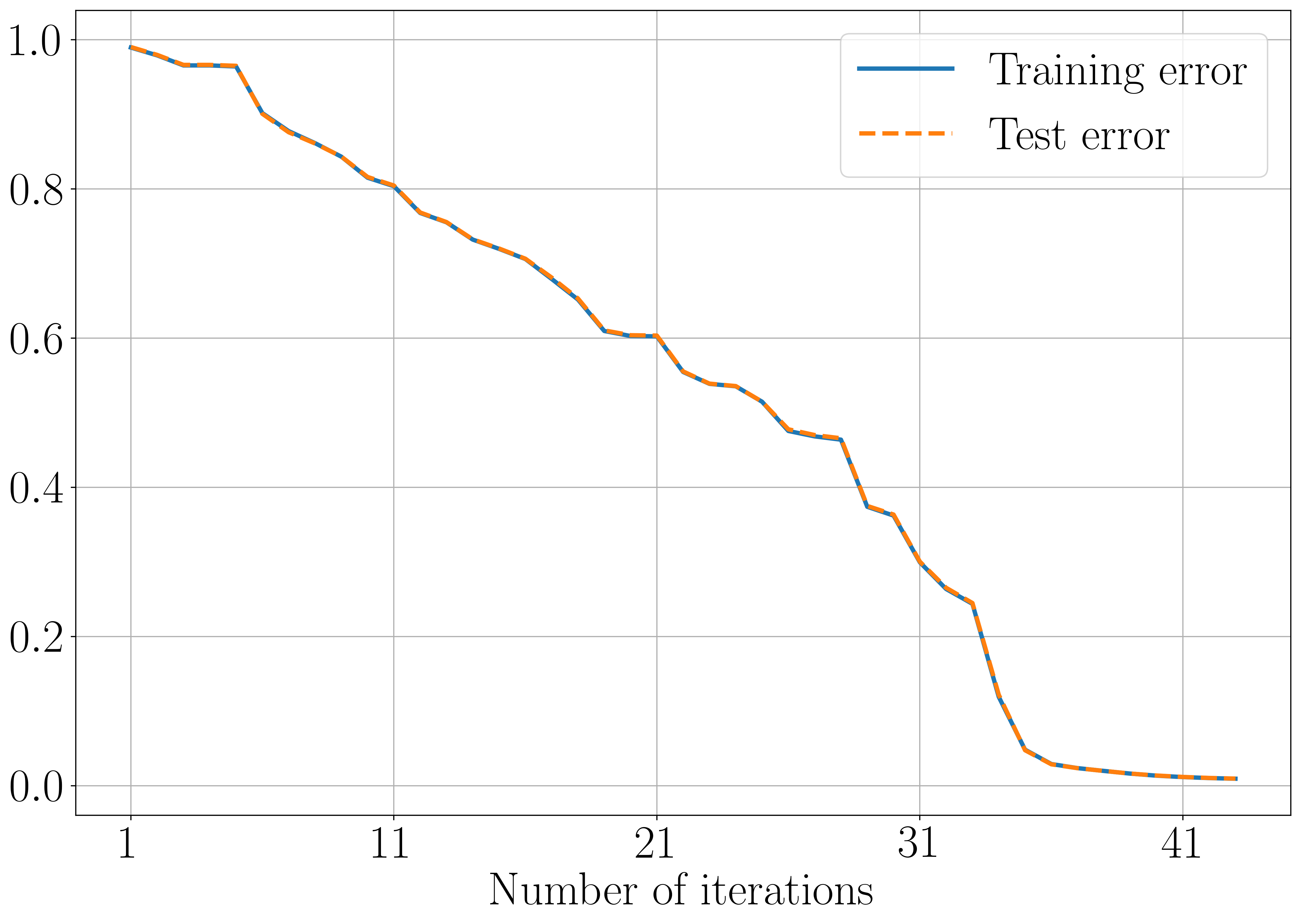}
		\caption{Training and test error}
		\label{pvr_loss_evol}
	\end{subfigure}
	\hfill
	\begin{subfigure}[b]{0.6\textwidth}
		\centering
		\includegraphics[width=\linewidth,height=3.8cm]{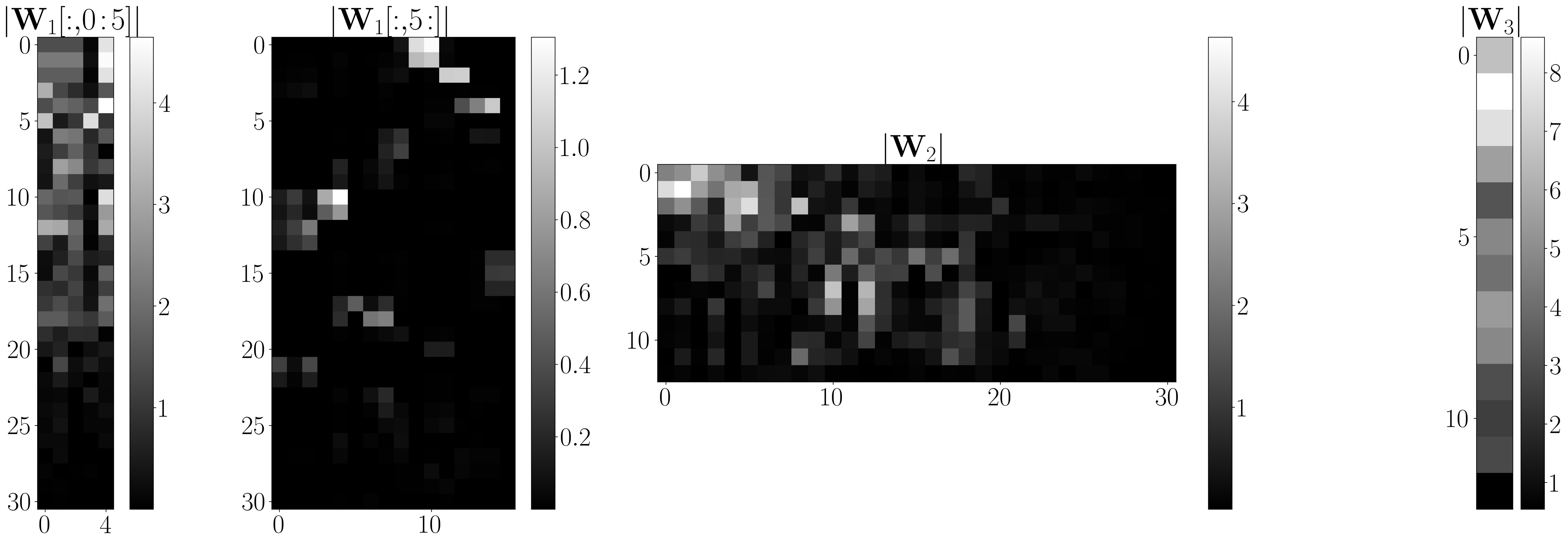}
		\caption{Absolute value of the learned weights}
		\label{pvr_wt}
	\end{subfigure}
	\caption{(Pointer Value Retrieval) We train a three-layer neural network with activation function $\sigma(x) = \max(0,x)$ via the NP algorithm to perform the PVR task. Panel (a) depicts the evolution of training and test errors with respect to the iterations, where both errors are almost overlapped. It shows that the learned network achieves small training  and test error. Panel (b) depicts the absolute values of the learned weights.  The weights of the first layer associated with $\rvx$ (the last 16 columns) are plotted separately to highlight them better. They are sparse and dominant entries are localized within each row. 
	}
	\label{fig:3_layer_pvr}
\end{figure}
\subsection{Discussion}
\label{fin_disc}
The above experiments demonstrate the learning capability of neural networks trained via the NP algorithm. However, in certain instances, we observed a gap between the performance of the NP algorithm and gradient descent with small initialization. This suggests that the NP algorithm is not necessarily equivalent to the gradient flow dynamics in the limit of initialization approaching zero. 

This is more evident if we consider two-layer diagonal linear networks. For such networks, it is known that in the limit of initialization approaching zero, the gradient flow converges to a minimum $\ell_1$-norm solution \citep{srebro_ib}. However, for diagonal linear networks, the NP algorithm is equivalent to the Orthogonal Matching Pursuit (OMP) algorithm \citep{omp_org, tropp_omp}, under certain assumptions; this is established in \Cref{omp_np_eq} of the Appendix, where we also empirically validate this claim. The OMP algorithm is a well-known technique to perform sparse recovery, but is different from minimizing the $\ell_1$-norm. This shows that even though the NP algorithm was motivated from the dynamics of gradient flow in the small initialization regime, it is not equivalent to gradient flow dynamics in the limit of initialization approaching zero.

We hypothesize that one of the reasons behind this gap is the way NP algorithm escapes from saddle points. As shown in \Cref{thm_dir_convg} and stated in Remark \ref{rem_dir_saddle}, near a saddle point, the initial direction of  the weights with small magnitude determine the KKT point to which those weights converge in direction. That, in turn, decides the way gradient flow escapes from that saddle point. Now, for homogeneous feed-forward neural networks, \Cref{lemma_small_wt} shows that after escaping from the origin, gradient flow reaches a saddle point  where a certain subset of weights have small magnitude; however, it does not characterize the direction of those subset of weights, which is critical for determining the KKT point. Lacking this information, the NP algorithm essentially uses random initial weights to obtain the KKT point. Perhaps, this leads to a different KKT point and a different escape direction from the saddle point. Further investigation along these lines to close the gap between gradient descent and the NP algorithm is an interesting future direction.  

Despite these differences, the NP algorithm offers a promising tool for studying neural networks. Unlike standard gradient-based methods, it provides a procedure in which neurons are incrementally added as training progresses, naturally favoring low-complexity or sparse solutions. This greedy construction offers a new lens for understanding how neural networks build representations. Developing a rigorous theoretical framework for the NP algorithm could therefore deepen our understanding of feature learning in neural networks. It can also serve as a foundation for future efforts to design similar greedy algorithms for more complex, non-homogeneous architectures.

Some related works have also explored greedy, stage-wise optimization of neural networks inspired by their training dynamics. For instance, \citet{lyu_resolving} proposed a greedy algorithm to train deep \emph{linear} neural networks. Their method proceeds by incrementally increasing the rank of the weight matrices and is heavily inspired by the saddle-to-saddle dynamics of gradient flow in the small initialization regime. However, the NP algorithm cannot be directly applied to deep linear networks; as discussed at the end of \Cref{sec:main_results} and in \Cref{zero_NCF}, the constrained NCF for deep linear networks becomes exactly zero at saddle points, rendering our specific trajectory analysis inapplicable in this setting. 

Another related work is \citet{kunin_agf}, which introduces \emph{Alternating Gradient Flows} (AGF). AGF is an algorithmic framework designed to characterize feature learning in \emph{two-layer} neural networks trained with small initialization. Similar to NP, AGF models the saddle-to-saddle dynamics as an alternating, stage-wise process: in the first phase, dormant neurons evaluate a utility function, and in the second phase the training loss is minimized over the active neurons. Conceptually, the instantaneous utility function in AGF is mathematically identical to our Neural Correlation Function (NCF)\textemdash both evaluate the correlation between the network's residual error and the parameters of the inactive neurons. However, a fundamental difference lies in how neurons are activated. AGF relies on \emph{accumulating} this utility during the first phase, activating dormant neurons whose accumulated utility is higher than a specific threshold. In contrast, NP adopts a greedy, instantaneous approach at the saddle point; it computes the dominant KKT point of the constrained NCF to decide which new neurons to activate. For certain architectures, AGF is equivalent to gradient flow in the limit of initialization approaching zero, however it remains open whether this equivalence always holds. Furthermore, while AGF is developed primarily for two-layer networks, NP algorithm is applicable for deep, multi-layer homogeneous architectures.

Finally, while the Neuron Pursuit algorithm was derived here strictly using the insights into the gradient flow dynamics, it appears that it can also be derived using simpler principles. In future work, we plan to formally explore these alternative foundations and also establish connections between NP and other greedy algorithms for training neural networks.
\section{Conclusion}
In this work, we began by analyzing the dynamics of gradient flow for homogeneous neural networks near saddle points with a specific sparsity structure. We showed that when initialized sufficiently close to such saddle points, gradient flow remains in their vicinity for a long time, during which weights with small magnitudes stay small and converge in direction. For feed-forward networks, this directional convergence enforces a proportionality between the norms of incoming and outgoing weights of hidden neurons, and activates a subset of previously inactive neurons. Our subsequent empirical analysis revealed that the set of active neurons is preserved after escaping from these saddle points and until reaching the next saddle point. By combining these findings with our previous analyses, we present a mechanistic picture of training feed-forward neural networks in the small initialization regime: weights move from one saddle point to another during training, and a new set of neurons is activated via directional convergence at each saddle. Motivated by this framework, we introduced Neuron Pursuit\textemdash a greedy algorithm for training neural networks.

While our framework provides a compelling lens to view the training dynamics, it relies on results that contain several critical gaps. Foremost is the need to rigorously formalize the empirical observations detailed in \Cref{sec:bey_saddle} regarding the gradient flow dynamics after escaping saddle points. Furthermore, verifying or even relaxing the \textit{Lojasiewicz}-type condition (\Cref{ass_loj}) utilized in \Cref{thm_dir_convg} is an important problem. Additionally, our current analysis assumes smooth activation functions, which excludes deep ReLU networks. Extending our theoretical analysis from the square loss to other loss functions, such as the logistic loss, is another significant challenge; in such settings, saddle points can diverge to infinity, considerably complicating the analysis of gradient flow near saddle points.

There are several promising avenues for future research regarding the Neuron Pursuit algorithm itself. Foremost is to theoretically close the gap between the NP algorithm and gradient flow with vanishing initialization, as discussed in \Cref{fin_disc}. While the experiments in \Cref{sec:exp_NP} demonstrate the strong empirical learning ability of NP, developing formal theoretical guarantees for its convergence will be a major step toward demystifying neural network optimization. Also, although NP was derived herein strictly through the lens of training dynamics, we anticipate that it can be alternatively formulated using simpler optimization principles\textemdash a connection we plan to formalize in future work. Finally, extending the NP algorithm to non-homogeneous architectures, such as Residual Networks and Transformers, is also a promising direction for further exploration.
\newpage
\appendix
\section{Key Lemmata}\label{appendix_key}
The following lemma, also known as Euler's theorem, states two important properties of homogeneous functions \citep[Theorem B.2]{Lyu_ib}.
\begin{lemma}\label{euler_thm}
	Let $f:\sR^d\rightarrow \sR$ be locally Lipschitz, differentiable and $L$-positively homogeneous for some $L>0$. Then,
	\begin{itemize}
		\item For any $\rvw\in\sR^d$ and $c\geq 0$, $\nabla f(c\rvw) = c^{L-1}f(\rvw)$.
		\item For any $\rvw\in\sR^d$ , $\rvw^\top\nabla f(\rvw)= Lf(\rvw).$
	\end{itemize}
\end{lemma}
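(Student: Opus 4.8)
The plan is to derive both identities by differentiating the defining relation $f(c\rvw)=c^{L}f(\rvw)$, once with respect to the scalar variable $c$ and once with respect to the vector variable $\rvw$.

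I would begin with the second identity, since it is the cleaner of the two and requires nothing beyond differentiability of $f$. Fix an arbitrary $\rvw\in\sR^{d}$ and define the single-variable function $\phi(c) := f(c\rvw)$ for $c>0$. On the open half-line $(0,\infty)$ the homogeneity hypothesis gives $\phi(c)=c^{L}f(\rvw)$, whose right-hand side is differentiable in $c$ with derivative $Lc^{L-1}f(\rvw)$. On the other hand, since $f$ is differentiable, the chain rule applied to the composition $c\mapsto c\rvw\mapsto f(c\rvw)$ yields $\phi'(c)=\rvw^{\top}\nabla f(c\rvw)$. Equating the two expressions for $\phi'(c)$ and setting $c=1$ gives $\rvw^{\top}\nabla f(\rvw)=Lf(\rvw)$, which is the second claim.

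For the first identity, fix $c>0$ and differentiate the identity $f(c\rvw)=c^{L}f(\rvw)$ with respect to the vector $\rvw$, treating $c$ as a constant. By the chain rule the gradient of the left-hand side is $c\,\nabla f(c\rvw)$, while the gradient of the right-hand side is $c^{L}\nabla f(\rvw)$; dividing by $c>0$ gives $\nabla f(c\rvw)=c^{L-1}\nabla f(\rvw)$. The boundary case $c=0$ is handled separately: since $f$ is assumed differentiable at the origin, one either checks directly that $\nabla f(\mathbf{0})=\mathbf{0}$ (which holds whenever $L>1$, and in particular in every setting used in this paper, where $L$ is a positive integer at least $2$), or one passes to the limit $c\to 0^{+}$ in the relation just established.

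These steps are essentially routine; the points requiring a little care, rather than a genuine obstacle, are (i) the invocation of the chain rule, which is exactly why the hypothesis includes differentiability of $f$ and not merely local Lipschitzness, (ii) the fact that the homogeneity identity $\phi(c)=c^{L}f(\rvw)$ is only assumed for $c\ge 0$, so the scalar differentiation must be carried out on the open interval $c>0$ and the value at $c=1$ extracted by continuity, and (iii) the degenerate behaviour at the origin for small degrees $L$, as noted above.
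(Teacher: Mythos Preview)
Your argument is correct and is exactly the standard proof of Euler's theorem for homogeneous functions: differentiate the homogeneity relation $f(c\rvw)=c^{L}f(\rvw)$ once in $c$ and once in $\rvw$. The paper itself does not supply a proof of this lemma; it simply cites \citep[Theorem B.2]{Lyu_ib}, so there is nothing to compare against beyond noting that your derivation matches the classical one. (You also implicitly corrected the typo in the first bullet, which should read $\nabla f(c\rvw)=c^{L-1}\nabla f(\rvw)$ rather than $c^{L-1}f(\rvw)$.)
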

This lemma also holds for non-differentiable homogeneous functions that satisfy additional requirements such as admitting a chain rule \citep{Lyu_ib} or definable under an o-minimal structure \citep{ji_matus_align}. This includes ReLU neural networks.  

The following lemma establishes a key property of functions homogeneous in a subset of variables.
\begin{lemma}\label{subset_homog}
	Let $f(\rvw_1,\rvw_2)$ be differentiable in $\rvw_1\in \sR^{m}$ and $L$-positively homogeneous in $\rvw_2\in \sR^{n}$, for some $L>0$ . Then, $\nabla_{\rvw_1}f(\rvw_1,\rvw_2)$ is also $L$-positively homogeneous in $\rvw_2$.
\end{lemma}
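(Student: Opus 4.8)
The plan is to obtain the claim by simply differentiating the defining homogeneity identity of $f$ in its second argument with respect to its first argument. By hypothesis, $f(\rvw_1,\rvw_2)$ is $L$-positively homogeneous with respect to $\rvw_2$, so for every $c\geq 0$ and every $(\rvw_1,\rvw_2)\in\sR^m\times\sR^n$,
\begin{align}
f(\rvw_1, c\rvw_2) = c^L f(\rvw_1,\rvw_2).
\label{eq_hom_id}
\end{align}

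First I would fix $c\geq 0$ and $\rvw_2\in\sR^n$ and view both sides of \cref{eq_hom_id} as functions of $\rvw_1$ alone. The right-hand side is differentiable in $\rvw_1$ because $f$ is differentiable in $\rvw_1$ by assumption; the left-hand side is the composition of the differentiable map $\rvw_1\mapsto f(\rvw_1,\cdot)$ with the substitution of the fixed vector $c\rvw_2$ into the second slot, hence differentiable in $\rvw_1$ as well. Taking the gradient with respect to $\rvw_1$ of both sides of \cref{eq_hom_id}, and using that $c^L$ does not depend on $\rvw_1$, yields
\begin{align*}
\left(\nabla_{\rvw_1} f\right)(\rvw_1, c\rvw_2) = c^L \left(\nabla_{\rvw_1} f\right)(\rvw_1,\rvw_2).
\end{align*}
Since $c\geq 0$ and $(\rvw_1,\rvw_2)$ are arbitrary, this is precisely the statement that $\nabla_{\rvw_1} f(\rvw_1,\rvw_2)$ is $L$-positively homogeneous with respect to $\rvw_2$, which finishes the argument.

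There is essentially no hard step here; the only point that needs a moment's care is notational, namely reading $\nabla_{\rvw_1} f(\rvw_1, c\rvw_2)$ on the left-hand side as the partial gradient of $f$ with respect to its first argument \emph{evaluated at} the point $(\rvw_1, c\rvw_2)$, so that it is legitimate to equate it with $c^L$ times the same partial gradient evaluated at $(\rvw_1,\rvw_2)$. Once one is explicit about which block of variables is held fixed while differentiating, the conclusion is immediate from \cref{eq_hom_id}. If one wished to cover the non-differentiable but chain-rule-admitting case alluded to in the remark after \Cref{euler_thm}, the identical computation goes through with the Clarke subdifferential in place of $\nabla_{\rvw_1}$.
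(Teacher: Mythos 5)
Your proof is correct and follows essentially the same route as the paper: both differentiate the homogeneity identity $f(\rvw_1,c\rvw_2)=c^Lf(\rvw_1,\rvw_2)$ with respect to $\rvw_1$, the paper merely spelling this out via directional difference quotients $\lim_{\epsilon\to 0}\frac{f(\rvw_1+\epsilon\rvv,c\rvw_2)-f(\rvw_1,c\rvw_2)}{\epsilon}$ along an arbitrary direction $\rvv$. Your remark about reading $\nabla_{\rvw_1}f(\rvw_1,c\rvw_2)$ as the partial gradient evaluated at $(\rvw_1,c\rvw_2)$ is exactly the right point of care, and nothing further is needed.
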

\begin{proof}
	For any $c\geq0$ and $\rvv\in \sR^m$, we have
	\begin{align*}
	\nabla_{\rvw_1}f(\rvw_1,c\rvw_2)^\top\rvv  &= \lim_{\epsilon\rightarrow 0}\frac{f(\rvw_1+\epsilon\rvv,c\rvw_2) - f(\rvw_1,c\rvw_2)}{\epsilon}\\
	&=\lim_{\epsilon\rightarrow 0}\frac{c^Lf(\rvw_1+\epsilon\rvv,\rvw_2) - c^Lf(\rvw_1,\rvw_2)}{\epsilon} = c^L\nabla_{\rvw_1}f(\rvw_1,\rvw_2)^\top\rvv.
	\end{align*}
	Since the above equality is true for any $\rvv$, we get $\nabla_{\rvw_1}f(\rvw_1,c\rvw_2) = c^L\nabla_{\rvw_1}f(\rvw_1,\rvw_2),$ which completes the proof.
\end{proof}
The next lemma states Gronwall's inequality.
\begin{lemma}
	Let $\alpha, \beta, u$ be real-valued functions defined on an interval $[a,b]$, where $\beta$ and $u$ are continuous and $\min(\alpha,0)$ is integrable on every closed and bounded sub-interval of $[a,b]$. If $\beta$ is non-negative, $\alpha$ is non-decreasing and if $u$ satisfies the integral inequality
	\begin{equation*}
	u(t) \leq \alpha(t) + \int_{a}^t \beta(s)u(s) ds, \forall t \in [a,b],
	\end{equation*}
	then
	\begin{equation*}
	u(t) \leq \alpha(t)\exp\left(\int_a^t \beta(s) ds\right), \forall t \in [a,b].
	\end{equation*}
	\label{gronwall}
\end{lemma}
\section{Proofs omitted from \Cref{sec:near_saddle}}
\subsection{Proof of \Cref{thm_dir_convg}}
\label{pf_main_thm}
We first state a key lemma used in the proof of \Cref{thm_dir_convg}. Assuming that the Łojasiewicz inequality holds, we establish that a reverse Łojasiewicz-type inequality also follows. The proof is adapted from a result in \citet{karimi_pl}, which shows that the Polyak-Łojasiewicz inequality implies a quadratic growth condition.   
\begin{lemma} 
	Suppose $\rvw_*$ is a local minima of $g(\rvw)$ such that Lojasiewicz's inequality is satisfied in a neighborhood of ${\rvw}_*$ for some $\alpha \in \left(0,1\right)$, that is, there exists $\mu,\gamma>0$ such that
	\begin{equation*}
	\|	\nabla g(\rvw)\|_2\geq \mu(g(\rvw) - g({\rvw}_*))^{\alpha}, \text{ if } \|\rvw- {\rvw}_*\|_2\leq \gamma,
	\end{equation*}
	where $\alpha \in \left(0,1\right)$. Then, there exists a ${\mu}_1>0$, ${\gamma}_1 \in (0,\gamma)$ such that
	\begin{equation*}
	\|	\nabla g(\rvw)\|_2\leq {\mu}_1(g(\rvw) - g({\rvw}_*))^{1-\alpha}, \text{ if } \|\rvw- {\rvw}_*\|_2\leq {\gamma}_1.
	\end{equation*}
	\label{loj_low_bd}
\end{lemma}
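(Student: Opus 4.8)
The plan is to mimic the argument of \citet{karimi_pl} that derives a quadratic growth bound from the Polyak-Łojasiewicz inequality, but carried out with a general exponent $\alpha$. The key object is the gradient flow $\rvv(t)$ of $g$ started at a point $\rvw$ near $\rvw_*$, i.e.\ $\dot{\rvv} = -\nabla g(\rvv)$ with $\rvv(0) = \rvw$. First I would record the standard consequence of the Łojasiewicz inequality: along this flow, the function value $g(\rvv(t)) - g(\rvw_*)$ decreases monotonically and in fact reaches the value $g(\rvw_*)$ in finite "time" (with the trajectory converging to a minimizer), and the flow stays inside the neighborhood $\{\|\rvw - \rvw_*\|_2 \le \gamma\}$ provided we start in a small enough ball $\{\|\rvw - \rvw_*\|_2 \le \gamma_1\}$ — this containment is exactly the classical Łojasiewicz path-length argument. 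The path-length bound it yields is $\int_0^\infty \|\dot{\rvv}(t)\|_2\, dt \le \frac{1}{\mu(1-\alpha)} (g(\rvw) - g(\rvw_*))^{1-\alpha}$, obtained by writing $\frac{d}{dt}\big[(g(\rvv) - g(\rvw_*))^{1-\alpha}\big] = -(1-\alpha)(g(\rvv)-g(\rvw_*))^{-\alpha}\|\nabla g(\rvv)\|_2^2 \le -(1-\alpha)\mu \|\nabla g(\rvv)\|_2$ and integrating.

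The next step connects the path length to the distance from $\rvw$ to the limit point $\rvw_\infty \coloneqq \lim_{t\to\infty}\rvv(t)$, which is a minimizer with $g(\rvw_\infty) = g(\rvw_*)$: by the triangle inequality $\|\rvw - \rvw_\infty\|_2 \le \int_0^\infty \|\dot{\rvv}(t)\|_2\, dt \le \frac{1}{\mu(1-\alpha)}(g(\rvw) - g(\rvw_*))^{1-\alpha}$. This is a quantitative "distance to the solution set" bound. To then bound $\|\nabla g(\rvw)\|_2$ itself, I would invoke smoothness of $g$ (it is $C^2$, or at least has a locally Lipschitz gradient, near $\rvw_*$ — this holds for the feed-forward networks of interest with activations $x^p$, $p\ge 2$, away from nondifferentiability, and we may simply add local Lipschitzness of $\nabla g$ as a hypothesis since it is already implicit in the setting): since $\nabla g(\rvw_\infty) = \mathbf{0}$ (as $\rvw_\infty$ is a minimizer in the interior), local Lipschitzness gives $\|\nabla g(\rvw)\|_2 = \|\nabla g(\rvw) - \nabla g(\rvw_\infty)\|_2 \le \Lambda \|\rvw - \rvw_\infty\|_2 \le \frac{\Lambda}{\mu(1-\alpha)}(g(\rvw) - g(\rvw_*))^{1-\alpha}$, for a Lipschitz constant $\Lambda$ valid on a neighborhood of $\rvw_*$. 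Setting $\mu_1 \coloneqq \frac{\Lambda}{\mu(1-\alpha)}$ and shrinking $\gamma_1$ so that the whole flow trajectory stays within the region where $\Lambda$ is valid completes the argument.

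The main obstacle is the bookkeeping needed to guarantee the flow trajectory never leaves the good neighborhood: one must choose $\gamma_1$ so small that the path-length bound (which itself depends on $g(\rvw) - g(\rvw_*)$, hence implicitly on $\gamma_1$ via continuity of $g$) is less than $\gamma - \gamma_1$, and simultaneously small enough that $\rvw_\infty$ lies in the interior region where both the Łojasiewicz inequality and the gradient-Lipschitz bound hold. This is the familiar delicate-but-routine fixed-point-flavored estimate in Łojasiewicz arguments; I would handle it by first fixing the neighborhood $\{\|\rvw - \rvw_*\|_2 \le \gamma\}$ on which the Łojasiewicz inequality holds, then choosing $\gamma_1$ via $\sup_{\|\rvw - \rvw_*\|_2 \le \gamma_1}(g(\rvw) - g(\rvw_*))^{1-\alpha} \le \mu(1-\alpha)(\gamma - \gamma_1)/2$, which is achievable by continuity of $g$ and $g(\rvw_*)$ being the local min. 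A minor secondary point is that $g$ may have a whole set of minimizers near $\rvw_*$ rather than an isolated one; this causes no trouble since we only need $\nabla g = \mathbf{0}$ at the particular limit point $\rvw_\infty$, not uniqueness.
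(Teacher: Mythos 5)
Your proposal is correct and takes essentially the same route as the paper's proof: both adapt the Karimi-et-al.-style argument by running a gradient flow from $\rvw$ inside the Łojasiewicz neighborhood, using the inequality to bound the path length by a constant times $(g(\rvw)-g(\rvw_*))^{1-\alpha}$ (with the usual exit-time bookkeeping to keep the trajectory in the ball), and then bounding $\|\nabla g(\rvw)\|_2$ via local Lipschitzness of $\nabla g$ at the limiting critical point where the gradient vanishes. The only cosmetic difference is that the paper flows the desingularized function $(g-g(\rvw_*))^{1-\alpha}$, which reaches the minimizing set in finite time, whereas you flow $g$ itself and pass to the $t\to\infty$ limit; both give the same conclusion up to the value of the constant $\mu_1$.
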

\begin{proof}
	We define the set $\rvw_*^\gamma$ as
	\begin{equation*}
	\rvw_*^\gamma \coloneqq \{\rvw:\|\rvw-\rvw_*\|_2\leq \gamma\}.
	\end{equation*}
	For $\rvw\in \rvw_*^\gamma$, define $f(\rvw) \coloneqq (g(\rvw) - g({\rvw}_*))^{1-\alpha}$, and let $\mathcal{W}_* \coloneqq \{\rvw\in\rvw_*^\gamma: g(\rvw) = g({\rvw}_*)\}$. Then, for any $\rvw\in\rvw_*^\gamma\symbol{92}\mathcal{W}_*$, we have
	\begin{equation*}
	\|\nabla f(\rvw)\|_2 = \left\|\frac{\nabla g(\rvw)}{(g(\rvw) - g({\rvw}_*))^{\alpha}}\right\|_2 \geq \mu.
	\end{equation*}
	Choose $\gamma_1\in (0,\gamma)$ small enough such that $\gamma_1\leq \gamma/4$ and $f(\rvw)/\mu \leq \gamma/4$, for all $\rvw\in \rvw_*^{{\gamma}_1}$.
	Next, suppose $\rvw_0\in  \rvw_*^{{\gamma}_1}$ and let $\rvw(t)$ be the solution of the following differential equation:
	\begin{equation*}
	\dot{\rvw} = -\nabla f(\rvw), \rvw(0) = \rvw_0.
	\end{equation*}
	Define
	\begin{equation*}
	T^* = \min_{t\geq 0}\{t:\rvw(t)\in \mathcal{W}_* \text{ or } \rvw(t) \notin \rvw_*^\gamma\}.
	\end{equation*}
	Thus, $T^*$ denotes the first time when either $\rvw(t)$ reaches the set $\mathcal{W}_*$ or $\rvw(t)$ escapes from $\rvw_*^\gamma$. Thus, for all $t\in [0,T^*)$, $\|\nabla f(\rvw(t))\|_2 \geq \mu$. \\
	Now, let $p(t) = \int_0^t \|\dot{\rvw}(s)\|_2 ds$. Then, for all $t\in [0,T^*)$, we have
	\begin{equation*}
	\frac{d f(\rvw)}{dt} = -\|\nabla f(\rvw(t))\|_2^2 \leq -\mu \|\nabla f(\rvw(t))\|_2 = -\mu\|\dot{\rvw}(t)\|_2 = -\mu\dot{p}(t). 
	\end{equation*}
	Integrating the above equation from $0$ to $t\in [0,T^*)$ we get
	\begin{equation}
	\mu p(t) \leq  f(\rvw_0) - f(\rvw(t)) \leq f(\rvw_0).
	\label{loj_pt_bd}
	\end{equation}
	Since $\|\rvw(t) - \rvw_0\|_2 \leq p(t)$, we get $\|\rvw(t) - \rvw_0\|_2\leq f(\rvw_0)/\mu,$ which implies, for all $t\in [0,T^*)$, 
	\begin{equation*}
	\|\rvw(t) - \rvw_*\|_2 \leq \|\rvw(t) - \rvw_0\|_2 +\|\rvw_0 - \rvw_*\|_2 \leq f(\rvw_0)/\mu + {\gamma}_1 \leq \gamma/2.
	\end{equation*}
	The above inequality implies $\rvw(t) \in \rvw_*^{\gamma/2}$,  for all $t\in [0,T^*)$ and hence, $\rvw(T^*) \in \rvw_*^\gamma$. \\
	Next, we derive an upper bound on $T^*$. For all $t\in [0,T^*)$, we have
	\begin{equation*}
	f(\rvw(t)) - f(\rvw_0) = -\int_0^t\|\nabla f(\rvw(s))\|_2^2 ds \leq -\mu^2t,
	\end{equation*}
	which implies
	\begin{equation*}
	\mu^2t\leq  f(\rvw_0) -	f(\rvw(t)) \leq  f(\rvw_0).
	\end{equation*}
	Taking $t\rightarrow T^*$, we get $T^* \leq f(\rvw_0)/\mu^2$. Thus, $T^*$ is finite and since  $\rvw(T^*) \in \rvw_*^\gamma$, we get $\rvw(T^*) \in  \mathcal{W}_*$. Now, from \cref{loj_pt_bd}, we know
	\begin{equation*}
	f(\rvw_0) \geq \mu\|\rvw(t) - \rvw_0\|_2, \text{ for all }t\in [0,T^*).
	\end{equation*}
	Taking $t\rightarrow T^*$ in the above equation gives us
	\begin{equation}
	f(\rvw_0) \geq \mu\|\rvw(T^*) - \rvw_0\|_2.
	\label{f_bd}
	\end{equation}
	Since $g(\rvw)$ has locally Lipschitz gradient, there exists a large enough $K>0$ such that, for any $\rvw_1, \rvw_2\in \rvw_*^\gamma$, we have
	\begin{equation*}
	\|\nabla g(\rvw_1) - \nabla g(\rvw_2)\|_2 \leq K\|\rvw_1-\rvw_2\|_2.
	\end{equation*}
	Since $\nabla g(\rvw(T^*)) = \mathbf{0}$, using the above equation in \cref{f_bd} gives us
	\begin{equation*}
	(g(\rvw_0) - g({\rvw}_*))^{1-\alpha} = f(\rvw_0) \geq \mu\|\rvw(T^*) - \rvw_0\|_2 \geq \frac{	\mu\|\nabla g(\rvw_0) \|_2 }{K},
	\end{equation*}
	which completes the proof since $\rvw_0\in \rvw_*^{\gamma_1}$ and it is arbitrary. 
\end{proof}
\textbf{Proof of \Cref{thm_dir_convg}:}
Since $(\overline{\rvw}_n,\mathbf{0})$ is a saddle point of the loss function in \cref{loss_fn_sep} such that \Cref{ass_loj} is satisfied, then there exists $\mu_1,\gamma>0$ such that
\begin{equation}
\|	\nabla\widetilde{\mathcal{L}}(\rvw_n)\|_2\geq \mu_1(\widetilde{\mathcal{L}}(\rvw_n) - \widetilde{\mathcal{L}}(\overline{\rvw}_n))^{\alpha}, \text{ if } \|\rvw_n - \overline{\rvw}_n\|_2\leq \gamma,
\label{loj_proof}
\end{equation}
where $\alpha \in \left(0,\frac{L}{2(L-1)}\right)$. \\
Without loss of generality, from Lemma \ref{loj_low_bd}, we assume that there exists  $\mu_2>0$ such that
\begin{equation}
\|	\nabla\widetilde{\mathcal{L}}(\rvw_n)\|_2\leq \mu_2(\widetilde{\mathcal{L}}(\rvw_n) - \widetilde{\mathcal{L}}(\overline{\rvw}_n))^{1-\alpha}, \text{ if } \|\rvw_n - \overline{\rvw}_n\|_2\leq \gamma.
\label{lips_proof}
\end{equation}	
Suppose $\rvu_1(t)$ is the solution of the following differential equation:
\begin{equation*}
\dot{\rvu} = \nabla_{\rvw_z}\mathcal{N}_{\overline{\rvy},\mathcal{H}_1} =  \mathcal{J}_1(\rmX;\overline{\rvw}_n,\rvu) ^\top\overline{\rvy}, \rvu(0) = \rvz,
\end{equation*}
If $\rvz\in \mathcal{S}(\rvz_*;\mathcal{N}_{\overline{\rvy},\mathcal{H}_1})$, where $\rvz_*$  is a non-negative KKT point of $\widetilde{\mathcal{N}}_{\overline{\rvy},\mathcal{H}_1}$, then for any arbitrarily small $\epsilon>0$ there exists a $T_\epsilon$ such that
\begin{equation*}
\frac{\rvz_*^\top\rvu_1(T_\epsilon)}{\|\rvu_1(T_\epsilon)\|_2} \geq 1-\epsilon. 
\end{equation*}
We can also assume that there exists a sufficiently large constant $B_\epsilon$ and small enough $\eta>0$ such that  $\eta \leq \|\rvu_1(t)\|_2\leq B_\epsilon$, for all $t\in [0,T_\epsilon]$. 

Else, from Lemma \ref{lemma:gf_ncf},  $\rvu_1(t)$ converges to the origin. In this case, we define $\eta = 1$, $B_\epsilon = \epsilon$, and for any arbitrarily small $\epsilon>0$ there exists a $T_\epsilon$ such that
\begin{equation*}
\|\rvu_1(T_\epsilon)\|_2 \leq B_\epsilon = \epsilon.
\end{equation*}
Next, for $\delta>0$, let $\rvu_\delta(t)$ be the solution of the following differential equation:
\begin{equation*}
\dot{\rvu} = \nabla_{\rvw_z}\mathcal{N}_{\overline{\rvy},\mathcal{H}_1} =  \mathcal{J}_1(\rmX;\overline{\rvw}_n,\rvu) ^\top\overline{\rvy}, \rvu(0) = \delta\rvz,
\end{equation*}
From Lemma \ref{traj_init_eq}, we know $\rvu_1(t) = \rvu_{\delta}(t/\delta^{L-2})/\delta$. Hence, if $\rvz\in \mathcal{S}(\rvz_*;\mathcal{N}_{\overline{\rvy},\mathcal{H}_1})$, then $\eta\delta\leq\|\rvu_\delta(t)\|_2\leq B_\epsilon\delta$, for all $t\in [0,T_\epsilon/\delta^{L-2}]$, and
\begin{equation*}
\frac{\rvz_*^\top\rvu_\delta(T_\epsilon/\delta^{L-2})}{\|\rvu_\delta(T_\epsilon/\delta^{L-2})\|_2} \geq 1-\epsilon.
\end{equation*}
Else, $\|\rvu_\delta(T_\epsilon/\delta^{L-2})\|_2\leq B_\epsilon\delta = \epsilon\delta$. For brevity, we define $\overline{T}_\epsilon \coloneqq T_\epsilon/\delta^{L-2}$.\\
We next define
\begin{equation*}
\mathbf{Z}(t) \coloneqq \max(\|\mathbf{w}_n(t) - \overline{\mathbf{w}}_n\|_2^2, \|\mathbf{w}_z(t)-\mathbf{u}_\delta(t)\|_2^2/\delta^2).
\end{equation*}
Let $\epsilon \in (0,\gamma)$ be arbitrarily small. We will show that $\mathbf{Z}(t)\leq \epsilon^2$, for all $t\in [0,\overline{T}_\epsilon]$ and for all sufficiently small $\delta>0$. Note that $\mathbf{Z}(0) = \delta^2 <  \epsilon^2 \leq\gamma^2$. Define 
\begin{equation*}
\overline{T}_\delta \coloneqq \min \left( \inf \{t \geq 0 : \mathbf{Z}(t) = \epsilon^2\}, \overline{T}_\epsilon \right)
\end{equation*}
Here, $\overline{T}_\delta$ denotes the first time $\mathbf{Z}(t)$ reaches $\epsilon^2$, or it is equal to $\overline{T}_\epsilon$ if $\mathbf{Z}(t)$ fails to reach $\epsilon^2$ within the interval $[0,\overline{T}_\epsilon]$. Thus, for all $t\in [0,\overline{T}_\delta],$  $\mathbf{Z}(t) \leq \epsilon^2 \leq \gamma^2$. Now, if we can show $\overline{T}_\delta = \overline{T}_\epsilon$, then it would imply $\mathbf{Z}(t)\leq \epsilon^2$, for all $t\in [0, \overline{T}_\epsilon]$. We aim to show this next.\\
Now, for all $t\in [0,\overline{T}_\delta]$,
\begin{align}
\frac{d \mathbf{w}_n}{dt}
&=  \mathcal{J}_n\left(\mathbf{X};\mathbf{w}_n(t),\mathbf{w}_z(t)\right)^\top\left(\mathbf{y} - \mathcal{H}\left(\mathbf{X};\mathbf{w}_n(t), \mathbf{w}_z(t)\right) \right)\nonumber\\
&= \mathcal{J}_n\left(\mathbf{X};\mathbf{w}_n(t),\mathbf{0}\right)^\top\left(\mathbf{y} - \mathcal{H}\left(\mathbf{X};\mathbf{w}_n(t), \mathbf{0}\right) \right) + \rvq(t),
\end{align}
where
\begin{align*}
\rvq(t) &= (\mathcal{J}_n\left(\mathbf{X};\mathbf{w}_n(t),\mathbf{w}_z(t)\right)-\mathcal{J}_n\left(\mathbf{X};\mathbf{w}_n(t),\mathbf{0}\right))^\top\left(\mathbf{y} - \mathcal{H}\left(\mathbf{X};\mathbf{w}_n(t), \mathbf{w}_z(t)\right) \right)\\
& + \mathcal{J}_n\left(\mathbf{X};\mathbf{w}_n(t),\mathbf{0}\right)^\top\left(\mathcal{H}\left(\mathbf{X};\mathbf{w}_n(t), \mathbf{0}\right) - \mathcal{H}\left(\mathbf{X};\mathbf{w}_n(t), \mathbf{w}_z(t)\right) \right).
\end{align*}
Using the definition of $\widetilde{\mathcal{L}}(\rvw_n)$, we get
\begin{equation}
\frac{d \mathbf{w}_n}{dt}
=  -\nabla_{\rvw_n}\widetilde{\mathcal{L}}(\rvw_n) + \rvq(t).
\label{eq_wn_evol}
\end{equation}
Now, recall that for  all $t\in [0,\overline{T}_\delta]$, 
\begin{equation*}
\|\rvw_z(t)\|_2 \leq \|\rvu_\delta(t)\|_2 + \delta\epsilon \leq \delta(B_\epsilon+\epsilon).
\end{equation*}
Therefore, using the third condition in \Cref{ass_str}, for all sufficiently small $\delta>0$ we get
\begin{align*}
&\hspace{-3em}\left\|(\mathcal{J}_n\left(\mathbf{X};\mathbf{w}_n(t),\mathbf{w}_z(t)\right)-\mathcal{J}_n\left(\mathbf{X};\mathbf{w}_n(t),\mathbf{0}\right))^\top\left(\mathbf{y} - \mathcal{H}\left(\mathbf{X};\mathbf{w}_n(t), \mathbf{w}_z(t)\right) \right)\right\|_2\\
 \leq & 		\left\|\mathcal{J}_n\left(\mathbf{X};\mathbf{w}_n(t),\mathbf{w}_z(t)\right)-\mathcal{J}_n\left(\mathbf{X};\mathbf{w}_n(t),\mathbf{0}\right)\right\|_2 \left\|\mathbf{y} - \mathcal{H}\left(\mathbf{X};\mathbf{w}_n(t), \mathbf{w}_z(t)\right)\right\|_2 \\
\leq & \left\|\mathcal{J}_{n1}\left(\mathbf{X};\mathbf{w}_n(t),\mathbf{w}_z(t)\right)+O(\delta^K)\right\|_2  \left\|\mathbf{y} - \mathcal{H}\left(\mathbf{X};\mathbf{w}_n(t), \mathbf{w}_z(t)\right)\right\|_2 = O(\delta^L),
\end{align*}
where $\mathcal{J}_{n1}\left(\mathbf{X};\mathbf{w}_n(t),\mathbf{w}_z(t)\right)$ denotes the Jacobian of $\mathcal{H}_{1}\left(\mathbf{X};\mathbf{w}_n(t),\mathbf{w}_z(t)\right)$  with respect to $\rvw_n$. The final equality holds since $\mathcal{J}_{n1}\left(\mathbf{X};\mathbf{w}_n(t),\mathbf{w}_z(t)\right)$ is $L$-homogeneous in $\rvw_z$, $\|\rvw_z\|_2 = O(\delta)$, and $K>L$. \\
Next, using the first condition in \Cref{ass_str}, for all sufficiently small $\delta>0$ we get
\begin{align*}
&\hspace{-3em}\|\mathcal{J}_n\left(\mathbf{X};\mathbf{w}_n(t),\mathbf{0}\right)^\top\left(\mathcal{H}\left(\mathbf{X};\mathbf{w}_n(t), \mathbf{0}\right) - \mathcal{H}\left(\mathbf{X};\mathbf{w}_n(t), \mathbf{w}_z(t)\right) \right)\|_2\\
\leq& 		\|\mathcal{J}_n\left(\mathbf{X};\mathbf{w}_n(t),\mathbf{0}\right)\|_2\|\mathcal{H}\left(\mathbf{X};\mathbf{w}_n(t), \mathbf{0}\right) - \mathcal{H}\left(\mathbf{X};\mathbf{w}_n(t), \mathbf{w}_z(t)\right)\|_2\\
\leq& \|\mathcal{J}_n\left(\mathbf{X};\mathbf{w}_n(t),\mathbf{0}\right)\|_2\|\mathcal{H}_1\left(\mathbf{X};\mathbf{w}_n(t), \mathbf{w}_z(t)\right) + O(\delta^K)\|_2 = O(\delta^L),
\end{align*}
where the final equality holds since $\mathcal{H}_{1}\left(\mathbf{X};\mathbf{w}_n(t),\mathbf{w}_z(t)\right)$ is $L$-homogeneous in $\rvw_z$ and $\|\rvw_z\|_2 = O(\delta)$, and $K>L$. Therefore, combining the above two inequalities, we get
\begin{equation*}
\|\rvq(t)\|_2 \leq C\delta^L, \text{ for all }t\in [0,\overline{T}_\delta]
\end{equation*}
where $C>0$ is a large enough constant. Now, let $p(t)$ denote the length of $\rvw_n(t)$, that is,
\begin{equation*}
p(t) = \int_{0}^t \|\dot{\rvw}_n(s)\|_2 ds.
\end{equation*}
Then, using \cref{eq_wn_evol}, we have
\begin{equation*}
\dot{p}(t) \leq  {\|\nabla\widetilde{\mathcal{L}}(\rvw_n(t))\|_2}+\|\rvq(t)\|_2 \leq  {\|\nabla\widetilde{\mathcal{L}}(\rvw_n)\|_2}+C\delta^L.
\end{equation*}
Now, if for some $t\in [0,\overline{T}_\delta]$, $\widetilde{\mathcal{L}}(\mathbf{w}_n(t))-\widetilde{\mathcal{L}}(\overline{\mathbf{w}}_n) \leq (C\delta^{L}/\mu_1)^\frac{1}{\alpha}$, then, from \cref{lips_proof},
\begin{equation*}
\|\nabla\widetilde{\mathcal{L}}(\rvw_n(t))\|_2 \leq\mu_2(\widetilde{\mathcal{L}}(\mathbf{w}_n(t))-\widetilde{\mathcal{L}}(\overline{\mathbf{w}}_n) )^{1-\alpha}=   O(\delta^{\frac{L(1-\alpha)}{\alpha}}).
\end{equation*} 
Define $\beta \coloneqq \min(L,L(1-\alpha)/\alpha)$, then the above two inequalities imply 
\begin{equation}
\dot{p}(t) \leq  {\|\nabla\widetilde{\mathcal{L}}(\rvw_n(t))\|_2} + O(\delta^L)  =   O(\delta^\beta),
\label{pt_bd}
\end{equation}
If $(\widetilde{\mathcal{L}}(\mathbf{w}_n(t))-\widetilde{\mathcal{L}}(\overline{\mathbf{w}}_n))\geq (C\delta^{L}/\mu_1)^\frac{1}{\alpha}$, \cref{loj_proof} implies $\|\nabla\widetilde{\mathcal{L}}(\rvw_n(t))\|_2\geq C\delta^L$. Hence,
\begin{align}
\frac{d(\widetilde{\mathcal{L}}(\mathbf{w}_n)-\widetilde{\mathcal{L}}(\overline{\mathbf{w}}_n))}{dt} =  \nabla\widetilde{\mathcal{L}}(\mathbf{w}_n)^\top  \dot{\mathbf{w}}_n
&=  -{\|\nabla\widetilde{\mathcal{L}}(\rvw_n)\|_2^2}+ {\nabla\widetilde{\mathcal{L}}(\mathbf{w}_n)^\top \rvq(t)} \nonumber\\
&\leq  -{\|\nabla\widetilde{\mathcal{L}}(\rvw_n)\|_2^2} +\|\nabla\widetilde{\mathcal{L}}(\mathbf{w}_n)\|_2 \|\rvq(t)\|_2\nonumber\\
&\leq  -{\|\nabla\widetilde{\mathcal{L}}(\rvw_n)\|_2^2} + C\|\nabla\widetilde{\mathcal{L}}(\mathbf{w}_n)\|_2 \delta^L \nonumber\\
&= \|\nabla\widetilde{\mathcal{L}}(\mathbf{w}_n)\|_2 \left( -{\|\nabla\widetilde{\mathcal{L}}(\rvw_n)\|_2} + C\delta^L\right)\leq 0.
\label{loss_change_ineq}
\end{align}
The above chain of inequalities imply that if for some $t_0\in [0,\overline{T}_\delta]$, $(\widetilde{\mathcal{L}}(\mathbf{w}_n(t_0))-\widetilde{\mathcal{L}}(\overline{\mathbf{w}}_n))= (C\delta^{L}/\mu_1)^\frac{1}{\alpha},$ then $(\widetilde{\mathcal{L}}(\mathbf{w}_n(t))-\widetilde{\mathcal{L}}(\overline{\mathbf{w}}_n))\leq (C\delta^{L}/\mu_1)^\frac{1}{\alpha},$ for all $t\in [t_0,\overline{T}_\delta]$. 

Now, suppose $\widetilde{\mathcal{L}}(\mathbf{w}_n(0))-\widetilde{\mathcal{L}}(\overline{\mathbf{w}}_n)>(C\delta^{L}/\mu_1)^\frac{1}{\alpha}$ and there exists $T_1\in(0,\overline{T}_\delta]$ such that $\widetilde{\mathcal{L}}(\mathbf{w}_n(T_1))-\widetilde{\mathcal{L}}(\overline{\mathbf{w}}_n)=(C\delta^{L}/\mu_1)^\frac{1}{\alpha}$ for the first time. Then, for all $t\in (0,T_1]$, $\widetilde{\mathcal{L}}(\mathbf{w}_n(t))-\widetilde{\mathcal{L}}(\overline{\mathbf{w}}_n)\geq(C\delta^{L}/\mu_1)^\frac{1}{\alpha}$, and for all $t\in [T_1,\overline{T}_\delta]$,  $\widetilde{\mathcal{L}}(\mathbf{w}_n(t))-\widetilde{\mathcal{L}}(\overline{\mathbf{w}}_n)\leq(C\delta^{L}/\mu_1)^\frac{1}{\alpha}$. Since, for $t\in [0,T_1]$, $\widetilde{\mathcal{L}}(\mathbf{w}_n(t))-\widetilde{\mathcal{L}}(\overline{\mathbf{w}}_n)\geq (C\delta^{L}/\mu_1)^\frac{1}{\alpha}$, we get
\begin{align*}
\frac{d(\widetilde{\mathcal{L}}(\mathbf{w}_n(t))-\widetilde{\mathcal{L}}(\overline{\mathbf{w}}_n))}{dt} &\leq \|\nabla\widetilde{\mathcal{L}}(\mathbf{w}_n)\|_2 \left( -{\|\nabla\widetilde{\mathcal{L}}(\rvw_n)\|_2} + C\delta^L\right)\\
&\leq {\mu_1(\widetilde{\mathcal{L}}(\mathbf{w}_n(t))-\widetilde{\mathcal{L}}(\overline{\mathbf{w}}_n))^\alpha}\left( -{\|\nabla\widetilde{\mathcal{L}}(\rvw_n)\|_2}+ C\delta^L\right)\\
&\leq {\mu_1(\widetilde{\mathcal{L}}(\mathbf{w}_n(t))-\widetilde{\mathcal{L}}(\overline{\mathbf{w}}_n))^\alpha}\left( -\dot{p}(t) + 2C\delta^L\right),\\
\end{align*}
where the first inequality follows from \cref{loss_change_ineq}. The second inequality follows from \cref{loj_proof} and since $\|\nabla\widetilde{\mathcal{L}}(\mathbf{w}_n(t))\|_2\geq C\delta^L$. The last inequality is true since  $\dot{p}(t) \leq  {\|\nabla\widetilde{\mathcal{L}}(\rvw_n(t))\|_2}+C\delta^L,$ for all $t\in [0,\overline{T}_\delta]$. Hence,
\begin{equation*}
\frac{d(\widetilde{\mathcal{L}}(\mathbf{w}_n(t))-\widetilde{\mathcal{L}}(\overline{\mathbf{w}}_n))^{1-\alpha}}{dt} \leq \mu_1(1-\alpha)(-\dot{p}(t) + 2C_1\delta^L).
\end{equation*}
Integrating both sides from $0$ to $t\in [0,T_1]$ we get
\begin{align*}
\mu_1(1-\alpha){p}(t) &\leq  (\widetilde{\mathcal{L}}(\mathbf{w}_n(0))-\widetilde{\mathcal{L}}(\overline{\mathbf{w}}_n))^{1-\alpha} - (\widetilde{\mathcal{L}}(\mathbf{w}_n(t))-\widetilde{\mathcal{L}}(\overline{\mathbf{w}}_n))^{1-\alpha}+2\mu_1(1-\alpha)C\delta^Lt \\
&\leq (\widetilde{\mathcal{L}}(\mathbf{w}_n(0))-\widetilde{\mathcal{L}}(\overline{\mathbf{w}}_n))^{1-\alpha} +2\mu_1(1-\alpha)C\delta^Lt\\
&\leq K_1\|\rvw_n(0) - \overline{\rvw}_n\|_2^{1-\alpha} + 2\mu_1(1-\alpha)C_1\delta^Lt = K_1\delta^{1-\alpha} + 2\mu_1(1-\alpha)C\delta^Lt ,
\end{align*}
where $K_1>0$ is a sufficiently large constant, and the last inequality follows from $\widetilde{\mathcal{L}}(\cdot)$ being locally Lipschitz. Hence, since $T_1\leq \overline{T}_\delta \leq \overline{T}_\epsilon = T_\epsilon/\delta^{L-2}$, for all $t\in [0,T_1]$, 
\begin{equation*}
\|\rvw_n(t)-\rvw_n(0)\|_2 \leq p(t) \leq \frac{K_1\delta^{1-\alpha}}{\mu_1(1-\alpha)} + 2C\delta^Lt \leq \frac{K_1\delta^{1-\alpha}}{\mu_1(1-\alpha)} + 2C\delta^2T_\epsilon = O(\delta^{1-\alpha}).
\end{equation*}
Next, for $t\in [T_1,\overline{T}_\delta]$, $(\widetilde{\mathcal{L}}(\mathbf{w}_n(t))-\widetilde{\mathcal{L}}(\overline{\mathbf{w}}_n))\leq (C\delta^{L}/\mu_1)^\frac{1}{\alpha}$. Therefore, from \cref{pt_bd}, we get
\begin{equation*}
\dot{p}(t) = O(\delta^\beta).
\end{equation*}     
For all $t\in [T_1,\overline{T}_\delta]$, since  $t-T_1\leq \overline{T}_\delta \leq \overline{T}_\epsilon = T_\epsilon/\delta^{L-2}$, we get
\begin{equation*}
\|\rvw_n(t)-\rvw_n(T_1)\|_2 \leq \int_{T_1}^t\|\dot{\rvw}_n(s)\|_2 ds = \int_{T_1}^t \dot{p} (s) ds \leq O(\delta^\beta)(t-T_1) = O(\delta^{\beta-L+2}).
\end{equation*}
Note that since $\alpha\in (0,L/(2L-2))$, we have $L(1-\alpha)/\alpha - L+2>0$, implying $\beta>L-2$.\\
Now, define $\beta_1 = \min(\beta-L+2,1-\alpha)$, then, for all $t\in [0,\overline{T}_\delta]$,
\begin{equation}
\|\rvw_n(t)-\rvw_n(0)\|_2 = O(\delta^{\beta_1}), \text{ and thus, } \|\rvw_n(t)-\overline{\rvw}_n\|_2 = O(\delta^{\beta_1}).
\label{wn_bd}
\end{equation}
Note that we have assumed $\widetilde{\mathcal{L}}(\mathbf{w}_n(0))-\widetilde{\mathcal{L}}(\overline{\mathbf{w}}_n)>(C\delta^{L}/\mu_1)^\frac{1}{\alpha}$. If this assumption is not true, we can then choose $T_1 = 0$ and still get the above bound.  \\
Next, for all $t\in [0,\overline{T}_\delta]$, we have
\begin{align}
\frac{d \mathbf{w}_z}{dt}
&= \mathcal{J}_{z}\left(\mathbf{X};\mathbf{w}_n(t),\mathbf{w}_z(t)\right)^\top\left(\mathbf{y} - \mathcal{H}\left(\mathbf{X};\mathbf{w}_n(t), \mathbf{w}_z(t)\right) \right)\nonumber\\
&= \mathcal{J}_{z1}\left(\mathbf{X};\overline{\mathbf{w}}_n,\mathbf{w}_z(t)\right)^\top\left(\mathbf{y} - \mathcal{H}\left(\mathbf{X};\overline{\mathbf{w}}_n, \mathbf{0}\right) \right) + \rvr(t),
\end{align}
where
\begin{align*}
\rvr(t) &= (\mathcal{J}_z\left(\mathbf{X};\mathbf{w}_n(t),\mathbf{w}_z(t)\right)-\mathcal{J}_{z1}\left(\mathbf{X};\overline{\mathbf{w}}_n,\mathbf{w}_z(t)\right))^\top\left(\mathbf{y} - \mathcal{H}\left(\mathbf{X};\mathbf{w}_n(t), \mathbf{w}_z(t)\right) \right) \\
&+ \mathcal{J}_{z 1}\left(\mathbf{X};\overline{\mathbf{w}}_n,\mathbf{w}_z(t)\right)^\top\left(\mathcal{H}\left(\mathbf{X};\overline{\mathbf{w}}_n, \mathbf{0}\right) - \mathcal{H}\left(\mathbf{X};{\mathbf{w}}_n(t), \mathbf{w}_z(t)\right) \right),
\end{align*}
and $\mathcal{J}_{z 1}\left(\mathbf{X};\overline{\mathbf{w}}_n,\mathbf{w}_z\right)$ denotes the Jacobian of $\mathcal{H}_{1}\left(\mathbf{X};\overline{\mathbf{w}}_n,\mathbf{w}_z\right) $ with respect to $\rvw_z$.
We next show that, for $t\in [0,\overline{T}_\delta]$, $\|\rvr(t)\|_2= O(\delta^{L-1+\beta_2})$, for some $\beta_2>0$. Note that
\begin{align}
&\hspace{-5em}\left\|\mathcal{J}_z(\mathbf{X};\mathbf{w}_n(t),\mathbf{w}_z(t))-\mathcal{J}_{z1}(\mathbf{X};\overline{\mathbf{w}}_n,\mathbf{w}_z(t)) \right\|_2\nonumber\\
\leq & \left\|\mathcal{J}_z(\mathbf{X};\mathbf{w}_n(t),\mathbf{w}_z(t))-\mathcal{J}_{z1}(\mathbf{X};{\mathbf{w}}_n(t),\mathbf{w}_z(t)) \right\|_2 \nonumber\\
+ & \left\|\mathcal{J}_{z1}(\mathbf{X};{\mathbf{w}}_n(t),\mathbf{w}_z(t))-\mathcal{J}_{z 1}(\mathbf{X};\overline{\mathbf{w}}_n,\mathbf{w}_z(t)) \right\|_2\nonumber\\
= & O(\delta^{K-1}) + \delta^{L-1}\left\|\mathcal{J}_{z1}\left(\mathbf{X};{\mathbf{w}}_n(t),\mathbf{w}_z(t)/\delta\right)-\mathcal{J}_{z 1}\left(\mathbf{X};\overline{\mathbf{w}}_n,\mathbf{w}_z(t)/\delta\right) \right\|_2\nonumber\\
\leq &  O(\delta^{K-1}) + K_2\delta^{L-1}\|{\mathbf{w}}_n(t) -\overline{\mathbf{w}}_n \|_2 = O(\delta^{K-1}) + O(\delta^{L-1+\beta_1}),
\label{rt_bd1}
\end{align}  
where $K_2$ is a large enough constant. The first equality follows from the second condition in \Cref{ass_str} (since $\|\rvw_z(t)\|_2 = O(\delta)$) and $(L-1)$-homogeneity of  $\mathcal{J}_{z1}\left(\mathbf{X};{\mathbf{w}}_n,\mathbf{w}_z\right)$ in $\rvw_z$.  The second inequality holds because $\mathcal{J}_{z1}\left(\mathbf{X};{\mathbf{w}}_n,\mathbf{w}_z\right)$  is locally Lipschitz in $\rvw_n$ and $\rvw_z(t)/\delta$ is bounded for all $t\in [0,\overline{T}_\delta]$ and small $\delta$. The final equality follows from \cref{wn_bd}. Next,
\begin{align}
&\left\|\mathcal{J}_{z 1}\left(\mathbf{X};\overline{\mathbf{w}}_n,\mathbf{w}_z(t)\right)^\top\left(\mathcal{H}\left(\mathbf{X};\overline{\mathbf{w}}_n, \mathbf{0}\right) - \mathcal{H}\left(\mathbf{X};{\mathbf{w}}_n(t), \mathbf{w}_z(t)\right) \right)\right\|_2\nonumber\\
&\leq\left\|\mathcal{J}_{z 1}\left(\mathbf{X};\overline{\mathbf{w}}_n,\mathbf{w}_z(t)\right)\right\|_2\left\|\mathcal{H}\left(\mathbf{X};\overline{\mathbf{w}}_n, \mathbf{0}\right) - \mathcal{H}\left(\mathbf{X};{\mathbf{w}}_n(t), \mathbf{w}_z(t)\right)\right\|_2\nonumber\\
&= \delta^{L-1}\left\|\mathcal{J}_{z 1}\left(\mathbf{X};\overline{\mathbf{w}}_n,\mathbf{w}_z(t)/\delta\right)\right\|_2\left\|\mathcal{H}\left(\mathbf{X};\overline{\mathbf{w}}_n, \mathbf{0}\right) - \mathcal{H}\left(\mathbf{X};{\mathbf{w}}_n(t), \mathbf{w}_z(t)\right)\right\|_2\nonumber\\
&\leq K_3\delta^{L-1}(\|\mathcal{H}(\mathbf{X};\overline{\mathbf{w}}_n, \mathbf{0}) -\mathcal{H}(\mathbf{X};\overline{\mathbf{w}}_n, \mathbf{w}_z(t))\|_2\hspace{-0.1cm}+\hspace{-0.1cm}\|\mathcal{H}(\mathbf{X};\overline{\mathbf{w}}_n, \mathbf{w}_z(t))- \mathcal{H}(\mathbf{X};{\mathbf{w}}_n(t), \mathbf{w}_z(t))\|_2)\nonumber\\
&\leq K_3\delta^{L-1}(K_4\|\rvw_z(t)\|_2 + K_5\|\overline{\mathbf{w}}_n-{\mathbf{w}}_n(t)\|_2) = O(\delta^{L}) + O(\delta^{L-1+\beta_1}),
\label{rt_bd2}
\end{align}
where $K_3,K_4,K_5$ are large enough positive constants. The first equality uses the $(L-1)$-homogeneity of $\mathcal{J}_{z1}\left(\mathbf{X};{\mathbf{w}}_n,\mathbf{w}_z\right)$ in $\rvw_z$. The second inequality is true since $\rvw_z(t)/\delta$ is bounded for all small $\delta>0$ and thus, $\left\|\mathcal{J}_{z 1}\left(\mathbf{X};\overline{\mathbf{w}}_n,\mathbf{w}_z(t)/\delta\right)\right\|_2$ is bounded as well, for all  $t\in [0,\overline{T}_\delta]$. The final inequality is true since $\mathcal{H}(\rmX;\rvw_n,\rvw_z)$ is locally Lipschitz in $\rvw_n$ and $\rvw_z$. The final equality follows since $\|\rvw_z(t)\|_2=O(\delta)$, for all  $t\in [0,\overline{T}_\delta]$, and from \cref{wn_bd}. Therefore, if we define $\beta_2 = \min(K-L,\beta_1,1)$, then from \cref{rt_bd1} and \cref{rt_bd2}, we get
\begin{equation}
\|\rvr(t)\|_2= O(\delta^{L-1+\beta_2}), \text{ for all } t\in  [0,\overline{T}_\delta].
\label{rt_bd}
\end{equation}
Next, note that
\begin{align*}
&\frac{1}{2}\frac{d\|\rvw_z(t)-\rvu_\delta(t)\|_2^2}{dt}\\
&= (\rvw_z(t)-\rvu_\delta(t))^\top(\mathcal{J}_{z 1}\left(\mathbf{X};\overline{\mathbf{w}}_n,\mathbf{w}_z(t)\right)-\mathcal{J}_{z 1}\left(\mathbf{X};\overline{\mathbf{w}}_n,\mathbf{u}_\delta(t)\right))^\top\overline{\rvy} + (\rvw_z(t)-\rvu_\delta(t))^\top\rvr(t)\\
&\leq \|\rvw_z(t)-\rvu_z(t)\|_2\left(\|\overline{\rvy}\|_2\|\mathcal{J}_{z 1}\left(\mathbf{X};\overline{\mathbf{w}}_n,\mathbf{w}_z(t)\right)-\mathcal{J}_{z 1}\left(\mathbf{X};\overline{\mathbf{w}}_n,\mathbf{u}_\delta(t)\right)\|_2 + \|\rvr(t)\|_2\right).
\end{align*}
The above equation can be simplified to get
\begin{align*}
\frac{d\|\rvw_z(t)-\rvu_\delta(t)\|_2}{dt} &\leq  \|\overline{\rvy}\|_2\|\mathcal{J}_{z 1}\left(\mathbf{X};\overline{\mathbf{w}}_n,\mathbf{w}_z(t)\right)-\mathcal{J}_{z 1}\left(\mathbf{X};\overline{\mathbf{w}}_n,\mathbf{u}_\delta(t)\right)\|_2 +   \|\rvr(t)\|_2\\
&=  \delta^{L-1}\|\overline{\rvy}\|_2\|\mathcal{J}_{z 1}\left(\mathbf{X};\overline{\mathbf{w}}_n,\mathbf{w}_z(t)/\delta\right)-\mathcal{J}_{z 1}\left(\mathbf{X};\overline{\mathbf{w}}_n,\mathbf{u}_\delta(t)/\delta\right)\|_2 +   \|\rvr(t)\|_2\\
&\leq  K_6\delta^{L-2}\|\overline{\rvy}\|_2\left\|\rvw_z(t)-\rvu_\delta(t)\right\|_2 + K_7\delta^{L-1+\beta_2},
\end{align*}
where $K_6,K_7$ are large enough positive constants. The first equality uses the $(L-1)$-homogeneity of $\mathcal{J}_{z1}\left(\mathbf{X};{\mathbf{w}}_n,\mathbf{w}_z\right)$ in $\rvw_z$. The second inequality holds because $\rvw_z(t)/\delta$ and $\rvu_\delta(t)/\delta$ are bounded for small $\delta>0$, $\mathcal{J}_{z1}\left(\mathbf{X};{\mathbf{w}}_n,\mathbf{w}_z\right)$  is locally Lipschitz in $\rvw_z$, and from  \cref{rt_bd}. Integrating the above equation from $0$ to $t$ and then using Lemma \ref{gronwall} gives us
\begin{equation*}
\|\rvw_z(t)-\rvu_\delta(t)\|_2 \leq K_7e^{K_6\delta^{L-2}\|\overline{\rvy}\|_2t}\delta^{L-1+\beta_2} t, \text{ for all } t\in [0,\overline{T}_\delta].
\end{equation*}
Since $\overline{T}_\delta\leq \overline{T}_\epsilon = T_\epsilon/\delta^{L-2}$, we get
\begin{equation}
\|\rvw_z(t)-\rvu_\delta(t)\|_2 \leq K_7T_\epsilon e^{K_6\|\overline{\rvy}\|_2T_\epsilon}\delta^{1+\beta_2} = O(\delta^{1+\beta_2}), \text{ for all } t\in [0,\overline{T}_\delta].
\label{wz_bd}
\end{equation}
Now, for the sake of contradiction, let $\overline{T}_\delta< T_\epsilon/\delta^{L-2}$. Then, from the definition of $\overline{T}_\delta$, at least one of the following equations must hold  
\begin{equation*}
\|\mathbf{w}_n(t) - \overline{\mathbf{w}}_n\|_2^2 = \epsilon^2 \text{ or } \|\mathbf{w}_z(t)-\mathbf{u}_\delta(t)\|_2^2/\delta^2 = \epsilon^2.
\end{equation*}
However, from \cref{wn_bd} and \cref{wz_bd}, we observe that neither of the above two equations can hold leading to a contradiction. Hence, $\overline{T}_\delta = T_\epsilon/\delta^{L-2}$.  Thus, from \cref{wn_bd}, \cref{wz_bd} and since $\|\rvu_{\delta}(t)\|_2 = O(\delta)$, we get 
\begin{equation*}
\|\rvw_n(t)-\overline{\rvw}_n\|_2 = O(\delta^{\beta_{1}}) \text{ and }	\|\rvw_z(t)\|_2 = O(\delta),  \text{ for all } t\in \left[0,\frac{T_\epsilon}{\delta^{L-2}}\right], \text{ where } \beta_1>0.
\end{equation*}
Now, define $\overline{T} = T_\epsilon/\delta^{L-2}$. From \cref{wz_bd}, we know
\begin{equation}
\left\|{\mathbf{w}_z(\overline{T})}- \mathbf{u}_\delta(\overline{T})\right\|_2 = O(\delta^{1+\beta_2}).
\end{equation}
Thus, we may write ${\mathbf{w}_z(\overline{T})} = \mathbf{u}_\delta(\overline{T}) + \zeta$, where $\|\zeta\|_2= O(\delta^{1+\beta_2}).$ If $\rvz \in \mathcal{S}(\rvz_*;\mathcal{N}_{\overline{\rvy},\mathcal{H}_1})$, then since $\|\mathbf{u}_\delta(\overline{T})\|_2\geq \eta\delta$, we have $\|\mathbf{u}_\delta(\overline{T}) + \zeta\|_2 \geq \eta\delta/2$, for all sufficiently small $\delta$. Hence,
\begin{equation*}
\frac{\mathbf{w}_z(\overline{T})}{\|\mathbf{w}_z(\overline{T})\|_2} = \frac{\mathbf{u}_\delta(\overline{T}) + \zeta}{\|\mathbf{u}_\delta(\overline{T}) + \zeta\|_2},
\end{equation*}
which implies
\begin{equation*}
\frac{\mathbf{w}_z(\overline{T})^\top\rvz_*}{\|\mathbf{w}_z(\overline{T})\|_2} = \frac{\mathbf{u}_\delta(\overline{T})^\top\rvz_* + \zeta^\top\rvz_*}{\|\mathbf{u}_\delta(\overline{T}) + \zeta\|_2} = \left(\frac{\mathbf{u}_\delta(\overline{T})^\top\rvz_*}{\|\mathbf{u}_\delta(\overline{T})\|_2}\right)\frac{\|\mathbf{u}_\delta(\overline{T})\|_2}{\|\mathbf{u}_\delta(\overline{T}) + \zeta\|_2} + \frac{\zeta^\top\rvz_*}{\|\mathbf{u}_\delta(\overline{T})+ \zeta\|_2}.
\end{equation*}
Now, note that
\begin{equation*}
\frac{\|\mathbf{u}_\delta(\overline{T})\|_2}{\|\mathbf{u}_\delta(\overline{T})+ \zeta\|_2} \geq \frac{\|\mathbf{u}_\delta(\overline{T})\|_2}{\|\mathbf{u}_\delta(\overline{T})\|_2 + \|\zeta\|_2} = \frac{1}{1+\frac{ \|\zeta\|_2}{\|\mathbf{u}_\delta(\overline{T})\|_2}}\geq 1 - \frac{ \|\zeta\|_2}{\|\mathbf{u}_\delta(\overline{T})\|_2} = 1 - O(\delta^{\beta_2}),
\end{equation*}
and
\begin{equation*}
\frac{\zeta^\top\rvz_*}{\|\mathbf{u}(\overline{T})+ \zeta\|_2} \geq -O(\delta^{\beta_2}).
\end{equation*}
Hence, for all sufficiently small $\delta>0$, we get
\begin{equation*}
\frac{\mathbf{w}_z(\overline{T})^\top\rvz_*}{\|\mathbf{w}_z(\overline{T})\|_2} \geq (1-\epsilon)\left(1 - O(\delta^{\beta_2})\right) - O(\delta^{\beta_2}) = 1 - O(\epsilon).
\end{equation*}
Else,  $\|\mathbf{u}_\delta(\overline{T}) \|_2 \leq \epsilon\delta$, which implies
\begin{equation*}
\|\mathbf{w}_z(\overline{T})\|_2 \leq \|\mathbf{u}_\delta(\overline{T}) \|_2 + \|\zeta\|_2  = \epsilon O(\delta) + O(\delta^{1+\beta_2}) = \epsilon O(\delta),
\end{equation*}
for all sufficiently small $\delta>0$. This completes the proof. \hfill
\subsection{Proof of Lemma \ref{lemma_poly_str}}
\label{pf_decomp_poly}
For all $l\in [L-1]$ and $1\leq k\leq l$, define
\begin{align*}
&\phi_0 = \rvx, \rvh_1 = \rmW_1\rvx, \phi_l = \sigma(\rvh_l), \rvh_{l+1} = \rmW_{l+1}\phi_l,\text{ and }\\
& f_{l,k}(\rvs) = \sigma\left(\rmN_l\sigma\left(\rmN_{l-1}\sigma(\cdots\sigma(\rmN_k\sigma(\rvs))\cdots)\right)\right).
\end{align*}
We first prove the second case, where $\alpha= 1$ and $p\geq 1$.\\
\textbf{Case 2 ($\alpha= 1$ and $p\geq 1$):} 
For all $2\leq l\leq L-1$, we claim that
\begin{equation}
\sigma(\rvh_{l}) = \begin{bmatrix}
\sigma({\rmN}_{l}\sigma(\rmN_{l-1}\sigma(\cdots\sigma(\rmN_1\rvx)))) + p_l(\rvx;\rvw_n,\rvw_z) + q_l(\rvx;\rvw_n,\rvw_z)\\s_l(\rvx;\rvw_n,\rvw_z) + r_l(\rvx;\rvw_n,\rvw_z)
\end{bmatrix},
\label{h_dec_poly}
\end{equation}
where $p_l(\rvx;\rvw_n,\rvw_z)$ and $s_l(\rvx;\rvw_n,\rvw_z)$ are a vector-valued homogeneous polynomial functions in $\rvw_z$ with degree of homogeneity  $p+1$ and $p$, respectively. More specifically,
\begin{align*}
p_l(\rvx;\rvw_n,\rvw_z) &= \sum_{k=1}^{l-2} \nabla_\rvs f_{l,k+2}(\rmN_{k+1}\rvg_{k}(\rvx))\rmB_{k+1}\sigma(\rmA_{k}\rvg_{k-1}(\rvx)) \\
&+  \diag\left(\sigma'(\rmN_{l}\rvg_{l-1}(\rvx))\right)\rmB_l\sigma(\rmA_{l-1}\rvg_{l-2}(\rvx)),\\
 s_l(\rvx;\rvw_n,\rvw_z)  &= \sigma(\rmA_l g_{l-1}(\rvx)).
\end{align*}
Furthermore, $q_l(\rvx;\rvw_n,\rvw_z)$ ($r_l(\rvx;\rvw_n,\rvw_z)$) is a vector-valued function that is a sum of homogeneous polynomial functions in $\rvw_z$, where the degree of homogeneity of each function is strictly greater than $p+1$ ($p$). \\
Now, if the claim stated in \cref{h_dec_poly} is true, then we get
\begin{align*}
&\mathcal{H}(\rvx;\rvw_n,\mathbf{w}_z)= \rmW_L\sigma(\rvh_{L-1})\\
&=\begin{bmatrix}
{{\rmN}_{L}}&\rmB_L
\end{bmatrix}\begin{bmatrix}
\sigma({\rmN}_{L-1}\sigma(\rmN_{L-2}\sigma(\cdots\sigma(\rmN_1\rvx)\cdots))) + p_{L-1}(\rvx;\rvw_n,\rvw_z) + q_{L-1}(\rvx;\rvw_n,\rvw_z)\\ {s}_{L-1}(\rvx;\rvw_n,\rvw_z) + r_{L-1}(\rvx;\rvw_n,\rvw_z)
\end{bmatrix}\\
&=  {{\rmN}_{L}}\sigma({\rmN}_{L-1}\sigma(\rmN_{L-2}\sigma(\cdots\sigma(\rmN_1\rvx)\cdots)))  +  {{\rmN}_{L}}{p}_{L-1}(\rvx;\rvw_n,\rvw_z) +  {{\rmN}_{L}}{q}_{L-1}(\rvx;\rvw_n,\rvw_z)\\
& + \rmB_L{s}_{L-1}(\rvx;\rvw_n,\rvw_z) + \rmB_L{r}_{L-1}(\rvx;\rvw_n,\rvw_z)\\
&= \mathcal{H}(\rvx;\rvw_n,\mathbf{0}) +  {{\rmN}_{L}}{p}_{L-1}(\rvx;\rvw_n,\rvw_z) +  {{\rmN}_{L}}{q}_{L-1}(\rvx;\rvw_n,\rvw_z) + \rmB_L{s}_{L-1}(\rvx;\rvw_n,\rvw_z)\\
& + \rmB_L{r}_{L-1}(\rvx;\rvw_n,\rvw_z),
\end{align*}
where the final equality uses $\mathcal{H}(\rvx;\rvw_n,\mathbf{0}) = {\rmN}_{L}\sigma({\rmN}_{L-1}\sigma(\rmN_{L-2}\sigma(\cdots\sigma(\rmN_1\rvx)\cdots)))$. Here, ${{\rmN}_{L}}{p}_{L-1}(\rvx;\rvw_n,\rvw_z)$ is $(p+1)$-homogeneous in $\rvw_z$, because $\rmN_L$ does not belong to $\rvw_z$ and ${p}_{L-1}(\rvx;\rvw_n,\rvw_z)$ is $(p+1)$-homogeneous in $\rvw_z$. Also, using the definition of ${q}_{L-1}(\rvx;\rvw_n,\rvw_z)$, we get that ${{\rmN}_{L}}{q}_{L-1}(\rvx;\rvw_n,\rvw_z)$  will be sum of homogeneous polynomial functions in $\rvw_z$, where the degree of homogeneity of  each function is strictly greater than $p+1$. 

Next, $\rmB_L{s}_{L-1}(\rvx;\rvw_n,\rvw_z) $ is $(p+1)$-homogeneous in $\rvw_z$, because $\rmB_L$ is $1$-homogeneous and ${s}_{L-1}(\rvx;\rvw_n,\rvw_z)$ is $p$-homogeneous in $\rvw_z$. Also, using the definition of ${r}_{L-1}(\rvx;\rvw_n,\rvw_z)$ and since $\rmB_L$ is $1$-homogeneous in $\rvw_z$, ${{\rmB}_{L}}{r}_{L-1}(\rvx;\rvw_n,\rvw_z)$  will be sum of homogeneous polynomial functions in $\rvw_z$, where the degree of homogeneity of  each function is strictly greater than $p+1$. Therefore, for some $m\geq 2$, 
\begin{equation*}
\mathcal{H}(\rvx;\rvw_n,\mathbf{w}_z) = \mathcal{H}(\rvx;\rvw_n,\mathbf{0}) + \sum_{i=1}^m \mathcal{H}_i(\rvx;\rvw_n,\mathbf{w}_z),
\end{equation*}
where $\mathcal{H}_1(\rvx;\rvw_n,\mathbf{w}_z) \coloneqq {{\rmN}_{L}}{p}_{L-1}(\rvx;\rvw_n,\rvw_z) + {{\rmB}_{L}}{s}_{L-1}(\rvx;\rvw_n,\rvw_z)$ and it is $(p+1)$-homogeneous in $\rvw_z$. Each of $\{\mathcal{H}_i(\rvx;\rvw_n,\mathbf{w}_z)\}_{i=2}^m$ have degree of homogeneity strictly greater than $p+1$.\\
Then, using the definition of ${p}_{L-1}(\rvx;\rvw_n,\rvw_z)$ and ${s}_{L-1}(\rvx;\rvw_n,\rvw_z)$, we get
\begin{align*}
\mathcal{H}_1(\rvx;\rvw_n,\mathbf{w}_z) &= \sum_{k=1}^{L-3} \rmN_L\nabla_\rvs f_{L-1,k+2}(\rmN_{k+1}\rvg_{k}(\rvx))\rmB_{k+1}\sigma(\rmA_{k}\rvg_{k-1}(\rvx))\\
& +  \rmN_L\diag\left(\sigma'(\rmN_{L-1}\rvg_{L-2}(\rvx))\right)\rmB_{L-1}\sigma(\rmA_{L-2}\rvg_{L-3}(\rvx))
+ \rmB_L \sigma(\rmA_{L-1}g_{L-2}(\rvx)).
\end{align*}
Note that $ f_{L,k}(\rvs)  = \rmN_L f_{L-1,k}(\rvs)$, for all $2\leq k\leq L-1$, and  $f_{L,L}(\rvs) = \rmN_L\sigma(\rvs)$. Hence,  $ \nabla_\rvs f_{L,k}^\top(\rvs)  = \rmN_L \nabla_\rvs f_{L-1,k}(\rvs)$, and  $ \nabla_\rvs f_{L,L}^\top(\rvs)  = \rmN_L \text{diag}(\sigma'(\rvs))$. Therefore,
\begin{equation*}
\mathcal{H}_1(\rvx;\rvw_n,\mathbf{w}_z)
=  \sum_{k=1}^{L-2}\nabla_\rvs f_{L,k+2}^\top(\rmN_{k+1}\rvg_{k}(\rvx))\rmB_{k+1}\sigma(\rmA_{k}\rvg_{k-1}(\rvx)) + \rmB_L \sigma(\rmA_{L-1}g_{L-2}(\rvx)),
\end{equation*}
which completes the proof. \\
We next prove the claim in \cref{h_dec_poly} via induction. For $l=2$, since $\sigma(x) = x^p$, from Binomial theorem we get
\begin{align*}
\sigma(\rvh_2) = & \sigma\left(\begin{bmatrix}
{{\rmN}_{2}}&\rmB_2  \\
\rmA_2& \rmC_2 \\
\end{bmatrix}\sigma\left( \begin{bmatrix}
{{\rmN}_{1}\rvx}\\\rmA_1\rvx
\end{bmatrix}\right)\right) = \begin{bmatrix}
\sigma({{\rmN}_{2}}\sigma( {{\rmN}_{1}\rvx}) + \rmB_2\sigma(\rmA_1\rvx)) \\ \sigma(\rmA_2\sigma( {{\rmN}_{1}\rvx})  + \rmC_2\sigma(\rmA_1\rvx))
\end{bmatrix} \\
= & \begin{bmatrix}
(\rmN_{2}\sigma(\rmN_{1}\rvx))^p + p({{\rmN}_{2}}\sigma( {{\rmN}_{1}\rvx}))^{p-1} \odot (\rmB_2\sigma(\rmA_1\rvx)) + q_2(\rvx;\rvw_n,\rvw_z) \\ (\rmA_2\sigma( {{\rmN}_{1}\rvx}))^p  + r_2(\rvx;\rvw_n,\rvw_z)
\end{bmatrix},
\end{align*}
where
\begin{align*}
&q_2(\rvx;\rvw_n,\rvw_z) \coloneqq  \sum_{k=2}^p {p \choose k}({{\rmN}_{2}}\sigma({{\rmN}_{1}\rvx}))^{p-k} \odot (\rmB_2\sigma(\rmA_1\rvx))^{k}, \text{ and } \\
&r_2(\rvx;\rvw_n,\rvw_z) \coloneqq \sum_{k=1}^p {p \choose k}({{\rmA}_{2}}\sigma({{\rmN}_{1}\rvx}))^{p-k} \odot (\rmC_2\sigma(\rmA_1\rvx))^{k}
\end{align*}
Define $p_2(\rvx;\rvw_n,\rvw_z) \coloneqq  p({{\rmN}_{2}}\sigma( {{\rmN}_{1}\rvx}))^{p-1} \odot (\rmB_2\sigma(\rmA_1\rvx))$ and $s_2(\rvx;\rvw_n,\rvw_z) \coloneqq (\rmA_2\sigma( {{\rmN}_{1}\rvx}))^p $, then they are $(p+1)$-homogeneous and $p$-homogeneous in $\rvw_z$, respectively. Observe that $q_2(\rvx;\rvw_n,\rvw_z)$  $(r_2(\rvx;\rvw_n,\rvw_z))$ is sum of homogeneous polynomial functions in $\rvw_z$, where the degree of homogeneity each function is strictly greater than $p+1$ $(p)$. Also, since $\sigma'(x) = px^{p-1}$, we have
\begin{equation*}
p_2(\rvx;\rvw_n,\rvw_z) =  p({{\rmN}_{2}}\sigma( {{\rmN}_{1}\rvx}))^{p-1} \odot (\rmB_2\sigma(\rmA_1\rvx)) = \text{diag}(\sigma'({{\rmN}_{2}}\sigma( {{\rmN}_{1}\rvx})))\rmB_2\sigma(\rmA_1\rvx).
\end{equation*}
Hence, the claim in \cref{h_dec_poly} is true for $l=2$.  Now, suppose the claim is true for some $2<l<L-1$. For brevity, let $\widetilde{p}_l(\rvx;\rvw_n,\rvw_z) \coloneqq p_l(\rvx;\rvw_n,\rvw_z) + q_l(\rvx;\rvw_n,\rvw_z)$ and $\widetilde{s}_l(\rvx;\rvw_n,\rvw_z) \coloneqq  s_l(\rvx;\rvw_n,\rvw_z) + r_l(\rvx;\rvw_n,\rvw_z)$. Then,
\begin{align*}
\sigma(\rvh_{l+1}) &= \sigma\left(\begin{bmatrix}
{{\rmN}_{l+1}}&\rmB_{l+1} \\
\rmA_{l+1}& \rmC_{l+1}\\
\end{bmatrix}\begin{bmatrix}
\sigma({\rmN}_{l}\sigma(\cdots\sigma(\rmN_1\rvx)\cdots)) + p_l(\rvx;\rvw_n,\rvw_z) + q_l(\rvx;\rvw_n,\rvw_z)\\s_l(\rvx;\rvw_n,\rvw_z) + r_l(\rvx;\rvw_n,\rvw_z)
\end{bmatrix}\right)\\
&= \sigma\left(\begin{bmatrix}
{{\rmN}_{l+1}}&\rmB_{l+1} \\
\rmA_{l+1}& \rmC_{l+1}\\
\end{bmatrix}\begin{bmatrix}
\sigma({\rmN}_{l}\sigma(\cdots\sigma(\rmN_1\rvx)\cdots)) + \widetilde{p}_l(\rvx;\rvw_n,\rvw_z) \\\widetilde{s}_l(\rvx;\rvw_n,\rvw_z)
\end{bmatrix}\right)\\
&= \sigma\left(\begin{bmatrix}
{{\rmN}_{l+1}}\sigma({\rmN}_{l}\sigma(\cdots\sigma(\rmN_1\rvx)\cdots)) + {{\rmN}_{l+1}}\widetilde{p}_l(\rvx;\rvw_n,\rvw_z) + {{\rmB}_{l+1}}\widetilde{s}_l(\rvx;\rvw_n,\rvw_z) \\\rmA_{l+1}\sigma({\rmN}_{l}\sigma(\cdots\sigma(\rmN_1\rvx)\cdots))  + \rmA_{l+1}\widetilde{p}_l(\rvx;\rvw_n,\rvw_z) +  \rmC_{l+1}\widetilde{s}_l(\rvx;\rvw_n,\rvw_z)
\end{bmatrix}\right)
\end{align*}
Let 
\begin{align*}
&\hat{p}_l(\rvx;\rvw_n,\rvw_z) \coloneqq {{\rmN}_{l+1}}\widetilde{p}_l(\rvx;\rvw_n,\rvw_z) + {{\rmB}_{l+1}}\widetilde{s}_l(\rvx;\rvw_n,\rvw_z),\\
& \hat{s}_l(\rvx;\rvw_n,\rvw_z)  \coloneqq \rmA_{l+1}\widetilde{p}_l(\rvx;\rvw_n,\rvw_z) +  \rmC_{l+1}\widetilde{s}_l(\rvx;\rvw_n,\rvw_z),  \text{ and } \rvx_l = \sigma({\rmN}_{l}\cdots\sigma(\rmN_1\rvx).
\end{align*}
 Then,
\begin{align*}
&\sigma(\rvh_{l+1}) = \begin{bmatrix}
\sigma\left({{\rmN}_{l+1}}\rvx_l ) + \hat{p}_l(\rvx;\rvw_n,\rvw_z)\right) \\\sigma\left({{\rmA}_{l+1}}\rvx_l )  + \hat{s}_l(\rvx;\rvw_n,\rvw_z)\right)
\end{bmatrix}\\
= & \begin{bmatrix}
({{\rmN}_{l+1}}\rvx_l )^p + p({{\rmN}_{l+1}}\rvx_l )^{p-1}\odot\hat{p}_l(\rvx;\rvw_n,\rvw_z) + \hat{q}_l(\rvx;\rvw_n,\rvw_z) \\\left(\rmA_{l+1}\rvx_l \right)^p  + {r}_{l+1}(\rvx;\rvw_n,\rvw_z)
\end{bmatrix},
\end{align*}
where 
\begin{align*}
&\hat{q}_l(\rvx;\rvw_n,\rvw_z) \coloneqq \sum_{k=2}^p {p \choose k} \left({{\rmN}_{l+1}}\rvx_l \right)^{p-k}\odot\hat{p}_l(\rvx;\rvw_n,\rvw_z)^k, \text{ and }\\
&{r}_{l+1}(\rvx;\rvw_n,\rvw_z) \coloneqq  \sum_{k=1}^p {p \choose k} \left(\rmA_{l+1}\rvx_l \right)^{p-k}\odot\hat{s}_l(\rvx;\rvw_n,\rvw_z)^{k}.
\end{align*}
Now, note that $\widetilde{p}_l(\rvx;\rvw_n,\rvw_z)$ and $\widetilde{s}_l(\rvx;\rvw_n,\rvw_z)$ are sums of homogeneous polynomial functions in $\rvw_z$ where degree of homogeneity of each function is greater than or equal to $p+1$ and $p$, respectively. Hence, same is true for $\hat{s}_l(\rvx;\rvw_n,\rvw_z)$, but the degree of homogeneity of each function is greater than or equal to $p$. This in turn implies ${r}_{l+1}(\rvx;\rvw_n,\rvw_z)$ is a sum of homogeneous polynomial functions in $\rvw_z$ with degree of homogeneity greater than $p$. Also,
\begin{equation*}
\rvs_{l+1}(\rvx;\rvw_n,\rvw_z) = \left(\rmA_{l+1}\rvx_l\right)^p = \sigma(\rmA_{l+1}\sigma({\rmN}_{l}\sigma(\hdots\sigma(\rmN_1\rvx)))),
\end{equation*}
and $\rvs_{l+1}(\rvx;\rvw_n,\rvw_z) $ is $p$-homogeneous in $\rvw_z$. We next define 
\begin{align*}
&p_{l+1}(\rvx;\rvw_n,\rvw_z) = p\left({{\rmN}_{l+1}}\rvx_l\right)^{p-1}\odot\left( {{\rmN}_{l+1}}{p}_l(\rvx;\rvw_n,\rvw_z) + {{\rmB}_{l+1}}{s}_l(\rvx;\rvw_n,\rvw_z)\right),\\
& q_{l+1}(\rvx;\rvw_n,\rvw_z) = p\left({{\rmN}_{l+1}}\rvx_l\right)^{p-1}\odot\left( {{\rmN}_{l+1}}{q}_l(\rvx;\rvw_n,\rvw_z) + {{\rmB}_{l+1}}{r}_l(\rvx;\rvw_n,\rvw_z)\right) + \hat{q}_l(\rvx;\rvw_n,\rvw_z).
\end{align*}
By expanding $\hat{p}_l(\rvx;\rvw_n,\rvw_z)$, we get
\begin{equation*}
\sigma(\rvh_{l+1}) = \begin{bmatrix}
\sigma\left({{\rmN}_{l+1}}\sigma({\rmN}_{l}\sigma(\hdots\sigma(\rmN_1\rvx))) \right) + {p}_{l+1}(\rvx;\rvw_n,\rvw_z) + {q}_{l+1}(\rvx;\rvw_n,\rvw_z) \\\sigma\left(\rmA_{l+1}\sigma({\rmN}_{l}\sigma(\hdots\sigma(\rmN_1\rvx)))  \right) + {s}_{l+1}(\rvx;\rvw_n,\rvw_z) + {r}_{l+1}(\rvx;\rvw_n,\rvw_z)
\end{bmatrix},
\end{equation*}
Note that $ {{\rmN}_{l+1}}{p}_l(\rvx;\rvw_n,\rvw_z) + {{\rmB}_{l+1}}{s}_l(\rvx;\rvw_n,\rvw_z)$ is $(p+1)$-homogeneous in $\rvw_z$, which implies $p_{l+1}(\rvx;\rvw_n,\rvw_z)$ is $(p+1)$-homogeneous in $\rvw_z$. Also, $\hat{p}_l(\rvx; \rvw_n,\rvw_z)$ is a sum of homogeneous polynomial functions in $\rvw_z$ where degree of homogeneity of each function is greater than or equal to $p +1$. Thus, the same is true for $\hat{q}_l(\rvx; \rvw_n,\rvw_z)$. Since  ${{\rmN}_{l+1}}{q}_l(\rvx;\rvw_n,\rvw_z) + {{\rmB}_{l+1}}{r}_l(\rvx;\rvw_n,\rvw_z) $ is sum of homogeneous functions where degree of homogeneity of each function is greater than $p+1$, therefore the same is true for  $q_{l+1}(\rvx;\rvw_n,\rvw_z)$. This proves the claim in \cref{h_dec_poly}. \\
Finally, we simplify  $p_{l+1}(\rvx;\rvw_n,\rvw_z)$. For all $l\in [L-1]$ and $k\leq l$, we have 
\begin{equation}
\rvx_l = g_{l}(\rvx) = \sigma({\rmN}_{l}\sigma({\rmN}_{l-1}\hdots\sigma(\rmN_1\rvx)) = f_{l,k}(\rmN_{k-1}g_{k-2}(\rvx)).
\label{gl_simp}
\end{equation}
Since $f_{l+1,k}(\rvs) = \sigma(\rmN_{l+1}f_{l,k}(\rvs))$, we get
\begin{equation}
\nabla_\rvs f_{l+1,k}(\rvs) = \diag(\sigma'(\rmN_{l+1}f_{l,k}(\rvs)))\rmN_{l+1}\nabla_\rvs f_{l,k}(\rvs).
\label{fl_grad_simp}
\end{equation}
Hence, using the definition of  ${s}_l(\rvx;\rvw_n,\rvw_z)$, we get
\begin{equation}
p\left({{\rmN}_{l+1}}\rvx_l\right)^{p-1}\odot\left( {{\rmB}_{l+1}}{s}_l(\rvx;\rvw_n,\rvw_z)\right) = \diag(\sigma'(\rmN_{l+1}g_{l}(\rvx)))\rmB_{l+1}\sigma(\rmA_lg_{l-1}(\rvx)).
\label{pl_half1}
\end{equation}
Also, using the definition of  ${p}_l(\rvx;\rvw_n,\rvw_z)$, we get
\begin{align}
&\hspace{-3em}p\left({{\rmN}_{l+1}}\rvx_l\right)^{p-1}\odot\left( {{\rmN}_{l+1}}{p}_l(\rvx;\rvw_n,\rvw_z)\right)\nonumber \\
= &\diag(\sigma'(\rmN_{l+1}g_{l}(\rvx))){{\rmN}_{l+1}} \sum_{k=1}^{l-2} \nabla_\rvs f_{l,k+2}(\rmN_{k+1}\rvg_{k}(\rvx))\rmB_{k+1}\sigma(\rmA_{k}\rvg_{k-1}(\rvx)) \nonumber\\
&+  \diag(\sigma'(\rmN_{l+1}g_{l}(\rvx))){{\rmN}_{l+1}}\diag\left(\sigma'(\rmN_{l}\rvg_{l-1}(\rvx))\right)\rmB_l\sigma(\rmA_{l-1}\rvg_{l-2}(\rvx))\nonumber\\
= &\sum_{k=1}^{l-2} \nabla_\rvs f_{l+1,k+2}(\rmN_{k+1}\rvg_{k}(\rvx))\rmB_{k+1}\sigma(\rmA_{k}\rvg_{k-1}(\rvx)) + \nabla_\rvs f_{l+1,l+1}(\rmN_{l}\rvg_{l-1}(\rvx)),
\label{pl_half2}
\end{align}
where the second equality uses \cref{gl_simp} and \cref{fl_grad_simp}. Combining \cref{pl_half1} and \cref{pl_half2} gives us the expression for ${p}_{l+1}(\rvx;\rvw_n,\rvw_z)$, which completes the proof for this case.\\

\noindent We next move towards proving the first case.\\
\textbf{Case 1 ($\alpha\neq 1$ and $p\geq 4$):} For all $2\leq l\leq L-1$, we claim that
\begin{equation}
\sigma(\rvh_{l}) = \begin{bmatrix}
\sigma({\rmN}_{l}\sigma(\rmN_{l-1}\sigma(\hdots\rmN_2\sigma(\rmN_1\rvx)))) + p_l(\rvx;\rvw_n,\rvw_z) + q_l(\rvx;\rvw_n,\rvw_z)\\s_l(\rvx;\rvw_n,\rvw_z) + r_l(\rvx;\rvw_n,\rvw_z)
\end{bmatrix},
\label{claim_relu}
\end{equation}
where $p_l(\rvx;\rvw_n,\rvw_z)$ and $s_l(\rvx;\rvw_n,\rvw_z)$ are a vector-valued homogeneous functions in $\rvw_z$ with degree of homogeneity  $p+1$ and $p$ respectively. More specifically,
\begin{align*}
p_l(\rvx;\rvw_n,\rvw_z) &= \sum_{k=1}^{l-2} \nabla_\rvs f_{l,k+2}(\rmN_{k+1}\rvg_{k}(\rvx))\rmB_{k+1}\sigma(\rmA_{k}\rvg_{k-1}(\rvx))\\
& +  \diag\left(\sigma'(\rmN_{l}\rvg_{l-1}(\rvx))\right)\rmB_l\sigma(\rmA_{l-1}\rvg_{l-2}(\rvx)), \\
s_l(\rvx;\rvw_n,\rvw_z)  &= \sigma(\rmA_l g_{l-1}(\rvx)).
\end{align*}
Furthermore, if $\|\rvw_z\|_2 = O(\delta)$, then 
\begin{align*}
&\|q_l(\rvx;\rvw_n,\rvw_z)\|_2  = O(\delta^{2p+1}) = \|\nabla_{\rvw_n}q_l(\rvx;\rvw_n,\rvw_z)\|_2 , \|\nabla_{\rvw_z}q_l(\rvx;\rvw_n,\rvw_z)\|_2 = O(\delta^{2p}), \text{ and }\\
&\|r_l(\rvx;\rvw_n,\rvw_z)\|_2  = O(\delta^{2p}) = \|\nabla_{\rvw_n}r_l(\rvx;\rvw_n,\rvw_z)\|_2 , \|\nabla_{\rvw_z}r_l(\rvx;\rvw_n,\rvw_z)\|_2 = O(\delta^{2p-1}).
\end{align*}
Now, if the claim stated in \cref{claim_relu} is true, then we have
\begin{align*}
&\mathcal{H}(\rvx;\rvw_n,\mathbf{w}_z)= \rmW_L\sigma(\rvh_{L-1})\\
&=\begin{bmatrix}
{{\rmN}_{L}}&\rmB_L
\end{bmatrix}\begin{bmatrix}
\sigma({\rmN}_{L-1}\sigma(\rmN_{L-2}\sigma(\cdots\sigma(\rmN_1\rvx)\cdots))) + p_{L-1}(\rvx;\rvw_n,\rvw_z) + q_{L-1}(\rvx;\rvw_n,\rvw_z)\\ {s}_{L-1}(\rvx;\rvw_n,\rvw_z) + r_{L-1}(\rvx;\rvw_n,\rvw_z)
\end{bmatrix}\\
&=  {{\rmN}_{L}}\sigma({\rmN}_{L-1}\sigma(\rmN_{L-2}\sigma(\cdots\sigma(\rmN_1\rvx)\cdots)))  +  {{\rmN}_{L}}{p}_{L-1}(\rvx;\rvw_n,\rvw_z) +  {{\rmN}_{L}}{q}_{L-1}(\rvx;\rvw_n,\rvw_z)\\
& + \rmB_L{s}_{L-1}(\rvx;\rvw_n,\rvw_z) + \rmB_L{r}_{L-1}(\rvx;\rvw_n,\rvw_z)\\
&= \mathcal{H}(\rvx;\rvw_n,\mathbf{0}) +  {{\rmN}_{L}}{p}_{L-1}(\rvx;\rvw_n,\rvw_z) +  {{\rmN}_{L}}{q}_{L-1}(\rvx;\rvw_n,\rvw_z) + \rmB_L{s}_{L-1}(\rvx;\rvw_n,\rvw_z)\\
& + \rmB_L{r}_{L-1}(\rvx;\rvw_n,\rvw_z),
\end{align*}
where the final equality follows since $\mathcal{H}(\rvx;\rvw_n,\mathbf{0}) = {{\rmN}_{L}}\sigma({\rmN}_{L-1}\sigma(\rmN_{L-2}\sigma(\cdots\sigma(\rmN_1\rvx)\cdots)))$. Now, define $\mathcal{H}_1(\rvx;\rvw_n,\rvw_z) \coloneqq {{\rmN}_{L}}{p}_{L-1}(\rvx;\rvw_n,\rvw_z) + \rmB_L{s}_{L-1}(\rvx;\rvw_n,\rvw_z)$, then it follows analogously to Case 2 that $\mathcal{H}_1(\rvx;\rvw_n,\rvw_z)$ is $(p+1)$-homogeneous in $\rvw_z$.

We next derive bounds on the remainder terms and gradient. Because the dependencies on $(\rvx;\rvw_n,\rvw_z)$ remain constant throughout this derivation, we omit them from the notation of $q_{L-1}$ and $r_{L-1}$ to prevent clutter. We will use this shorthand for other terms as well throughout the remainder of the proof. Assume $\|\rvw_z\|_2 = O(\delta)$, then
\begin{align*}
\| {{\rmN}_{L}}{q}_{L-1}(\rvx;\rvw_n,\rvw_z)+\rmB_Lr_{L-1}(\rvx;\rvw_n,\rvw_z)\|_2 &\leq  \|{{\rmN}_{L}}{q}_{L-1}\|_2  +  \|\rmB_Lr_{L-1}\|_2 \\
&\leq \|{{\rmN}_{L}}\|_2\|{q}_{L-1}\|_2  +  \|\rmB_L\|_2\|r_{L-1}\|_2\\
&= O(\delta^{2p+1}) + O(\delta)O(\delta^{2p}) = O(\delta^{2p+1}),
\end{align*}
where the penultimate equality follows since $\|q_{L-1}\|_2  = O(\delta^{2p+1})$, $\|r_{L-1}\|_2  = O(\delta^{2p})$, and $\|\rmB_L\|_2 = O(\delta)$. Next,
\begin{align*}
&\hspace{-3em}\|\nabla_{\rvw_z} {{\rmN}_{L}}{q}_{L-1}(\rvx;\rvw_n,\rvw_z)+\nabla_{\rvw_z} \rmB_Lr_{L-1}(\rvx;\rvw_n,\rvw_z)\|_2\\
&\leq \| {{\rmN}_{L}}\nabla_{\rvw_z}{q}_{L-1}\|_2+\|\nabla_{\rvw_z} \rmB_L\|_2\|r_{L-1}\|_2+ \|\rmB_L\|_2\|\nabla_{\rvw_z}r_{L-1}\|_2\\
&=  O(\delta^{2p}) + O(1)O(\delta^{2p}) + O(\delta)O(\delta^{2p-1}) = O(\delta^{2p}),
\end{align*}
where the penultimate equality follows since $\|\nabla_{\rvw_z}q_{L-1}\|_2  = O(\delta^{2p})$, $\|\nabla_{\rvw_z}r_{L-1}\|_2  = O(\delta^{2p-1})$, $\|r_{L-1}\|_2  = O(\delta^{2p})$, and $\|\rmB_L\|_2 = O(\delta)$. Also, since $\rmB_L$ is $1$-homogeneous in $\rvw_z$, we have $\|\nabla_{\rvw_z}\rmB_L\|_2 = O(1)$. Next,
\begin{align*}
&\hspace{-5em}\|\nabla_{\rvw_n} {{\rmN}_{L}}{q}_{L-1}(\rvx;\rvw_n,\rvw_z)+\nabla_{\rvw_n} \rmB_Lr_{L-1}(\rvx;\rvw_n,\rvw_z)\|_2\\
&\leq  \| \nabla_{\rvw_n}{{\rmN}_{L}}\|_2\|{q}_{L-1}\|_2+\|{{\rmN}_{L}}\|_2\| \nabla_{\rvw_n}{q}_{L-1}\|_2 +  \|\rmB_L\|_2\|\nabla_{\rvw_n}r_{L-1}\|_2\\
&=  O(\delta^{2p+1}) + O(\delta^{2p+1}) + O(\delta) O(\delta^{2p})   = O(\delta^{2p+1}),
\end{align*}
where the penultimate equality follows since $\|\nabla_{\rvw_n}q_{L-1}\|_2  = O(\delta^{2p+1})$, $\|\nabla_{\rvw_n}r_{L-1}\|_2  = O(\delta^{2p})$, $\|q_{L-1}\|_2  = O(\delta^{2p+1})$, and $\|\rmB_L\|_2 = O(\delta)$.\\
Finally, using the definition of ${p}_{L-1}(\rvx;\rvw_n,\rvw_z)$ and ${s}_{L-1}(\rvx;\rvw_n,\rvw_z)$, we get
\begin{align*}
\mathcal{H}_1(\rvx;\rvw_n,\mathbf{w}_z)
&=  \sum_{k=1}^{L-3} \rmN_L\nabla_\rvs f_{L-1,k+2}(\rmN_{k+1}\rvg_{k}(\rvx))\rmB_{k+1}\sigma(\rmA_{k}\rvg_{k-1}(\rvx)) \\
&+  \rmN_L\diag\left(\sigma'(\rmN_{L-1}\rvg_{L-2}(\rvx))\right)\rmB_{L-1}\sigma(\rmA_{L-2}\rvg_{L-3}(\rvx)) + \rmB_L \sigma(\rmA_{L-1}g_{L-2}(\rvx)).
\end{align*}
Then, analogous to Case 2, we get
\begin{equation*}
\mathcal{H}_1(\rvx;\rvw_n,\mathbf{w}_z)
=  \sum_{k=1}^{L-2}\nabla_\rvs f_{L,k+2}^\top(\rmN_{k+1}\rvg_{k}(\rvx))\rmB_{k+1}\sigma(\rmA_{k}\rvg_{k-1}(\rvx)) + \rmB_L \sigma(\rmA_{L-1}g_{L-2}(\rvx)),
\end{equation*}
which completes the proof.\\
We next prove the claim in \cref{claim_relu} via induction. From Taylor's theorem, we know
\begin{equation*}
\sigma(x+h) = \sigma(x) +\int_0^h{\sigma'(x+t)} dt, \ \ \sigma(x+h) = \sigma(x) + \sigma'(x)h +\int_0^h{\sigma''(x+t)} (h-t) dt.
\end{equation*} 
Hence, for $l=2$, we have
\begin{align*}
\sigma(\rvh_2) &= \sigma\left(\begin{bmatrix}
{{\rmN}_{2}}&\rmB_2  \\
\rmA_2& \rmC_2 \\
\end{bmatrix}\sigma\left( \begin{bmatrix}
{{\rmN}_{1}\rvx}\\\rmA_1\rvx
\end{bmatrix}\right)\right) = \sigma\left(\begin{bmatrix}
{{\rmN}_{2}}\sigma( {{\rmN}_{1}\rvx}) + \rmB_2\sigma(\rmA_1\rvx) \\ \rmA_2\sigma( {{\rmN}_{1}\rvx})  + \rmC_2\sigma(\rmA_1\rvx)
\end{bmatrix}\right)\\
&= \begin{bmatrix}
\sigma( {{\rmN}_{2}}\sigma( {{\rmN}_{1}\rvx})) + \sigma'( {{\rmN}_{2}}\sigma( {{\rmN}_{1}\rvx}))\odot  (\rmB_2\sigma(\rmA_1\rvx)) +  q_2(\rvx;\rvw_n,\rvw_z)  \\ \sigma(\rmA_2\sigma( {{\rmN}_{1}\rvx}))  +  r_2(\rvx;\rvw_n,\rvw_z)	\end{bmatrix},
\end{align*}
where 
\begin{align*}
&q_2(\rvx;\rvw_n,\rvw_z) = \int_\mathbf{0}^{\rmB_2\sigma(\rmA_1\rvx)} {\sigma''(	 {{\rmN}_{2}}\sigma( {{\rmN}_{1}\rvx}) + \rvt)} \odot (\rmB_2\sigma(\rmA_1\rvx) - \rvt) d\rvt, \text{ and }\\
&r_2(\rvx;\rvw_n,\rvw_z) =  \int_\mathbf{0}^{\rmC_2\sigma(\rmA_1\rvx)}{\sigma'(	\rmA_2\sigma( {{\rmN}_{1}\rvx}) + \rvt)}  d\rvt.
\end{align*}
Define $p_2(\rvx;\rvw_n,\rvw_z) \coloneqq \sigma'( {{\rmN}_{2}}\sigma( {{\rmN}_{1}\rvx}))\odot  (\rmB_2\sigma(\rmA_1\rvx))$ and $s_2(\rvx;\rvw_n,\rvw_z) \coloneqq \sigma(\rmA_2\sigma( {{\rmN}_{1}\rvx}))$. Then, $p_2$ and $s_2$ are $(p+1)$-homogeneous and $p$-homogeneous in $\rvw_z$, respectively.

We next derive bounds on $ q_2(\rvx;\rvw_n,\rvw_z)$, $ r_2(\rvx;\rvw_n,\rvw_z)$ and their gradients. Since  $\|\rvw_z\|_2 = O(\delta)$, then $\|\rmB_2\sigma(\rmA_1\rvx)\|_2 = O(\delta^{p+1})$,   $\|\nabla_{\rvw_n}\rmB_2\sigma(\rmA_1\rvx)\|_2 = 0$, and $\|\nabla_{\rvw_z}\rmB_2\sigma(\rmA_1\rvx)\|_2 =  O(\delta^{p})$, where the last equality is true since $\rmB_2\sigma(\rmA_1\rvx)$ is $(p+1)$-homogeneous in $\rvw_z$. Hence, from Lemma \ref{int_bd}, we get
\begin{align*}
\|q_2\|_2 = O(\delta^{2p+1}), \|\nabla_{\rvw_z}q_2\|_2 = O(\delta^{2p}), \text{ and }  \|\nabla_{\rvw_n}q_2\|_2 = O(\delta^{2p+1}).
\end{align*}	
We also have $\|\rmC_2\sigma(\rmA_1\rvx)\|_2 = O(\delta^{p+1})$,   $\|\nabla_{\rvw_n}\rmC_2\sigma(\rmA_1\rvx)\|_2 = 0$, and $\|\nabla_{\rvw_z}\rmC_2\sigma(\rmA_1\rvx)\|_2 =  O(\delta^{p})$. Hence, from Lemma \ref{int_bd}, we get
\begin{align*}
\|\rvr_2\|_2 = O(\delta^{2p}), \|\nabla_{\rvw_z}\rvr_2\|_2 = O(\delta^{2p-1}), \text{ and }  \|\nabla_{\rvw_n}\rvr_2\|_2 = O(\delta^{2p}).
\end{align*}	
Therefore, the claim is true for $l=2$. Now, suppose the claim is true for some $2<l<L-1$.  For brevity, let $\widetilde{p}_l(\rvx;\rvw_n,\rvw_z) \coloneqq p_l(\rvx;\rvw_n,\rvw_z) + q_l(\rvx;\rvw_n,\rvw_z)$ and $\widetilde{s}_l(\rvx;\rvw_n,\rvw_z) \coloneqq  s_l(\rvx;\rvw_n,\rvw_z) + r_l(\rvx;\rvw_n,\rvw_z)$. Then,
\begin{align*}
\sigma(\rvh_{l+1}) &= \sigma\left(\begin{bmatrix}
{{\rmN}_{l+1}}&\rmB_{l+1} \\
\rmA_{l+1}& \rmC_{l+1}\\
\end{bmatrix}\begin{bmatrix}
\sigma({\rmN}_{l}\sigma(\cdots\sigma(\rmN_1\rvx)\cdots)) + p_l(\rvx;\rvw_n,\rvw_z) + q_l(\rvx;\rvw_n,\rvw_z)\\s_l(\rvx;\rvw_n,\rvw_z) + r_l(\rvx;\rvw_n,\rvw_z)
\end{bmatrix}\right)\\
&= \sigma\left(\begin{bmatrix}
{{\rmN}_{l+1}}&\rmB_{l+1} \\
\rmA_{l+1}& \rmC_{l+1}\\
\end{bmatrix}\begin{bmatrix}
\sigma({\rmN}_{l}\sigma(\cdots\sigma(\rmN_1\rvx)\cdots)) + \widetilde{p}_l(\rvx;\rvw_n,\rvw_z) \\\widetilde{s}_l(\rvx;\rvw_n,\rvw_z)
\end{bmatrix}\right)\\
&= \sigma\left(\begin{bmatrix}
{{\rmN}_{l+1}}\sigma({\rmN}_{l}\sigma(\cdots\sigma(\rmN_1\rvx)\cdots)) + {{\rmN}_{l+1}}\widetilde{p}_l(\rvx;\rvw_n,\rvw_z) + {{\rmB}_{l+1}}\widetilde{s}_l(\rvx;\rvw_n,\rvw_z) \\\rmA_{l+1}\sigma({\rmN}_{l}\sigma(\cdots\sigma(\rmN_1\rvx)\cdots))  + \rmA_{l+1}\widetilde{p}_l(\rvx;\rvw_n,\rvw_z) +  \rmC_{l+1}\widetilde{s}_l(\rvx;\rvw_n,\rvw_z)
\end{bmatrix}\right)
\end{align*}
Let 
\begin{align*}
&\hat{p}_l(\rvx;\rvw_n,\rvw_z) \coloneqq {{\rmN}_{l+1}}\widetilde{p}_l(\rvx;\rvw_n,\rvw_z) + {{\rmB}_{l+1}}\widetilde{s}_l(\rvx;\rvw_n,\rvw_z),\\
& \hat{s}_l(\rvx;\rvw_n,\rvw_z)  \coloneqq \rmA_{l+1}\widetilde{p}_l(\rvx;\rvw_n,\rvw_z) +  \rmC_{l+1}\widetilde{s}_l(\rvx;\rvw_n,\rvw_z),  \text{ and } \rvx_l = \sigma({\rmN}_{l}\cdots\sigma(\rmN_1\rvx).
\end{align*}
Then, using Taylor's theorem,
\begin{align*}
\sigma(\rvh_{l+1}) &= \begin{bmatrix}
\sigma\left({{\rmN}_{l+1}}\rvx_l + \hat{p}_l(\rvx;\rvw_n,\rvw_z)\right) \\\sigma\left(\rmA_{l+1}\rvx_l  + \hat{s}_l(\rvx;\rvw_n,\rvw_z)\right)
\end{bmatrix}\\
&= \begin{bmatrix}
\sigma\left({{\rmN}_{l+1}}\rvx_l\right) + \sigma'\left({{\rmN}_{l+1}}\rvx_l\right)\odot\hat{p}_l(\rvx;\rvw_n,\rvw_z) + \hat{q}_l(\rvx;\rvw_n,\rvw_z) \\\sigma\left(\rmA_{l+1}\rvx_l\right)  + {r}_{l+1}(\rvx;\rvw_n,\rvw_z)
\end{bmatrix},
\end{align*}
where 
\begin{align*}
&\hat{q}_l(\rvx;\rvw_n,\rvw_z) = \int_\mathbf{0}^{\hat{p}_l(\rvx;\rvw_n,\rvw_z) } \sigma''({\rmN}_{l+1}\rvx_l+\rvt) \odot (\hat{p}_l(\rvx;\rvw_n,\rvw_z)  - \rvt) d\rvt, \text{ and }\\
&{r}_{l+1}(\rvx;\rvw_n,\rvw_z) =  \int_\mathbf{0}^{\hat{s}_l(\rvx;\rvw_n,\rvw_z)}{\sigma'({\rmA}_{l+1}\rvx_l+\rvt)} d\rvt.
\end{align*}
We next define
\begin{align*}
&p_{l+1}(\rvx;\rvw_n,\rvw_z) = \sigma'({\rmN}_{l+1}\rvx_l) \odot\left( {\rmN}_{l+1}{p}_l(\rvx;\rvw_n,\rvw_z) + 	{\rmB}_{l+1}{s}_l(\rvx;\rvw_n,\rvw_z) \right),\\
&q_{l+1}(\rvx;\rvw_n,\rvw_z) = \sigma'({\rmN}_{l+1}\rvx_l) \odot\left( {\rmN}_{l+1}{q}_l(\rvx;\rvw_n,\rvw_z) + 	{\rmB}_{l+1}{r}_l(\rvx;\rvw_n,\rvw_z) \right) + \hat{q}_l(\rvx;\rvw_n,\rvw_z),\\
&s_{l+1}(\rvx;\rvw_n,\rvw_z) =\sigma({\rmA}_{l+1}\rvx_l).
\end{align*}
Then, by expanding  $\hat{p}_l(\rvx;\rvw_n,\rvw_z)$, we get
\begin{equation*}
\sigma(\rvh_{l+1}) = \begin{bmatrix}
\sigma\left({{\rmN}_{l+1}}\sigma({\rmN}_{l}\sigma(\hdots\sigma(\rmN_1\rvx))) \right) + {p}_{l+1}(\rvx;\rvw_n,\rvw_z) + {q}_{l+1}(\rvx;\rvw_n,\rvw_z) \\\sigma\left(\rmA_{l+1}\sigma({\rmN}_{l}\sigma(\hdots\sigma(\rmN_1\rvx)))  \right) + {s}_{l+1}(\rvx;\rvw_n,\rvw_z) + {r}_{l+1}(\rvx;\rvw_n,\rvw_z)
\end{bmatrix},
\end{equation*}
The $(p+1)$-homogeneity of $ {{\rmN}_{l+1}}{p}_l(\rvx;\rvw_n,\rvw_z) + {{\rmB}_{l+1}}{s}_l(\rvx;\rvw_n,\rvw_z)$ in $\rvw_z$ implies $(p+1)$-homogeneity of  $p_{l+1}(\rvx;\rvw_n,\rvw_z)$ in $\rvw_z$. Also, $s_{l+1}(\rvx;\rvw_n,\rvw_z)$ is $p$-homogeneous in $\rvw_z$.  \\
We next derive bounds on $q_{l+1}(\rvx;\rvw_n,\rvw_z)$, $r_{l+1}(\rvx;\rvw_n,\rvw_z)$ and their gradients. To that end, we first derive relevant bounds for $\widetilde{p}_l, \widetilde{s}_l, \hat{p}_l, \hat{s}_l,$ and $\hat{q}_l$.  Since $p_{l}$ is $(p+1)$-homogeneous in $\rvw_z$, $\|q_l\|_2  = O(\delta^{2p+1}), \|\nabla_{\rvw_z}q_l\|_2 = O(\delta^{2p}), $ and $\|\nabla_{\rvw_n}q_l\|_2 = O(\delta^{2p+1})$, we have
\begin{align*}
&\|\widetilde{p}_l(\rvx;\rvw_n,\rvw_z)\|_2\leq \|{p}_l\|_2 + \|{q}_l\|_2   = O(\delta^{p+1}),\\ 
&\|\nabla_{\rvw_z}\widetilde{p}_l(\rvx;\rvw_n,\rvw_z)\|_2 \leq \|\nabla_{\rvw_z}{p}_l\|_2 + \|\nabla_{\rvw_z}{q}_l\|_2 = O(\delta^{p}),\\ &\|\nabla_{\rvw_n}\widetilde{p}_l(\rvx;\rvw_n,\rvw_z)\|_2 \leq \|\nabla_{\rvw_n}{p}_l\|_2 + \|\nabla_{\rvw_n}{q}_l\|_2 = O(\delta^{p+1}),
\end{align*}
where the final equality is true since, from Lemma \ref{subset_homog}, $\nabla_{\rvw_n}{p}_l$ is $(p+1)$-homogeneous in $\rvw_z$. Next, since $s_{l}$ is $p$-homogeneous in $\rvw_z$, $\|r_l\|_2  = O(\delta^{2p}), \|\nabla_{\rvw_z}r_l\|_2 = O(\delta^{2p-1}),$ and $ \|\nabla_{\rvw_n}r_l\|_2 = O(\delta^{2p})$, we have
\begin{align*}
&\|\widetilde{s}_l(\rvx;\rvw_n,\rvw_z)\|_2\leq \|{s}_l\|_2+\|{r}_l\|_2  = O(\delta^{p}),\\ 
&\|\nabla_{\rvw_z}\widetilde{s}_l(\rvx;\rvw_n,\rvw_z)\|_2 \leq \|\nabla_{\rvw_z}{s}_l\|_2+\|\nabla_{\rvw_z}{r}_l\|_2 = O(\delta^{p-1}),\\ 
&\|\nabla_{\rvw_n}\widetilde{s}_l(\rvx;\rvw_n,\rvw_z)\|_2 \leq \|\nabla_{\rvw_n}{s}_l\|_2+\|\nabla_{\rvw_n}{r}_l\|_2 = O(\delta^{p}),
\end{align*}
where the final equality is true since, from \Cref{subset_homog}, $\nabla_{\rvw_n}{s}_l$ is $p$-homogeneous in $\rvw_z$. Using the above two sets of inequalities, we get
\begin{align*}
\|\hat{p}_{l}(\rvx;\rvw_n,\rvw_z)\|_2 &\leq \|\rmN_{l+1}\|_2\|\widetilde{p}_l\|_2  + \|\rmB_{l+1}\|_2\|\widetilde{s}_l\|_2 \\
&=O(\delta^{p+1}) + O(\delta)O(\delta^{p}) = O(\delta^{p+1}),\\
\|\nabla_{\rvw_z}\hat{p}_{l}(\rvx;\rvw_n,\rvw_z)\|_2 &\leq\|\rmN_{l+1}\|_2\|\nabla_{\rvw_z}\widetilde{p}_l\|_2  +\|\rmB_{l+1}\|_2\|\nabla_{\rvw_z}\widetilde{s}_l\|_2 + \|\nabla_{\rvw_z}\rmB_{l+1}\|_2\|\widetilde{s}_l\|_2\\
&= O(\delta^p) +  O(\delta)O(\delta^{p-1})+O(1)O(\delta^p) = O(\delta^p),\\
\|\nabla_{\rvw_n}\hat{p}_{l}(\rvx;\rvw_n,\rvw_z)\|_2&\leq\|\rmN_{l+1}\|_2\|\nabla_{\rvw_n}\widetilde{p}_l\|_2 +\|\nabla_{\rvw_n}\rmN_{l+1}\|_2\|\widetilde{p}_l\|_2 +\|\rmB_{l+1}\|_2\|\nabla_{\rvw_n}\widetilde{s}_l\|_2\\
&=O(\delta^{p+1}) + O(\delta^{p+1})  + O(\delta)O(\delta^{p}) = O(\delta^{p+1}),
\end{align*}
and
\begin{align*}
\|\hat{s}_{l}(\rvx;\rvw_n,\rvw_z)\|_2 &\leq \|\rmA_{l+1}\|_2\|\widetilde{p}_l\|_2  + \|\rmC_{l+1}\|_2\|\widetilde{s}_l\|_2 \\
&=O(\delta)O(\delta^{p+1}) + O(\delta)O(\delta^{p}) = O(\delta^{p+1}),\\
\|\nabla_{\rvw_z}\hat{s}_{l}(\rvx;\rvw_n,\rvw_z)\|_2 &\leq\|\rmA_{l+1}\|_2\|\nabla_{\rvw_z}\widetilde{p}_l\|_2  + \|\nabla_{\rvw_z}\rmA_{l+1}\|_2\|\widetilde{p}_l\|_2 \\
&+\|\rmC_{l+1}\|_2\|\nabla_{\rvw_z}\widetilde{s}_l\|_2 + \|\nabla_{\rvw_z}\rmC_{l+1}\|_2\|\widetilde{s}_l\|_2\\
&= O(\delta)O(\delta^p) + O(1)O(\delta^{p+1}) + O(\delta)O(\delta^{p-1})+O(1)O(\delta^p) = O(\delta^p),\\
\|\nabla_{\rvw_n}\hat{s}_{l}(\rvx;\rvw_n,\rvw_z)\|_2&\leq\|\rmA_{l+1}\|_2\|\nabla_{\rvw_n}\widetilde{p}_l\|_2 +\|\rmC_{l+1}\|_2\|\nabla_{\rvw_n}\widetilde{s}_l\|_2\\
&=O(\delta)O(\delta^{p+1})  + O(\delta)O(\delta^{p}) = O(\delta^{p+1}).
\end{align*}
Since $\|\hat{p}_l\|_2  = O(\delta^{p+1}), \|\nabla_{\rvw_z}\hat{p}_l\|_2 = O(\delta^{p}), \|\nabla_{\rvw_n}\hat{p}_l\|_2 = O(\delta^{p+1})$, from Lemma \ref{int_bd}, we get
\begin{equation*}
\|\hat{q}_l\|_2  = O(\delta^{2p+2}), \|\nabla_{\rvw_z}\hat{q}_l\|_2 = O(\delta^{2p+1}), \|\nabla_{\rvw_n}\hat{q}_l\|_2 = O(\delta^{2p+2}).
\end{equation*}
For brevity, define $\zeta\coloneqq \sigma'({\rmN}_{l+1}\rvx_l)$. Then,
\begin{align*}
\|q_{l+1}(\rvx;\rvw_n,\rvw_z)\|_2 &\leq \|\zeta\|_2\left(\|\rmN_{l+1}\|_2\|q_l\|_2 + \|\rmB_{l+1}\|_2\|r_l\|_2\right) + \|\hat{q}_l\|_2\\
&= O(\delta^{2p+1}) + O(\delta)O(\delta^{2p}) + O(\delta^{2p+2}) = O(\delta^{2p+1}),\\
\|\nabla_{\rvw_z}q_{l+1}(\rvx;\rvw_n,\rvw_z)\|_2 &\leq\|\zeta\|_2\|\rmN_{l+1}\|_2\|\nabla_{\rvw_z}q_l\|_2 +  \|\nabla_{\rvw_z}\hat{q}_l\|_2\\
&+ \|\zeta\|_2\left(\|\nabla_{\rvw_z}\rmB_{l+1}\|_2\|r_l\|_2 +\|\rmB_{l+1}\|_2\|\nabla_{\rvw_z}r_l\|_2 \right) \\
&= O(\delta^{2p}) +   O(\delta^{2p+1})+O(1)O(\delta^{2p}) + O(\delta)O(\delta^{2p-1}) = O(\delta^{2p}),\\
\|\nabla_{\rvw_n}q_{l+1}(\rvx;\rvw_n,\rvw_z)\|_2 &\leq\|\nabla_{\rvw_n}\zeta\|_2\left(\|\rmN_{l+1}\|_2\|q_l\|_2 + \|\rmB_{l+1}\|_2\|r_l\|_2\right)\\
&+ \|\zeta\|_2\left(\|\nabla_{\rvw_n}\rmN_{l+1}\|_2\|q_l\|_2 +\|\rmN_{l+1}\|_2\|\nabla_{\rvw_n}q_l\|_2\right)\\
&+ \|\zeta\|_2 \|\rmB_{l+1}\|_2\|\nabla_{\rvw_n}r_l\|_2 + \|\nabla_{\rvw_n}\hat{q}_l\|_2\\
&= O(\delta^{2p+1}) + O(\delta)O(\delta^{2p}) + O(\delta^{2p+1}) + O(\delta^{2p+1}) \\
&+ O(\delta)O(\delta^{2p}) + O(\delta^{2p+2}) = O(\delta^{2p+1}).
\end{align*} 
Since  $\|\hat{s}_l\|_2  = O(\delta^{p+1}), \|\nabla_{\rvw_z}\hat{s}_l\|_2 = O(\delta^{p}), \|\nabla_{\rvw_n}\hat{s}_l\|_2 = O(\delta^{p+1})$, from Lemma \ref{int_bd}, we get
\begin{equation*}
\|r_{l+1}\|_2  = O(\delta^{2p}), \|\nabla_{\rvw_z}r_{l+1}\|_2 = O(\delta^{2p-1}), \|\nabla_{\rvw_n}r_{l+1}\|_2 = O(\delta^{2p}),
\end{equation*}
which proves the claim in \cref{claim_relu}. \\
Finally, we simplify  $p_{l+1}(\rvx;\rvw_n,\rvw_z)$. Here as well, for all $l\in [L-1]$ and $k\leq l$, we have 
\begin{equation}
\rvx_l = g_{l}(\rvx) = \sigma({\rmN}_{l}\sigma({\rmN}_{l-1}\hdots\sigma(\rmN_1\rvx)) = f_{l,k}(\rmN_{k-1}g_{k-2}(\rvx)).
\label{gl_simp1}
\end{equation}
Since $f_{l+1,k}(\rvs) = \sigma(\rmN_{l+1}f_{l,k}(\rvs))$, we get
\begin{equation}
\nabla_\rvs f_{l+1,k}(\rvs) = \diag(\sigma'(\rmN_{l+1}f_{l,k}(\rvs)))\rmN_{l+1}\nabla_\rvs f_{l,k}(\rvs).
\label{fl_grad_simp1}
\end{equation}
Hence, analogous to Case 2, we can show that
\begin{align*}
p_{l+1}(\rvx;\rvw_n,\rvw_z) &= \sum_{k=1}^{l-1} \nabla_\rvs f_{l+1,k+2}(\rmN_{k+1}\rvg_{k}(\rvx))\rmB_{k+1}\sigma(\rmA_{k}\rvg_{k-1}(\rvx)) \\
&+ \diag(\sigma'(\rmN_{l+1}g_{l}(\rvx)))\rmB_{l+1}\sigma(\rmA_lg_{l-1}(\rvx)),
\end{align*}
which completes the proof.\hfill \ensuremath{\blacksquare}

\begin{lemma}
	Consider the setting of Lemma \ref{lemma_poly_str}, and suppose $\rvw_n$ is fixed and $\|\rvw_z\|_2 = O(\delta)$. For any $2\leq l\leq L-1$, let
	\begin{align*}
	&{a}(\rvx;\rvw_n,\rvw_z) = \int_\mathbf{0}^{{c}(\rvx;\rvw_n,\rvw_z) } \sigma''({{\rmN}_{l+1}}\sigma({\rmN}_{l}\sigma(\cdots\sigma(\rmN_1\rvx)\cdots))+\rvt) \odot ({c}(\rvx;\rvw_n,\rvw_z)  - \rvt) d\rvt,\\
	&{b}(\rvx;\rvw_n,\rvw_z) =  \int_\mathbf{0}^{{c}(\rvx;\rvw_n,\rvw_z)}{\sigma'({{\rmA}_{l+1}}\sigma({\rmN}_{l}\sigma(\cdots\sigma(\rmN_1\rvx)\cdots))+\rvt)} d\rvt,
	\end{align*}
	where $\|c(\rvx;\rvw_n,\rvw_z) \|_2 = O(\delta^{p+1})$, $\|\nabla_{\rvw_z}c(\rvx;\rvw_n,\rvw_z) \|_2 = O(\delta^{p})$ and $\|\nabla_{\rvw_n}c(\rvx;\rvw_n,\rvw_z) \|_2 = O(\delta^{p+1})$. Then, 
	\begin{align*}
	& \|b(\rvx;\rvw_n,\rvw_z) \|_2 = O(\delta^{2p}) = \|\nabla_{\rvw_n}b(\rvx;\rvw_n,\rvw_z) \|_2 , \|\nabla_{\rvw_z}b(\rvx;\rvw_n,\rvw_z) \|_2 = O(\delta^{2p-1}), \text{ and }\\
	&\|a(\rvx;\rvw_n,\rvw_z) \|_2 = O(\delta^{2p+2}) = \|\nabla_{\rvw_n}a(\rvx;\rvw_n,\rvw_z) \|_2, \|\nabla_{\rvw_z}a(\rvx;\rvw_n,\rvw_z) \|_2 = O(\delta^{2p+1})
	\end{align*}
	\label{int_bd}
\end{lemma}
\begin{proof}
	Since $\rvw_n$ is fixed and  $\|\rvw_z\|_2 = O(\delta)$, there exists a constant $C>0$ such that
	\begin{align*}
	\|{a}(\rvx;\rvw_n,\rvw_z)\|_2 &\leq \left\| \int_\mathbf{0}^{{c}(\rvx;\rvw_n,\rvw_z) } C({c}(\rvx;\rvw_n,\rvw_z)  - \rvt) d\rvt\right\|_2\\
	& = \frac{C}{2}\left\| {{c}(\rvx;\rvw_n,\rvw_z) } \odot{c}(\rvx;\rvw_n,\rvw_z)\right\|_2 = O(\delta^{2p+2}).
	\end{align*}
	Next, since $\|\rmA_{l+1}\|_2 = O(\delta)$,  $\|c(\rvx;\rvw_n,\rvw_z) \|_2 = O(\delta^{p+1})$ and $\sigma'(x)$ is $(p-1)$-homogeneous,  there exists a large enough constant $C>0$ such that
	\begin{align*}
	\|{b}(\rvx;\rvw_n,\rvw_z)\|_2 &= \left\|\int_\mathbf{0}^{{c}(\rvx;\rvw_n,\rvw_z)}{\sigma'({\rmA}_{l+1}\sigma({\rmN}_{l}\sigma(\cdots\sigma(\rmN_1\rvx)\cdots))+\rvt)} d\rvt\right\|_2\\
	&\leq C\delta^{p-1} \left\|\int_\mathbf{0}^{{c}(\rvx;\rvw_n,\rvw_z)} d\rvt\right\|_2\\
	& = C\delta^{p-1}\|c(\rvx;\rvw_n,\rvw_z) \|_2 = O(\delta^{2p}). 
	\end{align*} 
	Using Leibniz's rule, for any $\mathcal{Q}:\sR^d \times \sR \rightarrow \sR$ and $m:\sR^d \rightarrow \sR$, where $\mathcal{Q}(\rvs,t)$, $\nabla_\rvs\mathcal{Q}(\rvs,t)$, $m(\rvs)$ and $\nabla_\rvs m(\rvs)$ are continuous in $\rvs$, we have 
	\begin{equation*}
	\nabla_\rvs \left(\int_0^{m(\rvs)} \mathcal{Q}(\rvs,t)  dt\right)= \mathcal{Q}(\rvs, m(\rvs))\nabla_\rvs m(\rvs) + \int_0^{m(\rvs)} \nabla_\rvs \mathcal{Q}(\rvs,t)  dt.
	\end{equation*}
	Let $a_i(\rvx;\rvw_n,\rvw_z)$ and $c_i(\rvx;\rvw_n,\rvw_z)$ denote the $i$th entry of $a(\rvx;\rvw_n,\rvw_z)$ and $c(\rvx;\rvw_n,\rvw_z)$, respectively. Thus,
	\begin{align*}
	a_i(\rvx;\rvw_n,\rvw_z) = \left(\int_{0}^{{c}_i(\rvx;\rvw_n,\rvw_z) } \sigma''({\rmN}_{l+1}[i,:]\sigma({\rmN}_{l}\sigma(\cdots\sigma(\rmN_1\rvx)\cdots))+t)  ({c}_i(\rvx;\rvw_n,\rvw_z)  - t) dt\right),
	\end{align*}
	  since $p\geq 4$, $\sigma''(x)$ has continuous derivatives. Using Leibniz's rule, we get
	\begin{align*}
	&\hspace{-5em}\nabla_{\rvw_z}a_i(\rvx;\rvw_n,\rvw_z)\\
	&= \sigma''({\rmN}_{l+1}[i,:]\sigma({\rmN}_{l}\sigma(\cdots\sigma(\rmN_1\rvx)\cdots))+{c}_i)  ({c}_i- {c}_i) \nabla_{\rvw_z}{c}_i\\
	&+\int_{0}^{{c}_i(\rvx;\rvw_n,\rvw_z) } \sigma''({\rmN}_{l+1}[i,:]\sigma({\rmN}_{l}\sigma(\cdots\sigma(\rmN_1\rvx)\cdots))+t)  \nabla_{\rvw_z}{c}_idt\\
	&= \int_{0}^{{c}_i(\rvx;\rvw_n,\rvw_z) } \sigma''({\rmN}_{l+1}[i,:]\sigma({\rmN}_{l}\sigma(\cdots\sigma(\rmN_1\rvx)\cdots))+t)  \nabla_{\rvw_z}{c}_idt,
	\end{align*}
	where we have omitted $(\rvx;\rvw_n,\rvw_z)$ from the notation of $c_i$ to avoid clutter. Since $\rvw_n$ is fixed and  $\|\rvw_z\|_2 = O(\delta)$, there exists a large enough constant $C>0$ such that
	\begin{align*}
	&\left\|\nabla_{\rvw_z}a_i(\rvx;\rvw_n,\rvw_z)\right\|_2 \leq C|{c}_i(\rvx;\rvw_n,\rvw_z)| \|\nabla_{\rvw_z}{c}_i(\rvx;\rvw_n,\rvw_z)\|_2
	\end{align*}
implies
\begin{align*}
	&\left\|\nabla_{\rvw_z}a(\rvx;\rvw_n,\rvw_z)\right\|_2 =  O(\delta^{p+1}\delta^{p}) =  O(\delta^{2p+1}).
	\end{align*}
	Let $b_i(\rvx;\rvw_n,\rvw_z)$ denote the $i$th entry of $b(\rvx;\rvw_n,\rvw_z)$. Then,
	\begin{align*}
	\nabla_{\rvw_z}b_i(\rvx;\rvw_n,\rvw_z) &= \nabla_{\rvw_z}\left(\int_{0}^{{c}_i(\rvx;\rvw_n,\rvw_z)}{\sigma'({\rmA}_{l+1}[i,:]\sigma({\rmN}_{l}\sigma(\cdots\sigma(\rmN_1\rvx)\cdots))+t)} dt\right)\\
	&= \sigma'({\rmA}_{l+1}[i,:]\sigma({\rmN}_{l}\sigma(\cdots\sigma(\rmN_1\rvx)\cdots))+{c}_i)\nabla_{\rvw_z}{c}_i\\
	&+\int_{0}^{{c}_i(\rvx;\rvw_n,\rvw_z) } \nabla_{\rvw_z}\sigma'({\rmA}_{l+1}[i,:]\sigma({\rmN}_{l}\sigma(\cdots\sigma(\rmN_1\rvx)\cdots))+t)dt.
	\end{align*}
	Since $\rvw_n$ is fixed and  $\|\rvw_z\|_2 = O(\delta)$, there exists a large enough constant $C>0$ such that
	\begin{align*}
	&\left\|\nabla_{\rvw_z}b_i(\rvx;\rvw_n,\rvw_z)\right\|_2 \leq C\|\rmA_{l+1}[i,:]\|_2^{p-1}\|\nabla_{\rvw_z}{c}_i\|_2  + C\|\rmA_{l+1}[i,:]\|_2^{p-2}|{c}_i|
	\end{align*}
	implies
	\begin{align*}
	&\left\|\nabla_{\rvw_z}b(\rvx;\rvw_n,\rvw_z)\right\|_2 =  O(\delta^{p-1}\delta^{p}) + O(\delta^{p-2}\delta^{p+1})  =  O(\delta^{2p-1}).
	\end{align*}
	Next,
	\begin{align*}
	&\hspace{-3em}\nabla_{\rvw_n}a_i(\rvx;\rvw_n,\rvw_z)\\
	&= \sigma''({\rmN}_{l+1}[i,:]\sigma({\rmN}_{l}\sigma(\cdots\sigma(\rmN_1\rvx)\cdots))+c_i)  ({c}_i  - {c}_i) \nabla_{\rvw_n}{c}_i\\
	&+\int_{0}^{{c}_i(\rvx;\rvw_n,\rvw_z) } \sigma''({\rmN}_{l+1}[i,:]\sigma({\rmN}_{l}\sigma(\cdots\sigma(\rmN_1\rvx)\cdots))+t)  \nabla_{\rvw_n}{c}_idt\\
	&+\int_{0}^{{c}_i(\rvx;\rvw_n,\rvw_z) } \nabla_{\rvw_n}\sigma''({\rmN}_{l+1}[i,:]\sigma({\rmN}_{l}\sigma(\cdots\sigma(\rmN_1\rvx)\cdots))+t) {c}_idt\\
	&= \int_{0}^{{c}_i(\rvx;\rvw_n,\rvw_z) } \sigma''({\rmN}_{l+1}[i,:]\sigma({\rmN}_{l}\sigma({\rmN}_{l}\sigma(\cdots\sigma(\rmN_1\rvx)\cdots))+t)  \nabla_{\rvw_n}{c}_idt\\
	&+\int_{0}^{{c}_i(\rvx;\rvw_n,\rvw_z) } \nabla_{\rvw_n}\sigma''({\rmN}_{l+1}[i,:]\sigma({\rmN}_{l}\sigma({\rmN}_{l}\sigma(\cdots\sigma(\rmN_1\rvx)\cdots))+t) {c}_idt.
	\end{align*}
	Since $\rvw_n$ is fixed and  $\|\rvw_z\|_2 = O(\delta)$, there exists a large enough constant $C>0$ such that
	\begin{align*}
	&\left\|\nabla_{\rvw_n}a_i(\rvx;\rvw_n,\rvw_z)\right\|_2 \leq C|{c}_i(\rvx;\rvw_n,\rvw_z)| \|\nabla_{\rvw_n}{c}_i(\rvx;\rvw_n,\rvw_z)\|_2 + C|{c}_i(\rvx;\rvw_n,\rvw_z)|^2
	\end{align*}
	implies
	\begin{align*}
	&\left\|\nabla_{\rvw_n}a(\rvx;\rvw_n,\rvw_z)\right\|_2 =  O(\delta^{p+1}\delta^{p+1}) + O(\delta^{2p+2}) =   O(\delta^{2p+2}).
	\end{align*}
	Next,
	\begin{align*}
	\nabla_{\rvw_n}b_i(\rvx;\rvw_n,\rvw_z) &= \nabla_{\rvw_n}\left(\int_{0}^{{c}_i(\rvx;\rvw_n,\rvw_z)}{\sigma'({\rmA}_{l+1}[i,:]\sigma({\rmN}_{l}\sigma({\rmN}_{l}\sigma(\cdots\sigma(\rmN_1\rvx)\cdots))+t)} dt\right)\\
	&= \sigma'({\rmA}_{l+1}[i,:]\sigma({\rmN}_{l}\sigma({\rmN}_{l}\sigma(\cdots\sigma(\rmN_1\rvx)\cdots))+{c}_i)\nabla_{\rvw_n}{c}_i\\
	&+\int_{0}^{{c}_i(\rvx;\rvw_n,\rvw_z) } \nabla_{\rvw_n}\sigma'({\rmA}_{l+1}[i,:]\sigma({\rmN}_{l}\sigma({\rmN}_{l}\sigma(\cdots\sigma(\rmN_1\rvx)\cdots))+t)dt.
	\end{align*}
	Since $\rvw_n$ is fixed and  $\|\rvw_z\|_2 = O(\delta)$, there exists a large enough constant $C>0$ such that
	\begin{align*}
	&\left\|\nabla_{\rvw_n}b_i(\rvx;\rvw_n,\rvw_z)\right\|_2 \leq C\|\rmA_{l+1}[i,:]\|_2^{p-1}\|\nabla_{\rvw_n}{c}_i\|_2  + C\|\rmA_{l+1}[i,:]\|_2^{p-1}|{c}_i|
	\end{align*}
	implies
	\begin{align*}
	&\left\|\nabla_{\rvw_n}b(\rvx;\rvw_n,\rvw_z)\right\|_2 =  O(\delta^{p-1}\delta^{p+1}) + O(\delta^{p-1}\delta^{p+1})  =  O(\delta^{2p}).
	\end{align*}
\end{proof}
\subsection{Proof of Lemma \ref{lemma_bal_wt}}\label{pf_bal_lemma}
We will make repeated use of the following lemma to prove Lemma \ref{lemma_bal_wt}.
\begin{lemma}
	\label{lemma:kkt_ncf}
	Let $h(\rva,\rvb) = \sum_{i=1}^n\rvq_i^\top\rvb\sigma(\rva^\top\rvr_i)$, where $\rva\in \sR^{d_1}$, $\rvb\in \sR^{d_2}$ and $\sigma(x) = \max(x,\alpha x)^p$, for some $p\in \sN, p\geq 1$ and $\alpha\in\sR$. Suppose there exists $\lambda>0$ and $(\rva_*,\rvb_*)$ such that
	\begin{equation*}
	\nabla_\rva h(\rva_*,\rvb_*) = \lambda \rva_*, \nabla_\rvb h(\rva_*,\rvb_*) = \lambda \rvb_*.
	\end{equation*}  
	Then, $p\|\rvb_*\|_2^2 = \|\rva_*\|_2^2$.
\end{lemma}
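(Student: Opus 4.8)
The plan is to derive the conclusion directly from Euler's theorem (\Cref{euler_thm}), applied separately to the two blocks of variables $\rva$ and $\rvb$. First I would record the two homogeneity facts: each summand $\rvq_i^\top\rvb\,\sigma(\rva^\top\rvr_i)$ is linear in $\rvb$, so $h(\rva,c\rvb)=c\,h(\rva,\rvb)$, i.e.\ $h$ is $1$-homogeneous in $\rvb$; and since $\sigma(x)=\max(x,\alpha x)^p$ satisfies $\sigma(cx)=c^p\sigma(x)$ for all $c\geq 0$, we get $h(c\rva,\rvb)=c^p h(\rva,\rvb)$, i.e.\ $h$ is $p$-homogeneous in $\rva$.

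Next I would invoke Euler's theorem in each block. Applying the second bullet of \Cref{euler_thm} to $\rvb\mapsto h(\rva,\rvb)$ (degree $1$) and to $\rva\mapsto h(\rva,\rvb)$ (degree $p$) gives
\begin{align*}
\rvb^\top\nabla_\rvb h(\rva,\rvb) = h(\rva,\rvb), \qquad \rva^\top\nabla_\rva h(\rva,\rvb) = p\,h(\rva,\rvb).
\end{align*}
Evaluating both identities at $(\rva_*,\rvb_*)$ and substituting the stationarity conditions $\nabla_\rvb h(\rva_*,\rvb_*)=\lambda\rvb_*$ and $\nabla_\rva h(\rva_*,\rvb_*)=\lambda\rva_*$, the left-hand sides become $\lambda\|\rvb_*\|_2^2$ and $\lambda\|\rva_*\|_2^2$, whence
\begin{align*}
\lambda\|\rva_*\|_2^2 = p\,h(\rva_*,\rvb_*) = p\,\lambda\|\rvb_*\|_2^2.
\end{align*}
Since $\lambda>0$, dividing through by $\lambda$ yields $\|\rva_*\|_2^2 = p\|\rvb_*\|_2^2$, which is the desired identity.

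The only technical point worth flagging is differentiability: when $p=1$ and $\alpha\neq 1$ the activation $x\mapsto\max(x,\alpha x)$ is not differentiable at the origin, so $h$ need not be differentiable in $\rva$, and $\nabla_\rva h$ should be read as an element of the Clarke subdifferential, exactly as in the remark following \Cref{lemma_bal_wt}. In that regime I would appeal to the extension of Euler's theorem mentioned just after \Cref{euler_thm} (valid for locally Lipschitz positively homogeneous functions admitting a chain rule, which covers $\max(\cdot,\alpha\cdot)$-type networks), so that $\rva^\top\rvg = p\,h(\rva,\rvb)$ holds for every $\rvg$ in the Clarke subdifferential in $\rva$; for $p\geq 2$ the activation is $C^1$ and no such care is needed. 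I do not anticipate a genuine obstacle: once the two homogeneity degrees are identified, the statement is a two-line consequence of Euler's identity, and this lemma will then feed directly into the inductive proof of \Cref{lemma_bal_wt} by applying it layer by layer to the neuron-wise components $\mathcal{H}_{1,l}$.
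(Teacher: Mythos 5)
Your proposal is correct and follows essentially the same route as the paper: identify that $h$ is $1$-homogeneous in $\rvb$ and $p$-homogeneous in $\rva$, apply Euler's identity blockwise at $(\rva_*,\rvb_*)$, substitute the stationarity conditions, and cancel $\lambda>0$. Your extra remark on the Clarke subdifferential for the nonsmooth case matches the paper's own handling of ReLU-type activations and is a sensible addition.
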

\begin{proof}
	Since $h(\rva,\rvb) $ is $1$-homogeneous in $\rvb$ and $p$-homogeneous in $\rva$, we get
	\begin{align*}
	&\lambda \rva_*^\top\rva_* = \rva_*^\top\nabla_\rva h(\rva_*,\rvb_*) = ph(\rva_*,\rvb_*),\\
	&\lambda \rvb_*^\top\rvb_* = \rvb_*^\top\nabla_\rvb h(\rva_*,\rvb_*) = h(\rva_*,\rvb_*).
	\end{align*}
	From the above equation, we get $p\|\rvb_*\|_2^2 = \|\rva_*\|_2^2$.
\end{proof}
\textbf{Proof of Lemma \ref{lemma_bal_wt}:} Since $\overline{\rvw}_z$ is a KKT point of 
\begin{equation*}
\max_{\|\rvw_z\|_2^2=1} \rvp^\top\mathcal{H}_1(\rmX;\rvw_n,\rvw_z),
\end{equation*}
there exist a scalar $\lambda$ such that
\begin{equation}
\nabla_{\rvw_z} \left(\rvp^\top\mathcal{H}_1(\rmX;\rvw_n,\overline{\rvw}_z)\right) = \lambda\overline{\rvw}_z.
\label{kkt_wz}
\end{equation}
Since $\mathcal{H}_1(\rvx;\rvw_n,\rvw_z) $ is $(p+1)$-homogeneous in $\rvw_z$, we get
\begin{equation*}
\lambda = \lambda\overline{\rvw}_z^\top\overline{\rvw}_z = \lambda\overline{\rvw}_z^\top\nabla_{\rvw_z} \left(\rvp^\top\mathcal{H}_1(\rmX;\rvw_n,\overline{\rvw}_z)\right) =(p+1) \rvp^\top\mathcal{H}_1(\rmX;\rvw_n,\overline{\rvw}_z).
\end{equation*} 
Since $\overline{\rvw}_z$ is a positive KKT point, we have $\rvp^\top\mathcal{H}_1(\rmX;\rvw_n,\overline{\rvw}_z)>0$, implying $\lambda > 0$. Next, as $\{\rmC_l\}_{l=2}^{L-1}$ does not appear in the expression of $\mathcal{H}_1(\rvx;\rvw_n,\rvw_z) $, from \cref{kkt_wz}, we get
\begin{equation*}
\mathbf{0} = \lambda\overline{\rmC}_l, \text{ for all } 2\leq  l\leq L-1,
\end{equation*}
which implies $\overline{\rmC}_l = \mathbf{0}$. Now, from the discussion in \Cref{sec:prop_kkt}, we know
\begin{align*}
\rvp^\top\mathcal{H}_1(\rmX;\rvw_n,\rvw_z) &= \sum_{l=1}^{L-2}\sum_{j=1}^{\Delta_l}\sum_{i=1}^n p_i\nabla_\rvs f_{L,l+2}^\top\left(\rmN_{l+1}\rvg_{l}(\rvx_i)\right)\rmB_{l+1}[:,j]\sigma\left(\rmA_{l}[j,:]\rvg_{l-1}(\rvx_i)\right) \\
&+  \sum_{j=1}^{\Delta_{L-1}}\sum_{i=1}^n p_i\rmB_L[:,j]\sigma\left(\rmA_{L-1}[j,:]g_{L-2}(\rvx_i)\right),
\end{align*}
where $\Delta_l = k_l-p_l$, for all $l\in [L-1]$. For any $1\leq l\leq L-2$ and $1\leq j\leq \Delta_l$, $(\rmB_{l+1}[:,j],\rmA_{l}[j,:])$ only appears in the following term:
\begin{equation*}
\sum_{i=1}^n p_i\nabla_\rvs f_{L,l+2}^\top\left(\rmN_{l+1}\rvg_{l}(\rvx_i)\right)\rmB_{l+1}[:,j]\sigma\left(\rmA_{l}[j,:]\rvg_{l-1}(\rvx_i)\right).
\end{equation*}
Hence, using the above fact, \cref{kkt_wz} and Lemma \ref{lemma:kkt_ncf}, we have
\begin{equation*}
p\|\overline{\rmB}_{l+1}[:,j]\|_2^2  = \|\overline{\rmA}_{l}[j,:]\|_2^2.
\end{equation*}
Similarly, for any $1\leq j\leq \Delta_{L-1}$, $(\rmB_{L}[:,j],\rmA_{L-1}[j,:])$ only appears in the following term:
\begin{equation*}
\sum_{i=1}^n p_i\rmB_L[:,j]\sigma\left(\rmA_{L-1}[j,:]g_{L-2}(\rvx_i)\right).
\end{equation*}
Hence, again using the above fact, \cref{kkt_wz} and Lemma \ref{lemma:kkt_ncf}, we have
\begin{equation*}
p\|\overline{\rmB}_{L}[:,j]\|_2^2  = \|\overline{\rmA}_{L-1}[j,:]\|_2^2.
\end{equation*}
This completes the proof. \hfill \ensuremath{\blacksquare}
\section{Experimental Details}
This section outlines the implementation and hyperparameter details for the experiments in this paper, all conducted using PyTorch \citep{pytorch_ref}.
\subsection{Non-linear Sparse Functions}
\label{sp_exp}
\subsubsection{Hypersphere}
\label{sphere_exp}
We train three-layer fully connected neural networks for both the NP algorithm and gradient descent. The specific hyperparameters for each method are as follows:\\
\textbf{Gradient Descent.} The initial weights were sampled from the hypersphere of radius 0.01.  Training continued until the training error dropped below 0.001 or until a maximum of $3 \times 10^6$ iterations. We evaluated different learning rates: $\{0.2,0.1\}$ for $f_1(\rvx)$, and $\{0.02,0.01\}$ for $f_2(\rvx)$. We report the results corresponding to the lowest test error.\\ 
\textbf{Neuron Pursuit.} We run the NP algorithm until the training error drops below 0.001 or until 31 iterations were completed. In each iteration, the constrained NCF is maximized using projected gradient ascent with step-size 0.2 for 2500 iterations, where the initial weights are sampled uniformly from the unit-norm sphere. We use $H=10$ different random initializations to identify the most dominant KKT point.

The scalar $\delta$ in the first iteration was set to 0.05. For subsequent iterations, we have $\delta = 0.01\|\rvw\|_2$, where $\|\rvw\|_2$ denotes the norm of the weights at the end of the previous iteration. After each neuron addition, the network is trained via full-batch gradient descent for 70,000 iterations. We used two step-sizes: $\{0.1,0.01\}$ for $f_1(\rvx)$ and $\{0.01,0.001\}$ for $f_2(\rvx)$. To report the final training and test errors for NP, we applied the following procedure:
\begin{itemize}
	\item \textbf{If multiple step-sizes yield test error $< 0.002$}, report the result corresponding to the fewest number of iterations.
	\item \textbf{Otherwise, if multiple step-sizes achieve training error $< 0.001$}, report the one with the lowest test error.
	\item \textbf{Otherwise}, report the one with lowest training error.
\end{itemize}  
The first criterion favors efficient solutions when test error is small. The next two step safeguards against misleading conclusions when the smallest test error does not coincide with smallest training error. This particularly happens in low-sample regimes where the NP algorithm is not able to achieve small test error. Without this safeguard, one might wrongly infer that NP fails to fit the training data, even when a good fit is possible with a different learning rate.
\subsubsection{Hypercube}
We train three-layer fully connected neural networks to for both the NP algorithm and gradient descent. The specific hyperparameters for each method are as follows:\\
\textbf{Gradient Descent.} The initial weights were sampled from the hypersphere of radius 0.01.  Training continued until the training error dropped below 0.001 or until a maximum of $3 \times 10^6$ iterations. We evaluated different learning rates: $\{0.01,0.002\}$ for $g_1(\rvx)$, and $\{0.07,0.05\}$ for $g_2(\rvx)$. We report the results corresponding to the lowest test error. \\
\textbf{Neuron Pursuit.} We run the NP algorithm until the training error drops below 0.001 or until 31 iterations were completed. In each iteration, the constrained NCF is maximized using projected gradient ascent with step-size 0.2 for 2500 iterations, where the initial weights are sampled uniformly from the unit-norm sphere. We use $H=10$ different random initializations to identify the most dominant KKT point.

The scalar $\delta$ is chosen in the same way as in \Cref{sphere_exp}. After each neuron addition, we trained the network using full-batch gradient descent for 100,000 iterations. We use learning rates $\{0.01,0.002\}$ for $g_1(\rvx)$ and$\{0.005,0.002,0.001\}$ for $g_2(\rvx)$. To report the final training and test errors for NP, we applied the same procedure as in \Cref{sphere_exp}.
\subsubsection{Gaussian}
We train four-layer fully connected neural networks to for both the NP algorithm and gradient descent. The specific hyperparameters for each method are as follows:\\
\textbf{Gradient Descent.} The initial weights were sampled from the hypersphere of radius 0.1.  Training continued until the training error dropped below 0.001 or until a maximum of $3 \times 10^6$ iterations. We evaluated different learning rates for both $h_1(\rvx)$ and $h_2(\rvx)$: $\{0.1,0.01\}$. We report the results corresponding to the lowest test error. \\
\textbf{Neuron Pursuit.} We run the NP algorithm until the training error drops below 0.001 or until 31 iterations were completed. In each iteration, the constrained NCF is maximized using projected gradient ascent with step-size 0.2 for 2500 iterations, where the initial weights are sampled uniformly from the unit-norm sphere. We use $H=10$ different random initializations to identify the most dominant KKT point.

The scalar $\delta$ in the first iteration was set to 0.1. For subsequent iterations, we have $\delta = 0.01\|\rvw\|_2$, where $\|\rvw\|_2$ denotes the norm of the weights at the end of the previous iteration.  After each neuron addition, we trained the network using full-batch gradient descent for 100,000 iterations. However, if the training error gets less than $0.05$, the number of iterations is increased to 300,000 for $h_1(\rvx)$, and for $h_2(\rvx)$, it is increased to 500,000. This is done because the optimization gets slower. We use learning rates $\{0.01,0.002\}$ for $h_1(\rvx)$ and $\{0.02, 0.01\}$ for $h_2(\rvx)$. To report the final training and test errors for NP, we applied the same procedure as in \Cref{sphere_exp}.
\begin{figure}[htbp]
	\centering
	
	\begin{subfigure}[b]{0.4\textwidth}
		\centering
		\includegraphics[width=\linewidth]{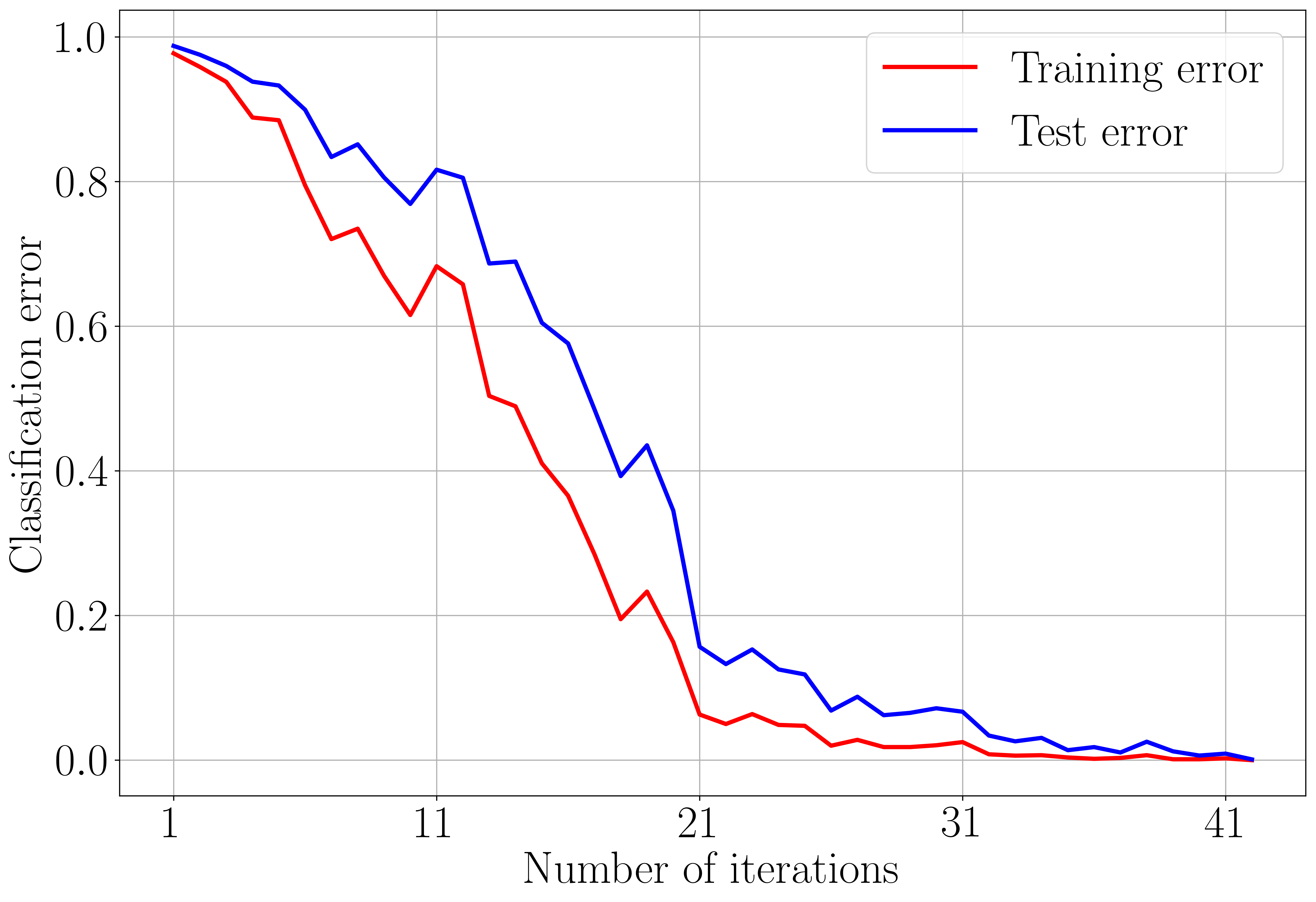}
	\end{subfigure}
	\hfill
	\begin{subfigure}[b]{0.55\textwidth}
		\centering
		\includegraphics[width=\linewidth]{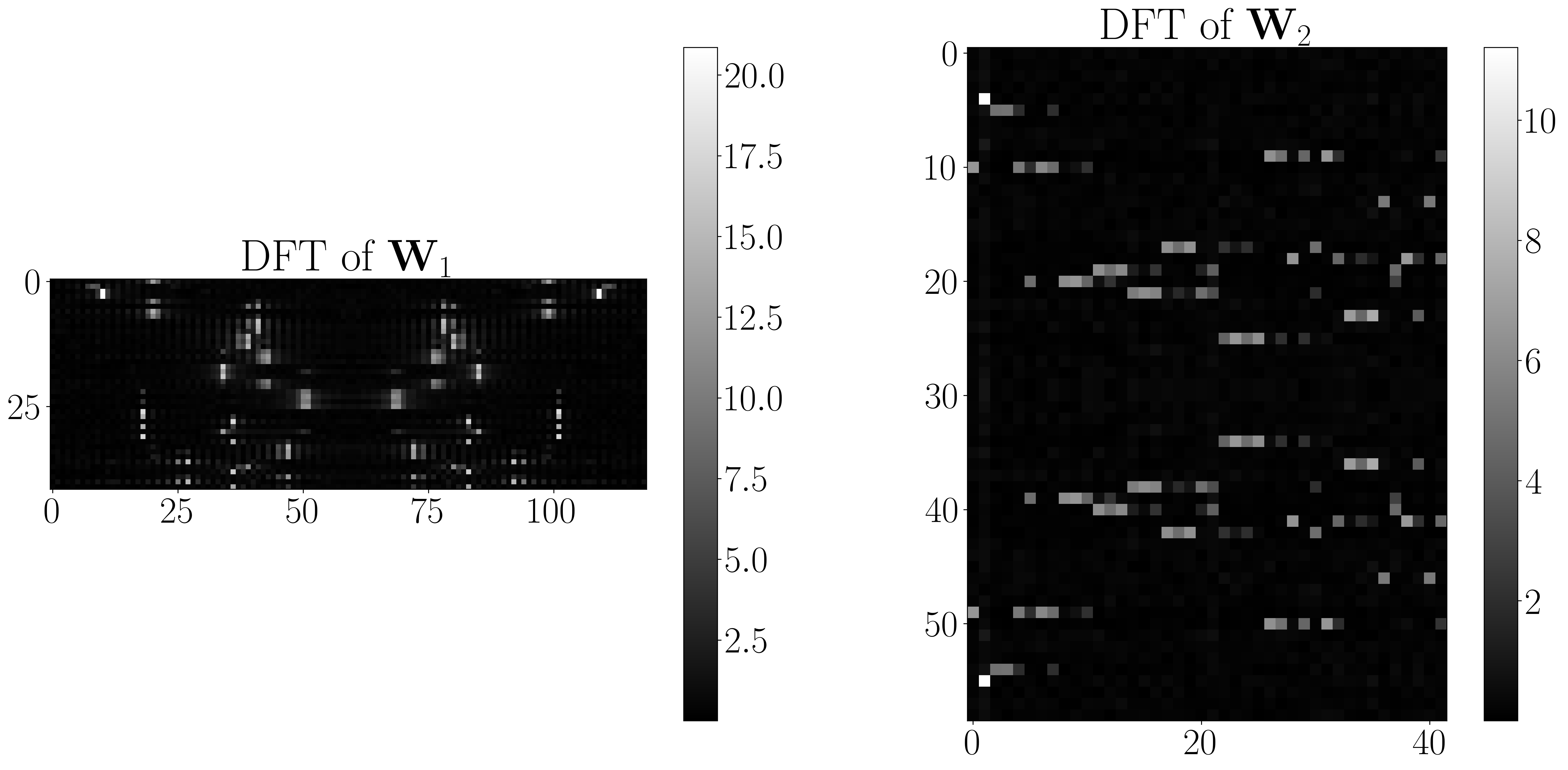}
	\end{subfigure}
	
	\vspace{1em}
	
	\begin{subfigure}[b]{0.4\textwidth}
		\centering
		\includegraphics[width=\linewidth]{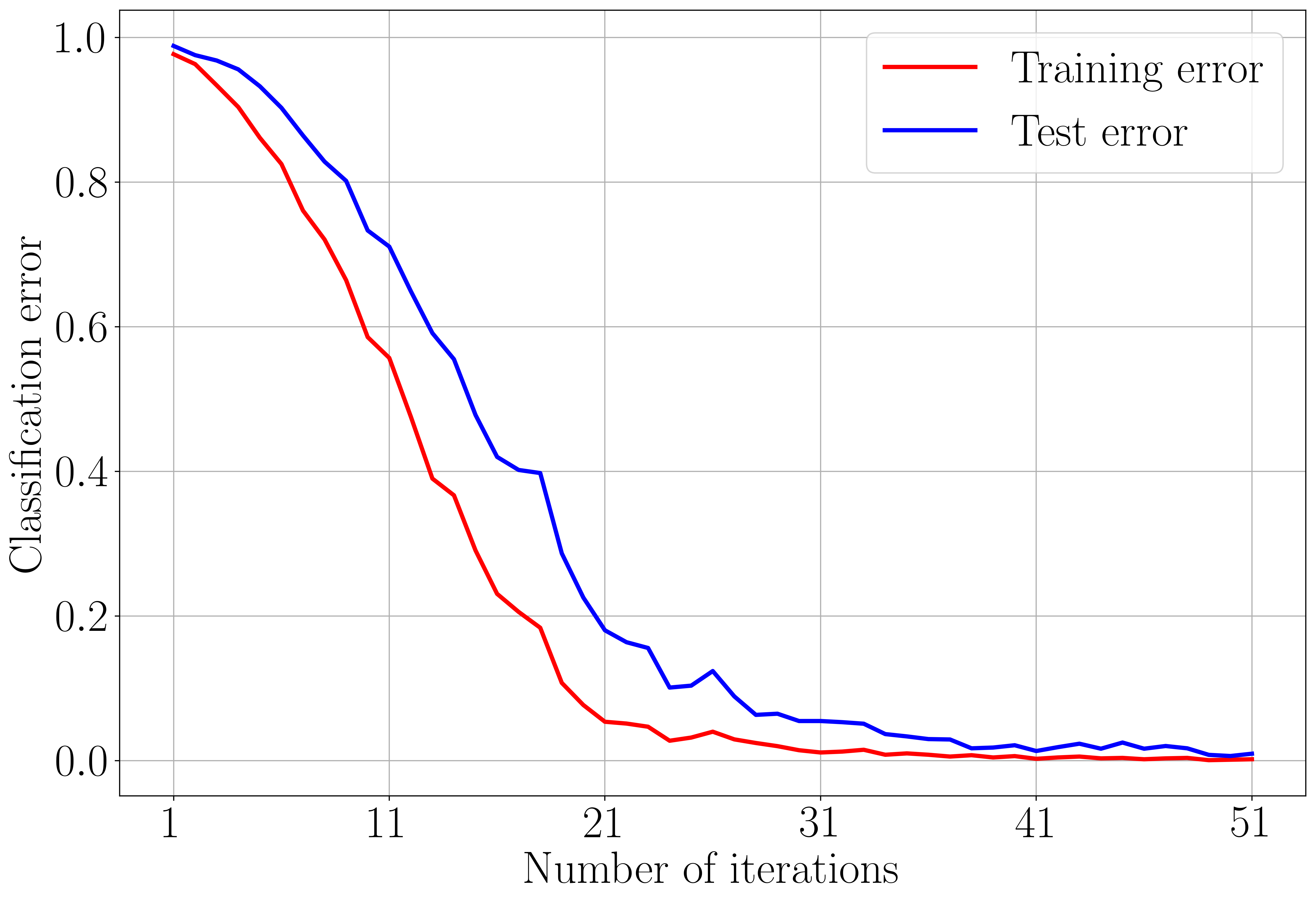}
		\caption{Evolution of training and test error}
	\end{subfigure}
	\hfill
	\begin{subfigure}[b]{0.55\textwidth}
		\centering
		\includegraphics[width=\linewidth]{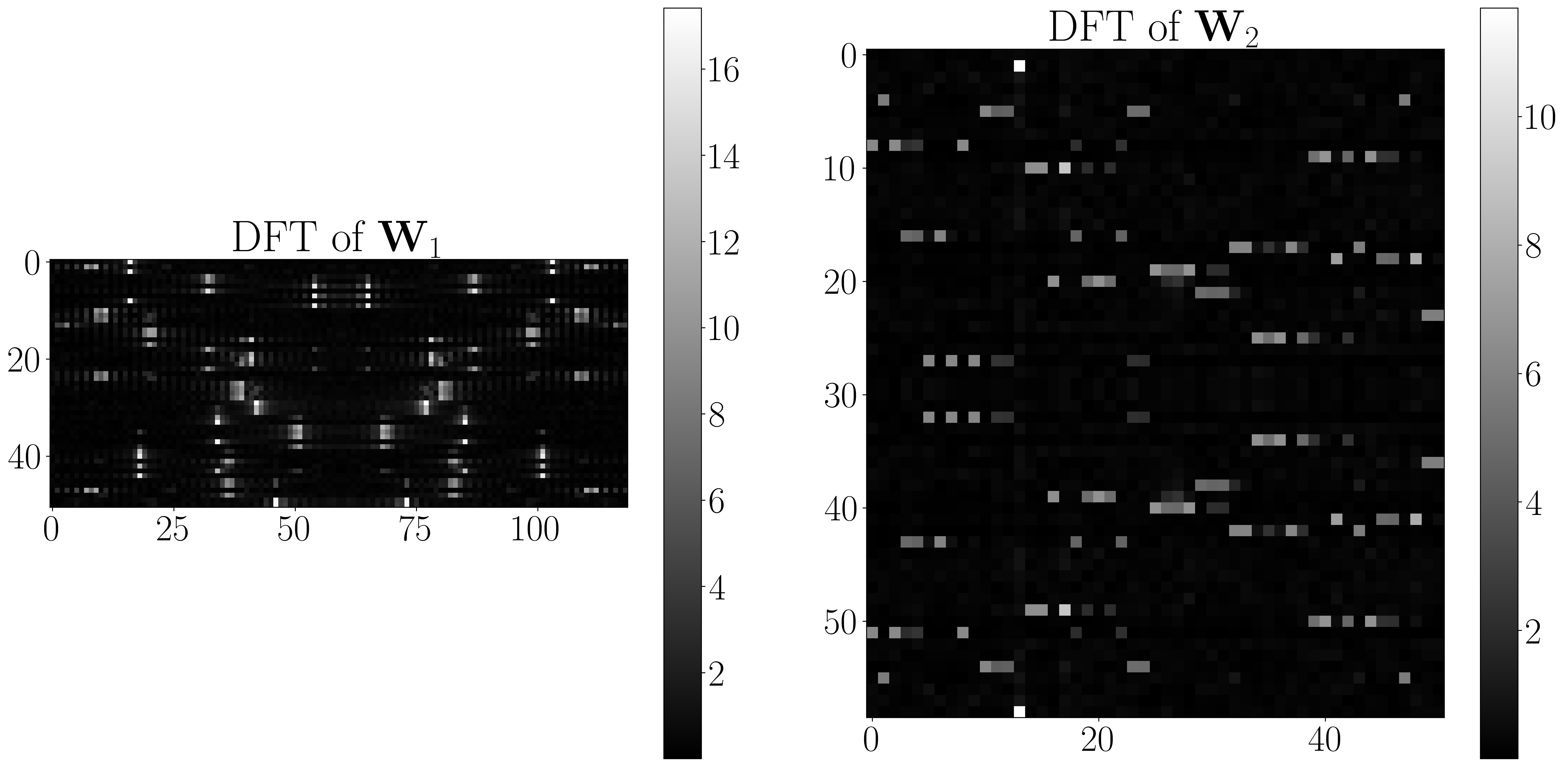}
		\caption{Absolute value of 2D DFT of the learned weights}
	\end{subfigure}
	
	\caption{
		The results from two additional independent runs of learning modular addition using a two-layer neural network with a square activation function, trained via the NP algorithm. Both runs achieve low training and test errors, with the DFT of each row in the first layer and each column in the second layer concentrated around a specific frequency.
	}
	\label{fig:mod_add_ext}
\end{figure}
\subsection{Algorithmic Tasks}
\subsubsection{Modular addition}
\label{mod_add_exp}
We run the NP algorithm until the classification error on the training data becomes $0$ or until 51 iterations are completed. In each iteration, the constrained NCF is maximized using projected gradient ascent with step-size 2 for 5000 iterations, where the initial weights are sampled uniformly from the unit-norm sphere. We use $H=10$ different random initializations to identify the most dominant KKT point.

The scalar $\delta$ in the first iteration was set to 0.125. For subsequent iterations, we have $\delta = 0.025\|\rvw\|_2$, where $\|\rvw\|_2$ denotes the norm of the weights at the end of the previous iteration. After each neuron addition, we trained the network using full-batch gradient descent for 100,000 iterations with step-size 5. However, if the  misclassification rate on the training data is below $20\%$, then the step-size becomes 2.5 and also $\delta = 0.0125\|\rvw\|_2$. \Cref{fig:mod_add_ext} depicts the evolution of training and test error, along with the absolute value of the 2D DFT of the learned weights, for two additional independent runs.
\subsubsection{Pointer Value Retrieval}
\label{pvr_exp}
We run the NP algorithm until the training error drops below 0.01 or until 61 iterations are completed. In the first iteration, the constrained NCF is maximized using projected gradient ascent with step-size 0.25 for 6000 iterations. In subsequent iteration, we use 5000 iterations and different step-sizes $\{0.4, 0.6\}$.  The initial weights are sampled uniformly from the unit-norm sphere, and we use $H=10$ different random initializations to identify the most dominant KKT point.

The scalar $\delta$ in the first iteration was set to 0.5. For subsequent iterations, we first choose $\delta = 0.01\|\rvw\|_2$, where $\|\rvw\|_2$ denotes the norm of the weights at the end of the previous iteration. For this $\delta$, if the training loss exceeds the loss before adding the neuron, then $\delta$ is reduced multiplicatively by a factor of 0.8. We repeat this procedure up to 10 times. After each neuron addition, we trained the network using full-batch gradient descent for 500,000 iterations with different step-sizes: $\{0.007, 0.005\}$. Also, we compute the training loss every 50,000 iterations, and if the loss increases, step-size is halved and the weights are reset to their previous state. This halving occurs at most four times. If the training error is below 0.4, then the number of iterations is increased to $2\times 10^6$, since after this point the step-size has typically shrunk substantially, slowing optimization. \Cref{fig:pvr_ext} depicts the evolution of training and test error, along with the absolute value of the learned weights, for two additional independent runs. 

We introduce one modification to the NP algorithm for the PVR task. After every iteration, we balance the weights so that the incoming and outgoing weights of each hidden neuron have same norm, without changing the network output. To do this, we use the method of \citet{saul_bal}, specifically Algorithm 1 with $p=q=2$ . This extra step is motivated by the fact that, under gradient flow with small initialization and ReLU activation, the norm of incoming and outgoing weights of each hidden neuron remains nearly balanced \citet{du_bal}. However, when training with gradient descent, especially with large step sizes or large number of iterations, the weights can become unbalanced. We observe this unbalance in NP when applied to the PVR task. Importantly, such unbalance can alter which neurons the NP algorithm chooses to add, as discussed in \Cref{scale_wt}. To prevent this behavior, the weights are rebalanced.
\begin{figure}[H]
	\centering
	
	\begin{subfigure}[b]{0.33\textwidth}
		\centering
		\includegraphics[width=\linewidth]{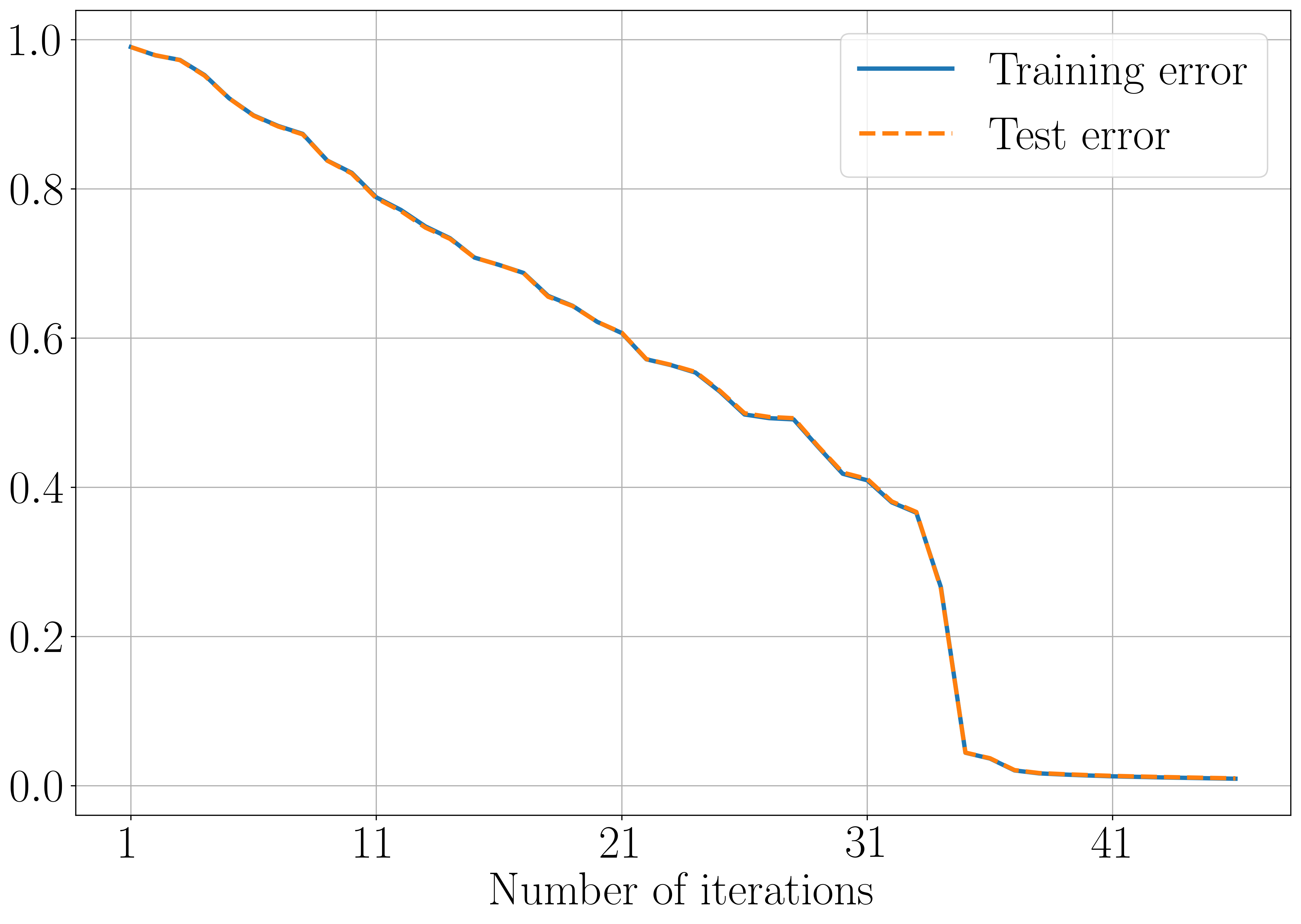}
	\end{subfigure}
	\hfill
	\begin{subfigure}[b]{0.6\textwidth}
		\centering
		\includegraphics[width=\linewidth,height=3.8cm]{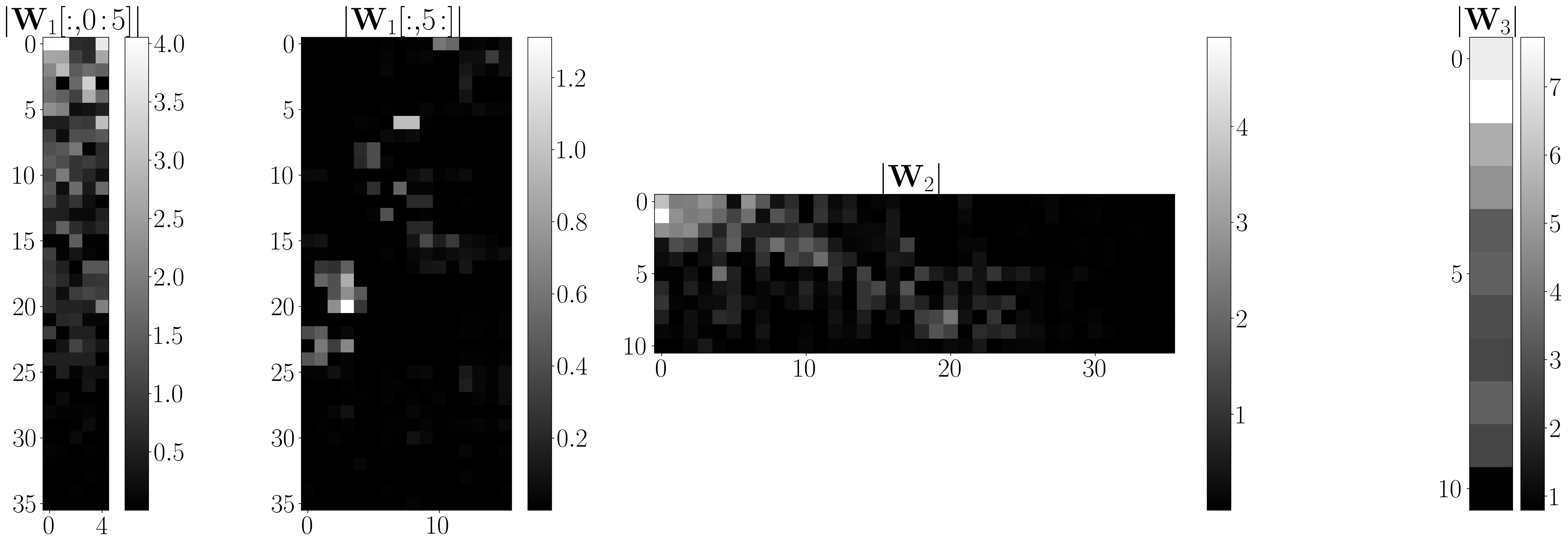}
	\end{subfigure}
	
	\vspace{1em}
	
	\begin{subfigure}[b]{0.33\textwidth}
		\centering
		\includegraphics[width=\linewidth]{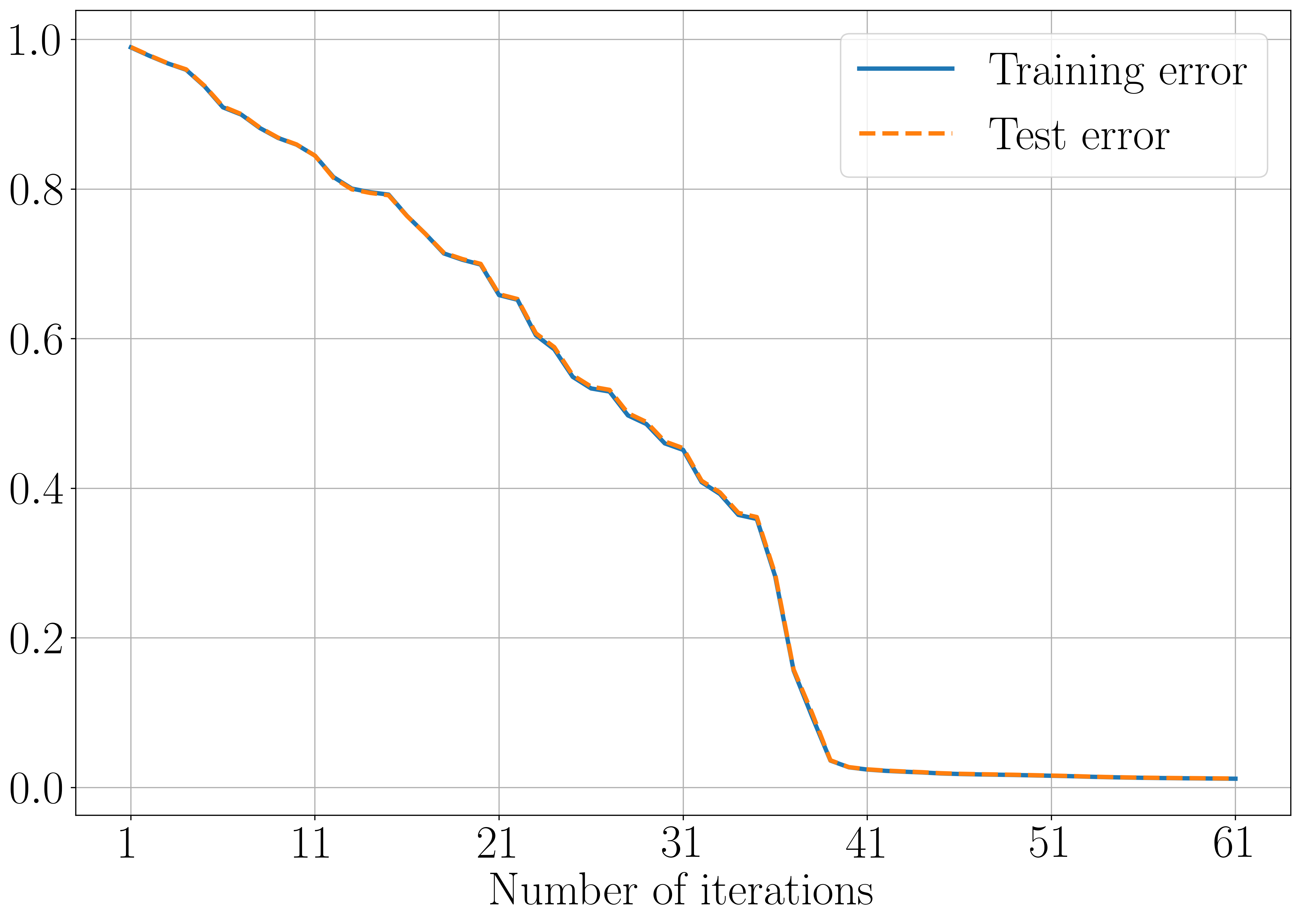}
		\caption{Training and test error}
	\end{subfigure}
	\hfill
	\begin{subfigure}[b]{0.6\textwidth}
		\centering
		\includegraphics[width=\linewidth,height=3.8cm]{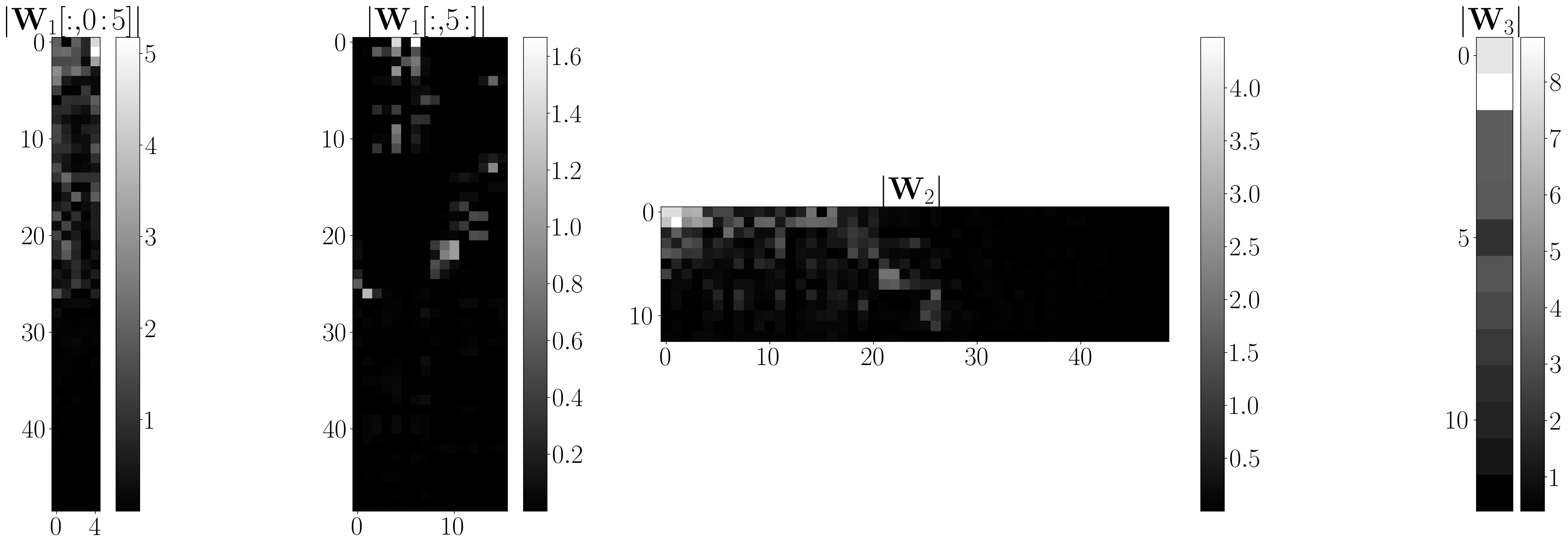}
		\caption{Absolute value of the learned weights}
	\end{subfigure}
	
	\caption{
		The results from two additional independent runs of learning the PVR task using a three-layer neural network activation function $\sigma(x) = \max(0,x)$, trained via the NP algorithm. Both runs achieve low training and test errors, and the weights of the first layer associated with $\rvx$ are sparse, with dominant entries localized within each row
	}
	\label{fig:pvr_ext}
\end{figure}
\section{Additional Discussion and Results}
\subsection{Proof of Lemma \ref{traj_init_eq}}\label{pf_traj_eq}
We will show that $\rvz(t)\coloneqq \frac{1}{\delta}\rvs_{\delta}\left(\frac{t}{\delta^{L-2}}\right)$ is the solution of
\begin{equation*}
\dot{\rvs} = \nabla _\rvs g(\rvs), \rvs(0) = \rvs_0.
\end{equation*}
Note that, $\rvz(0) = \rvs_{\delta}(0)/\delta = \rvs_0$. Next,
\begin{equation*}
\dot{\rvz} = \frac{1}{\delta^{L-1}}\dot{\rvs}_{\delta}\left(\frac{t}{\delta^{L-2}}\right) = \frac{1}{\delta^{L-1}}\nabla _\rvs g\left(\rvs_{\delta}\left(\frac{t}{\delta^{L-2}}\right)\right) = \nabla _\rvs g\left(\frac{1}{\delta}\rvs_{\delta}\left(\frac{t}{\delta^{L-2}}\right)\right) = \nabla _\rvs g(\rvz),
\end{equation*}
where the second equality follows from the definition of $\rvs_{\delta}(t)$, and the third equality holds since $ \nabla _\rvs g(\cdot)$ is $(L-1)$-homogeneous. This completes the proof. 
\subsection{Lojasieweicz's Inequality:  An Example}
\label{loj_ineq_ex}
To derive \Cref{thm_dir_convg}, we assumed that  $\overline{\rvw}_n$ is a local minimum of $\widetilde{\mathcal{L}}(\rvw_n)$ such that Lojasieweicz's inequality is satisfied in the neighborhood of $\overline{\rvw}_n$ with $\alpha\in \left(0,\frac{L}{2(L-1)}\right)$. We present a simple example where this assumption is true with $\alpha = \frac{1}{2}$. 

Suppose $(\rmX,\rvy) \in \sR^{d\times 2d} \times \sR^{2d} $ is the training dataset such that $y_i = (\rvw_*^\top\rvx_i)^p$, for some $p\geq 2$ and $i\in [2d]$, where $\rvw_*\in \sR^d$. We assume that $\min_{i\in [2d]}|\rvw_*^\top\rvx_i | = \eta>0$, $\rvw_*^\top\rvx_i > 0$, for all $1\leq i\leq d$, and $\rvw_*^\top\rvx_i < 0$ for all $d+1\leq i\leq 2d$. Let $\rmX_d\in \sR^{d\times d}$ denote the submatrix formed by first $d$ column of $\rmX$ and $\rho>0$ denotes the minimum singular value of $\rmX_d^\top\rmX_d$. Consider a two-layer neural network with $H\geq 2$ neurons:
\begin{equation*}
\mathcal{H}(\rvx;\{v_i,\rvu_i\}_{i=1}^H)  = \sum_{i=1}^Hv_i\sigma(\rvu_i^\top\rvx),
\end{equation*}
where $\sigma(x) = \max(0,x)^p$. Thus, the training loss is
\begin{equation*}
\mathcal{L}(\{v_i,\rvu_i\}_{i=1}^H) = \frac{1}{2}\left\|\sum_{i=1}^Hv_i\sigma(\rmX^\top\rvu_i) - \rvy\right\|_2^2
\end{equation*}
\begin{lemma}
	Consider the following optimization problem:
	\begin{equation*}
	\widetilde{\mathcal{L}}(v_1,\rvu_1) = \frac{1}{2}\left\|v_1\sigma(\rmX^\top\rvu_1) - \rvy\right\|_2^2,
	\end{equation*}
	Let $(\overline{v}_1,\overline{\mathbf{u}}_1) = (\alpha,\beta\rvw_*)$, where $\alpha\beta^p = 1$ and $\beta>0$. Then, $(\overline{v}_1,\overline{\mathbf{u}}_1)$ is a local minimum of $\widetilde{\mathcal{L}}(v_1,\rvu_1)$ such that  Lojasieweicz's inequality is satisfied in the neighborhood of $(\overline{v}_1,\overline{\mathbf{u}}_1)$ with $\alpha=\frac{1}{2}$:
	there exists $\mu_1,\gamma>0$ such that
	\begin{equation}
	\|\nabla\widetilde{\mathcal{L}}(v_1,\rvu_1)\|_2\geq \mu_1\left(\widetilde{\mathcal{L}}(v_1,\rvu_1) - \widetilde{\mathcal{L}}(\overline{v}_1,\overline{\mathbf{u}}_1)\right)^{\frac{1}{2}}, \text{ if } (v_1 - \overline{v}_1)^2 + \|\rvu_1 - \overline{\rvu}_1\|_2^2\leq \gamma^2.
	\label{loj_ex}
	\end{equation} 
\end{lemma}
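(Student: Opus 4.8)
The plan is to exploit the fact that, near $(\overline{v}_1,\overline{\mathbf{u}}_1)=(\alpha,\beta\rvw_*)$ with $\beta>0$, the activation pattern is frozen, which turns $\widetilde{\mathcal{L}}$ into a smooth polynomial-regression loss plus a constant. Since $\min_i|\rvw_*^\top\rvx_i|=\eta>0$, with $\rvw_*^\top\rvx_i>0$ for $i\le d$ and $\rvw_*^\top\rvx_i<0$ for $i>d$, there is an open neighborhood of $\overline{\mathbf{u}}_1=\beta\rvw_*$ on which $\rvu_1^\top\rvx_i>0$ for all $i\le d$ and $\rvu_1^\top\rvx_i<0$ for all $i>d$; on that neighborhood $\sigma(\rvu_1^\top\rvx_i)=(\rvu_1^\top\rvx_i)^p$ for $i\le d$ and $\sigma(\rvu_1^\top\rvx_i)\equiv 0$ for $i>d$ (here I use $\beta>0$, the strict sign conditions, and $\eta>0$). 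First I would use this to write, for $(v_1,\rvu_1)$ in a small enough neighborhood $\mathcal{U}$ of $(\overline{v}_1,\overline{\mathbf{u}}_1)$,
\[
\widetilde{\mathcal{L}}(v_1,\rvu_1)=\tfrac12\sum_{i=1}^d\big(v_1(\rvu_1^\top\rvx_i)^p-y_i\big)^2+\tfrac12\sum_{i=d+1}^{2d}y_i^2=:\ell(v_1,\rvu_1)+\widetilde{\mathcal{L}}(\overline{v}_1,\overline{\mathbf{u}}_1),
\]
where the last identity holds because at $(\overline{v}_1,\overline{\mathbf{u}}_1)$ one has $\alpha(\beta\rvw_*^\top\rvx_i)^p=\alpha\beta^p(\rvw_*^\top\rvx_i)^p=y_i$ for $i\le d$ and $\sigma(\beta\rvw_*^\top\rvx_i)=0$ for $i>d$. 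In particular $\widetilde{\mathcal{L}}$ is $C^1$ on $\mathcal{U}$ with $\nabla\widetilde{\mathcal{L}}=\nabla\ell$, and $\widetilde{\mathcal{L}}-\widetilde{\mathcal{L}}(\overline{v}_1,\overline{\mathbf{u}}_1)=\ell\ge 0$.

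\textbf{Local minimality.} Since $\alpha\beta^p=1$, the residuals $r_i:=v_1(\rvu_1^\top\rvx_i)^p-y_i$ all vanish at $(\overline{v}_1,\overline{\mathbf{u}}_1)$, so $\ell(\overline{v}_1,\overline{\mathbf{u}}_1)=0$. As $\ell\ge0$ on $\mathcal{U}$, the point $(\overline{v}_1,\overline{\mathbf{u}}_1)$ is a (non-strict) local minimum of $\widetilde{\mathcal{L}}$. It is not a strict minimum, since the scaling curve $\{(\alpha',\beta'\rvw_*):\alpha'(\beta')^p=1,\ \beta'>0\}$ is a flat valley of $\ell$; this is why a Łojasiewicz-type estimate is the right target.

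\textbf{The Łojasiewicz inequality with $\alpha=\tfrac12$.} Writing $\rvr=(r_1,\dots,r_d)^\top$, $\rmX_d=[\rvx_1,\dots,\rvx_d]$, and $\rmD=\diag\big((\rvu_1^\top\rvx_1)^{p-1},\dots,(\rvu_1^\top\rvx_d)^{p-1}\big)$, a direct differentiation gives $\nabla_{\rvu_1}\ell=p\,v_1\,\rmX_d\rmD\rvr$, hence
\[
\|\nabla\widetilde{\mathcal{L}}\|_2^2\ \ge\ \|\nabla_{\rvu_1}\ell\|_2^2\ =\ p^2 v_1^2\,(\rmD\rvr)^\top(\rmX_d^\top\rmX_d)(\rmD\rvr)\ \ge\ p^2 v_1^2\,\rho\,\|\rmD\rvr\|_2^2\ \ge\ p^2 v_1^2\,\rho\Big(\min_{i\le d}(\rvu_1^\top\rvx_i)^{2(p-1)}\Big)\|\rvr\|_2^2,
\]
using $\lambda_{\min}(\rmX_d^\top\rmX_d)=\rho>0$. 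After shrinking $\mathcal{U}$ so that $v_1^2\ge\alpha^2/4$ and $\rvu_1^\top\rvx_i\ge\beta\eta/2$ for all $i\le d$ (possible by continuity), every prefactor is bounded below by a positive constant, and since $\|\rvr\|_2^2=2\ell=2\big(\widetilde{\mathcal{L}}(v_1,\rvu_1)-\widetilde{\mathcal{L}}(\overline{v}_1,\overline{\mathbf{u}}_1)\big)$ we obtain \cref{loj_ex} with, e.g., $\mu_1^2=\tfrac12\alpha^2 p^2\rho\,(\beta\eta/2)^{2(p-1)}$. (Only the $\rvu_1$-block of the gradient is used.)

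\textbf{Main obstacle.} There is no heavy computation here; the two delicate points are (i) justifying the frozen-activation reduction on a \emph{full} neighborhood of $(\overline{v}_1,\overline{\mathbf{u}}_1)$, so that $\widetilde{\mathcal{L}}$ is genuinely smooth and splits as $\ell+\mathrm{const}$ — this is where $\beta>0$, the strict sign conditions on $\rvw_*^\top\rvx_i$, and $\eta>0$ are essential — and (ii) producing a neighborhood-uniform gradient lower bound despite the scaling degeneracy, which forces one to route the estimate through the residual vector $\rvr$ and the invertibility of $\rmX_d$ rather than any strong-convexity argument. Everything else (computing $\nabla\ell$, bounding prefactors, taking square roots) is routine.
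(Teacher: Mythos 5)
Your proposal is correct and follows essentially the same route as the paper's proof: freeze the activation pattern in a neighborhood of $(\overline{v}_1,\overline{\mathbf{u}}_1)$ (so the negative-sample terms contribute only a constant and their $\sigma,\sigma'$ vanish), conclude local minimality from nonnegativity of the remaining sum of squared residuals, and obtain the exponent-$\tfrac12$ inequality by lower-bounding only the $\rvu_1$-block of the gradient through the residual vector, the bound $\rvu_1^\top\rvx_i\geq\beta\eta/2$, $v_1$ near $\alpha$, and the minimum singular value $\rho$ of $\rmX_d^\top\rmX_d$. Your write-up is in fact slightly cleaner on the bookkeeping (the explicit $\widetilde{\mathcal{L}}=\ell+\mathrm{const}$ decomposition and the factor $\|\rvr\|_2^2=2\ell$), but there is no substantive difference in method.
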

\begin{proof}
	Define $\rve = v_1\sigma(\rmX^\top\rvu_1) - \rvy$, then
	\begin{equation*}
	\nabla_{v_1}\widetilde{\mathcal{L}}(v_1,\rvu_1) = \sigma(\rmX^\top\rvu_1)^\top\rve,  \nabla_{\rvu_1}\widetilde{\mathcal{L}}(v_1,\rvu_1) = v_1\rmX\text{diag}(\sigma'(\rmX^\top\rvu_1))\rve.
	\end{equation*}
	We first show that $\nabla_{v_1}\widetilde{\mathcal{L}}(\overline{v}_1,\overline{\mathbf{u}}_1) = 0$ and $\nabla_{\rvu_1}\widetilde{\mathcal{L}}(\overline{v}_1,\overline{\mathbf{u}}_1) = \mathbf{0}$. For $1\leq i \leq d$, since $\overline{\mathbf{u}}_1^\top\rvx_i>0$, we have $\sigma'(\overline{\mathbf{u}}_1^\top\rvx_i)>0$ and
	\begin{equation*}
	e_i = \overline{v}_1\sigma(\overline{\mathbf{u}}_1^\top\rvx_i) - y_i= \alpha\max(0,\beta\overline{\mathbf{u}}_1^\top\rvx_i)^p - (\overline{\mathbf{u}}_1^\top\rvx_i)^p = \alpha\beta^p(\overline{\mathbf{u}}_1^\top\rvx_i)^p-(\overline{\mathbf{u}}_1^\top\rvx_i)^p = 0.
	\end{equation*}
	For $d+1\leq i \leq 2d$, since $\overline{\mathbf{u}}_1^\top\rvx_i<0$, we have $\sigma'(\overline{\mathbf{u}}_1^\top\rvx_i)=0 = \sigma(\overline{\mathbf{u}}_1^\top\rvx_i)$ and
	\begin{equation*}
	e_i = \overline{v}_1\sigma(\overline{\mathbf{u}}_1^\top\rvx_i) - y_i= \alpha\max(0,\beta\overline{\mathbf{u}}_1^\top\rvx_i)^p - (\overline{\mathbf{u}}_1^\top\rvx_i)^p =-(\overline{\mathbf{u}}_1^\top\rvx_i)^p.
	\end{equation*}
	Hence,
	\begin{align*}
	&\nabla_{v_1}\widetilde{\mathcal{L}}(v_1,\rvu_1) = \sum_{i=1}^{d}\sigma(\overline{\mathbf{u}}_1^\top\rvx_i)e_i + \sum_{i=d+1}^{2d}\sigma(\overline{\mathbf{u}}_1^\top\rvx_i)e_i= 0,\\  &\nabla_{\rvu_1}\widetilde{\mathcal{L}}(v_1,\rvu_1) = \overline{v}_1\sum_{i=1}^{d}\rvx_i\sigma'(\overline{\mathbf{u}}_1^\top\rvx_i)e_i +\overline{v}_1\sum_{i=d+1}^{d}\rvx_i\sigma'(\overline{\mathbf{u}}_1^\top\rvx_i)e_i= \mathbf{0}.
	\end{align*}
	Let $\rvb$ be any unit-norm vector orthogonal to $\rvw_*$. We next aim to show that there exists $\gamma_1>0$ such that if $\epsilon_1^2+\epsilon_2^2+\epsilon_3^2\leq \gamma_1$, then
	\begin{equation*}
	\widetilde{\mathcal{L}}(\overline{v}_1+\epsilon_1,(1+\epsilon_2)\overline{\mathbf{u}}_1 + \epsilon_3\rvb)\leq \widetilde{\mathcal{L}}(\overline{v}_1,\overline{\mathbf{u}}_1), 
	\end{equation*} 
	which would imply $(\overline{v}_1,\overline{\mathbf{u}}_1)$ is a local minimum of $\widetilde{\mathcal{L}}({v}_1,{\mathbf{u}}_1)$. For any $1\leq i\leq d$, we have
	\begin{align}
	&\hspace{-3em}\left((\overline{v}_1+\epsilon_1)\sigma\left(\rvx_i^\top((1+\epsilon_2)\overline{\mathbf{u}}_1 + \epsilon_3\rvb)\right) - y_i\right)^2 - \left(\overline{v}_1\sigma\left(\rvx_i^\top\overline{\mathbf{u}}_1 \right) - y_i\right)^2 \nonumber\\
	= &\left((\alpha+\epsilon_1)\left(\beta(1+\epsilon_2)\rvx_i^\top\rvw_* + \epsilon_3\rvx_i^\top\rvb\right)^p - y_{i}\right)^2 \nonumber\\
	= &\left(\alpha\beta^p(\rvx_i^\top\rvw_*)^p + r_i - y_{i}\right)^2 = r_i^2\geq0,
	\label{pos_err}
	\end{align}
	where $r_i \coloneqq (1+\epsilon_1)\left(\beta(1+\epsilon_2)\rvx_i^\top\rvw_* + \epsilon_3\rvx_i^\top\rvb\right)^p -\alpha\beta^p(\rvx_i^\top\rvw_*)^p$.  The first equality follows because $\beta\rvx_i^\top\rvw_*>0$, hence if $|\epsilon_2|,|\epsilon_3|$ is sufficiently small, then $\beta(1+\epsilon_2)\rvx_i^\top\rvw_* + \epsilon_3\rvx_i^\top\rvb> 0$. Next, for $d+1\leq i\leq 2d$, we have
	\begin{align}
	&\hspace{-3em}\left((\overline{v}_1+\epsilon_1)\sigma\left(\rvx_i^\top((1+\epsilon_2)\overline{\mathbf{u}}_1 + \epsilon_3\rvb)\right) - y_i\right)^2 - \left(\overline{v}_1\sigma\left(\rvx_i^\top\overline{\mathbf{u}}_1 \right) - y_i\right)^2 \nonumber\\
	= &\left((\alpha+\epsilon_1)\sigma\left(\beta(1+\epsilon_2)\rvx_i^\top\rvw_* + \epsilon_3\rvx_i^\top\rvb\right) - y_{i}\right)^2 - \left( y_{i}\right)^2 \nonumber\\
	=& \left( y_{i}\right)^2 - \left( y_{i}\right)^2 = 0.
	\label{neg_err}
	\end{align}
	The second equality holds since $\beta\rvx_i^\top\rvw_*<0$, hence if $|\epsilon_2|,|\epsilon_3|$ is sufficiently small, then $\beta(1+\epsilon_2)\rvx_i^\top\rvw_* + \epsilon_3\rvx_i^\top\rvb< 0$. Combining \cref{neg_err} and \cref{pos_err} implies that $(\overline{v}_1,\overline{\mathbf{u}}_1)$ is a local minimum of $\widetilde{\mathcal{L}}({v}_1,{\mathbf{u}}_1)$. We next prove \cref{loj_ex}. Note that
	\begin{align*}
	\|\nabla_{v_1}\widetilde{\mathcal{L}}(v_1,\rvu_1)\|_2^2 + \|\nabla_{\rvu_1}\widetilde{\mathcal{L}}(v_1,\rvu_1)\|_2^2 &\geq \|\nabla_{\rvu_1}\widetilde{\mathcal{L}}(v_1,\rvu_1)\|_2^2\\
	&  =  v_1^2\rve^\top\text{diag}(\sigma'(\rmX^\top\rvu_1))\rmX^\top\rmX\text{diag}(\sigma'(\rmX^\top\rvu_1))\rve .
	\end{align*} 
	We use $(\widetilde{v}_1,\widetilde{\rvu}_1)\coloneqq(\overline{v}_1+\epsilon_1,(1+\epsilon_2)\overline{\mathbf{u}}_1 + \epsilon_3\rvb)$ to denote a vector in the neighborhood of $(\overline{v}_1,\overline{\mathbf{u}}_1)$. If  $|\epsilon_2|,|\epsilon_3|$ is sufficiently small, then, for $d+1\leq i\leq 2d$,  
	\begin{equation*}
	\rvx_i^\top\widetilde{\rvu}_1 = \beta(1+\epsilon_2)\rvx_i^\top\rvw_* + \epsilon_3\rvx_i^\top\rvb \leq 0,
	\end{equation*}
	which implies $\sigma(\rvx_i^\top\widetilde{\rvu}_1) = 0 = \sigma'(\rvx_i^\top\widetilde{\rvu}_1)$. Hence,
	\begin{equation*}
	\text{diag}(\sigma'(\rmX^\top\widetilde{\rvu}_1))\rmX^\top\rmX\text{diag}(\sigma'(\rmX^\top\widetilde{\rvu}_1)) =  \begin{bmatrix}
	\text{diag}(\sigma'(\rmX_d^\top\widetilde{\rvu}_1))\rmX_d^\top\rmX_d\text{diag}(\sigma'(\rmX_d^\top\widetilde{\rvu}_1)) &\mathbf{0}  \\
	\mathbf{0} & \mathbf{0}  \\
	\end{bmatrix}.
	\end{equation*}
	Therefore,
	\begin{align*}
	\|\nabla_{v_1}\widetilde{\mathcal{L}}(\widetilde{v}_1,\widetilde{\rvu}_1\|_2^2 + \|\nabla_{\rvu_1}\widetilde{\mathcal{L}}(\widetilde{v}_1,\widetilde{\rvu}_1)\|_2^2&\geq (\overline{v}_1+\epsilon_1)^2\rho\sum_{i=1}^{d}\left(\sigma'(\rvx_i^\top\widetilde{\rvu}_1)\left((\overline{v}_1+\epsilon_1)\sigma\left(\rvx_i^\top\widetilde{\rvu}_1 \right) - y_i\right)\right)^2\\
	&= (\alpha+\epsilon_1)^2\rho\sum_{i=1}^{d}(\sigma'(\rvx_i^\top\widetilde{\rvu}_1)r_i)^2 ,
	\end{align*} 
	where $r_i$ is same as in \cref{pos_err}. The first inequality is true since the minimum singular value of $\rmX_d^\top\rmX_d$ is $\rho$. The last equality holds true if $|\epsilon_2|,|\epsilon_3|$ is sufficiently small, as shown in \cref{pos_err}. If $|\epsilon_2|,|\epsilon_3|$ is sufficiently small, then
	\begin{equation*}
	\rvx_i^\top\widetilde{\rvu}_1 = \beta(1+\epsilon_2)\rvx_i^\top\rvw_* + \epsilon_3\rvx_i^\top\rvb \geq \beta\rvx_i^\top\rvw_*/2, \text{ for all }1\leq i\leq d.
	\end{equation*}
	Since $\rvx_i^\top\rvw_*\geq \eta$, for all $1\leq i\leq d$, we get
	\begin{equation}
	\|\nabla_{v_1}\widetilde{\mathcal{L}}(\widetilde{v}_1,\widetilde{\rvu}_1)\|_2^2 + \|\nabla_{\rvu_1}\widetilde{\mathcal{L}}(\widetilde{v}_1,\widetilde{\rvu}_1)\|_2^2\geq (\alpha+\epsilon_1)^2p\rho\left(\frac{\beta\eta}{2}\right)^{p-1}\sum_{i=1}^{d}r_i^2.
	\label{gd_bd_ex}
	\end{equation}
	If $|\epsilon_2|,|\epsilon_3|$ is sufficiently small, from \cref{pos_err} and \cref{neg_err}, we know
	\begin{equation*}
	\widetilde{\mathcal{L}}(\widetilde{v}_1,\widetilde{\rvu}_1)-\widetilde{\mathcal{L}}(\overline{v}_1\overline{\mathbf{u}}_1) = \sum_{i=1}^dr_i^2.
	\end{equation*}
	Hence, from \cref{gd_bd_ex} and the above equation, we get
	\begin{align*}
	\|\nabla_{v_1}\widetilde{\mathcal{L}}(\widetilde{v}_1,\widetilde{\rvu}_1)\|_2^2 + \|\nabla_{\rvu_1}\widetilde{\mathcal{L}}(\widetilde{v}_1,\widetilde{\rvu}_1)\|_2^2  &\geq (\alpha+\epsilon_1)^2p\rho\left(\frac{\beta\eta}{2}\right)^{p-1}\left(\widetilde{\mathcal{L}}(\widetilde{v}_1,\widetilde{\rvu}_1)-\widetilde{\mathcal{L}}(\overline{v}_1\overline{\mathbf{u}}_1) \right)\\
	&\geq p\rho\frac{\alpha^2}{4}\left(\frac{\beta\eta}{2}\right)^{p-1}\left(\widetilde{\mathcal{L}}(\widetilde{v}_1,\widetilde{\rvu}_1)-\widetilde{\mathcal{L}}(\overline{v}_1\overline{\mathbf{u}}_1)\right), 
	\end{align*}
	where the last inequality holds if $|\epsilon_1|<\alpha/2$. Hence, \cref{loj_ex} holds true in a sufficiently small neighborhood of $(\overline{v}_1,\overline{\rvu}_1)$. 
\end{proof}

\subsection{What happens if ${\mathcal{N}}_{\overline{\rvy},\overline{\mathcal{H}}_1}(\rvw_z) = 0$?}
\label{zero_NCF}
In \Cref{thm_dir_convg}, we showed that, near the saddle point $(\overline{\rvw}_n,\mathbf{0})$, weights in $\rvw_z$ remain small in magnitude but converge in direction to the constrained NCF corresponding to ${\mathcal{N}}_{\overline{\rvy},\overline{\mathcal{H}}_1}(\rvw_z)$. The proof of \Cref{thm_dir_convg} proceeds by first showing that the output of the neural network $\mathcal{H}(\rvx;\rvw_n,\rvw_z)$ can be decomposed into a leading term that is independent of $\rvw_z$, another term $\mathcal{H}_1(\rvx;\rvw_n,\rvw_z)$  that is homogeneous in $\rvw_z$ and other residual terms that are small. Then, it was shown that the evolution of $\rvw_z$ near the saddle point is close to the gradient flow of ${\mathcal{N}}_{\overline{\rvy},\overline{\mathcal{H}}_1}(\rvw_z)$, where $\overline{\mathcal{H}}_1(\rvx;\rvw_z) =  \mathcal{H}_1(\rvx;\overline{\rvw}_n,\rvw_z). $    

Here, we consider the scenario when ${\mathcal{N}}_{\overline{\rvy},\overline{\mathcal{H}}_1}(\rvw_z) = 0$, for all $\rvw_z$. Technically, we can still apply \Cref{thm_dir_convg}. Since the constrained NCF is zero, all unit-norm vectors are its KKT point. Therefore, directional convergence holds trivially at initialization. However, this argument, while technically correct, does not capture the behavior observed in our experiments. Empirically, we find that even in this setting, the weights in $\rvw_z$ converge in direction, however, the residual terms play an important  role in determining where they converge. To investigate this phenomenon, we next study the problem of matrix decomposition using three-layer linear neural network.\\

\noindent \textbf{Deep linear network. }Suppose $\rmS\in \sR^{d\times d}$ is a rank $d$ matrix with singular values $\{s_1,s_2,\cdots, s_d\}$
and singular vectors $\{\rvu_i,\rvv_i\}_{i=1}^d$. Consider using a three-layer linear neural network to learn $\rmS$, then the training loss becomes
\begin{equation*}
\mathcal{L}(\rmW_1,\rmW_2,\rmW_3) = \frac{1}{2}\left\|\rmW_3\rmW_2\rmW_1 - \rmS \right\|_F^2,
\end{equation*}
where each of the weight matrices is in $\sR^{d\times d}$. Here, the input can be assumed to be the identity matrix, the output of the neural network is $\mathcal{H}(\rmW_1,\rmW_2,\rmW_3) = \rmW_3\rmW_2\rmW_1$, and $\rmS$ is the label. The gradient of the training loss with respect to the weights is as follows:
\begin{align*}
&\nabla_{\rmW_1}\mathcal{L}(\rmW_1,\rmW_2,\rmW_3)  = \rmW_2^\top\rmW_3^\top(\rmW_3\rmW_2\rmW_1 - \rmS),\\
&\nabla_{\rmW_2}\mathcal{L}(\rmW_1,\rmW_2,\rmW_3)  = \rmW_3^\top(\rmW_3\rmW_2\rmW_1 - \rmS)\rmW_1^\top,\\
&\nabla_{\rmW_3}\mathcal{L}(\rmW_1,\rmW_2,\rmW_3)  =(\rmW_3\rmW_2\rmW_1 - \rmS)\rmW_1^\top \rmW_2^\top.
\end{align*}  
Define
\begin{equation}
\overline{\rmW}_1 = \begin{bmatrix}
{s_1^{1/3}\rvv_{1}^\top}\\\mathbf{0} 
\end{bmatrix}, 
\overline{\rmW}_2 = \begin{bmatrix}
s_1^{1/3}&\mathbf{0}  \\
\mathbf{0} & \mathbf{0}  \\
\end{bmatrix}, \overline{\rmW}_3 = \begin{bmatrix}
s_1^{1/3}\rvu_{1}&\mathbf{0} 
\end{bmatrix},
\label{sad_linear}
\end{equation}
then $(\overline{\rmW}_1,\overline{\rmW}_2,\overline{\rmW}_3)$ is a saddle point of the training loss. This is true since 
\begin{align*}
&\nabla_{\rmW_1}\mathcal{L}(\overline{\rmW}_1,\overline{\rmW}_2,\overline{\rmW}_3) =   -\begin{bmatrix}
s_1^{1/3}&\mathbf{0}  \\
\mathbf{0} & \mathbf{0}  \\
\end{bmatrix}\begin{bmatrix}
{s_1^{1/3}\rvu_{1}^\top}\\\mathbf{0} 
\end{bmatrix}\sum_{i=2}^d s_i\rvu_i\rvv_i^\top = \mathbf{0},\\
&\nabla_{\rmW_2}\mathcal{L}(\overline{\rmW}_1,\overline{\rmW}_2,\overline{\rmW}_3) =   -\begin{bmatrix}
{s_1^{1/3}\rvu_{1}^\top}\\\mathbf{0} 
\end{bmatrix}\sum_{i=2}^d s_i\rvu_i\rvv_i^\top\begin{bmatrix}
s_1^{1/3}\rvv_{1}&\mathbf{0} 
\end{bmatrix} = \mathbf{0},\\
&\nabla_{\rmW_3}\mathcal{L}(\overline{\rmW}_1,\overline{\rmW}_2,\overline{\rmW}_3) =   
\sum_{i=2}^d s_i\rvu_i\rvv_i^\top\begin{bmatrix}
s_1^{1/3}\rvv_{1}&\mathbf{0} 
\end{bmatrix}\begin{bmatrix}
s_1^{1/3}&\mathbf{0}  \\
\mathbf{0} & \mathbf{0}  \\
\end{bmatrix} = \mathbf{0},\\
\end{align*}
where we used mutual orthogonality of singular vectors and
$ \rmS - \overline{\rmW}_3\overline{\rmW}_2\overline{\rmW}_1 = \sum_{i=2}^d s_i\rvu_i\rvv_i^\top$. Let us look at the network's output near the saddle point $(\overline{\rmW}_1,\overline{\rmW}_2,\overline{\rmW}_3)$. Let
\begin{equation*}
{\rmW}_1 = \begin{bmatrix}
{s_1^{1/3}\rvv_{1}^\top}\\\mathbf{A}_1 
\end{bmatrix}, 
{\rmW}_2 = \begin{bmatrix}
s_1^{1/3}&\rvb_2^\top  \\
\rva_2 & \mathbf{C}_2  \\
\end{bmatrix}, {\rmW}_3 = \begin{bmatrix}
s_1^{1/3}\rvu_{1}&\mathbf{B}_3 
\end{bmatrix},
\end{equation*}
where $\rva_2,\rvb_2$ are vectors. Here, $\rmA_1, \rva_2,\rvb_2,\rmC_2, \rmB_3$ belong to $\rvw_z$ and other weights belong to $\rvw_n$. Next, we can write 
\begin{align*}
\mathcal{H}(\overline{\rvw}_n,\rvw_z)  = \mathcal{H}(\rmW_1,\rmW_2,\rmW_3) &= \begin{bmatrix}
s_1^{1/3}\rvu_{1}&\mathbf{B}_3 
\end{bmatrix}\begin{bmatrix}
s_1^{1/3}&\rvb_2^\top  \\
\rva_2 & \mathbf{C}_2  \\
\end{bmatrix}\begin{bmatrix}
{s_1^{1/3}\rvv_{1}^\top}\\\mathbf{A}_1 
\end{bmatrix}\\
&= \begin{bmatrix}
s_1^{1/3}\rvu_{1}&\mathbf{B}_3 
\end{bmatrix}\begin{bmatrix}
{s_1^{2/3}\rvv_{1}^\top + \rvb_2^\top\rmA_1}\\s^{1/3}\rva_2\rvv_1^\top+\rmC_2\mathbf{A}_1 
\end{bmatrix}\\
&= s_1\rvu_{1}\rvv_{1}^\top + s_1^{1/3}\rvu_{1} \rvb_2^\top\rmA_1 +s_1^{1/3}\mathbf{B}_3\rva_2\rvv_1^\top +  \mathbf{B}_3\rmC_2\mathbf{A}_1.
\end{align*}
Hence,
\begin{equation*}
\mathcal{H}(\overline{\rvw}_n,\mathbf{0}) =  s_1\rvu_{1}\rvv_{1}^\top , \mathcal{H}_1(\overline{\rvw}_n,\rvw_z) =  s_1^{1/3}\rvu_{1} \rvb_2^\top\rmA_1 +s_1^{1/3}\mathbf{B}_3\rva_2\rvv_1^\top, \mathcal{H}_2(\overline{\rvw}_n,\rvw_z) = \mathbf{B}_3\rmC_2\mathbf{A}_1.
\end{equation*}
The residual error is defined as $ \overline{\rmS}\coloneqq\rmS - \overline{\rmW}_3\overline{\rmW}_2\overline{\rmW}_1 = \sum_{i=2}^d s_i\rvu_i\rvv_i^\top.$ From mutual orthogonality of singular vectors, we get
\begin{equation*}
\text{trace}\left( \overline{\rmS}^\top\mathcal{H}_1(\overline{\rvw}_n,\rvw_z)\right) = 0.
\end{equation*}
Thus, the inner product between the residual error and $\mathcal{H}_1(\overline{\rvw}_n,\rvw_z)$ is $0$. Now, when initialized near $(\overline{\rmW}_1,\overline{\rmW}_2,\overline{\rmW}_3)$, the experiment in \Cref{fig:zero_ncf_3l_lin} shows that weights in $\rvw_z$ remain small in norm but converge in direction towards a KKT point of constrained NCF defined with respect to the residual error and $\mathcal{H}_2(\overline{\rvw}_n,\rvw_z)$. Therefore, even when ${\mathcal{N}}_{\overline{\rvy},\overline{\mathcal{H}}_1}(\rvw_z) = 0$, for all $\rvw_z$, the weights in $\rvw_z$ remain small in magnitude and converge in direction during the initial stages of training. Moreover, the direction of convergence is determined by the constrained NCF defined with respect to the residual error and $\mathcal{H}_2(\overline{\rvw}_n,\rvw_z)$.
\begin{figure}[htbp]
	\centering
	\begin{minipage}[t]{0.45\textwidth}
		\centering
		\begin{subfigure}[t]{\linewidth}
			\centering
			\includegraphics[width=\linewidth]{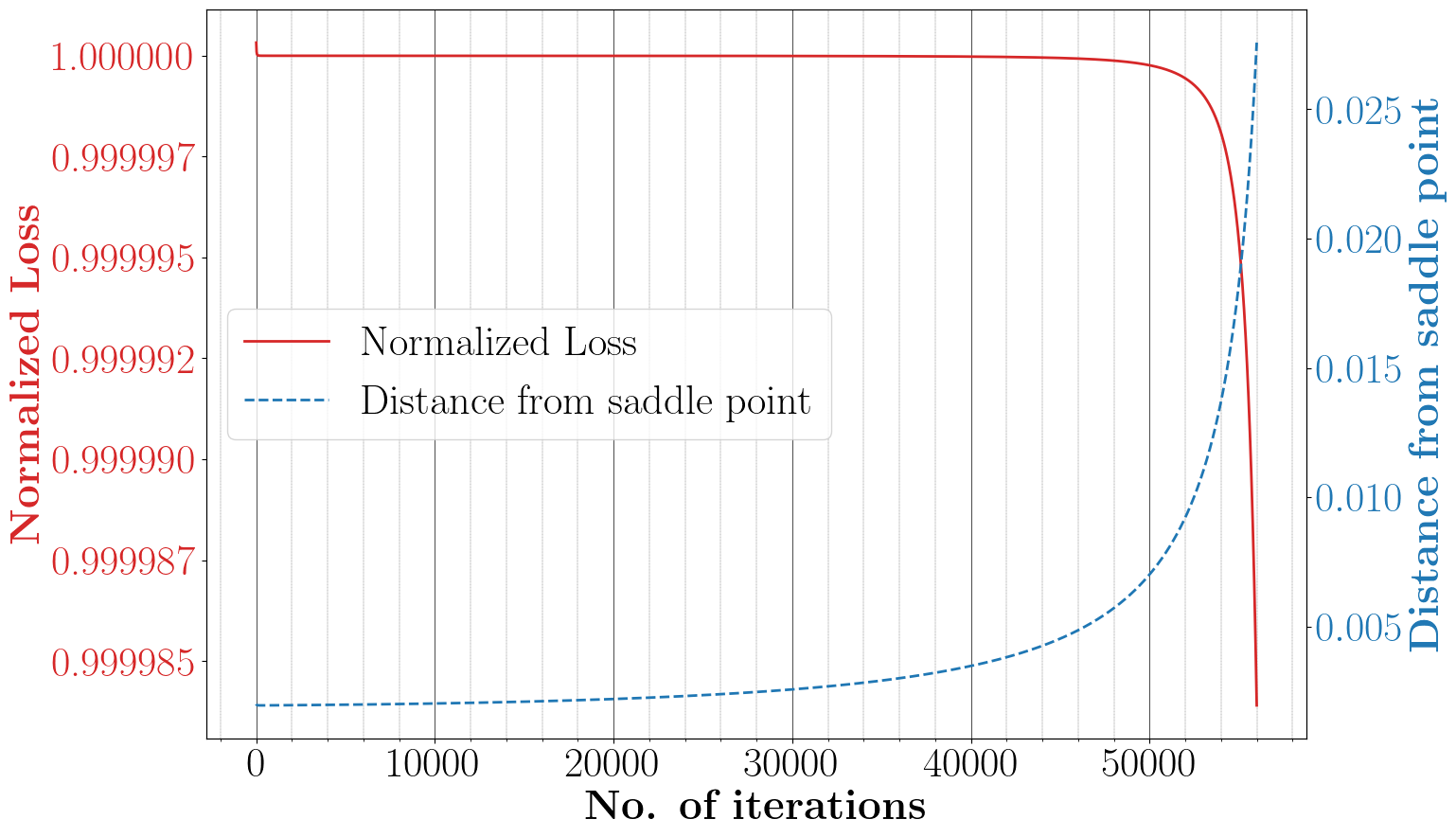}
			\caption{Evolution of training loss and distance of weights from saddle point with iterations }
			\label{fig:loss_evol_lin}
		\end{subfigure}
		\vspace{0.3cm}
		
		\begin{subfigure}[t]{\linewidth}
			\centering
			\includegraphics[width=\linewidth]{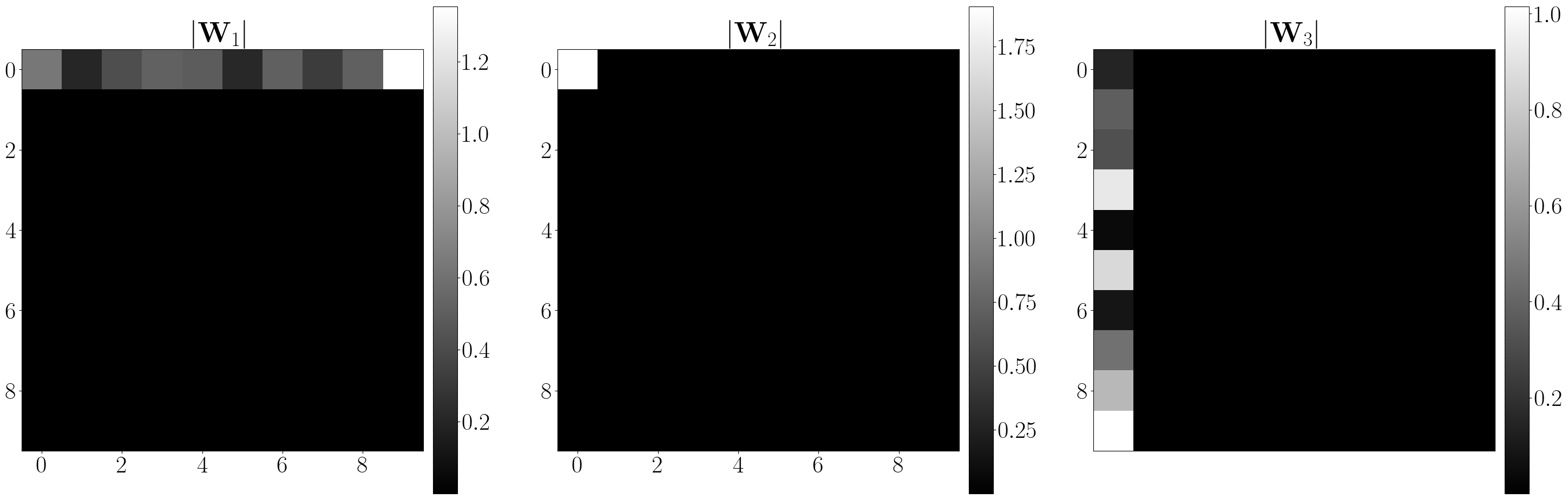}
			\caption{Weights at initialization}
			\label{fig:all_weights_init_lin}
		\end{subfigure}
		\vspace{0.3cm}
		
		\begin{subfigure}[t]{\linewidth}
			\centering
			\includegraphics[width=\linewidth]{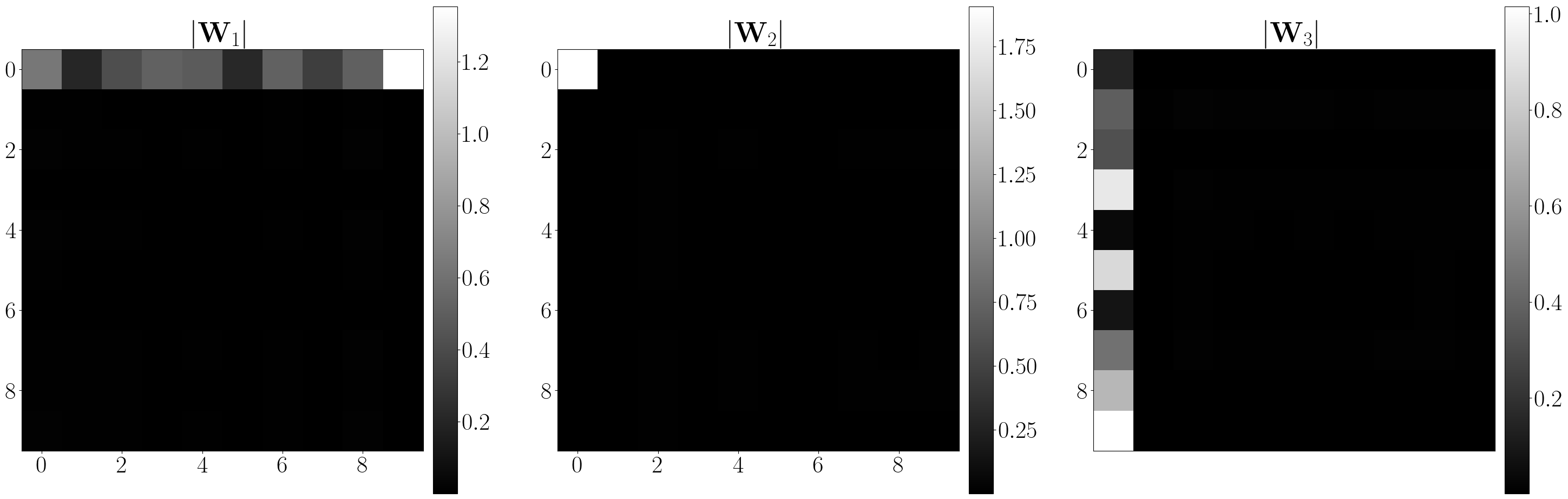}
			\caption{Weights at iteration 56000}
			\label{fig:all_weights_it_lin}
		\end{subfigure}
	\end{minipage}
	\hfill
	\begin{minipage}[t]{0.45\textwidth}
		\centering
		\begin{subfigure}[t]{\linewidth}
			\centering
			\includegraphics[width=\linewidth]{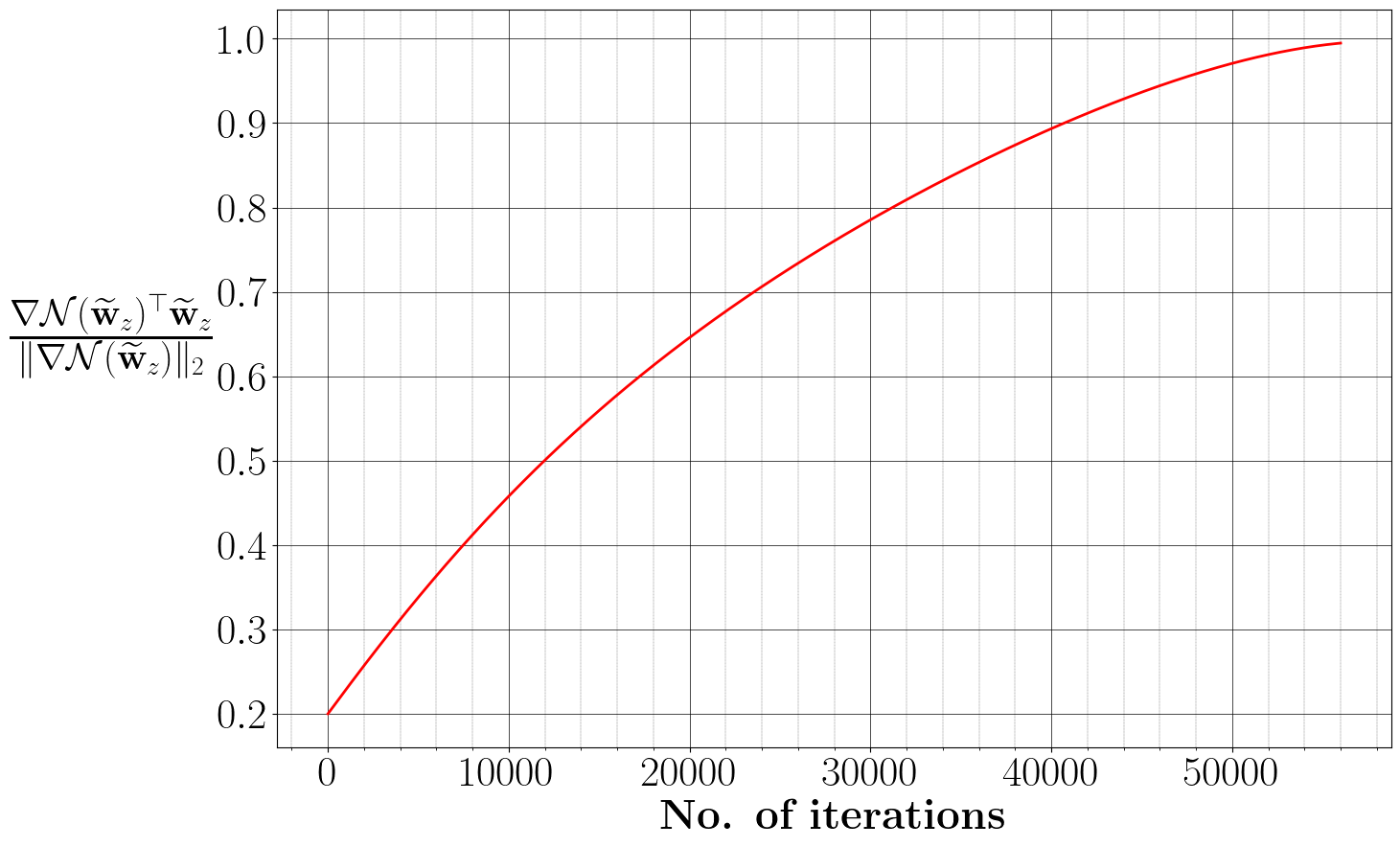}
			\caption{Evolution of inner product between gradient of the NCF and the weights}
			\label{fig:ncf_evol_lin}
		\end{subfigure}
		\vspace{0.3cm}
		
		\begin{subfigure}[t]{\linewidth}
			\centering
			\includegraphics[width=\linewidth]{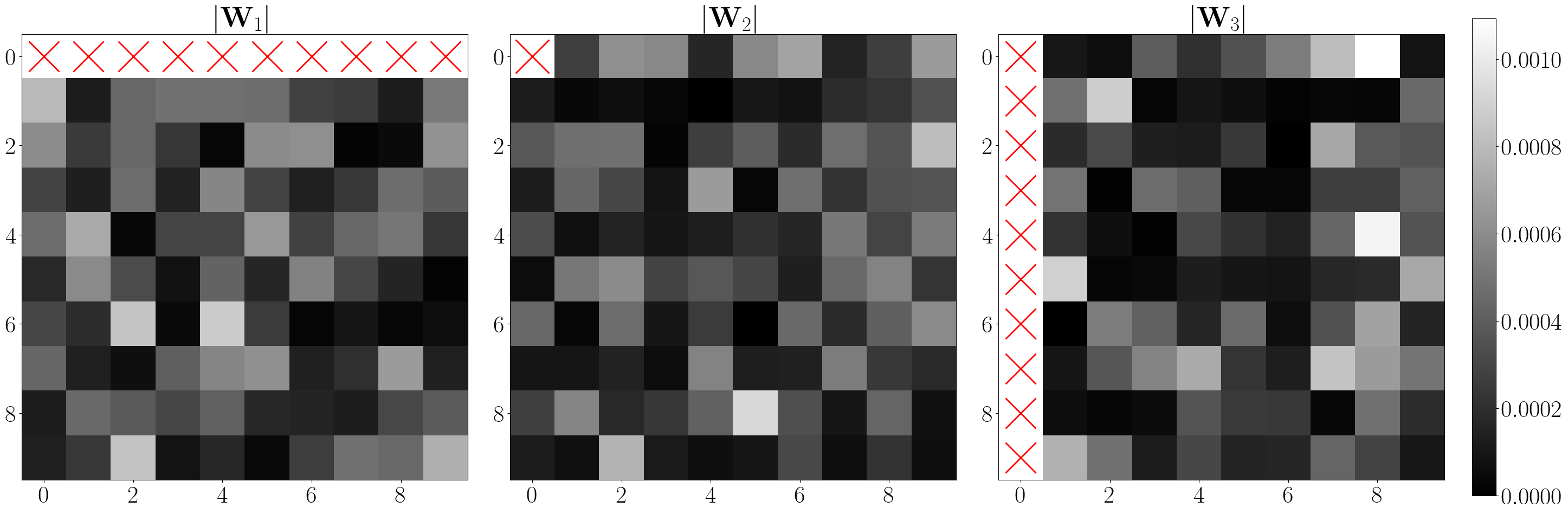}
			\caption{Weights belonging to $\rvw_z$ at initialization}
			\label{fig:wz_init_lin}
		\end{subfigure}
		\vspace{0.3cm}
		
		\begin{subfigure}[t]{\linewidth}
			\centering
			\includegraphics[width=\linewidth]{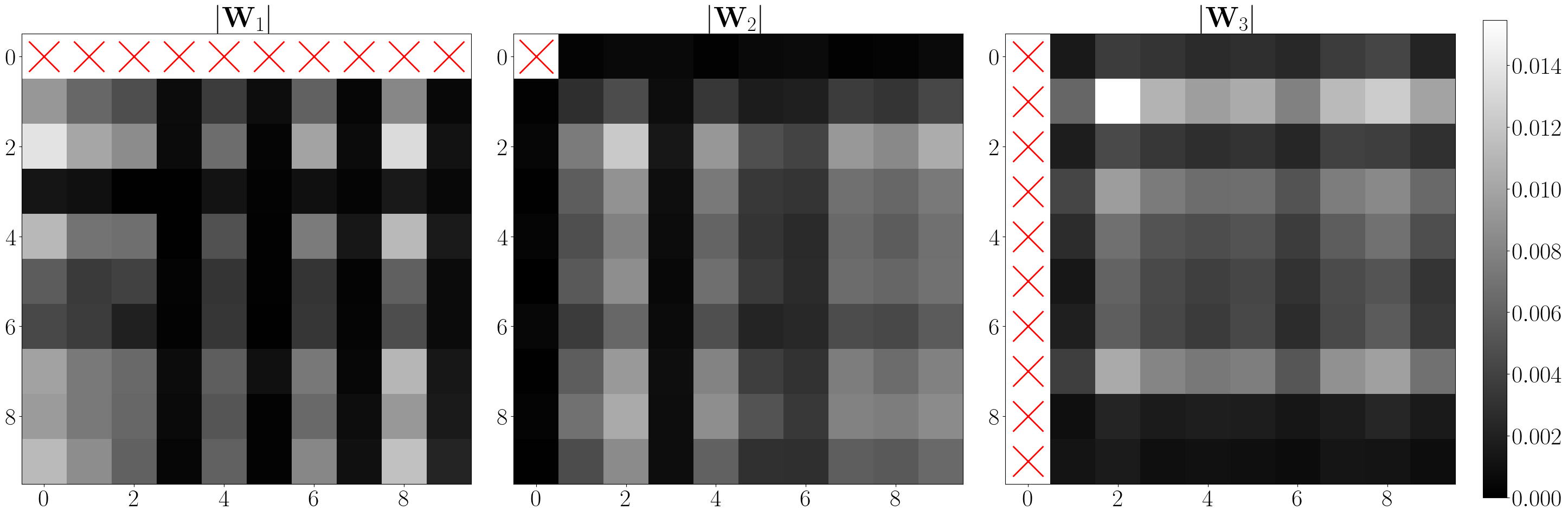}
			\caption{Weights belonging to $\rvw_z$ at iteration 56000}
			\label{fig:wz_it_lin}
		\end{subfigure}
	\end{minipage}
	\caption{(\textbf{Gradient descent dynamics near saddle point; deep linear network}) We train a three-layer linear neural network using gradient descent whose output is $\rmW_3\rmW_2\rmW_1,$ where $\rmW_3\in \mathbb{R}^{10\times 10},\rmW_2  \in \mathbb{R}^{10 \times 10}$, $\rmW_1\in\mathbb{R}^{10 \times 10}$ are the trainable weights. The entries of the label matrix $\rmS\in \mathbb{R}^{10 \times 10}$ are drawn from the standard normal distribution. The weights are initialized near a saddle point where the incoming and outgoing weights of the last nine neurons of each layer is zero, just as the saddle point defined in \cref{sad_linear}; these weights form $\rvw_z$ and remaining form $\rvw_n$. Panel (a) depicts the evolution of the training loss (normalized by the loss at the saddle point) and the distance of the weights from the saddle point (normalized by the norm of the weights at the saddle point). Panels (b) and (c) show the weights at initialization and at iteration 56000, respectively. We observe that the training loss does not change much, the weights remain near the saddle point and $\rvw_z$ remains small. Panel (d) shows the evolution of ${\nabla\mathcal{N}}_{{\overline{\rmS}},\overline{\mathcal{H}}_2}(\widetilde{\rvw}_z)^\top\widetilde{\rvw}_z/\|{\nabla\mathcal{N}}_{{\overline{\rmS}},\overline{\mathcal{H}}_2}(\widetilde{\rvw}_z)\|_2 $, where  $\widetilde{\rvw}_z \coloneqq \rvw_z/\|\rvw_z\|_2$, which confirms that $\rvw_z$ has converged in direction to a KKT point of the constrained NCF defined with respect to the residual error $\overline{\rmS}$ and $\overline{\mathcal{H}}_2({\rvw}_z)\coloneqq\mathcal{H}_2(\overline{\rvw}_n,\rvw_z)$. Panels (e) and (f) depict the weights belonging to $\rvw_z$ of every layer, at initialization and at iteration 56000 respectively, where the weights belonging to $\rvw_n$ are crossed out.}
	\label{fig:zero_ncf_3l_lin}
\end{figure}
\subsection{Maximizing Sum of Homogeneous Functions via Gradient Flow}
\label{max_sum_homogeneous}
This section contains the proof of Lemma \ref{2hm_gf} and Lemma \ref{4hm_gf}.\\
\textbf{Proof of Lemma \ref{2hm_gf}}:  We will first show that there exists a time $T$ such that
\begin{equation}
\|\rvw_i(t)\|_2 = \alpha_i(t)e^{2t\mathcal{G}_i(\rvw_i^*)}, \text{for all }t\geq T\text{ and for all }i\in [m],
\label{2hm_nm_wi}
\end{equation}
where $T$ is sufficiently large. Also, $\alpha_i(t) \in [\kappa_1,\kappa_2]$, for all $t\geq T$ and $i\in [m]$, where $\kappa_2\geq\kappa_1>0$. If the above equation is true, then the proof can be finished in the following way. Suppose $\zeta \coloneqq \max_{j\in [m]} \mathcal{G}_j(\rvw_j^*)$. For any $i\in [m]$, if $\mathcal{G}_i(\rvw_i^*)< \zeta$, then
\begin{equation*}
\lim_{t\rightarrow\infty} \frac{\|\rvw_i(t)\|_2}{\sqrt{\sum_{i=1}^m\|\rvw_i(t)\|_2^2}} \leq \lim_{t\rightarrow\infty} \frac{\kappa_2e^{2t\mathcal{G}_i(\rvw_i^*)}}{\kappa_1e^{2t\zeta}} = 0.
\end{equation*}
Else, if $\mathcal{G}_i(\rvw_i^*)=\zeta$, then
\begin{equation*}
\lim_{t\rightarrow\infty} \frac{\|\rvw_i(t)\|_2}{\sqrt{\sum_{i=1}^m\|\rvw_i(t)\|_2^2}} \geq \lim_{t\rightarrow\infty} \frac{\kappa_1e^{2t\zeta}}{\sqrt{m}\kappa_2e^{2t\zeta}} > 0.
\end{equation*}
We next show that \cref{2hm_nm_wi} is true. For all $i\in [m]$, since $\rvw_i^*$ is a second-order KKT point, from \citet[Lemma 24]{kumar_escape}, there exists a $\gamma\in (0,1)$ such that if $\rvs_{i0}^\top\rvw_i^* > 1-\gamma$ and $\|\rvs_{i0}\|_2 =1$, then the solution $\rvs_i(t)$ of 
\begin{equation*}
\dot{\rvs}_i = \nabla \mathcal{G}_i(\rvs_i), \rvs_i(0) = \rvs_{i0},
\end{equation*}
satisfies $\|\rvs_i(t)\|_2 = \beta_i(t)e^{2t\mathcal{G}_i(\rvw_i^*)}$, for all $t\geq 0$. Here, $\beta_i(t) \in [\eta_1,\eta_2]$, for all $t\geq 0$ and $i\in [m]$, where $\eta_2\geq\eta_1>0$. \\
Now, we may assume that there exists a sufficiently large time $T^*$ such that
\begin{equation*}
\frac{\rvw_i(t)^\top\rvw_i^*}{\|\rvw_i(t)\|_2} > 1-\gamma, \text{for all }t\geq T^*\text{ and for all }i\in [m].
\end{equation*}
Suppose $\widetilde{\rvs}_i(t)$ denotes the solution of 
\begin{equation*}
\dot{\rvs}_i = \nabla \mathcal{G}_i(\rvs_i), \rvs_i(0) = \rvw_i(T^*)/\|\rvw_i(T^*)\|_2,
\end{equation*}
then $\|\widetilde{\rvs}_i(t)\|_2 = \beta_i(t)e^{2t\mathcal{G}_i(\rvw_i^*)}$, for all $t\geq 0$, and $\beta_i(t) \in [\eta_1,\eta_2]$, for all $t\geq 0$ and $i\in [m]$, where $\eta_2\geq\eta_1>0$. Also, using \Cref{traj_init_eq}, we have
\begin{equation*}
\widetilde{\rvs}_i(t) = \frac{\rvw_i(t+T^*)}{\|\rvw_i(T^*)\|_2}, \text{ for all }t\geq 0.
\end{equation*} 
Therefore, $\|\rvw_i(t)\|_2 = \alpha_i(t)e^{2\mathcal{G}_i(\rvw_i^*)t}$ and  $\alpha_i(t)\in [\theta_1\eta_1,\theta_2\eta_2]$, for all $t\geq T^*$, where $\theta_1=\min_{i\in [m]}\|\rvw_i(T^*)\|_2 $ and $\theta_2=\max_{i\in [m]}\|\rvw_i(T^*)\|_2$. \hfill \ensuremath{\blacksquare}\\
\textbf{Proof of Lemma \ref{4hm_gf}}: For all $i\in [m]$, since $\mathcal{G}_i(\rvw_{i0})>0$, from \citet[Lemma 13]{early_dc}, we get $\|\rvw_i(t)\|_2 \geq \|\rvw_{i0}\|_2 = 1>0, $ for all $t\geq 0$. Next, note that
\begin{equation*}
\frac{d}{dt}\left(\frac{\mathcal{G}_i(\rvw_i)}{\|\rvw_i\|_2^L}\right) = \nabla  \mathcal{G}_i(\rvw_i)^\top\left(\mathbf{I}-\frac{\rvw_i\rvw_i^\top}{\|\rvw_i\|_2^2}\right)\frac{\nabla  \mathcal{G}_i(\rvw_i)}{\|\rvw_i\|_2^L} \geq 0,
\end{equation*}
which implies $\mathcal{G}_i(\rvw_i(t)/\|\rvw_i(t)\|_2)$ increases with time. Hence, since $\|\rvw_{i0}\|_2 = 1$, we get that
\begin{align}
&\frac{1}{2}\frac{d\|\rvw_i(t)\|_2^2}{dt} = L\mathcal{G}_i(\rvw_i) = L\|\rvw_i\|_2^L\mathcal{G}_i(\rvw_{i}/\|\rvw_{i}\|_2)  \geq L\|\rvw_i\|_2^L\mathcal{G}_i(\rvw_{i0}/\|\rvw_{i0}\|_2)\geq 0\label{Lhm_nm_bd}
\end{align}
implies
\begin{align*}
&\frac{d\|\rvw_i\|_2}{dt} \geq L\|\rvw_i\|_2^{L-1}\mathcal{G}_i(\rvw_{i0}).
\end{align*}
Taking $\|\rvw_i\|_2^{L-1}$ to the LHS and integrating from $0$ to $t\in [0,T_i]$, we get
\begin{equation*}
\frac{1}{L-2}\left(\frac{1}{\|\rvw_i(0)\|_2^{L-2}} - \frac{1}{\|\rvw_i(t)\|_2^{L-2}}\right) \geq L\mathcal{G}_i(\rvw_{i0})t.
\end{equation*}
Using $\|\rvw_i(0)\|_2= \|\rvw_{i0}\|_2=1$, and simplifying the above equation we get
\begin{equation}
\|\rvw_i(t)\|_2^{L-2} \geq \frac{	1}{1- tL(L-2)\mathcal{G}_i(\rvw_{i0})}.
\label{eq_bd_wt}
\end{equation}
Since $\lim_{t\rightarrow T_i} {\|\rvw_i(t)\|_2} = \infty$, \cref{eq_bd_wt} implies $T_i\leq 1/(L(L-2)\mathcal{G}_i(\rvw_{i0}))$. Next, since $\mathcal{G}_i(\rvw_i(t)/\|\rvw_i(t)\|_2)$ increases with time and $\lim_{t\rightarrow T_i} \mathcal{G}_i(\rvw_i(t)/\|\rvw_i(t)\|_2) = \mathcal{G}_i(\rvw_i^*)$, we get
\begin{equation}
\frac{1}{2}\frac{d\|\rvw_i(t)\|_2^2}{dt} = L\mathcal{G}_i(\rvw_i) \leq L\|\rvw_i\|_2^L\mathcal{G}_i(\rvw_i^*) ,
\end{equation}
Simplifying the above equation similarly gives us
\begin{equation*}
\|\rvw_i(t)\|_2^{L-2} \leq \frac{	1}{1- tL(L-2)\mathcal{G}_i(\rvw_{i}^*)}.
\end{equation*}
Since $\lim_{t\rightarrow T_i} {\|\rvw_i(t)\|_2} = \infty$, the above equation implies $T_i\geq 1/(L(L-2)\mathcal{G}_i(\rvw_{i}^*))$.\\
Next, if $T_i = T^*$, then $\lim_{t\rightarrow T_i} {\|\rvw_i(t)\|_2} = \infty$, and if $T_i > T^*$, then $\lim_{t\rightarrow T_i} {\|\rvw_i(t)\|_2} < \infty$. Since $T_i=T^*$, for some $i\in [m]$, we get
\begin{equation*}
\lim_{t\rightarrow T_i} \sum_{i=1}^m{\|\rvw_i(t)\|_2^2} = \infty.
\end{equation*}
Finally, from the proof of  \citet[Lemma 2]{early_dc}, we have
\begin{equation*}
\|\rvw_i(t)\|_2 = \frac{\alpha_i(t)}{(T_i-t)^{1/(L-2)}}, \text{ for all } t\in [0,T_i] \text{ for all } i\in [m],
\end{equation*}
where $\alpha_i(t)\in [\kappa_1,\kappa_2]$. Hence, if $T_i > T^*$, then $\|\rvw_i(T^*)\|_2 < \infty$, which implies
\begin{equation*}
\lim_{t\rightarrow T^*} \frac{\|\rvw_i(t)\|_2}{\sqrt{\sum_{j=1}^m\|\rvw_j(t)\|_2^2}} = \lim_{t\rightarrow T^*} \frac{\|\rvw_i(T^*)\|_2}{\sqrt{\sum_{j=1}^m\|\rvw_j(t)\|_2^2}} = 0.
\end{equation*}
Next, if $T_i = T^*$, then
\begin{equation*}
\lim_{t\rightarrow T^*} \frac{\|\rvw_i(t)\|_2}{\sqrt{\sum_{j=1}^m\|\rvw_j(t)\|_2^2}} = \lim_{t\rightarrow T^*}\frac{\alpha_i(t)}{\sqrt{\sum_{j=1}^m\alpha_j^2(t)\left(\frac{T^*-t}{T_j-t}\right)^{2/(L-2)}}} \geq \frac{\kappa_1}{\sqrt{m}\kappa_2}> 0.
\end{equation*}
This completes the proof \hfill \ensuremath{\blacksquare}
\subsection{Equivalence between OMP and NP for Diagonal Linear Networks}
\label{omp_np_eq}
In this subsection, we show that for two-layer diagonal linear networks, the NP algorithm is equivalent to OMP, under certain assumptions. For an input $\rvx\in \sR^d$, the network output is $\mathcal{H}(\rvx;\rvv,\rvu) = \rvx^\top(\rvv\odot\rvu),$ where $\rvu, \rvv \in \sR^d$ are the weights of first and second layer, respectively. Thus, there are $d$ number of neurons, where $u_j$ and $v_j$, the $j$th entry of $\rvu$ and $\rvv$, represent the incoming and outgoing weights of the $j$th neuron. Suppose $\{\rvx_i,y_i\}_{i=1}^n$ is the training dataset, and let $\rmX = \left[\rvx_1,\cdots,\rvx_n\right]^\top\in \sR^{n\times d}$, $\rvy = \left[y_1,\cdots,y_n\right]^\top\in \sR^n$. The training loss can be written as
\begin{equation*}
\mathcal{L}(\rvv,\rvu) = \frac{1}{2}\sum_{i=1}^n(\rvx_i^\top(\rvv\odot\rvu)-y_i)^2 = \frac{1}{2}\left\|\rmX(\rvv\odot\rvu)-\rvy\right\|_2^2.
\end{equation*}
Since the $j$th neuron only takes the $j$th coordinate as input, the neurons here can not be exchanged without changing the network's output, unlike feed-forward networks.  Consequently, the constrained NCF would be different for each neuron, even though they are in the same layer. We now examine the iterations of NP algorithm for this network.\\
\textbf{First iteration: }We begin by maximizing the constrained NCF for each neuron:
\begin{equation*}
\max_{v_j^2+u_j^2 = 1} v_ju_j\rmX[:,j]^\top\rvy, \text{ for all }j\in [n].
\end{equation*}
Each problem admits a closed-form expression of its global maximum: $|\rmX[:,j]^\top\rvy|/	{2}$, for $(\overline{v}_j,\overline{u}_j) = (1/\sqrt{2}, \text{sign}(\rmX[:,j]^\top\rvy)/\sqrt{2})$. Let ${j}_1 = \arg\max_{j\in [n]}|\rmX[:,j]^\top\rvy|$, then the most dominant KKT point corresponds to the ${j}_1$th neuron. Thus, we minimize the following training loss via gradient descent:
\begin{equation*}
\mathcal{L}(u_{{j}_1},v_{{j}_1}) = \frac{1}{2}\|\rmX[:,{j}_1] v_{{j}_1}u_{{j}_1}-\rvy\|_2^2,
\end{equation*}
with initialization $(v_{{j}_1}(0),u_{{j}_1}(0)) = (\delta/\sqrt{2},\delta\text{sign}(\rmX[:,j_1]^\top\rvy)/\sqrt{2})$. 

In Lemma \ref{gm_gd} (below), we  prove that for sufficiently small $\delta$ and step-size, gradient descent converges to the global minimum. Importantly, aside from the origin and global minimum, there are no spurious stationary points. Moreover, the global minimum of the above problem is same as the global minimum of the following convex problem:
\begin{equation*}
g(\beta) = \frac{1}{2}\|\rmX[:,{j}_1] \beta-\rvy\|_2^2,
\end{equation*}
If $\beta^*$ is its optimal solution, then the residual $\overline{\rvy} = \rvy - \rmX[:,{j}_1] \beta^*$. In summary, maximizing the constrained NCF for each neuron is equivalent to identifying which column of $\rmX$ has highest absolute correlation with $\rvy$. Then, that column is used to find the best fit for the labels $\rvy$. These are exactly the steps of the first iteration of OMP, \cite{omp_org}.\\
\textbf{Second iteration: } Let $(v_{j_1}^*,u_{j_1}^*)$ denote the weights after the first iteration. We next maximize the constrained NCF for each remaining neuron:
\begin{equation*}
\max_{v_j^2+u_j^2 = 1} v_ju_j\rmX[:,j]^\top\overline{\rvy}, \text{ for all }j\in [n]\symbol{92}\{j_1\}.
\end{equation*}
Again, the global maximum is $|\rmX[:,j]^\top\overline{\rvy}|/\sqrt{2}$, for $(\overline{v}_j,\overline{u}_j) = (1/\sqrt{2}, \text{sign}(\rmX[:,j]^\top\overline{\rvy})/\sqrt{2})$. Let ${j}_2 = \arg\max_{j\in [n]\symbol{92}\{j_1\}}|\rmX[:,j]^\top\overline{\rvy}|$, then the most dominant KKT point corresponds to the ${j}_2$th neuron. Thus, we minimize the following training loss via gradient descent:
\begin{equation*}
\mathcal{L}(u_{{j}_1},v_{{j}_1},u_{{j}_2},v_{{j}_2}) = \frac{1}{2}\|\rmX[:,{j}_1] v_{{j}_1}u_{{j}_1}+\rmX[:,{j}_2] v_{{j}_2}u_{{j}_2}-\rvy\|_2^2,
\end{equation*}
with initialization $(v_{{j}_2}(0),u_{{j}_2}(0)) = (\delta/\sqrt{2},\delta\text{sign}(\rmX[:,j_2]^\top\overline{\rvy})/\sqrt{2})$ and $(v_{{j}_1}(0),u_{{j}_1}(0)) = (v_{j_1}^*,u_{j_1}^*)$. However, unlike the first iteration, here additional stationary points appear, making global convergence of gradient descent to global minimum harder to prove. Nevertheless, if we assume convergence to the global minimum, then this step is equivalent to finding the the global minimum of 
\begin{equation*}
g({\beta_1,\beta_2}) = \frac{1}{2}\|\rmX[:,{j}_1] \beta_1+\rmX[:,{j}_2] \beta_2-\rvy\|_2^2.
\end{equation*}
Thus, in the second iteration as well, maximizing the constrained NCF leads to identifying which remaining column of $\rmX$ has highest absolute correlation with the residual error. Then, this new column, along with the previous one, is used to find the best fit for the labels $\rvy$. This is exactly same as the second iteration of the OMP algorithm.\\
The subsequent iterations are same as second iteration, and matches with the iterations of OMP, under the same assumption: the gradient descent iterates converge to the global minimum. In summary, for two-layer diagonal linear networks, the NP algorithm is equivalent to OMP, provided gradient descent reaches the global minimum at each iteration.\\
\textbf{Experiment: }We next empirically demonstrate that the NP algorithm for two-layer diagonal networks does match with the OMP solution and is different from the minimum $\ell_1$-norm solution. Suppose
\begin{equation*}
\rmX =  \begin{bmatrix}
1 & 0 & -0.1 \\
0 & 1 & 1+\frac{0.2}{3} \\
\end{bmatrix} \text{ and } \rvb =  \begin{bmatrix}
1 \\ 2
\end{bmatrix}.
\end{equation*}
\textbf{Minimizing  $\ell_1$-norm. }Consider the following optimization problem:
\begin{equation*}
\min_\rvz \|\rvz\|_1, \text{ such that } \rmX\rvz = \rvb.
\end{equation*}
The solution of the above optimization problem is $\rvz^* = [1,2,0]^\top$, where $\|\rvz^*\|_1 = 3$.\\
\textbf{OMP. }In the first iteration of OMP, the third column has maximum absolute correlation with $\rvb$, which is used to find the best fit for $\rvb$. This produces $\rvz_1 = [0,0,1.772].$

In the second iteration, the first column has maximum absolute correlation with the residual error, which is used, along with the third column, to find the best fit for $\rvb$. This produces $\rvz_2 = [1.1875,0,1.875],$ which exactly fits $\rvb$, and the algorithm stops here. Thus, OMP finds a sparse solution different from the minimum $\ell_1$-norm solution.\\
\textbf{NP. }In the first iteration, third neuron has the most dominant KKT point. We minimize
\begin{equation*}
\mathcal{L}(u_3,v_3) = \frac{1}{2}\|\rmX[:,3] v_3u_3-\rvb\|_2^2,
\end{equation*} 
via gradient descent with step-size $0.001$ and initialization $(v_3(0),u_3(0)) = (\delta/\sqrt{2},\delta/\sqrt{2})$, where $\delta=0.01$. The gradient descent converges to $(v_3^*,u_3^*) = (1.33,1.33)$, implying $v_3^*u_3^* = 1.77$. Hence, the output of the first iteration is same as the first iteration of OMP.

In the second iteration, the first neuron has the most dominant KKT point. We minimize
\begin{equation*}
\mathcal{L}(u_1,v_1,u_3,v_3) = \frac{1}{2}\|\rmX[:,1] v_1u_1+\rmX[:,3] v_3u_3-\rvb\|_2^2,
\end{equation*} 
via gradient descent with step-size $0.001$ and initialization $(v_3(0),u_3(0)) = (1.33,1.33),$ $(v_1(0),u_1(0)) = (\delta/\sqrt{2},\delta/\sqrt{2})$, where $\delta=0.01$. The gradient descent converges to $(v_3^*,u_3^*) = (1.37,1.37), (v_1^*,u_1^*) = (1.09,1.09)$, implying $v_3^*u_3^* = 1.876, v_1^*u_1^* = 1.188$. Hence, the output of the second iteration is also same as the second iteration of OMP. Therefore, the output of the NP algorithm is same as OMP, and is different from the minimum $\ell_1$-norm solution. 
\begin{lemma}\label{gm_gd}
	Consider minimizing the following optimization problem via gradient descent:
	\begin{equation*}
	\mathcal{L}(u,v) \coloneqq \frac{1}{2}\|\rvx vu-\rvy\|_2^2,
	\end{equation*}
	where $\rvx, \rvy\in \sR^d$ and, without loss of generality, let $\|\rvx\|_2=1$. Suppose the initial weights are $(v(0),u(0)) = (\delta/\sqrt{2},\delta\text{sign}(\rvx^\top\rvy)/\sqrt{2})$. Then, for sufficiently small $\delta>0$ and step-size, the gradient descent converges to the global minimum, that is, $v(\infty)u(\infty) = \rvx^\top\rvy$.
\end{lemma}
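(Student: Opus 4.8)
The plan is to exploit the fact that this is a two-dimensional dynamical system with a one-dimensional invariant structure. First I would reduce to the case $\rvx^\top\rvy > 0$ by a sign change in $u$; write $c \coloneqq \rvx^\top\rvy > 0$ (if $c=0$ the global minimum is the origin and the claim is immediate since the initialization is at a rescaling of the origin). With $\|\rvx\|_2 = 1$, the loss becomes $\mathcal{L}(u,v) = \tfrac12\|\rvx\|_2^2 (uv)^2 - c\, uv + \tfrac12\|\rvy\|_2^2 = \tfrac12(uv)^2 - c\,uv + \tfrac12\|\rvy\|_2^2$, so $\mathcal{L}$ depends on $(u,v)$ only through the scalar product $z \coloneqq uv$, and the global minimum is exactly $\{uv = c\}$. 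The gradient flow (and, for small step size, gradient descent, by the standard argument that small-stepsize GD shadows GD flow on regions where the loss decreases and the iterates stay bounded) obeys $\dot u = -v(uv - c) = -v(z-c)$ and $\dot v = -u(z - c)$.

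The key structural observation is the conserved quantity: $\tfrac{d}{dt}(u^2 - v^2) = 2u\dot u - 2v\dot v = -2(z-c)(uv - uv) = 0$, so $u^2 - v^2$ is invariant along the trajectory. At initialization $u(0) = v(0) = \delta/\sqrt{2}$, so $u(t)^2 - v(t)^2 \equiv 0$, i.e. $u(t) = v(t)$ for all $t$ (the sign is preserved since the trajectory cannot cross $u = 0$ or $v = 0$ without the other also vanishing, and one checks the positive quadrant is invariant here because $z(0) = \delta^2/2 > 0$ small and $\dot z > 0$ whenever $0 < z < c$). Thus the dynamics collapse to the scalar ODE $\dot u = -u(u^2 - c)$ with $u(0) = \delta/\sqrt2 > 0$. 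This is a one-dimensional gradient-type flow with potential having a unique stable fixed point at $u = \sqrt{c}$ in the positive region (the fixed point $u = 0$ is unstable), so $u(t) \to \sqrt c$, hence $u(\infty)v(\infty) = c = \rvx^\top\rvy$. Monotonic convergence here gives that the loss is strictly decreasing and iterates stay in a bounded region, which is what licenses the GD-shadows-flow step.

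For the genuine discrete gradient descent statement, I would argue directly: on the invariant line $u = v$ (which is exactly invariant for GD too since the update preserves $u = v$ when $\rvx^\top\rvy>0$, as both coordinates receive the identical increment $-\eta\, u(u^2 - c)$), the iteration is the scalar map $u_{k+1} = u_k - \eta\, u_k(u_k^2 - c)$. For $u_0 = \delta/\sqrt 2$ small and $\eta$ small, one shows by a monotone-convergence / contraction argument near $\sqrt c$ that $u_k$ increases monotonically from near $0$ to $\sqrt c$ and converges (the map has derivative $1 - \eta(3u^2 - c)$, which lies in $(-1,1)$ near $u = \sqrt c$ for $\eta < 1/(2c)$; and for $u \in (0,\sqrt c)$ the increment is positive and bounded below away from the endpoints, preventing stagnation). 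The only mild subtlety — the main thing to be careful about rather than a deep obstacle — is ruling out overshoot past $\sqrt c$ for the first few iterations when $\eta$ is not infinitesimal; this is handled by choosing $\eta$ small enough in terms of $c$ (equivalently in terms of $\|\rvy\|_2$ and $\rvx$) that each increment is smaller than the remaining distance to $\sqrt c$, which holds once $\eta\, u_k(c - u_k^2) \le \sqrt c - u_k$ throughout, a condition that follows from $\eta \lesssim 1/c$. Since the whole trajectory lives on the line $u = v$, there are no other stationary points to worry about — this is precisely the statement in the text that "aside from the origin and global minimum, there are no spurious stationary points" — and the origin is escaped because $u_0 = \delta/\sqrt 2 > 0$ is an unstable fixed point from which the positive increment pushes the iterate away.
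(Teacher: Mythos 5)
Your argument is correct, and it overlaps with the paper's proof in its central mechanism but closes the argument differently. Both proofs hinge on the same two ingredients: the exact invariance of the diagonal $u=v$ (after reducing to $\rvx^\top\rvy>0$; the paper instead keeps the sign and shows $u(t)=\pm v(t)$ by induction on the iterates) and the analysis of the resulting scalar recursion $u_{k+1}=u_k\bigl(1-\eta(u_k^2-c)\bigr)$ with the same step-size threshold $\eta<1/(2c)$, $c=|\rvx^\top\rvy|$. Where you diverge is in how convergence is concluded. You stay entirely inside the one-dimensional picture: positivity of the increment on $(0,\sqrt c)$ plus your no-overshoot bound (which indeed follows from $\eta\, u(\sqrt c+u)\le 2\eta c\le 1$ on $[0,\sqrt c]$) gives a monotone, bounded sequence whose limit is a fixed point of the map, and since the limit is bounded below by $u_1>0$ it must be $\sqrt c$. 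The paper instead uses the interval-invariance of the scalar map only to establish boundedness of the iterates, then invokes a generic smooth-descent result (GD with small step size on a trajectory with Lipschitz gradient converges to a stationary point while decreasing the loss), classifies the stationary points as the origin and the global-minimum set, and rules out the origin by checking that $\mathcal{L}(u(0),v(0))<\mathcal{L}(0,0)$ for small $\delta$. Your route is more elementary and self-contained (no appeal to an external convergence theorem, no need for the initial-loss comparison) and it yields the stronger conclusion that $u_k$ increases monotonically to $\sqrt c$; the paper's route is more generic and would survive perturbations that break the exact one-dimensional reduction. Two minor points: your $c=0$ aside implicitly uses the convention $\mathrm{sign}(0)=0$ (then the initialization is already stationary and globally optimal, so it is fine, but say so); and the gradient-flow discussion with the conserved quantity $u^2-v^2$, while a nice motivation, is not needed once you verify directly, as you do, that the GD update preserves $u=v$.
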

\begin{proof}
	We first show that the stationary points of $\mathcal{L}(u,v)$ are either the origin or the global minimum. At any stationary point $(\overline{u},\overline{v})$, we have
	\begin{align*}
	&{0} = \nabla_{v}\mathcal{L}(\overline{u},\overline{v}) = \overline{u}\rvx^\top(\rvx \overline{v}\overline{u}-\rvy) = \overline{u}( \overline{v}\overline{u}-\rvx^\top\rvy),\\
	&{0} = \nabla_{u}\mathcal{L}(\overline{u},\overline{v}) = \overline{v}\rvx^\top(\rvx \overline{v}\overline{u}-\rvy) = \overline{v}( \overline{v}\overline{u}-\rvx^\top\rvy).
	\end{align*}
	The origin is clearly a stationary point. If either $\overline{v}$ or $\overline{u}$ is non-zero, then $\overline{v}\overline{u}=\rvx^\top\rvy$, that is, the stationary point is a global minimum. \\
	The next set of inequalities show that for all sufficiently small $\delta>0$, the loss at initialization is strictly smaller than at the origin. 
	\begin{align*}
	\mathcal{L}(u(0),v(0)) &= \frac{1}{2}\left(u(0)^2v(0)^2 - 2u(0)v(0)\rvx^\top\rvy + \|\rvy\|_2^2\right)\\
	&=  \frac{1}{2}\left(\frac{\delta^4}{4} - {\delta^2}|\rvx^\top\rvy| + \|\rvy\|_2^2\right)\\
	&\leq \frac{1}{2}\left( - \frac{\delta^2}{2}|\rvx^\top\rvy| + \|\rvy\|_2^2\right) \leq \|\rvy\|_2^2/2 = \mathcal{L}(0,0),
	\end{align*}
	where the first inequality is true for all sufficiently small $\delta>0$. 
	
	We next show that for sufficiently small $\delta>0$ and step-size, gradient descent converges to a stationary point while the loss decreases at each step. Since under these conditions convergence to the origin is impossible, the iterates must approach the global minimum.
	
	We first prove some key properties of gradient descent iterates written below.
	\begin{align*}
	&v(t+1) = v(t) - \eta u(t)\rvx^\top(\rvx v(t)u(t)-\rvy) = v(t)-\eta u(t)(v(t)u(t)-\rvx^\top\rvy),\\
	&u(t+1) = u(t) - \eta v(t)\rvx^\top(\rvx v(t)u(t)-\rvy) = u(t)-\eta v(t)(v(t)u(t)-\rvx^\top\rvy).
	\end{align*}
	If $\rvx^\top\rvy>0$, then $v(t)=u(t)$, for all $t\geq 0$. This can be shown via induction. It is trivially true for $t=0$. Suppose it holds for some $t\geq 0$. Then,
	\begin{equation*}
	v(t+1) = v(t)-\eta u(t)(v(t)u(t)-\rvx^\top\rvy)=u(t)-\eta v(t)(v(t)u(t)-\rvx^\top\rvy) = u(t+1),
	\end{equation*}
	where the second equality uses $v(t)=u(t)$. This proves the claim. If $\rvx^\top\rvy<0$, then $v(t)=-u(t)$, for all $t\geq 0$. This can be shown similarly via induction. 
	
	We next show that $|v(t)|\leq \sqrt{|\rvx^\top\rvy|}$, if $\eta\in (0,1/(2|\rvx^\top\rvy|))$ and $\delta\in (0,\sqrt{2|\rvx^\top\rvy|})$. If $\rvx^\top\rvy>0$,  then $v(t)=u(t)$, implying 
	\begin{equation*}
	v(t+1) = v(t)-\eta v(t)(v^2(t)-\rvx^\top\rvy) = v(t)(1-\eta(v^2(t)-\rvx^\top\rvy)).
	\end{equation*}
	Let $h(v) \coloneqq v(1-\eta(v^2-\rvx^\top\rvy)).$ Then, $h(0) = 0$ and $h\left(\sqrt{\rvx^\top\rvy}\right) = \sqrt{\rvx^\top\rvy}$. Also, for all $v\in [0,\sqrt{\rvx^\top\rvy}]$ and $\eta\in (0,1/(2|\rvx^\top\rvy|))$,
	\begin{equation*}
	h'(v) = (1-\eta(v^2-\rvx^\top\rvy)) - 2\eta v^2 = 1- 3\eta v^2 + \eta\rvx^\top\rvy \geq 1-2\eta\rvx^\top\rvy> 0.
	\end{equation*}
	Thus, if $v\in [0,\sqrt{\rvx^\top\rvy}]$, then $h(v)\in [0,\sqrt{\rvx^\top\rvy}]$. Since $v(t+1) = h(v(t))$ and $v(0) = \delta/\sqrt{2}< \sqrt{\rvx^\top\rvy}$, we get $|v(t)|\leq \sqrt{|\rvx^\top\rvy|}$, for all $t\geq 0$.\\
	Next, if  $\rvx^\top\rvy<0$,  then $v(t)=-u(t)$ and 
	\begin{equation*}
	v(t+1) = v(t)+\eta v(t)(-v^2(t)-\rvx^\top\rvy) = v(t)(1-\eta(v^2(t)+\rvx^\top\rvy)).
	\end{equation*}
	Let $g(v) \coloneqq v(1-\eta(v^2+\rvx^\top\rvy)).$ Then, $g(0) = 0$ and $g\left(\sqrt{-\rvx^\top\rvy}\right) = \sqrt{-\rvx^\top\rvy}$. Also, for all $v\in [0,\sqrt{-\rvx^\top\rvy}]$ and $\eta\in (0,1/(2|\rvx^\top\rvy|))$,
	\begin{equation*}
	g'(v) = (1-\eta(v^2+\rvx^\top\rvy)) - 2\eta v^2 = 1- 3\eta v^2 - \eta\rvx^\top\rvy \geq 1+2\eta\rvx^\top\rvy\geq 0.
	\end{equation*}
	 Thus, if $v\in [0,\sqrt{-\rvx^\top\rvy}]$, then $g(v)\in [0,\sqrt{-\rvx^\top\rvy}]$. Since $v(t+1) = g(v(t))$ and $v(0) = \delta/\sqrt{2}\leq \sqrt{-\rvx^\top\rvy}$, we get $|v(t)|\leq \sqrt{|\rvx^\top\rvy|}$, for all $t\geq 0$.
	
	Since the gradient descent iterates remain bounded, the loss $\mathcal{L}(v,u)$ has Lipschitz gradient along the entire trajectory. Consequently, as shown in \citet[Chapter 1.2.3]{nesterov_cvx}, gradient descent with a sufficiently small step size converges to a stationary point while the loss decreases at each iteration. Moreover, for all sufficiently small $\delta$, the loss at initialization is strictly lower than the loss at the origin. Therefore, gradient descent cannot converge to the origin and must instead converge to the global minimum. This completes the proof.
\end{proof}
\subsection{Impact of Rescaling the Weights}\label{scale_wt}
Consider training a three-layer ReLU network with the NP algorithm. Let $(\overline{\rmW}_1,\overline{\rmW}_2,\overline{\rmW}_3)$ be the set of learned weights at the end of some iteration, and let $\overline{\rvy}$ be the corresponding residual error. To add a neuron in the next iteration, the following two objectives are maximized via projected gradient ascent:
\begin{align*}
&\max_{\|\rvb_3\|_2^2+\|\rva_2\|_2^2=1}\sum_{i=1}^n\overline{y}_i\rvb_3\sigma(\rva_2^\top\sigma(\overline{\rmW}_1\rvx_i)), \text{ and }\\ &\max_{\|\rvb_2\|_2^2+\|\rva_1\|_2^2=1}\sum_{i=1}^n\overline{y}_i\overline{\rmW}_3 \text{diag}(\sigma'(\overline{\rmW}_2\sigma(\overline{\rmW}_1\rvx_i)))\rvb_2\sigma(\rva_1^\top\rvx_i).
\end{align*}
Although $(\overline{\rmW}_1,\overline{\rmW}_2,\overline{\rmW}_3)$ is a stationary point of the training loss, it is not isolated: symmetries present in the network generate continuous manifolds of stationary points. For example, for any $c>0$, $(c\overline{\rmW}_1,\overline{\rmW}_2,\overline{\rmW}_3/c)$ is also a stationary point of the training loss. 

Now, suppose instead of $(\overline{\rmW}_1,\overline{\rmW}_2,\overline{\rmW}_3)$, the set of learned weights are $(c\overline{\rmW}_1,\overline{\rmW}_2,\overline{\rmW}_3/c)$, for some $c>0$. We next describe the impact of this rescaling of the weights on the addition of the neuron in the next iteration. Note that, the network output does not change because $\sigma(\overline{\rmW}_3/c\sigma(\overline{\rmW}_2\sigma(c\overline{\rmW}_1\rvx))) = \sigma(\overline{\rmW}_3\sigma(\overline{\rmW}_2\sigma(\overline{\rmW}_1\rvx))), $ where the equality follows from $1$-positive homogeneity of the ReLU activation. Hence, the residual error is unchanged by this rescaling. To add a neuron in this case, the following two function will be maximized via projected gradient ascent:
\begin{align*}
&\max_{\|\rvb_3\|_2^2+\|\rva_2\|_2^2=1}c\sum_{i=1}^n\overline{y}_i\rvb_3\sigma(\rva_2^\top\sigma(\overline{\rmW}_1\rvx_i)), \text{ and } \\ &\max_{\|\rvb_2\|_2^2+\|\rva_1\|_2^2=1}\frac{1}{c}\sum_{i=1}^n\overline{y}_i\overline{\rmW}_3 \text{diag}(\sigma'(\overline{\rmW}_2\sigma(\overline{\rmW}_1\rvx_i)))\rvb_2\sigma(\rva_1^\top\rvx_i),
\end{align*}
where we used $0$-homogeneity of $\sigma'(x)$. Thus, one objective is multiplied by $c$ and the other by $1/c$. If $c$ deviates sufficiently from $1$, the most dominant KKT point in this case can be different from the previous case, and consequently a different neuron will be added. In short, although rescaling preserves network output and stationarity, it can alter which neuron the NP algorithm chooses to add.

The above discussion raises a natural question: what is the appropriate scaling of the weights? To answer this, we turn to gradient flow dynamics. For ReLU  activation and small initialization, the norm of incoming and outgoing weights of each hidden neuron	 remains nearly balanced throughout training \citep{du_bal}. In fact, if weights are balanced at initialization, they stay balanced during training. Consequently, an appropriate scaling is that the weights of each hidden neuron are balanced. However, gradient descent with large step-size and/or large number of iterations can produce imbalance. For such cases, we can apply the algorithm of \citet{saul_bal} to restore balance without altering the network's output.

The NP algorithm also implicitly encourages balancedness in the weights. In the first iteration, the weights are initialized along a KKT point of a constrained NCF, which are balanced (Lemma \ref{bal_r1_kkt}). Thus, for small step-size, the weights can be expected to remain balanced during optimization via gradient descent. In later iterations, weights of the newly added neurons are along a KKT point of a constrained NCF, which are also balanced. Thus, if the weights at the end of the previous iteration were balanced and for small step-size, the weights can be expected to remain balanced during optimization via gradient descent. Nevertheless, if the weights become unbalanced due to some reason, applying the algorithm from \citet{saul_bal} at the end of each NP iteration can restore balance. Also, we balance the weights after every iteration of NP for the PVR task; in other experiments we found that weights typically remain nearly balanced throughout training.

Although our discussion focused on three-layer ReLU networks, the arguments extend to deeper networks. For activations of the form $\sigma(x) = \max(x,\alpha x)^p$ with $p\geq 2$, the only quantitative change is that the norm of incoming weights should be $\sqrt{p}$ times the outgoing weights, rather than being equal. 
\bibliography{sample}

\end{document}